\theoremstyle{remark}
\newtheorem{remark}{Remark}
\theoremstyle{plain}
\newtheorem{theorem}{Theorem}
\newtheorem{proposition}{Proposition}
\newtheorem{corollary}{Corollary}
\newtheorem{lemma}{Lemma}
\theoremstyle{definition}
\newtheorem{definition}{Definition}
\newtheorem{assumption}{Assumption}
\crefname{equation}{Equation}{Equations}
\crefname{assumption}{Assumption}{Assumptions}
\crefname{definition}{Definition}{Definitions}
\crefname{theorem}{Theorem}{Theorems}
\crefname{lemma}{Lemma}{Lemmas}
\crefname{remark}{Remark}{Remarks}
\crefname{corollary}{Corollary}{Corollaries}
\providecommand{\ind}{\mathds{1}}
\providecommand{\E}{\mathbb{E}}
\newcommand{\ceil}[1]{\left\lceil #1 \right\rceil}
\newcommand{\floor}[1]{\left\lfloor #1 \right\rfloor}
\newcommand{\norm}[1]{\left\lVert #1 \right\rVert}
\newcommand{\abs}[1]{\left| #1 \right|}
\providecommand{\KL}[2]{\mathrm{KL}( #1 \| #2 )}
\providecommand{\TV}[2]{\mathrm{TV}( #1 , #2 )}
\newcommand{\brc}[1]{\left( #1 \right)}
\newcommand{\sbrc}[1]{\left[ #1 \right]}
\newcommand{\cbrc}[1]{\left\{ #1 \right\}}
\providecommand{\eps}{\varepsilon}
\providecommand{\T}{\intercal}
\renewcommand{\d}{\mathrm{d}}
\providecommand{\rmW}{\mathrm{W}}
\providecommand{\Tr}{\mathrm{Tr}}
\providecommand{\mbR}{\mathbb{R}}
\providecommand{\mbN}{\mathbb{N}}
\providecommand{\calN}{\mathcal{N}}
\providecommand{\calL}{\mathcal{L}}
\providecommand{\calO}{\mathcal{O}}
\newcommand{\range}{\mathrm{range}}
\DeclareMathOperator*{\argmin}{arg\,min}
\providecommand{\yuchen}[1]{{#1}}
\let\citep\cite
\title{Theory on Score-Mismatched Diffusion Models and Zero-Shot Conditional Samplers}
\author{Yuchen Liang$^\dagger$, \quad Peizhong Ju$^\ddagger$, \quad Yingbin Liang$^\dagger$, \quad Ness Shroff$^\dagger$ \\
$^\dagger$The Ohio State University \qquad $^\ddagger$University of Kentucky
}
\begin{document}

\maketitle

\begin{abstract}
The denoising diffusion model has recently emerged as a powerful generative technique, capable of transforming noise into meaningful data. While theoretical convergence guarantees for diffusion models are well established when the target distribution aligns with the training distribution, practical scenarios often present mismatches. One common case is in the zero-shot conditional diffusion sampling, where the target conditional distribution is different from the (unconditional) training distribution. These score-mismatched diffusion models remain largely unexplored from a theoretical perspective. In this paper, we present the first performance guarantee with explicit dimensional dependencies for general score-mismatched diffusion samplers, focusing on target distributions with finite second moments. We show that score mismatches result in an asymptotic distributional bias between the target and sampling distributions, proportional to the accumulated mismatch between the target and training distributions. This result can be directly applied to zero-shot conditional samplers for any conditional model, irrespective of measurement noise. Interestingly, the derived convergence upper bound offers useful guidance for designing a novel bias-optimal zero-shot sampler in linear conditional models that minimizes the asymptotic bias. For such bias-optimal samplers, we further establish convergence guarantees with explicit dependencies on dimension and conditioning, applied to several interesting target distributions, including those with bounded support and Gaussian mixtures. Our findings are supported by numerical studies.
\end{abstract}

\section{Introduction}

Generative modeling stands as a cornerstone in deep learning, with the goal of producing samples whose distribution emulates that of the training data. Traditional approaches encompass variational autoencoders (VAE) \citep{kingma2013vae}, generative adversarial networks (GANs) \citep{goodfellow2014gan}, normalizing flows \citep{rezende2015normalizingflow}, and others. Recently, diffusion models, especially the denoising diffusion probabilistic models (DDPMs) \citep{sohldickstein2015,ho2020ddpm}, have emerged as particularly compelling generative models, gaining widespread acclaim for their stable and cutting-edge performance across various tasks, such as image and video generation \citep{dalle2,rombach2022stable-diffusion}. 

In ideal situations, the training and target distributions of generative models match each other. However, this often does not hold in practice, where distributional mismatch between the training and target distributions can occur due to various reasons such as possible privacy constraints, need for computational efficiency, and knowledge gap between training and sampling processes. Specifically for diffusion models, such mismatches exhibit between the scores obtained from the training data and the scores of the target distribution from which we want to generate samples. One common scenario that existing studies primarily focus on is \textbf{conditional} diffusion models in image generation tasks (see \cite{diffusion-survey-croitoru,diffusion-survey-img-restore,diffusion-survey-super-res} for surveys of diffusion models in computer vision).
Different from unconditional image generation, conditional image samplers aim to generate images that are consistent with the given information, either be a text-prompt (as in text-to-image synthesis) or a sub-image (as in image super-resolution). For example, in image super-resolution, given the input of a low-resolution image, the goal is not to generate some arbitrary high-resolution image but the one whose corresponding low-resolution part matches the given input. Here the diffusion models are well-trained on the \textit{unconditional} distribution of high-resolution images, whereas the target distribution is the \textit{conditional} distribution given the low-resolution input. If one uses these well-trained unconditional scores to generate conditional samples, there will be a mismatch at each step of the sampling process.

One class of methods to tackle the conditional sampling problem is to include {\em extra-guided training}, where a modified score function is trained with the \textit{extra} knowledge of the conditioning information \citep{dhariwal2021diff-beat-gan,ho2022classifierfree}. On the theory side, several recent works \citep{yuan2023reward-directed,wu2024guidance,fu2024guidance} 
provided performance guarantees for such conditional diffusion samplers, where a score guidance is obtained through {\em extra} training based on the conditional information. However, the additional guided training in these samplers requires {\em extra} computations and needs to be conducted for every image conditioning, which may not be efficient in practice.

Alternatively, \textit{zero-shot} conditional image samplers arise as a prevalent approach (e.g., \cite{choi2021ilvr,chung2022ccdf,chung2022mcg,chung2023dps,wang2023ddnm,song2023pgdm,fei2023gdp}) for {\em training-free} conditional generation given well-trained unconditional scores. For each conditioned image, zero-shot samplers require no additional training to modify the scores. Instead, they adjust the scores during sampling by calculating rectified scores based on conditional information to mitigate the mismatch between the oracle conditional scores and the approximated ones.\footnote{Note, however, that some zero-shot methods, such as DPS \citep{chung2023dps} and $\Pi$GDM \citep{song2023pgdm}, might induce additional computational costs during sampling.} Despite their empirical promise, theoretical guarantee on these zero-shot samplers is largely unexplored. 
In \cite{gupta2024post-samp-intract}, the authors provided a super-polynomial lower bound for zero-shot sampling as a converse result. In \cite{xu2024dpnp}, the authors proposed and analyzed a {\em plug-and-play} conditional sampler. Their analysis relies on the properties of the Markov transition kernel specific to their plug-and-play model, which does not appear to be applicable to several widely used zero-shot samplers, such as Come-Closer-Diffuse-Faster (CCDF) \citep{chung2022ccdf} and Denoising Diffusion Null-space Model (DDNM) \citep{wang2023ddnm}. Therefore, there is a need to provide the performance guarantee for those popular zero-shot conditional samplers.

In this paper, we address two key theoretical research gaps in zero-shot score-mismatched diffusion models: (i) We provide performance guarantees for general score-mismatched diffusion models, extending their applicability beyond the primary focus of existing theoretical studies on the special case of conditional image generation. (ii) We analyze zero-shot conditional diffusion models, which are generally applicable to existing zero-shot samplers such as CCDF \citep{chung2022ccdf} and DDNM \citep{wang2023ddnm} to which the analysis in \cite{xu2024dpnp} is not applicable (as discussed above).

\subsection{Our Contributions}\label{sec:maincontributions}


Technically, the main challenge due to mismatched scores is to analyze the expected tilting factor \citep{liang2024discrete} under a mean-perturbed Gaussian, providing an upper bound of the asymptotic orders of all Gaussian non-centralized moments. Our detailed contributions are as follows.

\textbf{Convergence of General Score-Mismatched DDPM}: We provide the \textit{first} non-asymptotic convergence bound on the KL divergence between the target and generated distributions when there is mismatch between the sampling and target scores in DDPM samplers, for general target distributions having finite second moments. We show that the score mismatch at each diffusion step introduces an asymptotic distributional bias that is proportional to the accumulated mismatch. We also provide the first explicit dimensional dependency when the sixth moment of the target distribution exists. Our result is applicable to general forms of mismatch between the target and training scores, which greatly extend the focus of the existing theoretical research on conditional score-mismatch diffusion models. 

We then apply our results to zero-shot conditional DDPM samplers, as long as the conditioning involves certain deterministic or random transformations of the data. This provides the first theoretical guarantees for several existing zero-shot samplers, such as CCDF \citep{chung2022ccdf} and DDNM \citep{wang2023ddnm}. Notably, the theory in \cite{xu2024dpnp} does not apply to these samplers, as their analysis relies on the properties of the Markov transition kernel specific to their plug-and-play model. In contrast, our approach is based on the tilting-factor analysis from \cite{liang2024discrete}, which is applicable to general score-mismatched DDPM models. Moreover, the theory in \cite{xu2024dpnp} is limited to cases where the measurement log-likelihood function is differentiable and bounded and does not provide explicit dependencies on the data dimension. In contrast, our results do not require the measurement log-likelihood function to be differentiable or bounded and explicitly characterize the dependencies on the data dimension.


\textbf{Novel Design of Bias-Optimal Zero-shot Sampler BO-DDNM}: Inspired by our convergence analysis of score-mismatched DDPM, we design a novel zero-shot conditional sampler, called the BO-DDNM sampler, which minimizes the asymptotic bias for linear conditional models. Such a sampler coincides with the regular DDNM sampler \citep{wang2023ddnm} when there is no presence of measurement noise, and achieves faster convergence than both the DDNM and DDNM$^+$ samplers under measurement noise, as shown by our theory and numerical simulations.

\textbf{Theory for BO-DDNM with Explicit Parameter Dependencies}: We provide the convergence bound for the proposed BO-DDNM sampler with explicit dependencies on the dimension $d$ as well as the conditional information $y$, for various interesting classes of target distributions including those having bounded support and Gaussian mixture. 
For the case of Gaussian mixture, we further show that three factors positively affect the asymptotic bias: (1) the variance of the measurement noise, (2) the averaged distance between $y$ and the mean of each Gaussian component, and (3) the corresponding correlation coefficient for each component.

\subsection{Related Work}

We provide a summary of works addressing {\em unconditional} and {\em score-matched} diffusion models in \Cref{sec:intro-works}. Below we discuss related works on {\em conditional} diffusion models which are closely related to our study here.

\textbf{Extra-Guided Training:} In order to achieve conditional sampling using DDPM models in practice, one method is to introduce conditional guided training, where one either uses an existing classifier (a.k.a., classifier guidance) \citep{dhariwal2021diff-beat-gan} or jointly trains the unconditional and conditional scores (a.k.a., classifier-free guidance (CFG)) \citep{ho2022classifierfree}. 
Here a guidance term is obtained to ``guide'' the diffusion sampling process at each step such that the sampling scores correspond to the true conditional scores. 
On the theory side, \cite{wu2024guidance} investigates the effect of the guidance strength in CFG on Gaussian mixtures, \cite{bradley2024cfg-pred-corr} shows that CFG is an instance of predictor-corrector methods, and \cite{chidambaram2024guidance} finds that CFG might fail to sample correctly on certain mixture targets. There are other theoretical works that investigate sample complexity bounds for conditional score matching for a variety of target distribution models, including the conditional Ising models \citep{mei2023graphical}, those supported on a low-dimensional linear subspace \citep{yuan2023reward-directed}, and H\"older smooth and sub-Gaussian conditional models \citep{fu2024guidance}. Other than stochastic samplers, a conditional ODE sampler is proposed and studied in \cite{chang2024cond-ode}, which also requires extra training of the conditional score function.

\textbf{Zero-shot Samplers:} To achieve conditional DDPM sampling, a popular method is to use zero-shot conditional samplers, with which one generates a conditional sample using approximated scores. These scores are calculated from the unconditional score estimates and the conditional information using simple (usually linear) functions {\em without extra-training} \citep{choi2021ilvr,chung2022mcg,chung2022ccdf,chung2023dps,wang2023ddnm,song2023pgdm,fei2023gdp}. 
The only theoretical works on the performance of zero-shot DDPM conditional samplers are \cite{xu2024dpnp,gupta2024post-samp-intract}. In \cite{xu2024dpnp}, a diffusion plug-and-play sampler is proposed which alternates between a diffusion sampling step and a consistency sampling step. The difference of our results from those in \cite{xu2024dpnp} has been thoroughly discussed in \Cref{sec:maincontributions}.
From an alternative perspective, \cite{gupta2024post-samp-intract} shows that the sampling complexity with zero-shot samplers can take super-polynomial time for some worst-case distribution (among the set of distributions where smooth scores can be efficiently estimated). Instead, our result shows a consistent fact that there exists a non-vanishing asymptotic distributional bias within polynomial time.

\section{Problem Setup}

In this section, we first provide some background on the score-{\em matched} DDPMs. Then, we introduce the score-{\em mismatched} DDPM samplers and, as a special example, the {\em conditional} sampling problem and zero-shot samplers.

\subsection{Background of Score-Matched DDPMs}
\label{sec:ddpm-bg}

The goal of the score-matched sampling problem is to generate a sample whose distribution is close to the data distribution. To this end, the DDPM algorithm \citep{ho2020ddpm} is widely used, which consists of a forward process and a reverse process of latent variables. Let $x_0 \in \mbR^d$ be the initial data, and let $x_t \in \mbR^d, \forall 1 \leq t \leq T$ be the latent variables.
Let $Q_0$ be the data distribution, and let $Q_t$ (resp., $Q_{t,t-1}$) be the marginal (resp., joint) latent distribution for all $ 1 \leq t \leq T$. 

{\bf Forward Process:}
In the forward process, white Gaussian noise is gradually added to the data: $x_t = \sqrt{1-\beta_t} x_{t-1} + \sqrt{\beta_t} w_t,~\forall 1 \leq t \leq T$,
where $w_t \stackrel{i.i.d.}{\sim} \calN(0,I_d)$. Equivalently, this can be expressed as:
\begin{equation} \label{eq:def_forward_proc}
    Q_{t|t-1}(x_t|x_{t-1}) = \calN(x_t; \sqrt{1-\beta_t} x_{t-1}, \beta_t I_d),
\end{equation}
which means that under $Q$, the Markov chain $X_0 \to X_1 \to \dots \to X_T$ holds. 
Define $\alpha_t := 1-\beta_t$ and $\Bar{\alpha}_t := \prod_{i=1}^t \alpha_i$ for all $1 \leq t \leq T$.
An immediate result by accumulating the steps is that $Q_{t|0}(x_t|x_0) = \calN(x_t; \sqrt{\Bar{\alpha}_t} x_{0}, (1-\Bar{\alpha}_t) I_d)$,
or, written equivalently, $x_t = \sqrt{\Bar{\alpha}_t} x_0 + \sqrt{1-\Bar{\alpha}_t} \Bar{w}_t,~\forall 1 \leq t \leq T$,
where $\Bar{w}_t \sim \calN(0,I_d)$ denotes the \textit{aggregated} noise at time $t$ and is independent of $x_0$.
Finally, since each $\Bar{w}_t$ is Gaussian, each $Q_t~(t \geq 1)$ is absolutely continuous w.r.t. the Lebesgue measure. Let the p.d.f. of each $Q_t$ be $q_t$, and $q_{t,t-1}$, $q_{t|t-1}$, and $q_{t-1|t}$ for $t \geq 1$ are similarly defined. 



{\bf Reverse Process:} In the reverse process, the latent variable at time $T$ is first drawn from a standard Gaussian distribution: $x_T \sim \calN(0,I_d) =: P_T$.
Then, each forward step is approximated by a reverse sampling step.
At each time $t = T,T-1,\dots,1$, define the \textit{true} reverse process as $x_{t-1} = \mu_t(x_t) + \sigma_t z_t$,
where $z_t \sim \calN(0,I_d)$. Here $\sigma_t^2 := \frac{1-\alpha_t}{\alpha_t}$. For the typical DDPM sampling process, $\mu_t(x_t) = \frac{1}{\sqrt{\alpha_t}} \brc{ x_t + (1-\alpha_t) \nabla \log q_{t}(x_t) } $. Equivalently, $P_{t-1|t} = \calN(x_{t-1};\mu_t(x_t), \sigma_t^2 I_d)$.
Here $\nabla \log q_t(x)$ is called the \textit{score} of $q_t$, and $\mu_t(x_t)$ is a function of the score. Let $P_t$ be the marginal distributions of $x_t$ in the true reverse process, and let $p_t$ be its corresponding p.d.f. w.r.t. the Lebesgue measure. Define $p_{t-1|t}$ and $p_{t|t-1}$ in a way similar to the forward process. 

In practice, one does not have access to $\nabla \log q_t(x_t)$ and thus $\mu_t(x_t)$. Instead, an estimate of $\nabla \log q_t(x_t)$, denoted as $s_t(x_t)$, is used, which results in an estimated $\widehat{\mu}_t(x_t)$ and the \textit{estimated} reverse process: $x_{t-1} = \widehat{\mu}_t(x_t) + \sigma_t z_t$.
Let $\widehat{P}_t$ be the marginal distributions of $x_t$ in the estimated reverse process with the corresponding p.d.f. $\widehat{p}_t$. Note that $\widehat{P}_{t-1|t} = \calN(x_{t-1};\widehat{\mu}_t(x_t), \sigma_t^2 I_d)$ and $\widehat{P}_T = P_T$. Hence, under $P$ and $\widehat{P}$, $X_T \to X_{T-1} \to \dots \to X_0$ holds.

{\bf Performance Metrics:} In the case where $Q_0$ is absolutely continuous w.r.t. the Lebesgue measure, we are interested in measuring the sampling performance through the KL divergence between $Q_0$ and $\widehat{P}_0$, defined as
\[\textstyle \KL{Q}{P} := \int \log \frac{\d Q}{\d P} \d Q = \E_{X \sim Q} \sbrc{\log \frac{q(X)}{p(X)}} \geq 0. \]
Indeed, from Pinsker's inequality, the total-variation (TV) distance
can be upper bounded as $\TV{Q_0}{\widehat{P}_0}^2$ $\leq \frac{1}{2} \KL{Q_0}{\widehat{P}_0}$. When $q_0$ does not exist, we use the Wasserstein-2 distance to measure the one-step perturbed performance, which is defined as
\begin{equation*}
\textstyle     \rmW_2(Q,P) :=\cbrc{\min_{\Gamma\in\Pi(Q,P)} \int_{\mbR^d \times \mbR^d} \norm{x-y}^2 \d \Gamma (x, y) }^{1/2},
\end{equation*}
where $\Pi(Q,P)$ is the set of all joint probability measures on $\mbR^d \times \mbR^d$ with marginal distributions $Q$ and $P$, respectively. Both metrics are widely adopted \citep{chen2023improved,benton2023linear}. 

\subsection{Score-Mismatched DDPMs}
\label{subsec:diff-bias-score}

Differently from the score-{\em matched} sampling problem, the goal of the score-\textit{mismatched} problem is to sample from a \textbf{different} target distribution from the training distribution with which we estimate the scores. Thus, there will be a mismatch between the target score and the estimated score at each diffusion step.
Let $Q_t$ ($t \geq 0$) be the \textit{training} distributions used for training the score.
Let $\Tilde{Q}_0$ be the \textit{target} distribution that one hopes to generate samples from, and let $\Tilde{Q}_t$ ($t \geq 1$) be its Gaussian-perturbed distributions according to the forward process in \eqref{eq:def_forward_proc}.
Define the posterior mean under the target distributions as $m_{t}(\Tilde{x}_t) := \E_{\Tilde{X}_{t-1} \sim \Tilde{Q}_{t-1|t}} [\Tilde{X}_{t-1}|\Tilde{x}_t]$. Note that by Tweedie's formula \citep{tweedie2011efron}, $m_{t}(\Tilde{x}_t) = \frac{1}{\sqrt{\alpha_t}} \brc{ \Tilde{x}_t + (1-\alpha_t) \nabla \log \Tilde{q}_{t}(\Tilde{x}_t) }$. \yuchen{Recall that $P_t$ and $\widehat{P}_t$ are the \textit{sampling} distributions of the true and estimated reverse process, respectively.}
For general score-mismatched DDPMs, we leave the generic definition of $\mu_{t}(x_t)$ without providing any particular expression. An example of $\mu_t(x_t)$ is given later in \eqref{eq:def_cond_spl}, yet
our general analysis does not require any particular form for $\mu_t$. 
With these notations, the \textit{score mismatch} at each step $t \geq 1$ can be defined as
\begin{equation} \label{eq:def_delta_general}
\textstyle    \Delta_{t}(x_t) := \frac{\sqrt{\alpha_t}}{1-\alpha_t} \brc{\E_{X_{t-1} \sim \Tilde{Q}_{t-1|t}} [X_{t-1}|x_t] - \E_{X_{t-1} \sim P_{t-1|t}} [X_{t-1}|x_t] } = \frac{\sqrt{\alpha_t}}{1-\alpha_t} (m_{t}(x_t) - \mu_{t}(x_t)). 
\end{equation}
\yuchen{The goal, then, is to provide an upper bound on the distributional dissimilarity between the target distribution $\Tilde{Q}_0$ and the sampling distribution $\widehat{P}_0$.} We use the same metrics as those defined in \Cref{sec:ddpm-bg} to evaluate the performance of the score-mismatched DDPM.


\subsection{Zero-Shot Conditional DDPMs}
\label{subsec:cond-ddpm}

One interesting example of score mismatch is the zero-shot conditional sampling problem.
Differently from the unconditional counterpart, the conditional sampling problem aims to obtain a sample that aligns in particular with the provided conditioning. Define $y \in \mbR^p$ to be the conditioned information about $x_0$. Specifically, let $y = h(x_0)$, where $h(\cdot)$ is some arbitrary (deterministic or random) function of only $x_0$ (apart from independent noise). Note that general score-mismatched DDPMs can be specialized to zero-shot conditional samplers with the following notations:
\begin{equation} \label{eq:thm1_abbrv_notation}
    \Tilde{Q}_t = Q_{t|y},~~m_{t} = m_{t,y},~~\mu_{t} = \mu_{t,y},~~\text{and}~\Delta_t = \Delta_{t,y}.
\end{equation}


{\bf Linear Conditional Models:} In practice, one commonly adopted model is the linear conditional model \citep{jalal2021mri,wang2023ddnm,song2023pgdm}, defined as
\begin{equation} \label{eq:def_cond_model}
    y := H x_0 + n,
\end{equation}
where $H \in \mbR^{p \times d}$ ($p \leq d$) is a deterministic matrix and $n \sim \calN(0, \sigma_y^2 I_p)$ is the measurement noise, which is assumed to be Gaussian and independent of $x_0$. For the case where there is no measurement noise, let $\sigma_y^2 = 0$ and thus $n = 0$ almost surely.
In applications like image super-resolution and inpainting \citep{wang2023ddnm}, $H$ admits a simple form of a 0-1 diagonal matrix, where the 1's occur only on the diagonal and at those locations corresponding to the provided pixels. 
In these scenarios, both $H$ and $y$ are fixed and given. 
The linear conditional model is studied in \Cref{sec:delta_ty}. 


{\bf Conditional Forward Process for Linear Models:} Write the Moore–Penrose pseudo-inverse of $H$ as $H^\dagger$, and note that $H^\dagger H$ is an orthogonal projection matrix. With this notation, under \eqref{eq:def_cond_model}, we can re-express the forward process in \eqref{eq:def_forward_proc} as
\[ x_t = \sqrt{\Bar{\alpha}_t} (I_d - H^\dagger H) x_0 + \sqrt{\Bar{\alpha}_t} H^\dagger y - \sqrt{\Bar{\alpha}_t} H^\dagger n + \sqrt{1-\Bar{\alpha}_t} \Bar{w}_t. \]
Here, since $n$ is independent of $\Bar{w}_t$, for fixed $x_0$ and $y$, we have that, for all $t \geq 1$,
\begin{equation} \label{eq:def_cond_fwd2}
    Q_{t|0,y}(x_t|x_0,y) = \calN(x_t; \sqrt{\Bar{\alpha}_t} (I_d - H^\dagger H) x_0 + \sqrt{\Bar{\alpha}_t} H^\dagger y, \Bar{\alpha}_t \sigma_y^2 H^\dagger (H^\dagger)^\T + (1-\Bar{\alpha}_t) I_d).
\end{equation}
Also, since the forward process is a Markov chain, we have that $Q_{t|t-1,y} = Q_{t|t-1}$ for all $t \geq 1$.


{\bf Zero-shot Conditional Sampler for Linear Models:} We employ the \textit{zero-shot} conditional sampler for linear conditional models in the following form: $x_{t-1} = \mu_{t,y}(x_t) + \sigma_t z_t$, where
\begin{equation} \label{eq:def_cond_spl}
\textstyle    \mu_{t,y}(x_t) = \frac{1}{\sqrt{\alpha_t}} \brc{ x_t + (1-\alpha_t) g_{t,y}(x_t) },\quad g_{t,y} := (I_d-H^\dagger H) \nabla \log q_t (x_t) + f_{t,y}(x_t).
\end{equation}
Here $f_{t,y}(x_t)$ is a simple function of $y$ and $x_t$ computable \textit{without extra training} and such that $(I_d-H^\dagger H) f_{t,y}(x) \equiv 0$ for all $x \in \mbR^d$. 
Intuitively, $f_{t,y}$ characterizes the score rectification in the range space of $H^\dagger H$.
Indeed, many zero-shot samplers in the literature have such $f_{t,y}(x_t)$'s that satisfy \eqref{eq:def_cond_spl} (see \Cref{app:fy-literature}). 
Now, with the linear model in \eqref{eq:def_cond_model} and the zero-shot conditional sampler in \eqref{eq:def_cond_spl}, the score mismatch at each time $t \geq 1$ is equal to 
\begin{equation} \label{eq:def_delta_linear_cond}
    \Delta_{t,y}(x_t) = (I_d-H^\dagger H) (\nabla \log q_{t|y} (x_t) - \nabla \log q_t (x_t)) + (H^\dagger H) \nabla \log q_{t|y} (x_t) - f_{t,y}(x_t).
\end{equation}


\section{DDPM under General Score Mismatch}
\label{sec:prob-general}

In this section, we provide convergence guarantees for general score-mismatched DDPM samplers under a general target distribution $\Tilde{Q}_0$. Throughout this section we keep the generic definition for score mismatch $\Delta_{t}$ as in \eqref{eq:def_delta_general}, without assuming any particular expression for $\mu_t$. 

\subsection{Technical Assumptions}

We will analyze general score-mismatched DDPMs under the following technical assumptions.

\begin{assumption}[Finite Second Moment] \label{ass:m2-general}
    There exists a constant $M_2 < \infty$ (that does not depend on $d$ and $T$) such that $\E_{X_0 \sim \Tilde{Q}_{0}} \norm{X_0}^2 \leq d M_2$.
\end{assumption}

The first \Cref{ass:m2-general} is commonly adopted in the analyses of score-matched DDPM samplers \citep{chen2023improved,chen2023sampling,liang2024discrete}.

\begin{assumption}[Posterior Mean Estimation] \label{ass:score-general}
The estimated posterior mean $\widehat{\mu}_t$ at $t = 1,\dots,T$ satisfy
\[ \textstyle \frac{1}{T} \sum_{t=1}^T \frac{\alpha_t}{(1-\alpha_t)^2} \E_{X_t \sim \Tilde{Q}_{t}} \norm{\widehat{\mu}_t(X_t) - \mu_t(X_t)}^2 \leq \eps^2,~\text{where}~\eps^2 = \Tilde{O}(T^{-2}). \]
\end{assumption}

The above \Cref{ass:score-general} is made for the score estimation error for the general mismatched setting, where we leave generic definitions of $\mu_t$ and $\widehat{\mu}_t$. \yuchen{While the expectation is over $\Tilde{Q}_t$, \Cref{ass:score-general} is very likely to hold when $\Tilde{Q}_t$ is close to $Q_t$, i.e., when the score mismatches are moderate.} For zero-shot conditional samplers in linear models, this assumption is weaker than that for the estimation error for unconditional scores (see \eqref{eq:zs-est-err}).
\yuchen{Compared with the score-matched case, the estimation error needs to be achieved at a higher accuracy because of the extra error term when there is score mismatch (\Cref{lem:score-est-ptb}). Such a higher level of estimation accuracy also occurs in previous theoretical studies for accelerated DDPM samplers \citep{li2024accl-prov}.} 

\begin{assumption}[Regular Derivatives] \label{ass:regular-drv-general}
For all $t \geq 1$ where $\Tilde{q}_{t-1}$ exists, 
$\ell \geq 1$, and $\bm{a} \in [d]^p$ where $\abs{\bm{a}} = p \geq 1$,
\yuchen{\[ \E_{X_t \sim \Tilde{Q}_t} \abs{\partial_{\bm{a}}^p \log \Tilde{q}_{t}(X_t) }^\ell = O\brc{1},\quad \E_{X_t \sim \Tilde{Q}_t} \abs{\partial_{\bm{a}}^p \log \Tilde{q}_{t-1}(m_t(X_t)) }^\ell = O\brc{1}. \]}
\end{assumption}

The above \Cref{ass:regular-drv-general} is useful for our tilting-factor based analysis, which guarantees that all (higher-order) Taylor polynomials of $\log \Tilde{q}_{t}$ are well controlled in expectation. 
\yuchen{It is rather soft, and it can be verified when $\Tilde{Q}_0$ has finite variance (under early-stopping) \citep{liang2024discrete}. }

\begin{assumption}[Bounded Mismatch] \label{ass:bdd-mismatch-general}
    For all $t \geq 1$ where $\Tilde{q}_{t-1}$ exists, and $\ell \geq 2$,
    \yuchen{\[ \E_{X_t \sim \Tilde{Q}_{t}}\norm{\Delta_{t}(X_t) }^\ell = O(\Bar{\alpha}_t). \]}
\end{assumption}

The above \Cref{ass:bdd-mismatch-general} is used to characterize the amount of mismatch at each time $t \geq 1$. The $\Bar{\alpha}_t := \prod_{i=1}^t \alpha_i$ is necessary for the overall bias to be bounded. 

\yuchen{In the paper, \Cref{ass:regular-drv-general,ass:bdd-mismatch-general} have been established in two cases of zero-shot conditional sampling: (i) where $Q_0$ has bounded support for any $H$, using a special $\alpha_t$ in \eqref{eq:alpha_genli} (see the proof of \Cref{lem:norm2_bd_bdsupp}); and (ii) where $Q_0$ is Gaussian mixture and $H = \begin{pmatrix} I_p & 0 \end{pmatrix}$ (see \Cref{lem:bdd-mismatch-gauss-mix}). For Case (i), the assumption that $Q_0$ has bounded support has wide applicability in practice (e.g., images \citep{ho2020ddpm,wang2023ddnm}) and is commonly made in many theoretical investigations of the score-matched DDPM \citep{li2024accl-prov,li2023faster}. }
Finally, note that when $\Tilde{q}_0$ does not exist (e.g., for images \citep{ho2020ddpm,wang2023ddnm}), \Cref{ass:regular-drv-general,ass:bdd-mismatch-general} are required only for $t \geq 2$.

\subsection{Convergence Bound}

Before presenting the main result, we first define a set of noise schedule as follows.
\begin{definition}[Noise Schedule] \label{def:noise_smooth}
    For all sufficiently large $T$, set the step size $\alpha_t$'s to satisfy
    \begin{equation*}
\textstyle        1-\alpha_t \lesssim \frac{\log T}{T},~\forall 1 \leq t \leq T,\quad \text{and} \quad \Bar{\alpha}_T := \prod_{t=1}^T \alpha_t = o\brc{\frac{1}{T}}. 
    \end{equation*}
\end{definition}
An example of $\alpha_t$ that satisfies \cref{def:noise_smooth} is $1-\alpha_t \equiv \frac{c \log T}{T},~\forall t \geq 1$ with $c > 1$. Then, $\Bar{\alpha}_T = \brc{1 - \frac{c \log T}{T}}^{T} = \exp\brc{T \log \brc{1 - \frac{c \log T}{T} } } = O\brc{e^{T \frac{- c \log T}{T} } } = o\brc{ T^{-1} }$.

The following \Cref{thm:main-general} \yuchen{provides an upper bound on the KL-divergence between the target distribution $\Tilde{Q}_0$ and the sampling distribution $\widehat{P}_0$, as a function of (general) score-mismatch $\Delta_t$ at each time $t \geq 1$.} \Cref{thm:main-general} is the \textit{first} convergence result for score-mismatched DDPM samplers for any smooth $\Tilde{Q}_{0}$ that has finite second moment (along with some mild regularity conditions).

\begin{theorem}[DDPM under Score Mismatch] \label{thm:main-general}
Suppose that $\Tilde{Q}_0$ has a p.d.f. $\Tilde{q}_0$ which is analytic, and suppose that \Cref{ass:m2-general,ass:regular-drv-general,ass:bdd-mismatch-general,ass:score-general} are satisfied. Then, with the $\alpha_t$ chosen to satisfy \Cref{def:noise_smooth}, the distribution $\widehat{P}_{0}$ from the score-mismatched DDPM satisfies 
\[\KL{\Tilde{Q}_{0}}{\widehat{P}_{0}} \lesssim \mathcal{W}_{\text{oracle}} + \mathcal{W}_{\text{bias}} + \mathcal{W}_{\text{vanish}}, \quad\text{where}\] 
\begin{align*}
\mathcal{W}_{\text{oracle}} & \textstyle= \sum_{t=1}^T \frac{(1-\alpha_t)^2}{2 \alpha_t} \E_{X_t \sim \Tilde{Q}_t} \sbrc{ \Tr\Big(\nabla^2 \log \Tilde{q}_{t-1}(m_{t}(X_t))\nabla^2\log \Tilde{q}_{t}(X_t)  \Big) } + (\log T) \eps^2\\
\mathcal{W}_{\text{bias}} & \textstyle= \sum_{t=1}^T (1-\alpha_t) \E_{X_t \sim \Tilde{Q}_t} \norm{\Delta_{t}(X_t)}^2\\
\mathcal{W}_{\text{vanish}} & \textstyle= \sum_{t=1}^T \frac{1-\alpha_t}{\sqrt{\alpha_t}} \E_{X_t \sim \Tilde{Q}_t} \bigg[(\nabla \log \Tilde{q}_{t-1}(m_{t}(X_t)) - \sqrt{\alpha_t} \nabla \log \Tilde{q}_{t}(X_t))^\T \Delta_{t}(X_t)\bigg]\\
    &\textstyle- \sum_{t=1}^T \frac{(1-\alpha_t)^2}{2 \alpha_t} \E_{X_t \sim \Tilde{Q}_t}\sbrc{\Delta_{t}(X_t)^\T \nabla^2 \log \Tilde{q}_{t-1}(m_{t}(X_t)) \Delta_{t}(X_t) } \\
    &\textstyle+ \sum_{t=1}^T \frac{(1-\alpha_t)^2}{3! {\alpha_t}^{3/2}} \E_{X_t \sim \Tilde{Q}_t} \bigg[ 3 \sum_{i=1}^d \partial^3_{iii} \log \Tilde{q}_{t-1}(m_{t}(X_t)) \Delta_{t}(X_t)^i \\
    &\textstyle\qquad + \sum_{\substack{i,j=1 \\ i\neq j}}^d \partial^3_{iij} \log \Tilde{q}_{t-1}(m_{t}(X_t)) \Delta_{t}(X_t)^j \bigg] + \max_{t \geq 1}\sqrt{\E_{X_t \sim \Tilde{Q}_t} \norm{\Delta_{t}(X_t)}^2} (\log T) \eps.
\end{align*}
\end{theorem}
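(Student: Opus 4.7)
The plan is to combine the standard per-step KL chain-rule decomposition with the tilting-factor expansion from \cite{liang2024discrete}, adapted to accommodate the score mismatch. Applying the data-processing inequality to the trajectory distributions gives
\[ \KL{\Tilde{Q}_{0}}{\widehat{P}_{0}} \leq \KL{\Tilde{Q}_{T}}{P_{T}} + \sum_{t=1}^T \E_{X_t \sim \Tilde{Q}_t}\sbrc{\KL{\Tilde{Q}_{t-1|t}}{\widehat{P}_{t-1|t}}}. \]
The terminal term decays in $T$ by \Cref{ass:m2-general} and the noise schedule in \Cref{def:noise_smooth}. The rest of the work is to expand each per-step KL and identify which pieces feed into $\mathcal{W}_{\text{oracle}}$, $\mathcal{W}_{\text{bias}}$, and $\mathcal{W}_{\text{vanish}}$.

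For each $t$, I would use Bayes' rule to write $\Tilde{q}_{t-1|t}(x_{t-1}|x_t) \propto \Tilde{q}_{t-1}(x_{t-1})\,\calN(x_{t-1};x_t/\sqrt{\alpha_t},\sigma_t^2 I_d)$, then Taylor-expand $\log \Tilde{q}_{t-1}(x_{t-1})$ around the posterior mean $m_t(x_t)$ up to third order; the Taylor remainders are controlled by \Cref{ass:regular-drv-general}. Since $\widehat{P}_{t-1|t}=\calN(\widehat{\mu}_t,\sigma_t^2 I_d)$, the log-ratio $\log(\Tilde{q}_{t-1|t}/\widehat{p}_{t-1|t})$ reduces to this tilting polynomial plus a difference of two quadratic forms in $x_{t-1}$. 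Taking the inner expectation under $\Tilde{Q}_{t-1|t}$ and using the second-order Tweedie identity $\Cov(X_{t-1}|x_t) = \sigma_t^2 I + \sigma_t^2(1-\alpha_t)\nabla^2\log\Tilde{q}_t(x_t) + O((1-\alpha_t)^2)$, the quadratic Taylor coefficient produces $\tfrac{1}{2}\Tr(\nabla^2\log\Tilde{q}_{t-1}(m_t)\,\Cov(X_{t-1}|x_t))$, whose $(1-\alpha_t)$-order piece is exactly the $\tfrac{(1-\alpha_t)^2}{2\alpha_t}\Tr(\nabla^2\log\Tilde{q}_{t-1}(m_t)\nabla^2\log\Tilde{q}_t(X_t))$ term in $\mathcal{W}_{\text{oracle}}$.

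Next, the Gaussian-mean-shift contribution from the two quadratic forms is $\|m_t-\widehat{\mu}_t\|^2/(2\sigma_t^2)$. Writing $m_t - \widehat{\mu}_t = \tfrac{1-\alpha_t}{\sqrt{\alpha_t}}\Delta_t + (\mu_t - \widehat{\mu}_t)$ and expanding the square, the pure mismatch term gives $\tfrac{1-\alpha_t}{2}\|\Delta_t\|^2$, summing to $\mathcal{W}_{\text{bias}}$; the pure estimation-error term, averaged via \Cref{ass:score-general}, yields the $(\log T)\eps^2$ piece of $\mathcal{W}_{\text{oracle}}$; and the cross term, via Cauchy-Schwarz, gives the $\max_t\sqrt{\E\|\Delta_t\|^2}(\log T)\eps$ piece of $\mathcal{W}_{\text{vanish}}$. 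Finally, combining the first- and third-order Taylor coefficients of the tilting polynomial with the shift $m_t - \mu_t = \tfrac{1-\alpha_t}{\sqrt{\alpha_t}}\Delta_t$ produces the $(\nabla\log\Tilde{q}_{t-1}(m_t) - \sqrt{\alpha_t}\nabla\log\Tilde{q}_{t}(X_t))^\T\Delta_t$ term, the $-\Delta_t^\T \nabla^2\log\Tilde{q}_{t-1}(m_t)\Delta_t$ term, and the third-derivative expressions $\partial^3_{iii}\cdot\Delta_t^i$ and $\partial^3_{iij}\cdot\Delta_t^j$ in $\mathcal{W}_{\text{vanish}}$, by matching Gaussian moments under the $\Delta_t$-shifted kernel.

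The hard part is controlling the expected tilting polynomial under the \emph{mean-perturbed} Gaussian $\calN(\widehat{\mu}_t,\sigma_t^2 I_d)$, which in the mismatched setting is no longer centered at $m_t$. I would perform a change of variables shifting the center to $m_t$, expand the resulting polynomial in the shift, and evaluate the expectations via Isserlis/Wick formulas; the resulting expected monomials in $\Delta_t$ paired with derivatives of $\log\Tilde{q}_{t-1}$ are then bounded using \Cref{ass:regular-drv-general,ass:bdd-mismatch-general}. The $\Bar{\alpha}_t$-scaling in \Cref{ass:bdd-mismatch-general} is essential here: after multiplying by prefactors like $(1-\alpha_t)^2/\alpha_t^{3/2}$ and summing under \Cref{def:noise_smooth}, every cubic and higher contribution fits into $\mathcal{W}_{\text{vanish}}$ at the claimed order, while telescoping of the zeroth-order pieces $\log\Tilde{q}_{t-1}(m_t) - \log\Tilde{q}_t(x_t)$ across $t$ handles the normalization bookkeeping and contributes only lower-order remainders.
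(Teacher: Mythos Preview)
Your proposal assembles the right ingredients---chain rule, tilting-factor Taylor expansion, non-centralized Gaussian moments---but mixes two different routes and misstates a key cancellation.

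The paper's route is: write $\tilde q_{t-1|t}\propto p_{t-1|t}e^{\tilde\zeta}$ (Lemma~\ref{lem:rev-err-tilt-factor}) and apply Jensen to the log-normalizer to get $\KL{\tilde Q_{t-1|t}}{P_{t-1|t}}\le\E_{\tilde Q_{t-1|t}}[\tilde\zeta]-\E_{P_{t-1|t}}[\tilde\zeta]$. Both expectations are of the \emph{same} polynomial $\tilde\zeta$; only after this step does the mean-perturbed Gaussian $P_{t-1|t}=\calN(\mu_t,\sigma_t^2 I)$ enter. The paper then inserts the auxiliary oracle Gaussian $\tilde P_{t-1|t}=\calN(m_t,\sigma_t^2 I)$ and splits $(\E_{\tilde Q}-\E_P)[\tilde\zeta]$ through $\E_{\tilde P}$: the piece $\E_{\tilde Q}-\E_{\tilde P}$ is exactly the score-matched analysis of \cite{liang2024discrete} (yielding $\mathcal W_{\text{oracle}}$), while $\E_{\tilde P}-\E_P$ applied to $\tilde\zeta_{\text{mis}}=\sqrt{\alpha_t}\Delta_t^\T(x_{t-1}-m_t)$ and to the remainder $\tilde\zeta_{\text{van}}$ gives $\mathcal W_{\text{bias}}$ and $\mathcal W_{\text{vanish}}$ respectively. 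You never state the Jensen step or introduce $\tilde P_{t-1|t}$. Instead you extract $\mathcal W_{\text{bias}}$ from the exact mean-shift identity $\|m_t-\hat\mu_t\|^2/(2\sigma_t^2)$, which belongs to a \emph{different} decomposition, namely $\KL{\tilde Q_{t-1|t}}{\hat P_{t-1|t}}=\KL{\tilde Q_{t-1|t}}{\tilde P_{t-1|t}}+\tfrac{1}{2\sigma_t^2}\|m_t-\hat\mu_t\|^2$. That exact identity is perfectly legitimate (and in fact tighter), but it does \emph{not} generate any $\mathcal W_{\text{vanish}}$ terms---so your subsequent extraction of the first-, second-, and third-derivative expressions in $\mathcal W_{\text{vanish}}$ from shifted-Gaussian moments is inconsistent with how you obtained $\mathcal W_{\text{bias}}$.

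The telescoping claim is also false and would break the bound. The zeroth-order residual $\log\tilde q_{t-1}(m_t(x_t))-\log\tilde q_t(x_t)$ does not telescope across $t$: $m_t(x_t)$ is the posterior mean, not the realized $x_{t-1}$, so consecutive terms do not match. Each such difference is generically of order $1-\alpha_t$ (not $(1-\alpha_t)^2$), and $\sum_t(1-\alpha_t)\asymp\log T$ under \Cref{def:noise_smooth}, which would swamp the stated bound. In the paper's construction this problem never arises because $\tilde\zeta$ carries no zeroth-order term---all constants in $x_{t-1}$ are absorbed into the normalizer $\E_{P_{t-1|t}}[e^{\tilde\zeta}]$, and Jensen disposes of it in one stroke. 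If you pursue the direct-evaluation route, you would instead need to show that the $O(1-\alpha_t)$ pieces from the zeroth-order residual, the Gaussian normalization constants, and the leading $\tfrac{\sigma_t^2}{2}\Tr(\nabla^2\log\tilde q_{t-1}(m_t))$ term cancel exactly to order $(1-\alpha_t)^2$; that cancellation is real but is not telescoping, and you have not argued it.
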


When $\Tilde{Q}_0$ does not have a p.d.f., a similar upper bound is applied to $\KL{\Tilde{Q}_{1}}{\widehat{P}_{1}}$ such that $\rmW_2(\Tilde{Q}_{1}, \Tilde{Q}_{0})^2 \lesssim (1-\alpha_1) d$ (see \Cref{cor:main-general} in \Cref{app:cor-main-general}). 

To explain the three error terms in \Cref{thm:main-general}, $\mathcal{W}_{\text{oracle}}$ captures the error assuming that one has access to (a close estimate of) $\nabla \log \Tilde{q}_t,~\forall t \geq 1$. This error is independent of the score mismatch $\Delta_{t}$, and it decays as $\Tilde{O}(T^{-1})$ under \Cref{ass:regular-drv-general} \citep[Theorem~1]{liang2024discrete}.
The remaining two error terms $\mathcal{W}_{\text{bias}}$ and $\mathcal{W}_{\text{vanish}}$ arise from the mismatched sampling process. Both terms become zero if $\Delta_{t} \equiv 0$ for all $t \geq 1$, which corresponds to the score-matched case. 
Under \Cref{ass:regular-drv-general,ass:bdd-mismatch-general}, $\mathcal{W}_{\text{vanish}}$ decays as $\Tilde{O}(T^{-1})$ under an additional mild condition (see \Cref{lem:small-grad-diff-genli} in \Cref{app:proof_thm_gen_q0_kl_general}), and $\mathcal{W}_{\text{bias}}$ asymptotically approaches a constant. 
Combining all three terms, score mismatch causes an asymptotic distributional bias between $\Tilde{Q}_{0}$ and $\widehat{P}_{0}$. 

To further understand $\mathcal{W}_{\text{bias}}$, note that $1-\alpha_t$ is usually summable under \Cref{ass:bdd-mismatch-general} (cf. \Cref{lem:nonvanish_coef_sum,lem:nonvanish_coef_sum_genli}). Thus, the bias can be further upper-bounded by the maximum step-wise mismatch $\max_{t \geq 1} \E_{X_t \sim \Tilde{Q}_{t}} \norm{\Delta_{t}(X_t)}^2$. In case that $\mu_{t}(x_t) = \frac{1}{\sqrt{\alpha_t}} \brc{ x_t + (1-\alpha_t) g_{t}(x_t) }$ (as for the zero-shot sampler in \eqref{eq:def_cond_spl}), define a measure $\Tilde{P}_t$ such that $g_{t} (x_t) = \nabla \log \Tilde{p}_t(x_t)$.
Then, from \eqref{eq:def_delta_general},
\[ \textstyle \E_{X_t \sim \Tilde{Q}_{t}} \norm{\Delta_{t}(X_t)}^2 = \E_{X_t \sim \Tilde{Q}_{t}} \norm{\nabla \log \frac{\Tilde{q}_{t} (X_t)}{\Tilde{p}_t(X_t)}}^2 =: \mathcal{F}(\Tilde{Q}_{t} \| \Tilde{P}_t). \]
where $\mathcal{F}(Q \| P)$ denotes the \textit{Fisher divergence} (or called relative Fisher information) between $Q$ and $P$.
In \Cref{sec:delta_ty}, this distributional bias $\mathcal{W}_{\text{bias}}$ inspires us to design a novel zero-shot DDPM sampler, the BO-DDNM sampler, that minimizes the asymptotic bias.

Next we provide an upper bound with explicit dimensional dependency, for any $Q_0$ that has finite sixth moment such as Gaussian mixture $Q_0$'s and those $Q_0$'s having bounded support.
To this end, we consider
a special noise schedule first proposed in \cite{li2023faster}:
\begin{equation} \label{eq:alpha_genli}
\textstyle  1-\alpha_1 = \delta,\quad 1-\alpha_t = \frac{c \log T}{T} \min\cbrc{\delta \brc{1 + \frac{c \log T}{T}}^t, 1},~\forall 2 \leq t \leq T 
\end{equation}
for any constants $(c,\delta)$ such that $c > 1$ and $\delta e^c > 1$. \yuchen{Note that this noise schedule corresponds to early-stopping in the literature \citep{chen2023improved,benton2023linear}.}
With the $\alpha_t$ in \eqref{eq:alpha_genli}, we can 
show that the regularity condition \Cref{ass:regular-drv-general} holds for a quite general set of distributions (see \Cref{lem:small-grad-diff-genli} in \Cref{app:proof_thm_gen_q0_kl_general}).
\begin{theorem} \label{thm:gen_q0_kl_general}
Suppose that $\E_{X_0 \sim \Tilde{Q}_{0}} \norm{X_0}^6 \lesssim d^3$. Further, suppose that $\Delta_{t}$ satisfies that $\E_{X_t \sim \Tilde{Q}_{t}} \norm{\Delta_{t}(X_t)}^4 \lesssim \frac{\Bar{\alpha}_t^2}{(1-\Bar{\alpha}_t)^{2 r}} d^{2 \gamma}$ with some $\gamma, r \geq 1$ for all $t \geq 2$. Then, if the estimation error satisfies \Cref{ass:score-general} and if $\Delta_t$ satisfies \Cref{ass:bdd-mismatch-general}, with the $\alpha_t$ in \eqref{eq:alpha_genli} such that $\delta \ll 1$ and $c \asymp \log(1/\delta)$, 
we have, \yuchen{for some $\Tilde{Q}_{1}$ such that $\rmW_2(\Tilde{Q}_{1}, \Tilde{Q}_{0})^2 \lesssim \delta d$,}
\begin{multline*}
\textstyle    \KL{\Tilde{Q}_{1}}{\widehat{P}_{1}} \lesssim d^{\gamma} \delta^{-r} \brc{1 - \frac{2 \log(1/\delta) \log T}{T}} \\
\textstyle    + \max \{ d^{(3+\gamma)/2} \delta^{-\frac{r+2}{2}}, d^{1+\gamma} \delta^{-(r-1)} \} \frac{(\log T)^2}{T} + d^{\gamma/2} \delta^{-r/2} (\log T) \eps.
\end{multline*}
\end{theorem}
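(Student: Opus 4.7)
The plan is to reduce \Cref{thm:gen_q0_kl_general} to the decomposition provided by \Cref{thm:main-general} (or its perturbed counterpart \Cref{cor:main-general}) and then to bound each of the three pieces $\mathcal{W}_{\text{oracle}}$, $\mathcal{W}_{\text{bias}}$, $\mathcal{W}_{\text{vanish}}$ using the explicit schedule \eqref{eq:alpha_genli} together with the given moment hypotheses. Since $\Tilde{Q}_0$ is not assumed to have a density, I would work with $\Tilde{Q}_1$ and appeal to \Cref{cor:main-general}, which simultaneously provides the Wasserstein bound $\rmW_2(\Tilde{Q}_1, \Tilde{Q}_0)^2 \lesssim (1-\alpha_1) d = \delta d$ from one step of Gaussian smoothing. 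The four hypotheses of \Cref{thm:main-general} are discharged as follows: \Cref{ass:m2-general} follows from the sixth-moment hypothesis; \Cref{ass:score-general} and \Cref{ass:bdd-mismatch-general} are supplied directly; and the regularity \Cref{ass:regular-drv-general} is obtained through \Cref{lem:small-grad-diff-genli}, whose hypotheses are satisfied because $\E\norm{X_0}^6 \lesssim d^3$ and \eqref{eq:alpha_genli} enforces early stopping with $\Bar{\alpha}_1 = 1-\delta$.

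The next step is to bound $\mathcal{W}_{\text{bias}}$. Jensen's inequality on the fourth-moment hypothesis yields $\E_{X_t \sim \Tilde{Q}_t}\norm{\Delta_t(X_t)}^2 \lesssim \Bar{\alpha}_t (1-\Bar{\alpha}_t)^{-r} d^\gamma$ for $t \geq 2$, while \Cref{ass:bdd-mismatch-general} handles $t=1$, so
\[ \mathcal{W}_{\text{bias}} \lesssim d^\gamma \sum_{t=1}^T (1-\alpha_t)\,\frac{\Bar{\alpha}_t}{(1-\Bar{\alpha}_t)^r}. \]
This is a discrete Riemann-sum analogue of $\int \frac{ds}{(1-s)^r}$ in $s = \Bar{\alpha}_t$, with increment $\Bar{\alpha}_{t}-\Bar{\alpha}_{t+1} \approx \Bar{\alpha}_t(1-\alpha_{t+1})$. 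The dominant contribution comes from the early-stopping endpoint $1-\Bar{\alpha}_1 = \delta$ and produces the leading $d^\gamma \delta^{-r}$ scaling; the subleading factor $\brc{1 - \frac{2\log(1/\delta)\log T}{T}}$ is extracted by tracking the telescoping cancellation at the far endpoint $\Bar{\alpha}_T$, which under \eqref{eq:alpha_genli} with $c \asymp \log(1/\delta)$ decays like $e^{-\Theta(\delta T^c)}$.

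For $\mathcal{W}_{\text{oracle}}$ and $\mathcal{W}_{\text{vanish}}$, I would combine the Hessian and third-derivative trace bounds on $\log \Tilde{q}_{t-1}(m_t)$ guaranteed by \Cref{ass:regular-drv-general} (refined to explicit dimensional dependence via the sixth moment) with Cauchy--Schwarz on each cross term in $\mathcal{W}_{\text{vanish}}$. The leading $\mathcal{W}_{\text{oracle}}$ piece scales as $d^{(3+\gamma)/2} \delta^{-(r+2)/2}\frac{(\log T)^2}{T}$, inherited from the product of a Hessian-trace factor (of order $d(1-\Bar{\alpha}_t)^{-1}$) with $(1-\alpha_t)^2$ summed over the schedule. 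The gradient-difference cross term in $\mathcal{W}_{\text{vanish}}$ is controlled by $\sqrt{(1-\alpha_t)^2\,\E\norm{\nabla\log\Tilde{q}_{t-1}(m_t) - \sqrt{\alpha_t}\nabla\log\Tilde{q}_t}^2}\sqrt{\E\norm{\Delta_t}^2}$ and contributes the companion $d^{1+\gamma}\delta^{-(r-1)}\frac{(\log T)^2}{T}$; the quadratic and cubic $\Delta_t$-subterms are of strictly lower order in $\delta$. Finally, the estimation-error cross term $d^{\gamma/2}\delta^{-r/2}(\log T)\eps$ emerges directly from evaluating $\max_{t\geq 1}\sqrt{\E\norm{\Delta_t}^2}$ at $t=1$, where $\Bar{\alpha}_1/(1-\Bar{\alpha}_1)^r = (1-\delta)/\delta^r$.

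The main technical obstacle is the precise summation over \eqref{eq:alpha_genli}: the integrand $(1-\Bar{\alpha}_t)^{-r}$ is singular as $\Bar{\alpha}_t \to 1$ and is tamed only by the early-stopping parameter $\delta$, so extracting the sharp $\delta^{-r}$ prefactor together with the $(1 - 2\log(1/\delta)\log T/T)$ correction requires an Abel-type summation that simultaneously exploits the multiplicative structure of $\Bar{\alpha}_t$ and the geometric growth $(1-\alpha_t) \propto (1 + c\log T/T)^t$ engineered into \eqref{eq:alpha_genli}. A secondary technical point is ensuring that \Cref{lem:small-grad-diff-genli} delivers the derivative-moment bounds with explicit $d^\gamma$ scaling rather than dimension-free constants; this is where the $\E\norm{X_0}^6 \lesssim d^3$ hypothesis is essential, as it governs the $t$-uniform moments of $\partial^p_{\bm{a}}\log \Tilde{q}_{t-1}(m_t(X_t))$ that feed into every subterm of $\mathcal{W}_{\text{vanish}}$.
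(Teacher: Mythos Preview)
Your high-level strategy---invoke \Cref{cor:main-general}, discharge \Cref{ass:regular-drv-general} via \Cref{lem:small-grad-diff-genli}, bound $\mathcal{W}_{\text{bias}}$ through the schedule sum $\sum_t (1-\alpha_t)\Bar{\alpha}_t(1-\Bar{\alpha}_t)^{-r}$, and handle the cross terms in $\mathcal{W}_{\text{vanish}}$ by Cauchy--Schwarz---matches the paper exactly. The $\rmW_2$ bound and the estimation-error piece $d^{\gamma/2}\delta^{-r/2}(\log T)\eps$ are also correct.

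However, your term-by-term accounting is scrambled, and executing the proof as written would not produce the stated bound. First, $\mathcal{W}_{\text{oracle}}$ involves only $\Tr(\nabla^2\log\Tilde{q}_{t-1}\nabla^2\log\Tilde{q}_t)$ and has \emph{no} dependence on $\Delta_t$; it therefore cannot carry any $\gamma$- or $r$-dependence and is simply $O(d^2(\log T)^2/T)$ (the paper cites \cite[Theorem~3]{liang2024discrete}). Second, you have the two pieces of the $\max$ swapped: the $d^{(3+\gamma)/2}\delta^{-(r+2)/2}$ scaling comes from the \emph{gradient-difference} cross term (pairing \Cref{lem:small-grad-diff-genli}'s $d^{3/2}(1-\alpha_t)^2/(1-\Bar{\alpha}_{t-1})^3$ with $\sqrt{\E\norm{\Delta_t}^2}$), while the $d^{1+\gamma}\delta^{-(r-1)}$ scaling comes from the \emph{quadratic} term $\Delta_t^\T\nabla^2\log\Tilde{q}_{t-1}\Delta_t$ (pairing $\sqrt{\E\norm{\Delta_t}^4}$ with the Hessian Frobenius bound $d/(1-\Bar{\alpha}_{t-1})$). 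That quadratic term is therefore \emph{not} ``strictly lower order in $\delta$'' as you assert---it is one of the two dominant $\mathcal{W}_{\text{vanish}}$ contributions. Third, under \eqref{eq:alpha_genli} one has $\Bar{\alpha}_T = O(T^{-\Theta(c)})$ (polynomial decay), not $e^{-\Theta(\delta T^c)}$; the correction $(1-2\log(1/\delta)\log T/T)$ in the bias term is obtained via the careful geometric-sum computation of \Cref{lem:nonvanish_coef_sum_genli}, not by a telescoping argument at the far endpoint.
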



Note that \Cref{thm:gen_q0_kl_general} provides the \textit{first} performance guarantee with \textit{explicit} dimensional dependence for general score-mismatched DDPMs. Here the finite sixth moment is a technical condition to guarantee small expected difference of the first-order Taylor polynomial in case of mismatched scores (see \Cref{lem:small-grad-diff-genli} in \Cref{app:proof_thm_gen_q0_kl_general}). Later, \Cref{thm:gen_q0_kl_general} will be useful to provide guarantees for zero-shot conditional samplers under linear models (\Cref{lem:norm2_bd_bdsupp}).

\section{Zero-shot Conditional DDPM Samplers}
\label{sec:delta_ty}

As we discuss before, an important scenario of score-mismatched diffusion models is the zero-shot conditional problem, where certain information $y$ is given. In this section, we apply our general results for score-mismatch DDPMs in \Cref{sec:prob-general} to studying zero-shot conditional DDPM samplers.
In the following, we are particularly interested in the linear conditional model in \eqref{eq:def_cond_model}. We take the same \Cref{ass:m2-general,ass:regular-drv-general,ass:bdd-mismatch-general} (albeit with changed notations), and further adopt the following common assumption on the \textit{unconditional} score estimation \citep{chen2023improved,chen2023sampling,liang2024discrete}.

\begin{assumption}[Estimation Error of Unconditional Score] \label{ass:score}
Suppose that $s_t$ 
satisfies
    \[ \textstyle \frac{1}{T} \sum_{t=1}^T \E_{X_t \sim Q_{t|y}} \norm{s_t(X_t) - \nabla \log q_t(X_t)}^2 \leq \eps^2,~\text{where}~\eps^2 = \Tilde{O}(T^{-2}). \]
\end{assumption}
Note that, with the zero-shot sampler defined in \eqref{eq:def_cond_spl}, since $\norm{I_d - H^\dagger H} = 1$, we have, $\forall x \in \mbR^d$,
\begin{equation} \label{eq:zs-est-err}
    \norm{\widehat{\mu}_{t,y} - \mu_{t,y}}^2 = \frac{(1-\alpha_t)^2}{\alpha_t} \norm{(I_d - H^\dagger H)(s_t - \nabla \log q_t)}^2 \leq \frac{(1-\alpha_t)^2}{\alpha_t} \norm{s_t - \nabla \log q_t}^2.
\end{equation}
Therefore, \Cref{ass:score} directly implies \Cref{ass:score-general}, and thus \Cref{thm:main-general} (as well as \Cref{cor:main-general}) still holds under \Cref{ass:m2-general,ass:regular-drv-general,ass:bdd-mismatch-general,ass:score}.

\subsection{A Novel Bias-Optimal Zero-shot Sampler}


Guided by the performance guarantee characterized in \Cref{thm:main-general}, we will propose a novel \textit{optimized} zero-shot condition sampler. With the zero-shot sampler defined in \eqref{eq:def_cond_spl}, the goal is to choose the $f_{t,y}$ function that minimizes the convergence error for each $y \in \mbR^p$ and $t \geq 1$.

Specifically, it is observed in \Cref{thm:main-general} that the convergence error in terms of the KL-divergence will have an asymptotic distributional bias given by $\mathcal{W}_{\text{bias}}$. As follows, we characterize an optimal $f_{t,y}$ that minimizes $\mathcal{W}_{\text{bias}}$, which thus yields a corresponding bias-optimal zero-shot sampler.

\begin{theorem} \label{lem:cond_score_proj}
Define $\Sigma_{t|0,y} := \Bar{\alpha}_t \sigma_y^2 H^\dagger (H^\dagger)^\T + (1-\Bar{\alpha}_t) I_d$. For any $Q_0$ and $t \geq 1$, we have
\[ \textstyle \nabla \log q_{t|y}(x_t|y) = \Sigma_{t|0,y}^{-1} (\sqrt{\Bar{\alpha}_t} H^\dagger y - x_t) + \frac{\sqrt{\Bar{\alpha}_t}}{1 - \Bar{\alpha}_t} (I_d - H^\dagger H) \E_{Q_{0|t,y}} [X_0 | x_t, y]. \]
Also, recall the sampler in \eqref{eq:def_cond_spl} and define $f_{t,y}^*$ as
\begin{equation} \label{eq:def_fy_star}
    f_{t,y}^*(x_t) := \Sigma_{t|0,y}^{-1} \brc{\sqrt{\Bar{\alpha}_t} H^\dagger y - H^\dagger H x_t}.
\end{equation} 
Also recall $\Delta_{t,y}$ from \eqref{eq:def_delta_linear_cond}. Then, $f_{t,y}^*$ satisfies that, for all $t \geq 1$ and fixed $y \in \mbR^{p}$,
\[ f_{t,y}^* \in \argmin_{f_{t,y}: (I_d - H^\dagger H) f_{t,y} \equiv 0} \norm{\Delta_{t,y}}^2,\quad Q_{t|y} \text{--almost surely}. \]
\end{theorem}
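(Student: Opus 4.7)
The plan is to prove the two claims in sequence, with the first giving a clean formula for the conditional score and the second using an orthogonal decomposition of $\Delta_{t,y}$.

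For the identity on $\nabla \log q_{t|y}$, I would start from the standard Tweedie-style identity
\[ \nabla \log q_{t|y}(x_t\mid y) = \mathbb{E}_{X_0 \sim Q_{0|t,y}}\bigl[ \nabla_{x_t} \log q_{t|0,y}(x_t \mid X_0, y) \bigm| x_t, y \bigr], \]
which follows by differentiating $q_{t|y}(x_t|y) = \int q_{t|0,y}(x_t|x_0,y)\, q_{0|y}(x_0|y)\, \mathrm{d}x_0$ under the integral and dividing by $q_{t|y}$. Using the explicit Gaussian form of $q_{t|0,y}$ from \eqref{eq:def_cond_fwd2}, the inner gradient is $\Sigma_{t|0,y}^{-1}\bigl(\sqrt{\Bar{\alpha}_t}(I_d - H^\dagger H) x_0 + \sqrt{\Bar{\alpha}_t} H^\dagger y - x_t\bigr)$. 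Taking the expectation, the only piece involving $X_0$ becomes $\sqrt{\Bar{\alpha}_t}\, \Sigma_{t|0,y}^{-1}(I_d - H^\dagger H)\, \mathbb{E}[X_0 \mid x_t, y]$. The key simplification is the identity
\[ \Sigma_{t|0,y}^{-1}(I_d - H^\dagger H) = \tfrac{1}{1-\Bar{\alpha}_t}(I_d - H^\dagger H), \]
which I would justify via the SVD of $H = U\Sigma V^\T$: in the $V$-basis, $H^\dagger(H^\dagger)^\T$ is supported on the row-space block and $I_d - H^\dagger H$ is supported on the complementary null-space block, so $\Sigma_{t|0,y}$ acts as $(1-\Bar{\alpha}_t) I$ on $\range(I_d - H^\dagger H)$. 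Assembling the pieces yields exactly the stated formula.

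For the optimality of $f_{t,y}^*$, the plan is to use the orthogonal decomposition $\mathbb{R}^d = \range(I_d - H^\dagger H) \oplus \range(H^\dagger H)$. Looking at $\Delta_{t,y}$ from \eqref{eq:def_delta_linear_cond}, the first summand lies in $\range(I_d - H^\dagger H)$ and does not depend on $f_{t,y}$, while the remaining piece $(H^\dagger H) \nabla \log q_{t|y}(x_t) - f_{t,y}(x_t)$ lies in $\range(H^\dagger H)$ (since the admissibility constraint $(I_d - H^\dagger H) f_{t,y} \equiv 0$ forces $f_{t,y}$ into the row space of $H$). By the Pythagorean theorem,
\[ \norm{\Delta_{t,y}(x_t)}^2 = \norm{(I_d - H^\dagger H)(\nabla \log q_{t|y}(x_t) - \nabla \log q_t(x_t))}^2 + \norm{(H^\dagger H) \nabla \log q_{t|y}(x_t) - f_{t,y}(x_t)}^2. \]
The second term is minimized (to zero) pointwise by choosing $f_{t,y}(x_t) = (H^\dagger H) \nabla \log q_{t|y}(x_t)$, which is itself admissible because $H^\dagger H$ is idempotent.

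Finally, I would verify that this choice coincides with $f_{t,y}^*$ as defined in \eqref{eq:def_fy_star}. Applying $H^\dagger H$ to the formula from part 1, the term involving $(I_d - H^\dagger H)\mathbb{E}[X_0|x_t,y]$ vanishes because $H^\dagger H(I_d - H^\dagger H) = 0$, leaving $H^\dagger H \Sigma_{t|0,y}^{-1}(\sqrt{\Bar{\alpha}_t} H^\dagger y - x_t)$. Since $\Sigma_{t|0,y}$ commutes with $H^\dagger H$ (both are diagonal in the $V$-basis of the SVD) and since $H^\dagger H (H^\dagger y) = H^\dagger y$, this simplifies to $\Sigma_{t|0,y}^{-1}(\sqrt{\Bar{\alpha}_t} H^\dagger y - H^\dagger H x_t) = f_{t,y}^*(x_t)$, completing the proof. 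The main obstacle is the careful algebra around $\Sigma_{t|0,y}$ and its interaction with $H^\dagger H$; using an SVD of $H$ makes these identities transparent, but a coordinate-free verification is also possible via the pseudo-inverse identities $H H^\dagger H = H$ and $H^\dagger H H^\dagger = H^\dagger$.
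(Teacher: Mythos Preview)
Your proposal is correct and follows essentially the same structure as the paper's proof: differentiate under the integral to obtain the Tweedie-type score formula, simplify via the identity $\Sigma_{t|0,y}^{-1}(I_d - H^\dagger H) = (1-\Bar{\alpha}_t)^{-1}(I_d - H^\dagger H)$, and then use the orthogonal decomposition of $\Delta_{t,y}$ to isolate the $f_{t,y}$-dependent piece and minimize it pointwise. The only cosmetic difference is that you establish the needed matrix identities (including the commutation of $\Sigma_{t|0,y}$ with $H^\dagger H$) via the SVD of $H$, whereas the paper packages the same facts into a Woodbury-based lemma (\Cref{lem:gauss_cond_var_proj}); both routes are valid and yield the same result.
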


The sampler $f_{t,y}^*(x_t)$ defined in \eqref{eq:def_fy_star} provides a bias-optimal zero-shot conditional DDPM sampler.
In the case with $\sigma_y = 0$, such an optimal sampler coincides with the regular DDNM sampler in \cite{wang2023ddnm} (see \Cref{app:fy-literature}). Thus, we call this sampler as \textbf{Bias-Optimal (BO) DDNM} sampler. With \eqref{eq:def_fy_star}, we can also calculate the minimum step-wise mismatch as
\begin{align*}
    \min_{f_{t,y}: (I_d - H^\dagger H) f_{t,y} \equiv 0} \E_{X_t \sim Q_{t|y}} \norm{\Delta_{t,y}}^2
    & \textstyle = \E_{X_t \sim Q_{t|y}} \norm{\nabla \log \frac{q_{t|y} (X_t)}{q_t (X_t)}}^2_{(I_d - H^\dagger H)}, 
\end{align*}
which is the projected Fisher divergence between $Q_{t|y}$ and $Q_t$ on $\range(I_d - H^\dagger H)$.

\begin{figure}[ht]
\begin{minipage}{0.49\linewidth}
\centering
\includegraphics[height=3cm]{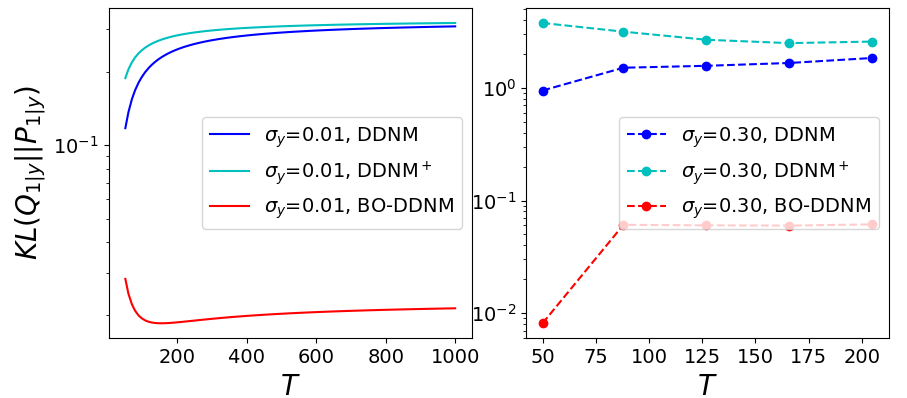}
\caption{
Comparison of BO-DDNM, DDNM and DDNM$^+$ for Gaussian (left) and Gaussian mixture (right) $Q_0$ under measurement noise.} 
\label{fig:gauss_kl_non_asymp_noisy}
\end{minipage}
\hfill
\begin{minipage}{0.49\linewidth}
\centering
\includegraphics[height=3cm]{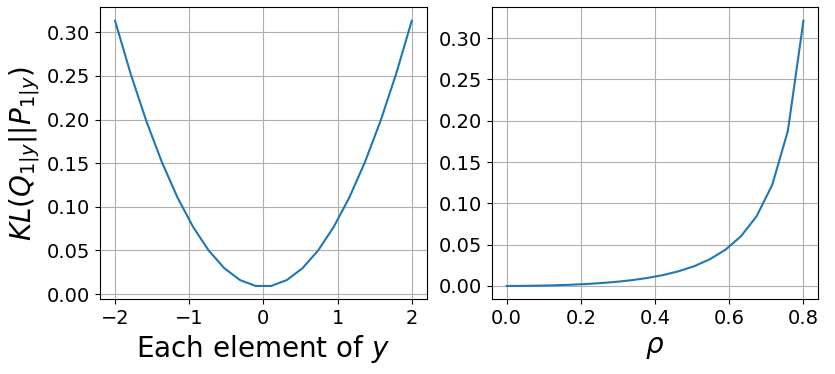}
\caption{Distributional bias as a function of the conditioning $y$ (left) and the correlation coefficient $\rho$ (right) for Gaussian $Q_0$.}
\label{fig:gauss_kl_limit}
\end{minipage}%
\end{figure}

In the following lemma, we provide the performance bound for BO-DDNM when $Q_0$ has bounded support. For comparison, we also derive the theoretical performance of vanilla DDNM, denoted as $f_{t,y}^N$.

\begin{theorem}[BO-DDNM vs.\ DDNM] \label{lem:norm2_bd_bdsupp}
Suppose that $\norm{X_0}^2 \leq R^2 d~~a.s.$ under $Q_0$. Suppose that \Cref{ass:m2-general,ass:score} hold. 
Then, with the conditional sampler $f_{t,y}^*$ in \eqref{eq:def_fy_star}, \Cref{thm:gen_q0_kl_general} holds with $\gamma = 1$ and $r = 2$. Also, with the conditional sampler $f_{t,y}^N := (1-\Bar{\alpha}_t)^{-1} \brc{\sqrt{\Bar{\alpha}_t} H^\dagger y - H^\dagger H x_t}$, if further $\norm{H^\dagger} \lesssim 1$, then \Cref{thm:gen_q0_kl_general} holds with $\gamma = 1$ and $r = 4$.
\end{theorem}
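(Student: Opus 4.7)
The plan is to invoke \Cref{thm:gen_q0_kl_general} for each sampler, so I need to verify (i) the sixth-moment bound $\E\norm{X_0}^6 \lesssim d^3$, (ii) the fourth-moment bound on $\Delta_{t,y}$ with the advertised $(\gamma, r)$, (iii) \Cref{ass:score-general}, and (iv) \Cref{ass:bdd-mismatch-general}. Items (i) and (iii) are essentially free: bounded support gives $\E\norm{X_0}^6 \leq R^6 d^3$, and \eqref{eq:zs-est-err} shows that \Cref{ass:score} implies \Cref{ass:score-general}. All the work is in (ii) and (iv), both of which I would obtain from tight pointwise control of $\Delta_{t,y}$.

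For BO-DDNM, I would first use \Cref{lem:cond_score_proj} to split $\nabla \log q_{t|y}$ along $\mbR^d = \range(H^\dagger H) \oplus \ker H$. Since $H^\dagger(H^\dagger)^\T$ is supported on $\range(H^\dagger H)$, the covariance $\Sigma_{t|0,y}$ commutes with the projection $H^\dagger H$; together with the Moore--Penrose identity $H^\dagger H H^\dagger = H^\dagger$, this gives $(H^\dagger H)\nabla \log q_{t|y}(x_t) = f_{t,y}^*(x_t)$ exactly. Substituting into \eqref{eq:def_delta_linear_cond} cancels the range contribution and, after applying Tweedie to both scores, leaves
\[
\Delta_{t,y}(x_t) = \frac{\sqrt{\Bar{\alpha}_t}}{1-\Bar{\alpha}_t}(I_d - H^\dagger H)\brc{\E[X_0 \mid x_t, y] - \E[X_0 \mid x_t]}.
\]
Bounded support plus conditional Jensen yields $\norm{\E[X_0 \mid \cdot]} \le R\sqrt{d}$ a.s., hence the pointwise estimate $\norm{\Delta_{t,y}(x_t)} \lesssim \sqrt{\Bar{\alpha}_t d}/(1-\Bar{\alpha}_t)$. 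Raising to the fourth power verifies (ii) with $(\gamma, r) = (1, 2)$, and noting $\Bar{\alpha}_t^{\ell/2} \le \Bar{\alpha}_t$ for $\ell \ge 2$ confirms (iv).

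For vanilla DDNM, I would decompose $\Delta_{t,y}^N = \Delta_{t,y}^* + (f_{t,y}^* - f_{t,y}^N)$. The first summand lies in $\range(I_d - H^\dagger H)$ (already controlled) and the second in $\range(H^\dagger H)$, so by orthogonality $\norm{\Delta_{t,y}^N}^2 = \norm{\Delta_{t,y}^*}^2 + \norm{f_{t,y}^* - f_{t,y}^N}^2$ pointwise. Writing
\[
f_{t,y}^*(x_t) - f_{t,y}^N(x_t) = \brc{\Sigma_{t|0,y}^{-1} - (1-\Bar{\alpha}_t)^{-1}I_d}\brc{\sqrt{\Bar{\alpha}_t} H^\dagger y - H^\dagger H x_t},
\]
simultaneous diagonalization on $\range(H^\dagger H)$ yields the eigenvalue-wise bound
\[
\norm{\Sigma_{t|0,y}^{-1} - (1-\Bar{\alpha}_t)^{-1}I_d}_{\mathrm{op},\,\range(H^\dagger H)} \leq \frac{\Bar{\alpha}_t \sigma_y^2 \norm{H^\dagger}^2}{(1-\Bar{\alpha}_t)^2} \lesssim \frac{\Bar{\alpha}_t}{(1-\Bar{\alpha}_t)^2}.
\]
By \eqref{eq:def_cond_fwd2}, the residual $\sqrt{\Bar{\alpha}_t} H^\dagger y - H^\dagger H x_t$ is, conditional on $(x_0, y)$, a centered Gaussian whose mean cancels and whose covariance is $\Sigma_{t|0,y}$ restricted to $\range(H^\dagger H)$; this law is in fact independent of $(x_0,y)$, so marginalization is trivial, and the formula $\E\norm{Z}^4 = (\Tr\Sigma)^2 + 2\Tr(\Sigma^2) \lesssim d^2$ controls its fourth moment. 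Combining gives $\E\norm{f_{t,y}^* - f_{t,y}^N}^4 \lesssim \Bar{\alpha}_t^4 d^2 / (1-\Bar{\alpha}_t)^8$, which dominates $\E\norm{\Delta_{t,y}^*}^4$ and matches (ii) with $(\gamma, r) = (1, 4)$. The same residual-moment computation adapted to general $\ell$ confirms \Cref{ass:bdd-mismatch-general}.

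The main obstacle I anticipate is the spectral step for DDNM: verifying that $\Sigma_{t|0,y}$ is simultaneously diagonalizable with $H^\dagger H$ and extracting the sharp factor $\Bar{\alpha}_t/(1-\Bar{\alpha}_t)^2$ in the operator-norm estimate—this is what forces $r = 4$ rather than $r = 2$ and quantifies exactly how much BO-DDNM improves on vanilla DDNM under measurement noise. Once this spectral identity is in hand, the posterior-mean Jensen bound, the Gaussian fourth-moment formula, and the bookkeeping of powers of $\Bar{\alpha}_t$ versus $1-\Bar{\alpha}_t$ are all routine.
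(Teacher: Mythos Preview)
Your proposal is correct and follows essentially the same route as the paper: the same reduction of $\Delta_{t,y}^*$ to a projected posterior-mean difference bounded pointwise via Jensen plus bounded support, the same orthogonal decomposition $\Delta_{t,y}^N = \Delta_{t,y}^* + (f_{t,y}^* - f_{t,y}^N)$, and the same spectral bound $\norm{\Sigma_{t|0,y}^{-1} - (1-\Bar{\alpha}_t)^{-1}I_d} \lesssim \Bar{\alpha}_t/(1-\Bar{\alpha}_t)^2$ (the paper derives it via the Woodbury identity rather than diagonalization, to identical effect). Your observation that the residual $\sqrt{\Bar{\alpha}_t}H^\dagger y - H^\dagger H x_t = -H^\dagger H Z$ has a fixed Gaussian law independent of $(x_0,y)$ is in fact slightly cleaner than the paper's direct expansion through $Q_{t|0}$, which picks up an extraneous $\norm{H^\dagger y}^2$ term before absorbing it into constants, but the resulting bounds coincide.
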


\Cref{lem:norm2_bd_bdsupp} establishes the \textit{first} result applicable to DDNM-type zero-shot conditional samplers for any linear conditional models on those target distributions having bounded support.


{\bf Advantage of BO-DDNM over DDNM and DDNM$^+$:} When there is positive measurement noise, 
\Cref{lem:norm2_bd_bdsupp} indicates that our BO-DDNM sampler that uses $f_{t,y} = f_{t,y}^*$ enjoys a smaller asymptotic bias than DDNM that uses $f_{t,y}^N$ with the $\alpha_t$ in \eqref{eq:alpha_genli} ($\delta^{-2}$ vs.\ $\delta^{-4}$). Note that the DDNM sampler corresponds to $f_{t,y} = f_{t,y}^N$ (see \Cref{app:fy-literature}). Such an advantage is also demonstrated by our numerical experiment. In \Cref{fig:gauss_kl_non_asymp_noisy}, we numerically compared modified conditional zero-shot sampler (as given in \eqref{eq:def_fy_star}) with the DDNM and DDNM$^+$ sampler for both Gaussian and Gaussian mixture $Q_0$'s at different levels of measurement noise. It is observed that 
the optimal BO-DDNM sampler achieves a much lower bias than both the DDNM and the DDNM$^+$ samplers numerically, especially when $\sigma_y^2$ becomes large.

\subsection{BO-DDNM Sampler for Gaussian Mixture \texorpdfstring{$Q_0$}{Q0}}

In this section, we focus on the convergence dependency on other system parameters of the BO-DDNM sampler, including the chosen $y$. In particular, we restrict our attention to Gaussian mixture $Q_0$'s and to a special conditional model, where $H = \begin{pmatrix} I_p & 0 \end{pmatrix}$.
This choice can be seen in many applications, such as image super-resolution and inpainting (after reorganizing the pixels), where $I_p$ corresponds to the locations of the given pixels \citep{wang2023ddnm,song2023pgdm}. We assume positive measurement noise. 
We introduce the notation $[\Sigma_0]_{a b}$ to denote the variance components that correspond to the space of $a \times b$ where $a,b \in \{y, \Bar{y}\}$.

The following \Cref{lem:norm2_bd_gauss_mix} gives an upper bound on the asymptotic bias for Gaussian mixture $Q_0$.

\begin{proposition} \label{lem:norm2_bd_gauss_mix}
Suppose that $Q_0$ is Gaussian mixture with equal variance, whose p.d.f. is given by $q_0(x_0) = \sum_{n=1}^N \pi_n q_{0,n}(x_0)$, where $q_{0,n}$ is the p.d.f. of $\calN(\mu_{0,n}, \Sigma_0)$ and $\pi_n \in [0,1]$ is the mixing coefficient with $\sum_{n=1}^N \pi_n = 1$. Suppose that $H = \begin{pmatrix} I_p & 0 \end{pmatrix}$, and adopt $f_{t,y}^*$ in \eqref{eq:def_fy_star} and $\alpha_t$ in \Cref{def:noise_smooth}. Write $\lambda_1 \geq \dots \geq \lambda_d > 0$ and $\Tilde{\lambda}_1 \geq \dots \geq \Tilde{\lambda}_{d-p} > 0$ as the eigenvalues of $\Sigma_0$ and $[\Sigma_0]_{\Bar{y}\Bar{y}}$, respectively.
Then, 
\begin{align*}
    &\E_{X_t \sim Q_{t|y}} \norm{\Delta_{t,y}(X_t)}^2 \\
    &\textstyle \lesssim \Bar{\alpha}_t d + \Bar{\alpha}_t^2 \frac{\norm{[\Sigma_0]_{y \Bar{y}}}^2}{\min\{\Tilde{\lambda}_{d-p}, 1\}^2 \min\{\lambda_d, 1\}^2 } \max\cbrc{d (\lambda_1 + \sigma_y^2) + \sum_{n=1}^N \pi_n \norm{H^\dagger y - H^\dagger H \mu_{0,n}}^2, d }\\
    &\textstyle \lesssim \Bar{\alpha}_t \brc{d + \sum_{n=1}^N \pi_n \norm{H^\dagger y - H^\dagger H \mu_{0,n}}^2}.
\end{align*}
\end{proposition}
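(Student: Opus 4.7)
The plan is to exploit the structural property of BO-DDNM that cancels the mismatch inside $\range(H^\dagger H)$, reducing $\Delta_{t,y}$ to a projected mixture-score difference, which I then dissect using the shared-covariance structure of the Gaussian mixture. For $H = \begin{pmatrix} I_p & 0 \end{pmatrix}$ we have $H^\dagger (H^\dagger)^\T = H^\dagger H$, so $\Sigma_{t|0,y}^{-1}$ from \Cref{lem:cond_score_proj} is block-diagonal with respect to $\range(H^\dagger H)$ and its complement. A direct check then shows $(H^\dagger H)\nabla \log q_{t|y}(x_t) = f_{t,y}^*(x_t)$, so the last two terms of \eqref{eq:def_delta_linear_cond} cancel; combined with $\nabla \log q_{t|y} - \nabla \log q_t = \nabla_{x_t}\log q_{y|t}(y|x_t)$, this yields
\[ \Delta_{t,y}(x_t) = (I_d - H^\dagger H)\nabla_{x_t}\log q_{y|t}(y|x_t). \]

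For the mixture decomposition, because every component shares $\Sigma_0$, the per-component joint covariance of $(x_t,y)$ is $n$-independent, so each $q_{y|t,n}(\cdot|x_t) = \calN(\mu_{y|t,n}(x_t), \Sigma_{y|t})$ has a common covariance and a common slope $\nabla_{x_t}\mu_{y|t,n} = \sqrt{\Bar{\alpha}_t}H\Sigma_0\Sigma_t^{-1}$, with $\Sigma_t := \Bar{\alpha}_t\Sigma_0 + (1-\Bar{\alpha}_t)I_d$. Writing $q_{y|t}(y|x_t) = \sum_n w_n(x_t) q_{y|t,n}(y|x_t)$ with $w_n(x_t) = \pi_n q_{t,n}(x_t)/q_t(x_t)$, the log-sum-exp gradient identity together with $\sum_n \tilde v_n = \sum_n w_n = 1$ (where $\tilde v_n(x_t,y) \propto w_n(x_t) q_{y|t,n}(y|x_t)$) collapses the gradient to
\[ \nabla_{x_t}\log q_{y|t}(y|x_t) = \sqrt{\Bar{\alpha}_t}\,\Sigma_t^{-1}\sum_n (\tilde v_n - w_n)\mu_{0,n} + \sqrt{\Bar{\alpha}_t}\,\Sigma_t^{-1}\Sigma_0 H^\T \Sigma_{y|t}^{-1}\sum_n \tilde v_n (y - \mu_{y|t,n}(x_t)), \]
where the mixture score $\nabla \log q_t$ cancels out of the $\tilde v_n$-average of $\nabla \log w_n$, leaving only a centered weight-discrepancy in the first summand.

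Next I apply Cauchy--Schwarz and Jensen to split $\norm{\Delta_{t,y}}^2$ into a weight-fluctuation block $\Bar{\alpha}_t \norm{(I_d - H^\dagger H)\Sigma_t^{-1}}^2 \norm{\sum_n (\tilde v_n - w_n)\mu_{0,n}}^2$ and a Gaussian-score block $\Bar{\alpha}_t \norm{(I_d - H^\dagger H)\Sigma_t^{-1}\Sigma_0 H^\T \Sigma_{y|t}^{-1}}^2 \sum_n \tilde v_n \norm{y - \mu_{y|t,n}(x_t)}^2$. A block Schur-complement computation on $\Sigma_t$ yields $\norm{(I_d - H^\dagger H)\Sigma_t^{-1}\Sigma_0 H^\T} \lesssim \Bar{\alpha}_t \norm{[\Sigma_0]_{y\bar y}}/(\min\{\tilde\lambda_{d-p},1\}\min\{\lambda_d,1\})$, while $\norm{\Sigma_{y|t}^{-1}}$ and $\norm{(I_d - H^\dagger H)\Sigma_t^{-1}}$ are bounded via $\min\{\tilde\lambda_{d-p},1\}^{-1}$, producing the stated spectral prefactor. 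The $\E_{X_t \sim Q_{t|y}}$ step then uses the Bayes identity $q_{t|y}(x_t|y)\tilde v_n(x_t,y) = \tilde\pi_n(y) q_{t|y,n}(x_t|y)$ to convert $\E[\tilde v_n\,\cdot\,]$ into a $\tilde\pi_n$-weighted mixture of per-component Gaussian expectations, giving $\E_{X_t \sim Q_{t|y,n}}\norm{y - \mu_{y|t,n}(X_t)}^2 \lesssim d(\lambda_1 + \sigma_y^2) + \norm{H^\dagger y - H^\dagger H\mu_{0,n}}^2$ and an analogous posterior-variance bound on $\E\norm{\sum_n (\tilde v_n - w_n)\mu_{0,n}}^2$. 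Collecting terms produces the first displayed bound; the coarser second bound follows upon using $\Bar{\alpha}_t^2 \leq \Bar{\alpha}_t$ and absorbing the spectral constants into $\lesssim$.

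The main technical obstacle I anticipate is the weight-fluctuation term: each $\nabla \log w_n$ individually contains $\nabla \log q_t$, which can be as large as $1/(1-\Bar{\alpha}_t)$ and would produce an unacceptable $(1-\Bar{\alpha}_t)^{-2}$ scaling via a naive triangle inequality. Extracting the correct $\sqrt{\Bar{\alpha}_t}$ order hinges on the exact cancellation $\sum_n \tilde v_n \nabla \log q_t = \nabla \log q_t = \sum_n w_n \nabla \log q_{t,n}$ (from $\sum_n \tilde v_n = \sum_n w_n = 1$), which collapses the $\tilde v_n$-average of $\nabla \log w_n$ into the bounded centered object $\sqrt{\Bar{\alpha}_t}\Sigma_t^{-1}\sum_n (\tilde v_n - w_n)\mu_{0,n}$. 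A secondary obstacle is the block Schur-complement bookkeeping that isolates the $[\Sigma_0]_{y\bar y}$ cross-block with the correct eigenvalue-gap denominators $\min\{\tilde\lambda_{d-p},1\}$ and $\min\{\lambda_d,1\}$.
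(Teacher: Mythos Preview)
Your route through the Bayes identity $\Delta_{t,y}=(I_d-H^\dagger H)\nabla_{x_t}\log q_{y|t}(y|x_t)$ and the log-sum-exp expansion is valid, and the cancellation you highlight for the weight-gradient block is exactly right: $\sum_n\tilde v_n\nabla\log w_n$ does collapse to $\sqrt{\Bar\alpha_t}\,\Sigma_t^{-1}\sum_n(\tilde v_n-w_n)\mu_{0,n}$, yielding the $\Bar\alpha_t d$ contribution. This part parallels the paper's first term $-\sqrt{\Bar\alpha_t}\sum_n\pi_n\big(\tfrac{q_{t,n}}{q_t}-\tfrac{q_{t,n|y}}{q_{t|y}}\big)(I_d-H^\dagger H)\Sigma_t^{-1}\mu_{0,n}$.

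The gap is in your Gaussian-score block. The claimed Schur bound
\[
\norm{(I_d-H^\dagger H)\Sigma_t^{-1}\Sigma_0 H^\T}\;\lesssim\;\Bar\alpha_t\,\frac{\norm{[\Sigma_0]_{y\bar y}}}{\min\{\tilde\lambda_{d-p},1\}\min\{\lambda_d,1\}}
\]
is false. Since $\Sigma_t^{-1}\Sigma_0=\Bar\alpha_t^{-1}\big(I-(1-\Bar\alpha_t)\Sigma_t^{-1}\big)$, the $(\bar y,y)$ block of $\Sigma_t^{-1}\Sigma_0$ equals $-\Bar\alpha_t^{-1}(1-\Bar\alpha_t)[\Sigma_t^{-1}]_{\bar y y}$, and the Schur identity $[\Sigma_t^{-1}]_{\bar y y}=-\Bar\alpha_t(\Bar\alpha_t[\Sigma_0]_{\bar y\bar y}+(1-\Bar\alpha_t)I)^{-1}[\Sigma_0]_{\bar y y}[\Sigma_t^{-1}]_{yy}$ then gives a prefactor $(1-\Bar\alpha_t)$, not $\Bar\alpha_t$. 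Moreover $\Sigma_{y|t}^{-1}$ lives in the $y$-block and is not controlled by $\tilde\lambda_{d-p}$; when $\sigma_y=0$ one has $\Sigma_{y|t}\asymp(1-\Bar\alpha_t)\Bar\alpha_t^{-1}$ for $\Bar\alpha_t\uparrow 1$, so $\norm{\Sigma_{y|t}^{-1}}$ is not uniformly bounded. Bounding the three factors $(I_d-H^\dagger H)\Sigma_t^{-1}\Sigma_0 H^\T$, $\Sigma_{y|t}^{-1}$, and $y-\mu_{y|t,n}$ separately therefore yields only $O(\Bar\alpha_t)$ for this block, which recovers the coarse second line of the proposition but \emph{not} the sharp $\Bar\alpha_t^2$ first line with the stated spectral prefactor.

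The paper avoids this by computing the per-component projected score difference directly (its single-Gaussian lemma), obtaining
\[
(I_d-H^\dagger H)\nabla_{x_t}\log q_{y|t,n}\;=\;-\,\Bar\alpha_t\,(\Bar\alpha_t[\Sigma_0]_{\bar y\bar y}+(1-\Bar\alpha_t)I)^{-1}[\Sigma_0]_{\bar y y}[\Sigma_t^{-1}]_{y:}(x_t-\sqrt{\Bar\alpha_t}\mu_{0,n}),
\]
where the $\Bar\alpha_t$ prefactor is manifest. Your expression $\sqrt{\Bar\alpha_t}(I_d-H^\dagger H)\Sigma_t^{-1}\Sigma_0 H^\T\Sigma_{y|t}^{-1}(y-\mu_{y|t,n})$ is algebraically equal to this, so the fix is either to establish that identity (it follows from the Woodbury manipulation in the paper's Gaussian case) or to bound the full product rather than each factor; only then do you get the $\Bar\alpha_t^2$ scaling and the correct $\min\{\tilde\lambda_{d-p},1\}^{-2}\min\{\lambda_d,1\}^{-2}$ denominator together with the expectation $\E\norm{X_t-\sqrt{\Bar\alpha_t}\mu_{0,n}}^2$ that produces the $\max\{\cdot\}$ term.
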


\Cref{lem:norm2_bd_gauss_mix} indicates that three factors affect (an upper bound on) the asymptotic bias. 
(i) The measurement noise variance $\sigma_y^2$ determined by the system nature has an increasing effect on the bias.
(ii) The averaged distance $\sum_{n=1}^N \pi_n \norm{H^\dagger y - H^\dagger H \mu_{0,n}}^2$ between $H^\dagger y$ and $H^\dagger H \mu_{0,n}$ captures the quadratic dependency in $y$, as illustrated in the left plot of \Cref{fig:gauss_kl_limit}. (iii) The correlation between $H x_0$ and $(I_d-H^\dagger H) x_0$ of each mixture component contributes positively to the bias, which is contained in the factor $\frac{\norm{[\Sigma_0]_{y \Bar{y}}}^2}{\min\{\lambda_d, 1\}^2 }$.
%
To see this, consider $\sigma_y^2 = 0$ and a specific Gaussian example with $d=2$, $p=1$, and $\Sigma_0 = \begin{pmatrix}
\sigma_{11}^2 & \rho \sigma_{11} \sigma_{22}\\
\rho \sigma_{11} \sigma_{22} & \sigma_{22}^2
\end{pmatrix}$.
As the correlation coefficient $\rho$ increases, $\Sigma_0$ becomes closer to be singular, and thus 
$\lambda_d$ decreases to 0. Also, $\norm{[\Sigma_0]_{y \Bar{y}}}^2 = \rho^2 \sigma_{11}^2 \sigma_{22}^2$ increases quadratically with $\rho$. Hence, this factor $\frac{\norm{[\Sigma_0]_{y \Bar{y}}}^2}{\min\{\lambda_d, 1\}^2 }$ grows unboundedly as $\rho \to 1$, as does $\E_{X_t \sim Q_{t|y}} \norm{\Delta_{t,y}(X_t)}^2$.
Such dependency on the correlation is illustrated numerically in the right plot of \Cref{fig:gauss_kl_limit}.

The following theorem characterizes the conditional KL divergence when $Q_0$ is mixture Gaussian. In particular, we can show \Cref{ass:bdd-mismatch-general} holds with any $\alpha_t$ that satisfies \Cref{def:noise_smooth} when $Q_0$ is Gaussian mixture (see \Cref{lem:bdd-mismatch-gauss-mix} in \Cref{app:thm5-proof}).

\begin{theorem} \label{thm:kl_bd_gauss_mix}
Suppose 
the same conditions in \Cref{lem:norm2_bd_gauss_mix} hold 
and $\sigma_y^2 > 0$. Suppose that \Cref{ass:score} holds. Take $f_{t,y}^*$ in \eqref{eq:def_fy_star} and $\alpha_t$ that further satisfies $\sum_{t=1}^T (1-\alpha_t) \Bar{\alpha}_t = 1 + o(1)$. Then,
\begin{align*}
&\textstyle \KL{Q_{0|y}}{\widehat{P}_{0|y}} \lesssim \brc{d + \sum_{n=1}^N \pi_n \norm{H^\dagger y - H^\dagger H \mu_{0,n}}^2}+ \\
&\textstyle \brc{d^2 + \sum_{n=1}^N \pi_n \norm{H^\dagger y - H^\dagger H \mu_{0,n}}^4} \frac{(\log T)^2}{T} + \sqrt{d + \sum_{n=1}^N \pi_n \norm{H^\dagger y - H^\dagger H \mu_{0,n}}^2} (\log T) \eps.
\end{align*}
\end{theorem}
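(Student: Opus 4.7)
The plan is to invoke Theorem~\ref{thm:main-general} directly with $\Tilde{Q}_0 = Q_{0|y}$, since for a Gaussian mixture $Q_0$ with Gaussian measurement noise the posterior $Q_{0|y}$ remains a (weighted) Gaussian mixture and hence admits an analytic density. This lets me bound $\KL{Q_{0|y}}{\widehat{P}_{0|y}}$ directly as $\mathcal{W}_{\text{oracle}} + \mathcal{W}_{\text{bias}} + \mathcal{W}_{\text{vanish}}$, and the task reduces to controlling each of the three pieces under the Gaussian-mixture structure.

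Before applying the theorem, I would verify its hypotheses in this setting. The finite second-moment condition (Assumption~\ref{ass:m2-general}) follows from a direct calculation: conditioned on $y$, each mixture component is Gaussian with mean offset by something of order $\norm{H^\dagger y - H^\dagger H \mu_{0,n}}$, so $\E_{Q_{0|y}} \norm{X_0}^2 \lesssim d + \sum_n \pi_n \norm{H^\dagger y - H^\dagger H \mu_{0,n}}^2$. The bounded-mismatch condition (Assumption~\ref{ass:bdd-mismatch-general}) follows from Lemma~\ref{lem:bdd-mismatch-gauss-mix} referenced in the excerpt, the score-estimation condition (Assumption~\ref{ass:score-general}) follows from Assumption~\ref{ass:score} via the zero-shot identity~\eqref{eq:zs-est-err}, and the regular-derivative condition (Assumption~\ref{ass:regular-drv-general}) reduces to closed-form analysis of derivatives of $\log \Tilde{q}_t$ and of $\log \Tilde{q}_{t-1}$ evaluated at $m_t$; both are tractable because $\Tilde{q}_t$ is a Gaussian convolution of a Gaussian mixture, so each derivative is a posterior-weighted average of Gaussian-score derivatives whose moments can be controlled uniformly in $t$.

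Next, I would bound each of the three terms from Theorem~\ref{thm:main-general}. For $\mathcal{W}_{\text{bias}}$, Proposition~\ref{lem:norm2_bd_gauss_mix} gives $\E \norm{\Delta_{t,y}(X_t)}^2 \lesssim \Bar{\alpha}_t \bigl( d + \sum_n \pi_n \norm{H^\dagger y - H^\dagger H \mu_{0,n}}^2 \bigr)$, so
\[
\mathcal{W}_{\text{bias}} \lesssim \Bigl( d + \sum_{n=1}^N \pi_n \norm{H^\dagger y - H^\dagger H \mu_{0,n}}^2 \Bigr) \sum_{t=1}^T (1-\alpha_t) \Bar{\alpha}_t,
\]
which reproduces the first term in the claimed bound under the additional schedule hypothesis $\sum_t (1-\alpha_t) \Bar{\alpha}_t = 1 + o(1)$. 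For $\mathcal{W}_{\text{oracle}}$, I would decompose the Hessian of $\log \Tilde{q}_t$ as the sum of a kernel-inverse piece (the shared Gaussian factor) plus a posterior-covariance-of-means correction driven by mixture-weight variability, and similarly for $\log \Tilde{q}_{t-1} \circ m_t$. Taking the trace of the product of the two decomposed Hessians, the kernel--kernel term scales as $d$, the kernel--correction cross terms contribute $\sum_n \pi_n \norm{H^\dagger y - H^\dagger H \mu_{0,n}}^2$, and the correction--correction term yields the quartic $\sum_n \pi_n \norm{H^\dagger y - H^\dagger H \mu_{0,n}}^4$; summing against $(1-\alpha_t)^2/\alpha_t \lesssim (\log T)^2/T^2$ over $T$ steps produces the $(\log T)^2/T$ factor on the second line of the claim. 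For $\mathcal{W}_{\text{vanish}}$, the Taylor-type inner-product and Hessian-sandwich terms are $\Tilde{O}(1/T)$ by combining Assumption~\ref{ass:bdd-mismatch-general} with Lemma~\ref{lem:small-grad-diff-genli} and the Gaussian-mixture regularity, while the terminal score-error contribution $\max_t \sqrt{\E \norm{\Delta_{t,y}}^2}(\log T)\eps$ matches the third line of the claim after bounding the square root via Proposition~\ref{lem:norm2_bd_gauss_mix}.

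The main technical obstacle will be the explicit bound on $\mathcal{W}_{\text{oracle}}$: the trace of a product of two Hessians of conditional Gaussian-mixture log-densities requires carefully tracking how the posterior mixture weights and component means depend on $y$ and on $m_t(x_t)$, and this is where the quartic $\sum_n \pi_n \norm{\cdot}^4$ dependence emerges through the correction--correction term described above. A secondary concern is ensuring that the spectral norms of the covariance-correction matrices are controlled uniformly in $t$; this is where the positive-noise assumption $\sigma_y^2 > 0$ enters, since it keeps $\Sigma_{t|0,y}^{-1}$ spectrally bounded and prevents blow-up as $\Bar{\alpha}_t$ traverses $[0,1]$. Once these spectral-trace bounds are established and aggregated against the schedule coefficients, the three advertised error terms assemble into the claimed inequality.
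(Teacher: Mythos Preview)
Your overall skeleton matches the paper: invoke Theorem~\ref{thm:main-general} with $\Tilde{Q}_0=Q_{0|y}$, check Assumptions~\ref{ass:m2-general}--\ref{ass:bdd-mismatch-general} (the latter via Lemma~\ref{lem:bdd-mismatch-gauss-mix}), and bound $\mathcal{W}_{\text{bias}}$ and the $\eps$-term exactly as you describe. The role you assign to $\sigma_y^2>0$ (keeping $\|\Sigma_{t|y}^{-1}\|$ bounded uniformly in $t$) is also what the paper uses.

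Where you diverge from the paper is in the allocation of effort between $\mathcal{W}_{\text{oracle}}$ and $\mathcal{W}_{\text{vanish}}$, and this is a genuine gap. In the paper, $\mathcal{W}_{\text{oracle}}$ is dispatched in one line by quoting \cite[Theorem~2]{liang2024discrete}, yielding only the $d^2(\log T)^2/T$ piece with \emph{no} $y$-dependence. All of the $y$-dependent contributions in the second line of the claim --- in particular the quartic $\sum_n\pi_n\norm{H^\dagger y-H^\dagger H\mu_{0,n}}^4$ --- come from $\mathcal{W}_{\text{vanish}}$, specifically from the Hessian-sandwich term $\Delta_{t,y}^\T\nabla^2\log q_{t-1|y}(m_{t,y})\Delta_{t,y}$ and the third-derivative terms. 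Extracting these requires a long chain of explicit Gaussian-mixture computations (the paper develops a dozen auxiliary inequalities tracking how the posterior weights $\pi_n q_{t,n|y}/q_{t|y}$ and the shifts $X_t-\sqrt{\Bar\alpha_t}\mu_{0,n}$ interact with $\Delta_{t,y}$), and this is by far the longest part of the proof.

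Your proposal instead pushes the quartic term into $\mathcal{W}_{\text{oracle}}$ via a Hessian-product decomposition and then dismisses $\mathcal{W}_{\text{vanish}}$ as ``$\Tilde O(1/T)$ by Assumption~\ref{ass:bdd-mismatch-general} and Lemma~\ref{lem:small-grad-diff-genli}''. This fails on two counts. First, Lemma~\ref{lem:small-grad-diff-genli} is proved only for the specific schedule~\eqref{eq:alpha_genli}, not for the general schedule of Definition~\ref{def:noise_smooth} assumed here; the paper instead rederives the smallness of $\nabla\log q_{t-1|y}(m_{t,y})-\sqrt{\alpha_t}\nabla\log q_{t|y}$ from scratch for the mixture case. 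Second, even granting that lemma, it only addresses the first-order inner-product term and says nothing about the Hessian-sandwich or third-derivative pieces; without explicit $y$-tracking there, you cannot claim they are dominated by whatever you extract from $\mathcal{W}_{\text{oracle}}$. Your Hessian-product decomposition of $\mathcal{W}_{\text{oracle}}$ may well be a valid alternative route to the quartic term, but you would still owe a full accounting of $\mathcal{W}_{\text{vanish}}$ with the same $y$-dependence, and that is where the real work lies.
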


\yuchen{Although \Cref{lem:norm2_bd_gauss_mix,thm:kl_bd_gauss_mix} assume $H = \begin{pmatrix} I_p & 0 \end{pmatrix}$, extension to general $H$ is straightforward by modifying the proof of \Cref{lem:bdd-mismatch-gauss-mix} and using the fact that $\norm{H^\dagger H} = \norm{I_d - H^\dagger H} = 1$.}

This is the \textit{first} convergence result for zero-shot samplers where explicit dependency on the conditioning $y$ is derived for Gaussian mixture targets. Note that the extra condition on $\alpha_t$ can be verified for both constant $\alpha_t$ (\Cref{lem:nonvanish_coef_sum}) and that in \eqref{eq:alpha_genli} (\Cref{lem:nonvanish_coef_sum_genli}). 
Among the three terms in \Cref{thm:kl_bd_gauss_mix}, the first term is the asymptotic bias analyzed in \Cref{lem:norm2_bd_gauss_mix}.
Since the last two terms decrease to zero as $T \to \infty$, the asymptotic KL divergence will also approach some non-zero limit of order $d$.


The proof of \Cref{thm:kl_bd_gauss_mix} is non-trivial because from \Cref{thm:main-general} we need to figure out the dependency on $y$ in all first three orders of partial derivatives of a Gaussian mixture density, which is generally hard to express. To this end, we restrict focus to a particular linear model where explicit dependency can be sought.
The result can be extended to the case of $\sigma_y^2 = 0$ with the $\alpha_t$ in \eqref{eq:alpha_genli} (see \Cref{rmk:gauss_mix_kl_genli}).

\section{Proof Sketch of \texorpdfstring{\Cref{thm:main-general}}{Thm 1}}
\label{sec:prf-skc}

We now provide a proof sketch of Theorem 1 to describe the idea of our analysis approach. The main technical challenge due to mismatched scores is to analyze the expected tilting factor under a mean-perturbed Gaussian, providing an upper bound of the asymptotic orders of all Gaussian non-centralized moments. See the full proof in \Cref{app:thm1-proof}.

To begin, with \Cref{lem:rev-err-tilt-factor}, we decompose the total error as $\KL{\Tilde{Q}_0}{\widehat{P}_0} \leq \E_{X_T \sim \Tilde{Q}_T} \sbrc{\log \frac{\Tilde{q}_T(X_T)}{\widehat{p}_T(X_T)}} + \sum_{t=1}^T \E_{X_t,X_{t-1} \sim \Tilde{Q}_{t,t-1}} \sbrc{\log \frac{p_{t-1|t}(X_{t-1}|X_t)}{\widehat{p}_{t-1|t}(X_{t-1}|X_t)}} + \sum_{t=1}^T \E_{X_t,X_{t-1} \sim \Tilde{Q}_{t,t-1}} \sbrc{\log \frac{p_{t-1|t}(X_{t-1}|X_t)}{\widehat{p}_{t-1|t}(X_{t-1}|X_t)}}$. These three terms correspond respectively to the \textit{initialization error}, \textit{estimation error}, and \textit{reverse-step error}. 
The initialization error can be bounded by $\Bar{\alpha}_T d$ in order using \cite[Lemma~3]{liang2024discrete} under \Cref{ass:m2-general}. Below we focus on the remaining two terms.

\textbf{Step 1: Bounding estimation error under mismatch (\Cref{lem:score-est-ptb}).} At each time $t=1,\dots,T$, $\log (p_{t-1|t}(x_{t-1}|x_t) / \widehat{p}_{t-1|t}(x_{t-1}|x_t))$ has an explicit expression since they are conditional Gaussians with the same variance. However, differently from the typical matched case, the mean of $P_{t-1|t}$ (i.e., $\mu_t(x_t)$) is no longer equal to the posterior mean of $\Tilde{Q}_{t-1|t}$ (i.e., $m_t(x_t)$). Their difference is contained in $\Delta_t(x_t)$, whose asymptotic order needs to be upper-bounded in light of \Cref{ass:score-general}.

\textbf{Step 2: Decomposing reverse-step error under mismatch (\Cref{eq:rev_step_decomp}).}
\yuchen{First we decompose the tilting factor as $\Tilde{\zeta}_{t,t-1}(x_t,x_{t-1}) = \Tilde{\zeta}_{\text{mis}} (x_t,x_{t-1}) + \Tilde{\zeta}_{\text{van}} (x_t,x_{t-1})$, where 
\begin{align*}
    \Tilde{\zeta}_{\text{mis}} &:= \sqrt{\alpha_t} \Delta_t(x_t)^\T (x_{t-1} - m_t(x_t)) \\
    \Tilde{\zeta}_{\text{van}} &:= (\nabla \log \Tilde{q}_{t-1}(m_t(x_t)) - \sqrt{\alpha_t} \nabla \log \Tilde{q}_{t}(x_t))^\T (x_{t-1} - m_t(x_t)) + \sum_{p=2}^\infty T_p(\log \Tilde{q}_{t-1}, x_t, m_t(x_t)).
\end{align*}
Here $\Tilde{\zeta}_{\text{mis}}$ captures the factor that contributes to the total bias within $\Tilde{\zeta}_{t,t-1}$.
Define the oracle sampling process as $\Tilde{P}_{t-1|t} = \calN(m_{t,y},\sigma_t^2 I_d)$. Then, the reverse-step error can be decomposed as 
\begin{multline*}
    \E_{\Tilde{Q}_{t,t-1}}  [\Tilde{\zeta}_{t,t-1}] - \E_{\Tilde{Q}_{t} \times P_{t-1|t}} [\Tilde{\zeta}_{t,t-1}] = \underbrace{\brc{ \E_{\Tilde{Q}_{t,t-1}} [\Tilde{\zeta}_{t,t-1}] - \E_{\Tilde{Q}_{t} \times P_{t-1|t}} [\Tilde{\zeta}_{t,t-1}]} }_{\mathcal{W}_{\text{oracle, rev-step}}}\\
   + \underbrace{ \brc{\E_{\Tilde{Q}_{t} \times \Tilde{P}_{t-1|t}} [\Tilde{\zeta}_{\text{mis}}] - \E_{\Tilde{Q}_{t} \times P_{t-1|t}} [\Tilde{\zeta}_{\text{mis}}] } }_{\mathcal{W}_{\text{bias, rev-step}}} + \underbrace{\brc{ \E_{\Tilde{Q}_{t} \times \Tilde{P}_{t-1|t}} [\Tilde{\zeta}_{\text{van}}] - \E_{\Tilde{Q}_{t} \times P_{t-1|t}} [\Tilde{\zeta}_{\text{van}}] } }_{\mathcal{W}_{\text{vanish, rev-step}}}.
\end{multline*}
}

\textbf{Step 3: Bounding $\mathcal{W}_{\text{oracle, rev-step}}$ and $\mathcal{W}_{\text{bias, rev-step}}$ (\Cref{eq:mismatch_zeta_0,eq:mismatch_zeta_1}).} 
Under \Cref{ass:regular-drv-general}, the dominant term of $\mathcal{W}_{\text{oracle, rev-step}}$ is given by \cite[Theorem~1]{liang2024discrete}. Also, the calculation of $\mathcal{W}_{\text{bias, rev-step}}$ is reduced to the difference in conditional mean, which is proportional to $\norm{\Delta_t(x_t)}^2$. 

\textbf{Step 4: Bounding $\mathcal{W}_{\text{vanish, rev-step}}$ (\Cref{lem:tilde-zeta-van-expr,lem:tilde-zeta-van-higher}).} 
To upper-bound $\mathcal{W}_{\text{vanish, rev-step}}$, 
with results on the matched case in \cite{liang2024discrete}, we need only to characterize the mean of $\Tilde{\zeta}_{\text{van}}$ under the mismatched posterior $P_{t-1|t}$.
We determine the dominant order in the expected values of all Taylor polynomials, which includes calculating all non-centralized moments. We first calculate the first three non-centralized moments (\Cref{lem:tilde-zeta-van-expr}) and then determine the asymptotic order of all higher moments (\Cref{lem:tilde-zeta-van-higher}). With these, we can finally locate the terms of dominating order in $\mathcal{W}_{\text{vanish, rev-step}}$. 
\qedhere

\section{Conclusion}

In this paper, we have provided convergence guarantees for the general score-mismatched diffusion models, which are specialized to zero-shot conditional samplers. For linear conditional models, we also designed an optimal BO-DDNM sampler that minimizes the asymptotic bias, for which we showed the dependencies on the system parameters. One future direction is to explore zero-shot samplers that use higher-order derivatives of the log-densities, which might achieve better convergence results.

\section*{Acknowledgements}
    This work has been supported in part by the U.S. National Science Foundation under the grants: CCF-1900145, NSF AI Institute (AI-EDGE) 2112471, CNS-2312836, CNS-2223452, CNS-2225561, and was sponsored by the Army Research Laboratory under Cooperative Agreement Number W911NF-23-2-0225. The views and conclusions contained in this document are those of the authors and should not be interpreted as representing the official policies, either expressed or implied, of the Army Research Laboratory or the U.S. Government. The U.S. Government is authorized to reproduce and distribute reprints for Government purposes notwithstanding any copyright notation herein.
\bibliography{diffusion}

\newpage
\appendix
\begin{center}
    \huge \textbf{Appendix}
\end{center}


\allowdisplaybreaks
\startcontents[section]
{
\hypersetup{linkcolor=blue}
\printcontents[section]{l}{1}{\setcounter{tocdepth}{2}}
}

\crefalias{section}{appendix} 





\begin{center}
    \huge \textbf{Appendix}
\end{center}
\allowdisplaybreaks
\startcontents[section]
{
\hypersetup{linkcolor=blue}
\printcontents[section]{l}{1}{\setcounter{tocdepth}{2}}
}
\crefalias{section}{appendix}

\section{Full List of Notations}\label{app:notations}

For any two functions $f(d,\delta,T)$ and $g(d,\delta,T)$, we write $f(d,\delta,T) \lesssim g(d,\delta,T)$ (resp. $f(d,\delta,T) \gtrsim g(d,\delta,T)$) for some universal constant (not depending on $\delta$, $d$ or $T$) $L < \infty$ (resp. $L > 0$) if $\limsup_{T \to \infty} | f(d,\delta,T)/$ $g(d,\delta,T) | \leq L$ (resp. $\liminf_{T \to \infty} | f(d,\delta,T) / g(d,\delta,T) | \geq L$). We write $f(d,\delta,T) \asymp g(d,\delta,T)$ when both $f(d,\delta,T) \lesssim g(d,\delta,T)$ and $f(d,\delta,T) \gtrsim g(d,\delta,T)$ hold. Note that the dependence on $\delta$ and $d$ is retained with $\lesssim, \gtrsim, \asymp$. We write $f(d,\delta,T) = O(g(T))$ (resp. $f(d,\delta,T) = \Omega(g(T))$) if $f(d,\delta,T) \lesssim L(d,\delta) g(T)$ (resp. $f(d,\delta,T) \gtrsim L(d,\delta) g(T)$) holds for some $L(d,\delta)$ (possibly depending on $\delta$ and $d$). We write $f(d,\delta,T) = o(g(T))$ if $\limsup_{T \to \infty} |f(d,\delta,T)$ $/g(T)|= 0$. We write $f(d,\delta,T) = \Tilde{O}(g(T))$ if $f(d,\delta,T) = O(g(T) (\log g(T))^k)$ for some constant $k$. Note that the big-$O$ notation omits the dependence on $\delta$ and $d$. In the asymptotic when $\eps^{-1} \to \infty$, we write $f(d,\eps^{-1}) = \calO(g(d,\eps^{-1}))$ if $f(d,\delta,\eps^{-1}) \lesssim g(d,\delta,\eps^{-1}) (\log g(\eps^{-1}))^k$ for some constant $k$. 
Unless otherwise specified, we write $x^i (1\leq i \leq d)$ as the $i$-th element of a vector $x \in \mbR^d$ and $[A]^{ij}$ as the $(i,j)$-th element of a matrix $A$. For a function $f(x): \mbR^d \to \mbR$, we write $\partial_i f(z)$ as a shorthand for $\frac{\partial}{\partial x^i} f(x)\Big|_{x=z}$, and similarly for higher moments. 
For a vector (resp. matrix), all norms, if not explicitly specified, are referred to 2-norm (resp. spectral norm). For a vector $x$ and matrix $P$, define $\norm{x}_P := \sqrt{a^\T P a}$.
For matrices $A,B$, $\Tr(A)$ is the trace of $A$, and $A \preceq B$ means that $B-A$ is positive semi-definite. For a positive integer $n$, $[n] := \{1,\dots,n\}$.

\section{Related Works on Unconditional DDPM Samplers}
\label{sec:intro-works}

Given time-averaged $L^2$ unconditional score estimation error \citep{hyvarinen2005scorematch}, polynomial-time convergence guarantees have been established for wide families of target distributions \citep{debortoli2021bridge,chen2023sampling,lee2023general,chen2023improved,benton2023linear,pedrotti2023predcorr,conforti2023fisher}. For all target distributions with finite second moment, under $L^2$ score estimation error, $\calO(d \log(1/\delta)^2 / \eps^2)$ number of steps are sufficient to achieve $\eps^2$ KL divergence between the $\delta$-perturbed target distribution and the generated distribution using the specially designed exponential-decay-then-constant step-sizes \citep{benton2023linear,conforti2023fisher}. 
The analysis usually involves applying the Girsanov change-of-measure framework and the Fokker-Plank equation \citep{chen2023sampling,chen2023improved} to either the original SDE diffusion process or some transformed process \citep{benton2023linear,conforti2023fisher}, followed by an analysis of the discretization of the continuous-time process. 
More recently, similar convergence guarantees have been established using non-SDE-type techniques, such as with typical sets \citep{li2023faster} and with tilting factor representations \citep{liang2024discrete}. 
Here the new analysis introduced in \cite{liang2024discrete} is applicable to a larger set of step-sizes (equivalently, noise schedules) than the ones commonly used in previous analyses \citep{chen2023improved,benton2023linear,conforti2023fisher}. In this paper, we employ the same analytical framework as in \cite{liang2024discrete}.

Some other works analyzed sampling errors using a different measure (the Wasserstein-2 distance) \citep{bruno2023wass,gao2023wass,gao2024wass}. Beyond stochastic samplers, another line of studies provided theoretical guarantees for the deterministic sampler corresponding to DDPM \citep{chen2023ddim,chen2023probode,huang2024pfode}. 
Besides, \cite{cheng2023flow,benton2024flow,jiao2024flow_match,gao2024flow-match} provided guarantees for the closely-related flow-matching model, which learns a deterministic coupling between any two distributions. 
Also, \cite{lyu2024cst-model,li2024consistency} provided convergence guarantees for the closely-related consistency models \citep{song2023cm}. Finally, in order to achieve an end-to-end analysis, several works also developed sample complexity bounds to achieve the $L^2$ score estimation error for a variety of distributions \citep{oko2023dist-est,shah2023gauss_mm_learning,gatmiry2024gm-learning,chen2024gm-learning,cole2024subgauss,zhang2024minimax,mei2023graphical,chen2023lowdim}.

\section{Details of Numerical Simulations}

In \Cref{fig:gauss_kl_non_asymp_noisy}, we compared the performances of our optimal BO-DDNM sampler (with the $f_{t,y}^*$ in \eqref{eq:def_fy_star}) against the DDNM and DDNM$^+$ samplers \citep{wang2023ddnm} at different levels of $\sigma_y^2$. For Gaussian, we use $\mu_0 = 0$, $d=4$, $p=2$, and $y=\begin{pmatrix} 0.5 & 0.5 \end{pmatrix}$. We first randomly generate a positive definite matrix $\Sigma_0$ and uniformly sample $\rho \in [0.4,0.7)$, and then this correlation coefficient is enforced for any $[\Sigma_0]^{ij}$ where $i \in [p]$ and $j \in \{p+1,\dots,d\}$. We use the noise schedule in \eqref{eq:alpha_genli} with $c=3$ and $\delta=0.0001$ for Gaussian $Q_0$. For Gaussian mixture, we use $N=2$, $d=2$, $p=1$, and $y=1$. We set $\pi_n = \begin{pmatrix} 0.4 & 0.6 \end{pmatrix}$, $\text{diag}(\Sigma_0) = \begin{pmatrix} 0.1 & 1 \end{pmatrix}$, and $\rho = 0.6$. We further uniformly sample $\{\mu_{0,n}\}_{n=1}^N$ in the space $[-1,1) \times [-1,1)$. We use the noise schedule in \eqref{eq:alpha_genli} with $c=4$ and $\delta=0.02$ for Gaussian mixture $Q_0$. We use 150000 samples to estimate the divergence when $Q_0$ is Gaussian mixture.

In \Cref{fig:gauss_kl_limit}, we numerically verify the exact bias in KL divergence as a function of $y$ and $\rho$ for Gaussian $Q_0$. Here $Q_0 = \calN(0,\Sigma_0)$, $d=4$ and $p=2$. Suppose that $\sigma_y^2 = 0$. We assume that each element of $y$ has equal values. The correlation coefficient $\rho$ is enforced for any pair of $x^i$ and $x^j$ where $i \in [p]$ and $j \in \{p+1,\dots,d\}$. We first randomly generate a positive definite matrix $\Sigma_0$ and then enforce the correlation condition for any $x^i$ and $x^j$ where $i \in [p]$ and $j \in \{p+1,\dots,d\}$. We use a sufficiently large number of steps $T = 20000$. The conditional sampler is set as $f_{t,y} = f_{t,y}^*$ given in \eqref{eq:def_fy_star}. The noise schedule in \eqref{eq:alpha_genli} with $c=3$ and $\delta = 0.0001$ is used.

\section{Derivation of Score Bias for Existing Zero-shot DDPM Samplers}
\label{app:fy-literature}

In this section we show some examples of zero-shot conditional samplers proposed in the literature and in particular how they are related to the formulation of interest in \eqref{eq:def_cond_spl}. We recall the notations $H$, $y$, and $\sigma_t$ from \Cref{subsec:cond-ddpm}. Also denote
\[ \mu_t := \frac{1}{\sqrt{\alpha_t}} x_t + \frac{1- \alpha_t}{\sqrt{\alpha_t}} \nabla \log q_t(x_t) = \E_{X_{t-1} \sim Q_{t-1|t}(\cdot|x_t)} [X_{t-1}|x_t]  \]
which is the mean of the unconditional reverse-step at time $t \geq 1$. 

\subsection{Come-Closer-Diffuse-Faster (CCDF)}

We first examine the Come-Closer-Diffuse-Faster (CCDF) algorithm \citep{chung2022ccdf}. The CCDF algorithm using DDPM samplers gives that
\begin{align*}
    x_{t-1}' &= \mu_t + \sigma_t z_{t,1},\\
    x_{t-1} &= (I-H^\dagger H) x_{t-1}' + \sqrt{\Bar{\alpha}_{t}} H^\dagger y + \sqrt{1 - \Bar{\alpha}_t} z_{t,2}
\end{align*}
where $z_{t,1}, z_{t,2} \stackrel{i.i.d.}{\sim} \calN(0,I_d)$ are standard Gaussian random variables. Thus, the conditional mean of the update is
\begin{align*}
    \mu_{t,y} &= (I-H^\dagger H) \mu_t + \sqrt{\Bar{\alpha}_{t-1}} H^\dagger y\\
    &= \frac{1}{\sqrt{\alpha_t}} x_t + \frac{1- \alpha_t}{\sqrt{\alpha_t}} (I-H^\dagger H) \nabla \log q_t(x_t) + \sqrt{\Bar{\alpha}_{t}} H^\dagger y - \frac{1}{\sqrt{\alpha_t}} H^\dagger H x_t
\end{align*}
in which
\[ f_{t,y}(x_t) = \frac{1}{1-\alpha_t} \brc{ \sqrt{\alpha_t} \sqrt{\Bar{\alpha}_{t}} H^\dagger y - H^\dagger H x_t }. \]

\subsection{DDNM and \texorpdfstring{DDNM$^+$}{DDNM-plus}}

Next, we examine the DDNM algorithm and its modified version DDNM$^+$ \citep{wang2023ddnm}. We first note that the unconditional DDPM satisfies that (cf. \cite[Equations (7) and (11)]{ho2020ddpm}),
\begin{align} \label{eq:ho2020ddpm_post_mean}
    \mu_t &:= \frac{\sqrt{\Bar{\alpha}_{t-1}} (1-\alpha_t)}{1-\Bar{\alpha}_t} x_{0|t} + \frac{\sqrt{\alpha_t} (1-\Bar{\alpha}_{t-1})}{1-\Bar{\alpha}_t} x_t, \nonumber\\
    x_{0|t} &:= \frac{1}{\sqrt{\Bar{\alpha}_t}} x_t + \frac{1- \Bar{\alpha}_t}{\sqrt{\Bar{\alpha}_t}} \nabla \log q_t(x_t) = \E_{X_0 \sim Q_{0|t}(\cdot|x_t)} [X_0|x_t] .
\end{align}
Combining these two lines, we have $\mu_t = \frac{1}{\sqrt{\alpha_t}} (x_t + (1-\alpha_t) \nabla \log q_t(x_t))$.
In DDNM, $x_{0|t}$ is projected along the direction of the given $y$, which yields
\[ x_{0|t,y} := H^\dagger y + (I_d - H^\dagger H) x_{0|t}, \]
and the corresponding conditional mean of the update becomes
\[ \mu_{t,y} = \frac{\sqrt{\Bar{\alpha}_{t-1}} (1-\alpha_t)}{1-\Bar{\alpha}_t} x_{0|t,y} + \frac{\sqrt{\alpha_t} (1-\Bar{\alpha}_{t-1})}{1-\Bar{\alpha}_t} x_t. \]
Thus,
\begin{align*}
    \mu_{t,y} &= \frac{\sqrt{\Bar{\alpha}_{t-1}} (1-\alpha_t)}{1-\Bar{\alpha}_t} \brc{H^\dagger y + (I_d - H^\dagger H) x_{0|t}} + \frac{\sqrt{\alpha_t} (1-\Bar{\alpha}_{t-1})}{1-\Bar{\alpha}_t} x_t \\
    &\stackrel{(i)}{=} (I_d - H^\dagger H) \mu_t + H^\dagger \brc{\frac{\sqrt{\Bar{\alpha}_{t-1}} (1-\alpha_t)}{1-\Bar{\alpha}_t} y + \frac{\sqrt{\alpha_t} (1-\Bar{\alpha}_{t-1})}{1-\Bar{\alpha}_t} H x_t } \\
    &= \frac{1}{\sqrt{\alpha_t}} x_t + \frac{1-\alpha_t}{\sqrt{\alpha_t}} (I_d - H^\dagger H) \nabla \log q_t(x_t) \\
    &\qquad + \brc{\frac{\sqrt{\Bar{\alpha}_{t-1}} (1-\alpha_t)}{1-\Bar{\alpha}_t} H^\dagger y + \frac{\sqrt{\alpha_t} (1-\Bar{\alpha}_{t-1})}{1-\Bar{\alpha}_t} H^\dagger H x_t - \frac{1}{\sqrt{\alpha_t}} H^\dagger H x_t }
\end{align*}
where $(i)$ follows from \eqref{eq:ho2020ddpm_post_mean}.
Thus, to express this conditional mean in the form of \eqref{eq:def_cond_spl}, 
\begin{align*}
    f_{t,y}(x_t) &= \frac{\sqrt{\Bar{\alpha}_t}}{1-\Bar{\alpha}_t} H^\dagger y + \frac{1}{1-\alpha_t} \brc{\frac{\alpha_t (1-\Bar{\alpha}_{t-1})}{1-\Bar{\alpha}_t} - 1} H^\dagger H x_t \\
    &= \frac{1}{1-\Bar{\alpha}_t} \brc{\sqrt{\Bar{\alpha}_{t}} H^\dagger y - H^\dagger H x_t}.
\end{align*}
Here note that $f_{t,y}(x_t)$ is supported on $\range(H^\dagger H)$. Also note that for DDNM, $f_{t,y} = f_{t,y}^*$, which is the BO-DDNM sampler defined in \eqref{eq:def_fy_star}, when there is no measurement noise (i.e., $\sigma_y^2 = 0$).

Next we investigate its modified version, DDNM$^+$, in particular when $H = \begin{pmatrix} I_p & 0 \end{pmatrix}$. To relate the notations of \cite[Section~3.3 and Appendix~I]{wang2023ddnm} with ours, note that $\Sigma = A = H$, $U = I_p$, $V = I_d$, $s_1,\dots,s_p = 1$, $s_{p+1},\dots,s_{d} = 0$, and $a = \frac{\sqrt{\Bar{\alpha}_{t-1}} (1-\alpha_t)}{1 - \Bar{\alpha}_t}$. If $\sigma_t \geq a \sigma_y$, we have
\[ \Sigma_t = I_d,\quad \Phi_t = \begin{pmatrix} (\sigma_t^2 - a^2 \sigma_y^2) I_p & 0 \\ 0 & \sigma_t^2 I_{d-p} \end{pmatrix}. \]
Otherwise, if $\sigma_t < a \sigma_y$, we have
\[ \Sigma_t = \begin{pmatrix} \frac{\sigma_t}{a \sigma_y} I_p & 0 \\ 0 & I_{d-p} \end{pmatrix},\quad \Phi_t = \begin{pmatrix} 0 & 0 \\ 0 & \sigma_t^2 I_{d-p} \end{pmatrix}. \]
Observe that the only difference is on the space that supports $H^\dagger H$.

From \cite[Equations (17) and (18)]{wang2023ddnm}, we can write
\[ \hat{x}_{0|t,y} := (I_d - H^\dagger H) x_{0|t} + \underbrace{\Sigma_t H^\dagger y + (I_d - \Sigma_t) H^\dagger H x_{0|t}}_{\text{supported on } \range(H^\dagger H)}, \]
Thus, with similar arguments above,
\begin{align*}
    \mu_{t,y} &= \frac{\sqrt{\Bar{\alpha}_{t-1}} (1-\alpha_t)}{1-\Bar{\alpha}_t} \hat{x}_{0|t,y} + \frac{\sqrt{\alpha_t} (1-\Bar{\alpha}_{t-1})}{1-\Bar{\alpha}_t} x_t \\
    &= \frac{1}{\sqrt{\alpha_t}} x_t + \frac{1-\alpha_t}{\sqrt{\alpha_t}} (I_d - H^\dagger H) \nabla \log q_t(x_t) \\
    &+ \frac{1-\alpha_t}{\sqrt{\alpha_t} (1-\Bar{\alpha}_t)} \underbrace{\brc{ \sqrt{\Bar{\alpha}_{t}} (\Sigma_t H^\dagger y + (I_d - \Sigma_t) H^\dagger H x_{0|t}) - H^\dagger H x_t } }_{\text{supported on } \range(H^\dagger H)}
\end{align*}
where $f_{t,y}$ is again supported on $\range(H^\dagger H)$.

\subsection{Samplers Using Higher-Order Derivatives}

Before we end this section, we note that the formulation in \eqref{eq:def_cond_spl} only uses (estimates of) first-order derivatives of (unconditional) log-p.d.f.s (a.k.a. unconditional score functions). This might not correspond to the optimal zero-shot sampler, and in practice there have been methods that use both first- and second-order derivatives (namely, in $\partial x_{0|t}(x_t) / \partial x_t$) to achieve better zero-shot sampling performance \citep{chung2023dps,song2023pgdm}. Nevertheless, the second-order derivatives might be hard to obtain, which require extra machine time and memory in the calculation. We leave investigations to use second-order derivatives in zero-shot conditional samplers as future work.

\section{Proof of \texorpdfstring{\Cref{thm:main-general}}{Theorem 1} and \texorpdfstring{\Cref{cor:main-general}}{Corollary 1}}
\label{app:thm1-proof}

Overall, the structure of the proof of \Cref{thm:main-general} is similar to that for \cite[Theorem~1]{liang2024discrete}. To start, we note that with similar arguments in \cite[Equation~13]{liang2024discrete},
an upper bound on $\KL{\Tilde{Q}_0}{\widehat{P}_0}$ is given by
\begin{align} \label{eq:proof-main-general-total-err}
    &\KL{\Tilde{Q}_0}{\widehat{P}_0} \nonumber\\
    &= \KL{\Tilde{Q}_T}{\widehat{P}_T} + \sum_{t=1}^T \E_{X_t \sim \Tilde{Q}_{t}} \sbrc{ \KL{\Tilde{Q}_{t-1|t}(\cdot|X_t)}{\widehat{P}_{t-1|t}(\cdot|X_t)} } \nonumber\\
    &\qquad - \sum_{t=1}^T \E_{X_{t-1} \sim \Tilde{Q}_{t-1}} \sbrc{ \KL{\Tilde{Q}_{t|t-1}(\cdot|X_{t-1})}{\widehat{P}_{t|t-1}(\cdot|X_{t-1})} } \nonumber\\
    &\leq \KL{\Tilde{Q}_T}{\widehat{P}_T} + \sum_{t=1}^T \E_{X_t \sim \Tilde{Q}_{t}} \sbrc{ \KL{\Tilde{Q}_{t-1|t}(\cdot|X_t)}{\widehat{P}_{t-1|t}(\cdot|X_t)} } \nonumber\\
    &= \underbrace{\E_{X_T \sim \Tilde{Q}_T} \sbrc{\log \frac{\Tilde{q}_T(X_T)}{\widehat{p}_T(X_T)}}}_{\text{Term 1: initialization error}} + \underbrace{\sum_{t=1}^T \E_{X_t,X_{t-1} \sim \Tilde{Q}_{t,t-1}} \sbrc{\log \frac{p_{t-1|t}(X_{t-1}|X_t)}{\widehat{p}_{t-1|t}(X_{t-1}|X_t)}}}_{\text{Term 2: estimation error}} \nonumber\\
    &\qquad + \underbrace{\sum_{t=1}^T \E_{X_t,X_{t-1} \sim \Tilde{Q}_{t,t-1}} \sbrc{\log \frac{\Tilde{q}_{t-1|t}(X_{t-1}|X_t)}{p_{t-1|t}(X_{t-1}|X_t)}}}_{\text{Term 3: reverse-step error}}.
\end{align}
The last equality holds because $\widehat{p}_T = p_T$.
Now, we provide an upper bound for the reverse-step error that is ready for further analysis. 
In the following lemma, we show that the mismatched $\Tilde{q}_{t-1|t}$ is an exponentially tilted form of $p_{t-1|t}$.

\begin{lemma} \label{lem:rev-err-tilt-factor}
Fixed $t \geq 1$. For any fixed $x_t \in \mbR^d$, as long as $\Tilde{q}_{t-1}$ exists, we have
\[ \Tilde{q}_{t-1|t}(x_{t-1}|x_t) = \frac{p_{t-1|t}(x_{t-1}|x_t) e^{\Tilde{\zeta}_{t,t-1}(x_t,x_{t-1})}}{\E_{X_{t-1}\sim P_{t-1|t}}[e^{\Tilde{\zeta}_{t,t-1}(x_t,X_{t-1})}]} \]
where
\begin{align*}
    &\Tilde{\zeta}_{t,t-1}(x_t,x_{t-1}) \\
    &:= \sqrt{\alpha_t} \Delta_t(x_t)^\T (x_{t-1} - m_t(x_t)) + (\nabla \log \Tilde{q}_{t-1}(m_t(x_t)) - \sqrt{\alpha_t} \nabla \log \Tilde{q}_{t}(x_t))^\T (x_{t-1} - m_t(x_t)) \\
    &+ \sum_{p=2}^\infty T_p(\log \Tilde{q}_{t-1}, x_{t-1}, m_t(x_t)).
\end{align*}
Here we define the $p$-th order term in the Taylor expansion of $f(x)$ around $\mu$ as
\[ T_p (f,x,\mu) := \frac{1}{p!} \sum_{\gamma \in \mbN^d:\sum_i \gamma^i = p} \partial^p_{\bm{a}} f(\mu) \prod_{i=1}^d (x^i-\mu^i)^{\gamma^i} \]
where $\bm{a} \in [d]^p$ are the indices of differentiation in which the multiplicity of $i \in [d]$ is $\gamma^i$.
\end{lemma}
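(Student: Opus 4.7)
The plan is to derive the ratio $\Tilde{q}_{t-1|t}/p_{t-1|t}$ by Bayes' rule, perform a Taylor expansion of $\log \Tilde{q}_{t-1}$ around the posterior mean $m_t(x_t)$, and identify all $x_{t-1}$-dependent terms in the resulting log-ratio; any $x_t$-only additive term may be absorbed into the normalizing constant, giving the claimed exponential-tilting representation.

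First I would write, via Bayes' rule, $\log \Tilde{q}_{t-1|t}(x_{t-1}|x_t)=\log \Tilde{q}_{t|t-1}(x_t|x_{t-1})+\log \Tilde{q}_{t-1}(x_{t-1})-\log \Tilde{q}_t(x_t)$. Since $\Tilde{Q}$ uses the same forward kernel as $Q$, we have $\Tilde{q}_{t|t-1}(x_t|x_{t-1})=\calN(x_t;\sqrt{\alpha_t}x_{t-1},(1-\alpha_t)I_d)$, and $p_{t-1|t}(x_{t-1}|x_t)=\calN(x_{t-1};\mu_t(x_t),\sigma_t^2 I_d)$ with $\sigma_t^2=(1-\alpha_t)/\alpha_t$. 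Expanding the two quadratic forms and discarding all terms that depend only on $x_t$ (which contribute to the $x_t$-indexed normalizer), the only surviving piece from the Gaussian factors is the linear-in-$x_{t-1}$ cross-term $\frac{\sqrt{\alpha_t}}{1-\alpha_t}\,x_{t-1}^{\T}\bigl(x_t-\sqrt{\alpha_t}\mu_t(x_t)\bigr)$.

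Next I would apply Tweedie's formula $\sqrt{\alpha_t}\, m_t(x_t)=x_t+(1-\alpha_t)\nabla\log \Tilde{q}_t(x_t)$ together with the definition $\Delta_t(x_t)=\tfrac{\sqrt{\alpha_t}}{1-\alpha_t}(m_t(x_t)-\mu_t(x_t))$ to rewrite $x_t-\sqrt{\alpha_t}\mu_t(x_t)=-(1-\alpha_t)\nabla\log \Tilde{q}_t(x_t)+(1-\alpha_t)\Delta_t(x_t)$. Combining, the linear-in-$x_{t-1}$ coefficient becomes $\sqrt{\alpha_t}\Delta_t(x_t)-\sqrt{\alpha_t}\nabla\log \Tilde{q}_t(x_t)$. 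Meanwhile, the Taylor expansion of $\log \Tilde{q}_{t-1}(x_{t-1})$ around $m_t(x_t)$ contributes a zeroth-order term (absorbed into the normalizer), a linear term $\nabla\log \Tilde{q}_{t-1}(m_t(x_t))^{\T}(x_{t-1}-m_t(x_t))$, and the higher-order tail $\sum_{p\ge 2}T_p(\log \Tilde{q}_{t-1},x_{t-1},m_t(x_t))$. Because $m_t(x_t)$ depends only on $x_t$, one may freely replace $x_{t-1}$ with $x_{t-1}-m_t(x_t)$ in any linear term at the cost of an $x_t$-only constant, so I can re-center all linear contributions at $m_t(x_t)$. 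Summing the two linear contributions gives precisely
\[
\bigl(\sqrt{\alpha_t}\Delta_t(x_t)+\nabla\log\Tilde{q}_{t-1}(m_t(x_t))-\sqrt{\alpha_t}\nabla\log\Tilde{q}_t(x_t)\bigr)^{\T}(x_{t-1}-m_t(x_t)),
\]
matching the stated $\Tilde{\zeta}_{t,t-1}$ after appending the higher-order Taylor tail.

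Finally, denoting the collected $x_{t-1}$-independent factor by $c(x_t)$, we have $\Tilde{q}_{t-1|t}(x_{t-1}|x_t)=p_{t-1|t}(x_{t-1}|x_t)\,e^{\Tilde{\zeta}_{t,t-1}(x_t,x_{t-1})+c(x_t)}$. Integrating both sides over $x_{t-1}$, the left side equals $1$ and the right side equals $e^{c(x_t)}\E_{X_{t-1}\sim P_{t-1|t}}[e^{\Tilde{\zeta}_{t,t-1}(x_t,X_{t-1})}]$, so $e^{c(x_t)}$ equals the reciprocal of the stated normalizer, yielding the claimed identity. The main obstacle will be the bookkeeping of which terms are genuinely $x_t$-only (and therefore can be absorbed) versus depending on $x_{t-1}$, and ensuring via Tweedie that the $\nabla\log \Tilde{q}_t(x_t)$ piece emerging from the Gaussian cross-term combines correctly with the Taylor linear term centered at $m_t(x_t)$; care is also needed because the infinite Taylor series requires analyticity of $\Tilde{q}_{t-1}$, which is exactly why the theorem is stated under that hypothesis.
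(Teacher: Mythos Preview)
Your proposal is correct and follows essentially the same approach as the paper's proof: Bayes' rule, cancellation of the quadratic-in-$x_{t-1}$ terms between $\Tilde{q}_{t|t-1}$ and $p_{t-1|t}$, rewriting the surviving linear cross-term via Tweedie and the definition of $\Delta_t$, Taylor-expanding $\log\Tilde{q}_{t-1}$ around $m_t(x_t)$, and absorbing all $x_t$-only pieces into the normalizer. The only cosmetic difference is that the paper first isolates the exponent $\frac{\alpha_t}{1-\alpha_t}(x_{t-1}-x_t/\sqrt{\alpha_t})^{\T}(x_t/\sqrt{\alpha_t}-\mu_t)$ and then splits $x_t/\sqrt{\alpha_t}-\mu_t=(m_t-\mu_t)-(1-\alpha_t)/\sqrt{\alpha_t}\,\nabla\log\Tilde{q}_t(x_t)$, whereas you work directly with $x_t-\sqrt{\alpha_t}\mu_t(x_t)$; the algebra is identical.
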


\begin{proof}
    See \Cref{app:lem-rev-err-tilt-factor-proof}.
\end{proof}

We abbreviate $\Tilde{\zeta}_{t,t-1} = \Tilde{\zeta}_{t,t-1}(x_t,x_{t-1})$. Given the expression of $\Tilde{\zeta}_{t,t-1}$, the conditional reverse-step error can be upper-bounded for any fixed $x_t$ as
\begin{align} \label{eq:rev_err_zeta}
    &\E_{X_{t-1} \sim \Tilde{Q}_{t-1|t}} \sbrc{\log \frac{\Tilde{q}_{t-1|t}(X_{t-1}|x_t)}{p_{t-1|t}(X_{t-1}|x_t)}} \nonumber\\
    &= \E_{X_{t-1} \sim \Tilde{Q}_{t-1|t}} \sbrc{\Tilde{\zeta}_{t,t-1} - \log \E_{X_{t-1} \sim P_{t-1|t}} [e^{\Tilde{\zeta}_{t,t-1}}] } \nonumber\\
    &\stackrel{(i)}{\leq} \E_{X_{t-1} \sim \Tilde{Q}_{t-1|t}} \sbrc{\Tilde{\zeta}_{t,t-1}} + \E_{X_{t-1} \sim P_{t-1|t}} \sbrc{ - \log e^{\Tilde{\zeta}_{t,t-1}} } \nonumber\\
    &= \E_{X_{t-1} \sim \Tilde{Q}_{t-1|t}} [\Tilde{\zeta}_{t,t-1}] -  \E_{X_{t-1} \sim P_{t-1|t}} [\Tilde{\zeta}_{t,t-1}]
\end{align}
where in $(i)$ we use Jensen's inequality and note that $-\log(\cdot)$ is convex. Thus, from \eqref{eq:proof-main-general-total-err}, we have an upper bound as
\begin{multline*}
    \KL{\Tilde{Q}_0}{\widehat{P}_0} \leq \underbrace{\E_{X_T \sim \Tilde{Q}_T} \sbrc{\log \frac{\Tilde{q}_T(X_T)}{\widehat{p}_T(X_T)}}}_{\text{Term 1: initialization error}} + \underbrace{\sum_{t=1}^T \E_{X_t,X_{t-1} \sim \Tilde{Q}_{t,t-1}} \sbrc{\log \frac{p_{t-1|t}(X_{t-1}|X_t)}{\widehat{p}_{t-1|t}(X_{t-1}|X_t)}}}_{\text{Term 2: estimation error}} \\
    + \underbrace{\sum_{t=1}^T \E_{X_{t-1} \sim \Tilde{Q}_{t-1|t}} [\Tilde{\zeta}_{t,t-1}] -  \E_{X_{t-1} \sim P_{t-1|t}} [\Tilde{\zeta}_{t,t-1}]}_{\text{Term 3: reverse-step error}}.
\end{multline*}

Here, using \cite[Lemma~3]{liang2024discrete}, the initialization error can be upper-bounded as, when $T \to \infty$,
\[ \E_{X_T \sim \Tilde{Q}_T} \sbrc{\log \frac{\Tilde{q}_T(X_T)}{\widehat{p}_T(X_T)}} \leq \frac{1}{2} \E_{X_0 \sim \Tilde{Q}_0} \norm{X_0}^2 \Bar{\alpha}_T + O\brc{\Bar{\alpha}_T^2}. \]
This implies that, under \Cref{ass:m2-general} and if $c > 1$,
\[ \E_{X_T \sim \Tilde{Q}_T} \sbrc{\log \frac{\Tilde{q}_T(X_T)}{p_T(X_T)}} = o(T^{-1}). \]
Also, under \cref{ass:regular-drv-general}, the higher-order Taylor polynomials enjoy exponential rate of decay in expectation, which is contained in powers of $(1-\alpha_t)$. Thus, we are allowed to exchange the limit (of Taylor expansion) and the expectation operators (cf. \cite[Lemma~11]{liang2024discrete}).

Now, we upper-bound the estimation error and reverse-step error under score mismatch separately. 

\subsection{Step 1: Bounding estimation error under mismatch}

The following lemma provides an upper bound for the estimation error under score mismatch.

\begin{lemma} \label{lem:score-est-ptb}
Under \cref{ass:score-general,ass:bdd-mismatch-general}, with the $\alpha_t$ satisfying \cref{def:noise_smooth}, we have
\[ \sum_{t=1}^T \E_{X_t,X_{t-1} \sim \Tilde{Q}_{t,t-1}} \sbrc{\log \frac{p_{t-1|t}(X_{t-1}|X_t)}{\widehat{p}_{t-1|t}(X_{t-1}|X_t)}} \lesssim \max_{t \geq 1}\sqrt{\E_{X_t \sim \Tilde{Q}_t} \norm{\Delta_t(X_t)}^2} (\log T) \eps + (\log T) \eps^2. \]
\end{lemma}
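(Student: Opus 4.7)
The plan is to exploit the fact that both $P_{t-1|t}$ and $\widehat{P}_{t-1|t}$ are Gaussians on $\mbR^d$ with the common covariance $\sigma_t^2 I_d = \frac{1-\alpha_t}{\alpha_t} I_d$, so their log-likelihood ratio is affine in $x_{t-1}$. After writing out the explicit density ratio and taking the inner expectation against $\Tilde{Q}_{t-1|t}(\cdot|x_t)$, only the posterior mean $m_t(x_t)$ survives, and a direct completion-of-the-square gives
\[ \E_{X_{t-1} \sim \Tilde{Q}_{t-1|t}}\sbrc{\log \tfrac{p_{t-1|t}(X_{t-1}|x_t)}{\widehat{p}_{t-1|t}(X_{t-1}|x_t)}} = \tfrac{\alpha_t}{2(1-\alpha_t)}\brc{\norm{\widehat{\mu}_t(x_t) - m_t(x_t)}^2 - \norm{\mu_t(x_t) - m_t(x_t)}^2}. \]

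Next, I would write $\widehat{\mu}_t - m_t = (\widehat{\mu}_t - \mu_t) + (\mu_t - m_t)$, expand the square, and observe that the $\norm{\mu_t - m_t}^2$ pieces cancel exactly. The summand then splits into a diagonal piece $\frac{\alpha_t}{2(1-\alpha_t)}\E\norm{\widehat{\mu}_t - \mu_t}^2$ and a cross term $\frac{\alpha_t}{1-\alpha_t}\E[(\widehat{\mu}_t - \mu_t)^\T(\mu_t - m_t)]$. For the diagonal piece I would rewrite $\frac{\alpha_t}{1-\alpha_t} = (1-\alpha_t) \cdot \frac{\alpha_t}{(1-\alpha_t)^2}$, invoke $1-\alpha_t \lesssim (\log T)/T$ from \Cref{def:noise_smooth}, and pair it with the bound $\sum_t \frac{\alpha_t}{(1-\alpha_t)^2}\E\norm{\widehat{\mu}_t - \mu_t}^2 \leq T\eps^2$ from \Cref{ass:score-general}. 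This already produces the $(\log T)\eps^2$ contribution.

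For the cross term I would first apply Cauchy--Schwarz inside the expectation, then use the identity $\norm{\mu_t(x_t) - m_t(x_t)} = \frac{1-\alpha_t}{\sqrt{\alpha_t}}\norm{\Delta_t(x_t)}$ read off directly from \eqref{eq:def_delta_general}. After the $1-\alpha_t$ factors cancel against those from $\frac{\alpha_t}{1-\alpha_t}$, the bound collapses to
\[ \sum_{t=1}^T \sqrt{\alpha_t}\sqrt{\E\norm{\widehat{\mu}_t - \mu_t}^2}\sqrt{\E\norm{\Delta_t(X_t)}^2} \leq \max_{t\geq 1}\sqrt{\E\norm{\Delta_t(X_t)}^2}\cdot \sum_{t=1}^T (1-\alpha_t)\sqrt{\tfrac{\alpha_t\E\norm{\widehat{\mu}_t - \mu_t}^2}{(1-\alpha_t)^2}}. \]
A second Cauchy--Schwarz on the $t$-sum, paired with $\sum_t (1-\alpha_t)^2 \lesssim (\log T)^2/T$ from \Cref{def:noise_smooth} and once more $\sum_t \frac{\alpha_t}{(1-\alpha_t)^2}\E\norm{\widehat{\mu}_t - \mu_t}^2 \leq T\eps^2$, reduces the inner sum to $(\log T)\eps$, giving the $\max_t \sqrt{\E\norm{\Delta_t}^2}(\log T)\eps$ term.

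Beyond the algebra, the main obstacle is the bookkeeping of $1-\alpha_t$ powers: \Cref{ass:score-general} normalizes the estimation error by $\alpha_t/(1-\alpha_t)^2$, whereas the Gaussian log-ratio only supplies $\alpha_t/(1-\alpha_t)$. Every ``missing'' factor of $1-\alpha_t$ must be paid for by the noise-schedule rate $(\log T)/T$, and the two Cauchy--Schwarz steps must be arranged so the deficits add up exactly to $\log T$ in the diagonal term and $\log T$ (with one power of $\eps$ traded for $\sqrt{T}$) in the cross term. \Cref{ass:bdd-mismatch-general} is used only implicitly, to ensure $\max_t \sqrt{\E\norm{\Delta_t(X_t)}^2}$ is finite so that the stated bound is meaningful.
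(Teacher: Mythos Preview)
Your proposal is correct and follows essentially the same route as the paper: write out the Gaussian log-ratio, take the inner expectation to replace $X_{t-1}$ by $m_t(X_t)$, split into a quadratic estimation-error piece and a cross piece involving $\mu_t - m_t = -\frac{1-\alpha_t}{\sqrt{\alpha_t}}\Delta_t$, and then bound each using the noise schedule together with \Cref{ass:score-general}. The only cosmetic difference is that the paper handles the final $t$-sum via Jensen's inequality $\frac{1}{T}\sum_t \sqrt{a_t}\leq \sqrt{\frac{1}{T}\sum_t a_t}$ rather than your Cauchy--Schwarz with $\sum_t(1-\alpha_t)^2\lesssim(\log T)^2/T$, but both yield the same $(\log T)\eps$ factor.
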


\begin{proof}
    See \Cref{app:lem-score-est-ptb}.
\end{proof}

\subsection{Step 2: Decomposing reverse-step error under mismatch}

Now, we decompose $\Tilde{\zeta}_{t,t-1}(x_t,x_{t-1}) = \Tilde{\zeta}_{\text{mis}} + \Tilde{\zeta}_{\text{van}}$ where
\begin{align} \label{eq:def-tilde-zeta-mis-van}
    \Tilde{\zeta}_{\text{mis}} &:= \sqrt{\alpha_t} \Delta_t(x_t)^\T (x_{t-1} - m_t(x_t)), \nonumber \\
    \Tilde{\zeta}_{\text{van}} &:= (\nabla \log \Tilde{q}_{t-1}(m_t(x_t)) - \sqrt{\alpha_t} \nabla \log \Tilde{q}_{t}(x_t))^\T (x_{t-1} - m_t(x_t)) + \sum_{p=2}^\infty T_p(\log \Tilde{q}_{t-1}, x_{t-1}, m_t(x_t)).
\end{align}
Here $\Tilde{\zeta}_{\text{van}}$ is the same tilting factor without score bias (cf. \cite{liang2024discrete}).
Also, define an auxiliary conditional probability $\Tilde{P}_{t-1|t}$ such that
\[ \Tilde{P}_{t-1|t} := \calN\brc{ m_t(x_t), \frac{1-\alpha_t}{\alpha_t} I_d}, \]
which corresponds to the oracle reverse process that knows the true scores of the perturbed target distributions. Thus, we can decompose the expected value of \eqref{eq:rev_err_zeta} in the following way:
\begin{align} \label{eq:rev_step_decomp}
    &\E_{X_t \sim \Tilde{Q}_{t}} \brc{ \E_{X_{t-1} \sim \Tilde{Q}_{t-1|t}} - \E_{X_{t-1} \sim P_{t-1|t}} } [\Tilde{\zeta}_{t,t-1}] \nonumber\\
    &= \underbrace{\E_{X_t \sim \Tilde{Q}_{t}} \brc{ \E_{X_{t-1} \sim \Tilde{Q}_{t-1|t}} - \E_{X_{t-1} \sim \Tilde{P}_{t-1|t}} } [\Tilde{\zeta}_{t,t-1}] }_{\mathcal{W}_{\text{oracle, rev-step}}} \nonumber\\
    &\quad + \underbrace{\E_{X_t \sim \Tilde{Q}_{t}} \brc{ \E_{X_{t-1} \sim \Tilde{P}_{t-1|t}} - \E_{X_{t-1} \sim P_{t-1|t}} } [\Tilde{\zeta}_{\text{mis}}]}_{\mathcal{W}_{\text{bias, rev-step}}} \nonumber\\
    &\quad + \underbrace{\E_{X_t \sim \Tilde{Q}_{t}} \brc{ \E_{X_{t-1} \sim \Tilde{P}_{t-1|t}} - \E_{X_{t-1} \sim P_{t-1|t}} } [\Tilde{\zeta}_{\text{van}}]}_{\mathcal{W}_{\text{vanish, rev-step}}}.
\end{align}

\subsection{Step 3: Bounding \texorpdfstring{$\mathcal{W}_{\text{oracle, rev-step}}$}{oracle error} and \texorpdfstring{$\mathcal{W}_{\text{bias, rev-step}}$}{bias error} }

Among the terms above, \cite[Theorem~1]{liang2024discrete} shows that, under \Cref{ass:regular-drv-general} and using the $\alpha_t$ in \Cref{def:noise_smooth},
\begin{equation} \label{eq:mismatch_zeta_0}
    \mathcal{W}_{\text{oracle, rev-step}} \lesssim \sum_{t=1}^T (1-\alpha_t)^2 \E_{X_t \sim \Tilde{Q}_t}\sbrc{\Tr\Big(\nabla^2\log \Tilde{q}_{t-1}(m_t(X_t)) \nabla^2\log \Tilde{q}_t(X_t)\Big)}.
\end{equation}

Also, for $\mathcal{W}_{\text{bias, rev-step}}$, since direct calculation yields
\begin{align*}
    \E_{X_{t-1} \sim \Tilde{P}_{t-1|t}} [\Tilde{\zeta}_{\text{mis}} (x_t,X_{t-1})] &= \E_{X_{t-1} \sim \Tilde{Q}_{t-1|t}}[\Tilde{\zeta}_{\text{mis}} (x_t,X_{t-1})] = 0, \\
    \E_{X_{t-1} \sim P_{t-1|t}} [\Tilde{\zeta}_{\text{mis}} (x_t,X_{t-1})] &= -(1-\alpha_t) \norm{\Delta_t(x_t)}^2,
\end{align*}
we have
\begin{align} \label{eq:mismatch_zeta_1}
    \mathcal{W}_{\text{bias, rev-step}} &= \E_{X_t \sim \Tilde{Q}_{t}} \brc{\E_{X_{t-1} \sim \Tilde{P}_{t-1|t}} - \E_{X_{t-1} \sim P_{t-1|t}}}[\Tilde{\zeta}_{\text{mis}} (X_t,X_{t-1})] \nonumber\\
    &= (1-\alpha_t) \E_{X_t \sim \Tilde{Q}_{t}} \norm{\Delta_t(X_t)}^2.
\end{align}

\subsection{Step 4: Bounding \texorpdfstring{$\mathcal{W}_{\text{vanish, rev-step}}$}{vanishing error} }

Next, for $\mathcal{W}_{\text{vanish, rev-step}}$, we first note that $\Tilde{P}_{t-1|t}$ is conditional Gaussian. Thus, under \Cref{ass:regular-drv-general}, we are able to exchange the limit (from Taylor series) and the expectation due to Gaussian-like moments (cf. \cite[Lemma~11]{liang2024discrete}), which gives us
\begin{align*}
    &\E_{X_{t-1} \sim \Tilde{P}_{t-1|t}} [\Tilde{\zeta}_{\text{van}} (x_t,X_{t-1})]\\
    &\quad = \frac{1-\alpha_t}{2 \alpha_t} \Tr(\nabla^2 \log \Tilde{q}_{t-1}(m_t(x_t))) + \sum_{p=4}^\infty \E_{X_{t-1} \sim \Tilde{P}_{t-1|t}} \sbrc{ T_p(\log \Tilde{q}_{t-1}, X_{t-1}, m_t(x_t))}.
\end{align*}
Here the expected value at $p=3$ is zero because all odd-order centralized moments of Gaussian vanish.

Now it remains to characterize the expectation of $\Tilde{\zeta}_{\text{van}} (x_t,x_{t-1})$ under $P_{t-1|t}$. To this end, we introduce the following notation.
\begin{definition} [Big-O in $\calL^p$ space] \label{def:bigO-Lp}
For a random variable $Z_T$, we say that $Z_T(x) = O_{\calL^p(Q)}(1)$ if $\brc{\E_{X \sim Q} \abs{Z_T(X)}^p}^{1/p} = O(1)$ for all $p \geq 1$ as $T \to \infty$. Define $\Tilde{O}_{\calL^p(Q)}$ likewise.
\end{definition}
One property is that if $Z_T(x) = O_{\calL^p(Q)}(1)$ then $\E_{X \sim Q}\abs{Z_T(X)} = O(1)$.
Another property is that if $Z_1 = O_{\calL^p(Q)}(a_T)$ and $Z_2 = O_{\calL^p(Q)}(b_T)$ for all $p \geq 1$, applying Cauchy-Schwartz inequality we get, for all $p \geq 1$,
\[ \brc{\E_{X \sim Q} \abs{Z_1 Z_2}^p}^{1/p} \leq \brc{\E_{X \sim Q} Z_1^{2 p} \E_{X \sim Q} Z_2^{2 p}}^{1/(2 p)} = O(a_T b_T), \]
which implies that $O_{\calL^p(Q)}(a_T) O_{\calL^p(Q)}(b_T) = O_{\calL^p(Q)}(a_T b_T)$. Now, with this notation, the first lines of \Cref{ass:regular-drv-general} can be equivalently written as
\begin{align*}
    (1-\alpha_t)^{m} \abs{\partial_{\bm{a}}^k \log q_{t}(X_t) } &= O_{\calL^p(\Tilde{Q}_t)}\brc{(1-\alpha_t)^{m}},~\forall p \geq 1, \\
    (1-\alpha_t)^{m} \abs{\partial_{\bm{a}}^k \log q_{t-1}(m_t(X_t)) } &= O_{\calL^p(\Tilde{Q}_t)}\brc{(1-\alpha_t)^{m}},~\forall p \geq 1.
\end{align*}
Also, \Cref{ass:bdd-mismatch-general} can be equivalently written as
\[ (1-\alpha_t)^{m} \norm{\Delta_{t,y}(X_t) } = O_{\calL^p(\Tilde{Q}_t)}(\Bar{\alpha}_t (1-\alpha_t)^{m}),~\forall p \geq 1. \]
With these notations, the following lemma characterizes the expectation of $\Tilde{\zeta}_{\text{van}} (x_t,x_{t-1})$ under $P_{t-1|t}$, which involves non-centralized Gaussian moments.

\begin{lemma} \label{lem:tilde-zeta-van-expr}
As long as $\Tilde{q}_{t-1}$ is defined, with the definition of $\Tilde{\zeta}_{\text{van}}$ in \eqref{eq:def-tilde-zeta-mis-van}, under \Cref{ass:regular-drv-general,ass:bdd-mismatch-general}, we have $\forall \ell \geq 1$,
\begin{align*}
    &\E_{X_{t-1} \sim P_{t-1|t}} [\Tilde{\zeta}_{\text{van}} (x_t,X_{t-1})]\\
    &= - \frac{1-\alpha_t}{\sqrt{\alpha_t}} (\nabla \log \Tilde{q}_{t-1}(m_t(x_t)) - \sqrt{\alpha_t} \nabla \log \Tilde{q}_{t}(x_t))^\T \Delta_t(x_t)\\
    &\quad + \frac{1-\alpha_t}{2 \alpha_t} \Tr(\nabla^2 \log \Tilde{q}_{t-1}(m_t)) + \frac{(1-\alpha_t)^2}{2 \alpha_t} \Delta_t(x_t)^\T \nabla^2 \log \Tilde{q}_{t-1}(m_t) \Delta_t(x_t)\\
    &\quad - \frac{1}{3!} \brc{\frac{(1-\alpha_t)^2}{{\alpha_t}^{3/2}}} \brc{ 3 \sum_{i=1}^d \partial^3_{iii} \log \Tilde{q}_{t-1}(m_t) \Delta_t^i + \sum_{\substack{i,j=1 \\ i\neq j}}^d \partial^3_{iij} \log \Tilde{q}_{t-1}(m_t) \Delta_t^j } \\
    &\quad + \sum_{p=4}^\infty \E_{X_{t-1} \sim P_{t-1|t}} \sbrc{ T_p(\log \Tilde{q}_{t-1}, X_{t-1}, m_t(x_t))} + O_{\calL^\ell(\Tilde{Q}_t)}\brc{(1-\alpha_t)^3}.
\end{align*}
\end{lemma}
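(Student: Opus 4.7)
The plan is to compute $\E_{X_{t-1}\sim P_{t-1|t}}[\Tilde{\zeta}_{\text{van}}]$ by parameterizing $X_{t-1}-m_t(x_t)$ as a mean-perturbed Gaussian and evaluating the Taylor-series expectations order-by-order, isolating the three explicit contributions and controlling the tail as $O_{\calL^\ell(\Tilde{Q}_t)}((1-\alpha_t)^3)$.

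First I would use \eqref{eq:def_delta_general} to rewrite $\mu_t(x_t) = m_t(x_t) - \frac{1-\alpha_t}{\sqrt{\alpha_t}} \Delta_t(x_t)$, so that under $P_{t-1|t}$ we can write $Y := X_{t-1}-m_t(x_t) = -\frac{1-\alpha_t}{\sqrt{\alpha_t}}\Delta_t(x_t) + \sqrt{\frac{1-\alpha_t}{\alpha_t}}\,Z$ with $Z\sim \calN(0,I_d)$. With this, the linear piece of $\Tilde{\zeta}_{\text{van}}$ contributes $-\frac{1-\alpha_t}{\sqrt{\alpha_t}}(\nabla\log\Tilde{q}_{t-1}(m_t)-\sqrt{\alpha_t}\nabla\log\Tilde{q}_t(x_t))^\T \Delta_t$ directly from $\E[Y]=-\frac{1-\alpha_t}{\sqrt{\alpha_t}}\Delta_t$. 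For $T_2$, the quadratic-form identity $\E[Y^\T A Y] = \sigma_t^2\Tr(A) + \E[Y]^\T A\,\E[Y]$ with $A=\nabla^2\log\Tilde{q}_{t-1}(m_t)$ yields exactly $\frac{1-\alpha_t}{2\alpha_t}\Tr(A) + \frac{(1-\alpha_t)^2}{2\alpha_t}\Delta_t^\T A\Delta_t$.

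For $T_3$, I would expand in multi-indices $\gamma$ with $|\gamma|=3$ (so $T_3 = \sum_\gamma \frac{1}{\gamma!}\partial^\gamma\log\Tilde{q}_{t-1}(m_t)\,Y^\gamma$) and use the Isserlis-type moment formulas for a diagonal-covariance mean-perturbed Gaussian: $\E[Y_i^3]=\mu_i^3+3\sigma_t^2\mu_i$, $\E[Y_i^2 Y_j]=(\mu_i^2+\sigma_t^2)\mu_j$, $\E[Y_iY_jY_k]=\mu_i\mu_j\mu_k$ for distinct $i,j,k$, with $\mu=-\frac{1-\alpha_t}{\sqrt{\alpha_t}}\Delta_t$. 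The pieces \emph{linear} in $\mu$ combine to $\frac{\sigma_t^2}{6}[3\sum_i\partial^3_{iii}\log\Tilde{q}_{t-1}(m_t)\mu^i + \sum_{i\neq j}\partial^3_{iij}\log\Tilde{q}_{t-1}(m_t)\mu^j]$, and substituting $\sigma_t^2\mu=-\frac{(1-\alpha_t)^2}{\alpha_t^{3/2}}\Delta_t$ produces the displayed cubic-derivative contribution. The residual $\mu^3$ terms in $\E[T_3]$ scale as $(1-\alpha_t)^3 \norm{\Delta_t}^3$; combined with $\E\norm{\Delta_t}^\ell = O(\Bar{\alpha}_t)$ from \Cref{ass:bdd-mismatch-general} and $|\partial^\gamma \log\Tilde{q}_{t-1}(m_t)| = O_{\calL^\ell(\Tilde{Q}_t)}(1)$ from \Cref{ass:regular-drv-general}, Cauchy–Schwarz in $\calL^\ell(\Tilde{Q}_t)$ gives $O_{\calL^\ell(\Tilde{Q}_t)}((1-\alpha_t)^3)$.

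The main obstacle will be Step $p\geq 4$: one must show $\sum_{p=4}^\infty \E_{X_{t-1}\sim P_{t-1|t}}[T_p(\log\Tilde{q}_{t-1},X_{t-1},m_t(x_t))] = O_{\calL^\ell(\Tilde{Q}_t)}((1-\alpha_t)^3)$ uniformly in the mean-perturbation $\Delta_t$. I would expand each $Y^\gamma$ moment of the mean-perturbed Gaussian as a polynomial of total weight $p$ in $(\sigma_t,\mu)$, i.e. a sum of terms of shape $\sigma_t^{p-k}\mu^k$ for $0\leq k\leq p$ with $p-k$ even, count each with the appropriate Isserlis-type combinatorial multiplicity, and bound the corresponding derivative factor $|\partial^\gamma \log\Tilde{q}_{t-1}(m_t)|$ in $\calL^\ell(\Tilde{Q}_t)$ via \Cref{ass:regular-drv-general}. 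The $k=0$ case reduces to the centered moments already handled in the matched analysis of \cite{liang2024discrete}, which must be carried as a separately accounted contribution (since it survives at order $(1-\alpha_t)^{p/2}$ and for $p=4$ only reaches $(1-\alpha_t)^2$, so it is absorbed into the explicit displayed $\sum_{p=4}^\infty$ sum rather than the $O_{\calL^\ell}((1-\alpha_t)^3)$ residual); the $k\geq 1$ cases produce the extra $\mu$-factors, each of which gains a $(1-\alpha_t)\norm{\Delta_t}$ and improves the residual by a factor $(1-\alpha_t)^{k/2}$ or better once combined with the $\calL^\ell$-control on $\Delta_t$. Finally, to exchange the infinite sum with expectation I would invoke the Gaussian-tail argument of \cite[Lemma 11]{liang2024discrete}, extended to the mean-perturbed case by observing that shifting the Gaussian by $\mu = O_{\calL^\ell}(1-\alpha_t)$ only affects the tail bound by a factor uniformly bounded in $t$.
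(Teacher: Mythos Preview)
Your computations for the linear term, $T_2$, and $T_3$ match the paper's proof exactly: same mean-perturbed parameterization $Y=-\frac{1-\alpha_t}{\sqrt{\alpha_t}}\Delta_t+\sigma_t Z$, same quadratic-form identity for $T_2$, same non-centralized third-moment formulas for $T_3$, and the same $O_{\calL^\ell(\Tilde{Q}_t)}((1-\alpha_t)^3)$ bound on the cubic-in-$\mu$ residual from $T_3$ via \Cref{ass:regular-drv-general,ass:bdd-mismatch-general}.

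However, you have misread the statement regarding $p \geq 4$. The lemma does \emph{not} ask you to bound or decompose the tail $\sum_{p=4}^\infty \E_{P_{t-1|t}}[T_p]$: that sum is retained verbatim in the conclusion, as the full non-centered expectation, with no splitting into centered ($k=0$) and non-centered ($k\geq 1$) pieces. The $O_{\calL^\ell(\Tilde{Q}_t)}((1-\alpha_t)^3)$ residual in the statement arises \emph{solely} from the $\mu^3$ terms in the $T_3$ computation. The only remaining ingredient is the exchange of the infinite Taylor sum with expectation, which the paper dispatches in one line via \Cref{ass:regular-drv-general} (cf.\ \cite[Lemma~11]{liang2024discrete}). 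Your entire last paragraph, separating each $\E_{P_{t-1|t}}[T_p]$ into $k=0$ and $k\geq 1$ contributions and bounding the latter, is unnecessary for this lemma; that decomposition is essentially the content of the \emph{next} lemma (\Cref{lem:tilde-zeta-van-higher}), which bounds the difference $(\E_{\Tilde{P}_{t-1|t}} - \E_{P_{t-1|t}})[T_p]$ for $p\geq 4$.
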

\begin{proof}
    See \Cref{app:lem-tilde-zeta-van-expr-proof}.
\end{proof}

The following lemma provides the rate of decay of the difference in expectation of all Taylor polynomials with order $p \geq 4$.

\begin{lemma} \label{lem:tilde-zeta-van-higher}
As long as $\Tilde{q}_{t-1}$ is defined, under \Cref{ass:regular-drv-general,ass:bdd-mismatch-general}, we have, $\forall p \geq 4,\ell \geq 1$,
\[  \brc{ \E_{X_{t-1} \sim \Tilde{P}_{t-1|t}} - \E_{X_{t-1} \sim P_{t-1|t}}} [T_p(\log \Tilde{q}_{t-1}, X_{t-1}, m_t(X_t))] = O_{\calL^\ell(\Tilde{Q}_t)}\brc{ (1-\alpha_t)^{3} }. \]
\end{lemma}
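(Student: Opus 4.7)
The plan exploits that $\Tilde{P}_{t-1|t}=\calN(m_t(x_t),\sigma_t^2 I_d)$ and $P_{t-1|t}=\calN(\mu_t(x_t),\sigma_t^2 I_d)$ are Gaussians with identical covariance and means differing only by $\xi(x_t):=\mu_t(x_t)-m_t(x_t)=-\frac{1-\alpha_t}{\sqrt{\alpha_t}}\Delta_t(x_t)$, so every polynomial moment admits an explicit expansion. First, for a multi-index $\gamma\in\mbN^d$ with $|\gamma|=p$, setting $Z:=X_{t-1}-\mu_t(x_t)\sim\calN(0,\sigma_t^2 I_d)$ under $P_{t-1|t}$, the multinomial expansion
\[ \prod_i(X_{t-1}^i-m_t^i)^{\gamma_i}=\sum_{0\leq\beta\leq\gamma}\prod_i\binom{\gamma_i}{\beta_i}\xi_i^{\gamma_i-\beta_i}\prod_i Z_i^{\beta_i} \]
together with the observation that the $\beta=\gamma$ term reproduces $\E_{\Tilde{P}_{t-1|t}}[\prod_i(X_{t-1}^i-m_t^i)^{\gamma_i}]$ (since under $\Tilde{P}_{t-1|t}$ the vector $X_{t-1}-m_t(x_t)$ has the same law as $Z$) gives
\[ D_\gamma(x_t):=\brc{\E_{\Tilde{P}_{t-1|t}}-\E_{P_{t-1|t}}}\sbrc{\prod_i(X_{t-1}^i-m_t^i)^{\gamma_i}}=-\sum_{\beta\lneq\gamma}\brc{\prod_i\binom{\gamma_i}{\beta_i}\xi_i^{\gamma_i-\beta_i}}\prod_i\E[Z_i^{\beta_i}]. \]

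Next I will estimate each surviving term. By coordinate independence and the Gaussian moment formula, $\prod_i\E[Z_i^{\beta_i}]=0$ unless every $\beta_i$ is even, in which case it equals $\prod_i(\beta_i-1)!!\,\sigma_t^{\beta_i}=O(\sigma_t^{|\beta|})$. Combined with $|\prod_i\xi_i^{\gamma_i-\beta_i}|\leq\|\xi(x_t)\|^{p-|\beta|}\asymp(1-\alpha_t)^{p-|\beta|}\|\Delta_t(x_t)\|^{p-|\beta|}$ and $\sigma_t^{|\beta|}\asymp(1-\alpha_t)^{|\beta|/2}$, every contribution is $O((1-\alpha_t)^{p-|\beta|/2}\|\Delta_t(x_t)\|^{p-|\beta|})$. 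The parity constraint forces $|\beta|\leq p-1$ when $p$ is odd (dropping a single odd component of $\gamma$ by one) and $|\beta|\leq p-2$ when $p$ is even (since $|\beta|$ is itself even), so the exponent $p-|\beta|/2\geq 3$ for all $p\geq 4$, with equality precisely at $p\in\{4,5\}$. Collecting over the finitely many $\beta$ for fixed $p$ yields the pointwise estimate $|D_\gamma(x_t)|\lesssim(1-\alpha_t)^3\sum_{k=1}^p\|\Delta_t(x_t)\|^k$, with constants depending only on $p$ and $d$.

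Finally, substituting into
\[ F_p(x_t):=\frac{1}{p!}\sum_{|\gamma|=p}\partial^p_{\bm{a}}\log\Tilde{q}_{t-1}(m_t(x_t))\cdot D_\gamma(x_t), \]
I apply Cauchy--Schwarz in $\calL^\ell(\Tilde{Q}_t)$: \Cref{ass:regular-drv-general} gives $\|\partial^p_{\bm{a}}\log\Tilde{q}_{t-1}(m_t(X_t))\|_{\calL^{2\ell}}=O(1)$, while \Cref{ass:bdd-mismatch-general} (with Jensen absorbing the $k=1$ case into the $\ell=2$ bound) gives $\|\|\Delta_t(X_t)\|^k\|_{\calL^{2\ell}}=O(\Bar{\alpha}_t^{1/(2\ell)})=O(1)$ for every $k\geq 1$. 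Hence $\|F_p(X_t)\|_{\calL^\ell(\Tilde{Q}_t)}\lesssim(1-\alpha_t)^3$, which is exactly the claim. The main obstacle is the parity-sensitive bookkeeping in the middle step: without using that surviving $\beta$ must have all components even, the exponent of $(1-\alpha_t)$ could in principle drop to $p/2$, which equals $2$ at $p=4$; it is the Gaussian-moment parity constraint that lifts the bound to the advertised $(1-\alpha_t)^3$.
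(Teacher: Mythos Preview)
Your proposal is correct and follows essentially the same approach as the paper's own proof: both exploit the fact that $\Tilde{P}_{t-1|t}$ and $P_{t-1|t}$ are Gaussians with common variance $\sigma_t^2 I_d$ and means differing by a multiple of $\Delta_t$, expand the product moment, and use the parity of Gaussian moments to show the leading term cancels and the remainder is $O((1-\alpha_t)^{p/2+1})$ for $p$ even and $O((1-\alpha_t)^{(p+1)/2})$ for $p$ odd. The only organizational difference is that the paper splits into three cases according to the parity structure of $\gamma$, whereas you bound the exponent uniformly via the parity constraint on $|\beta|$; the underlying idea and the resulting estimates are identical.
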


\begin{proof}
    See \Cref{app:lem-tilde-zeta-van-higher}.
\end{proof}

Thus, with the help of \Cref{lem:tilde-zeta-van-expr,lem:tilde-zeta-van-higher}, we can identify the dominating terms in $\mathcal{W}_{\text{vanish, rev-step}}$ when $1-\alpha_t$ is small. The dominating term is
\begin{align} \label{eq:mismatch_zeta_2}
    \mathcal{W}_{\text{vanish, rev-step}} &= \E_{X_t \sim \Tilde{Q}_t} \brc{\E_{X_{t-1} \sim \Tilde{P}_{t-1|t}} - \E_{X_{t-1} \sim P_{t-1|t}}}[\Tilde{\zeta}_{\text{van}} (X_t,X_{t-1})] \nonumber\\
    &= \frac{1-\alpha_t}{\sqrt{\alpha_t}} \E_{X_t \sim \Tilde{Q}_t}  \sbrc{(\nabla \log \Tilde{q}_{t-1}(m_t(X_t)) - \sqrt{\alpha_t} \nabla \log \Tilde{q}_{t}(X_t))^\T \Delta_t(X_t) } \nonumber\\
    &\quad - \frac{(1-\alpha_t)^2}{2 \alpha_t} \E_{X_t \sim \Tilde{Q}_t} \sbrc{ \Delta_t(X_t)^\T \nabla^2 \log \Tilde{q}_{t-1}(m_t(X_t)) \Delta_t(X_t) } \nonumber\\
    &\quad + \frac{1}{3!} \brc{\frac{(1-\alpha_t)^2}{{\alpha_t}^{3/2}}} \E_{X_t \sim \Tilde{Q}_t} \bigg[ 3 \sum_{i=1}^d \partial^3_{iii} \log \Tilde{q}_{t-1}(m_t(X_t)) \Delta_t(X_t)^i \nonumber \\
    &\qquad \quad + \sum_{\substack{i,j=1 \\ i\neq j}}^d \partial^3_{iij} \log \Tilde{q}_{t-1}(m_t(X_t)) \Delta_t(X_t)^j \bigg] \nonumber \\
    &\quad + O((1-\alpha_t)^{3}).
\end{align}
Therefore, with the decomposition in \eqref{eq:rev_step_decomp} in mind, an upper bound on the reverse-step error is achieved by summing up \eqref{eq:mismatch_zeta_0}, \eqref{eq:mismatch_zeta_1} and \eqref{eq:mismatch_zeta_2}. The proof of \Cref{thm:main-general} is now complete.

\subsection{\texorpdfstring{\Cref{cor:main-general}}{Corollary 1} and its proof}
\label{app:cor-main-general}

Below we state and prove a corollary of \Cref{thm:main-general} when $\Tilde{q}_0$ does not exist. By \cite[Lemma~6]{liang2024discrete}, $\Tilde{q}_1$ always exists, which provides us with the following convergence result with early-stopping.

\begin{corollary} \label{cor:main-general}
Suppose that \cref{ass:m2-general,ass:regular-drv-general,ass:bdd-mismatch-general,ass:score-general} are satisfied. Then, \yuchen{suppose that the $\alpha_t$ satisfies \Cref{def:noise_smooth} at $t \geq 2$}, the distribution $\widehat{P}_{1}$ from the discrete-time DDPM under score bias satisfies
\begin{align*}
    &\KL{\Tilde{Q}_{1}}{\widehat{P}_{1}} \\
    &\lesssim \sum_{t=2}^T (1-\alpha_t) \E_{X_t \sim \Tilde{Q}_{t}} \norm{\Delta_{t}(X_t)}^2 \\
    &+ \sum_{t=2}^T \frac{1-\alpha_t}{\sqrt{\alpha_t}} \E_{X_t \sim \Tilde{Q}_{t}} \bigg[(\nabla \log \Tilde{q}_{t-1}(m_{t}(X_t)) - \sqrt{\alpha_t} \nabla \log \Tilde{q}_{t}(X_t))^\T \Delta_{t}(X_t)\bigg]\\
    &+ \sum_{t=2}^T \frac{(1-\alpha_t)^2}{2 \alpha_t} \E_{X_t \sim \Tilde{Q}_{t}}\sbrc{\Tr\Big(\nabla^2 \log \Tilde{q}_{t-1}(m_{t}(X_t)) \brc{ \nabla^2\log \Tilde{q}_t(X_t) - \Delta_{t}(X_t) \Delta_{t}(X_t)^\T } \Big) } \\
    &+ \sum_{t=2}^T \frac{(1-\alpha_t)^2}{3! {\alpha_t}^{3/2}} \E_{X_t \sim \Tilde{Q}_t} \sbrc{ 3 \sum_{i=1}^d \partial^3_{iii} \log \Tilde{q}_{t-1}(m_t(X_t)) \Delta_t(X_t)^i + \sum_{\substack{i,j=1 \\ i\neq j}}^d \partial^3_{iij} \log \Tilde{q}_{t-1}(m_t(X_t)) \Delta_t(X_t)^j }\\
    &+ \max_{t \geq 2}\sqrt{\E_{X_t \sim \Tilde{Q}_t} \norm{\Delta_t(X_t)}^2} (\log T) \eps + (\log T) \eps^2,
\end{align*}
where $\rmW_2(\Tilde{Q}_{1}, \Tilde{Q}_{0})^2 \lesssim (1-\alpha_1) d$.
\end{corollary}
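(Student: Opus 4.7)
\textbf{Proof plan for \Cref{cor:main-general}.} The plan is to essentially re-run the argument of \Cref{thm:main-general} with \emph{early stopping at time $1$} rather than time $0$, so that only derivatives of $\Tilde{q}_t$ for $t\ge 1$ (which are guaranteed to exist by \cite[Lemma~6]{liang2024discrete}) ever appear in the bound. The Wasserstein estimate on $\rmW_2(\Tilde{Q}_1,\Tilde{Q}_0)^2$ is then a separate short calculation using the natural coupling induced by the forward process.

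\textbf{Step 1: Early-stopped chain-rule decomposition.} First I would apply the same data-processing / chain-rule argument used to derive \eqref{eq:proof-main-general-total-err}, but stopping the reverse chain at $X_1$ instead of $X_0$:
\begin{align*}
\KL{\Tilde{Q}_1}{\widehat{P}_1}
&\le \KL{\Tilde{Q}_T}{\widehat{P}_T}
  + \sum_{t=2}^{T}\E_{X_t\sim\Tilde{Q}_t}\sbrc{\KL{\Tilde{Q}_{t-1\mid t}(\cdot\mid X_t)}{\widehat{P}_{t-1\mid t}(\cdot\mid X_t)}} \\
&= \underbrace{\E_{X_T\sim\Tilde{Q}_T}\sbrc{\log\tfrac{\Tilde{q}_T(X_T)}{\widehat{p}_T(X_T)}}}_{\text{initialization}}
 + \underbrace{\sum_{t=2}^{T}\E_{\Tilde{Q}_{t,t-1}}\sbrc{\log\tfrac{p_{t-1\mid t}}{\widehat{p}_{t-1\mid t}}}}_{\text{estimation}}
 + \underbrace{\sum_{t=2}^{T}\E_{\Tilde{Q}_{t,t-1}}\sbrc{\log\tfrac{\Tilde{q}_{t-1\mid t}}{p_{t-1\mid t}}}}_{\text{reverse-step}}.
\end{align*}
The summation now runs only over $t\ge 2$, so every density $\Tilde{q}_{t-1}$ appearing inside the logarithms is well-defined, and \Cref{ass:regular-drv-general,ass:bdd-mismatch-general} are invoked exactly as assumed in their $t\ge 2$ form.

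\textbf{Step 2: Bound each term by re-using the Theorem~1 machinery.} For the initialization term I would cite \cite[Lemma~3]{liang2024discrete} together with \Cref{ass:m2-general} to get $\Bar{\alpha}_T d = o(T^{-1})$, which is absorbed. For the estimation term I would invoke \Cref{lem:score-est-ptb} restricted to $t\ge 2$, producing exactly the two residual terms $\max_{t\ge 2}\sqrt{\E\norm{\Delta_t}^2}(\log T)\eps + (\log T)\eps^2$ that appear in the statement. For the reverse-step term I would repeat verbatim the tilting-factor identity \Cref{lem:rev-err-tilt-factor}, Jensen's inequality \eqref{eq:rev_err_zeta}, and the three-way split \eqref{eq:rev_step_decomp} into $\mathcal{W}_{\text{oracle, rev-step}}+\mathcal{W}_{\text{bias, rev-step}}+\mathcal{W}_{\text{vanish, rev-step}}$, then plug in \eqref{eq:mismatch_zeta_0}, \eqref{eq:mismatch_zeta_1}, and \eqref{eq:mismatch_zeta_2}. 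The only cosmetic change is that the Hessian contribution from $\mathcal{W}_{\text{oracle}}$, namely $\tfrac{(1-\alpha_t)^2}{2\alpha_t}\Tr(\nabla^2\log\Tilde{q}_{t-1}(m_t)\nabla^2\log\Tilde{q}_t)$, is merged with the quadratic-in-$\Delta_t$ piece $-\tfrac{(1-\alpha_t)^2}{2\alpha_t}\Delta_t^\T\nabla^2\log\Tilde{q}_{t-1}(m_t)\Delta_t = -\tfrac{(1-\alpha_t)^2}{2\alpha_t}\Tr(\nabla^2\log\Tilde{q}_{t-1}(m_t)\Delta_t\Delta_t^\T)$ coming out of $\mathcal{W}_{\text{vanish}}$, which produces the combined trace $\Tr\!\big(\nabla^2\log\Tilde{q}_{t-1}(m_t)(\nabla^2\log\Tilde{q}_t-\Delta_t\Delta_t^\T)\big)$ displayed in the corollary.

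\textbf{Step 3: Wasserstein bound between $\Tilde{Q}_1$ and $\Tilde{Q}_0$.} Finally, I would produce the $\rmW_2$ estimate through the obvious coupling from the forward process in \eqref{eq:def_forward_proc}: take $(X_0,X_1)$ with $X_0\sim\Tilde{Q}_0$ and $X_1=\sqrt{\alpha_1}X_0+\sqrt{1-\alpha_1}\,w_1$ with $w_1\sim\calN(0,I_d)$ independent of $X_0$. Then
\[
\rmW_2(\Tilde{Q}_1,\Tilde{Q}_0)^2 \;\le\; \E\norm{X_1-X_0}^2
\;=\; (1-\sqrt{\alpha_1})^2\,\E\norm{X_0}^2 + (1-\alpha_1)\,d,
\]
and since $1-\alpha_1\le c\log T/T\ll 1$ under \Cref{def:noise_smooth}, we have $(1-\sqrt{\alpha_1})^2\lesssim (1-\alpha_1)^2$, so \Cref{ass:m2-general} gives $\rmW_2(\Tilde{Q}_1,\Tilde{Q}_0)^2 \lesssim (1-\alpha_1)^2 M_2 d + (1-\alpha_1)d \lesssim (1-\alpha_1)d$.

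\textbf{Main obstacle.} Because all the heavy lifting — \Cref{lem:rev-err-tilt-factor}, \Cref{lem:score-est-ptb}, and especially the non-centralized Gaussian-moment computations in \Cref{lem:tilde-zeta-van-expr,lem:tilde-zeta-van-higher} — is already available from the proof of \Cref{thm:main-general}, this corollary is essentially a bookkeeping exercise. The only genuinely delicate point is making sure that the early-stopping truncation is \emph{consistent}: the index $t=1$ must be dropped from every single sum (including inside the decomposition \eqref{eq:rev_step_decomp}, where $\Tilde{q}_0$ would otherwise be needed to define $\Tilde{P}_{0\mid 1}$), and the terms that would arise from $t=1$ must instead be accounted for by the Wasserstein bridge $\rmW_2(\Tilde{Q}_1,\Tilde{Q}_0)^2\lesssim(1-\alpha_1)d$. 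Once this accounting is done carefully, the stated bound follows by summation.
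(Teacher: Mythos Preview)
Your approach is correct and matches the paper's own (very terse) proof, which simply says the result follows by repeating the arguments of \Cref{thm:main-general} and invoking \cite[Lemma~12]{liang2024discrete} for the Wasserstein estimate. One small correction to Step~3: the corollary only assumes \Cref{def:noise_smooth} for $t\ge 2$ (precisely to accommodate schedules like \eqref{eq:alpha_genli} with $1-\alpha_1=\delta$), so you cannot invoke $1-\alpha_1\le c\log T/T$; however, $(1-\sqrt{\alpha_1})^2=(1-\alpha_1)^2/(1+\sqrt{\alpha_1})^2\le 1-\alpha_1$ holds unconditionally, so your coupling computation still gives $\rmW_2(\Tilde{Q}_1,\Tilde{Q}_0)^2\lesssim(1-\alpha_1)d$ without that assumption.
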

\begin{proof}
    The result directly follows with the same arguments as in the proof of \Cref{thm:main-general}. The only difference is the guarantee under the Wasserstein distance, which can be obtained using \cite[Lemma~12]{liang2024discrete}.
\end{proof}

\section{Proof of \texorpdfstring{\Cref{thm:gen_q0_kl_general}}{Theorem 2}}
\label{app:proof_thm_gen_q0_kl_general}

We first recall some of the properties of the noise schedule in \eqref{eq:alpha_genli}. By \Cref{lem:alpha_genli_rate}, the noise schedule in \eqref{eq:alpha_genli} satisfies that $\frac{1 - \alpha_t}{(1 - \Bar{\alpha}_{t-1})^p} \lesssim \frac{\log T \log(1/\delta)}{\delta^{p-1} T}$ for all $p \geq 1$ while $\Bar{\alpha}_T = o(T^{-1})$, and thus such $\alpha_t$ satisfies \cref{def:noise_smooth} when $t \geq 2$.
Further, with the $\alpha_t$ in \eqref{eq:alpha_genli}, \cite[Lemmas 15 and 17]{liang2024discrete} show that for 
any $Q_0$ with finite variance under early-stopping, $\forall p,\ell \geq 1$,
\begin{align*}
    \E_{X_t \sim \Tilde{Q}_t} \abs{\partial_{\bm{a}}^p \log \Tilde{q}_t(X_t) }^\ell &= O\brc{\frac{1}{(1 - \Bar{\alpha}_t)^{p \ell / 2 }}},\\
    \E_{X_t \sim \Tilde{Q}_t} \abs{\partial_{\bm{a}}^p \log \Tilde{q}_{t-1}(m_t(X_t)) }^\ell &= O\brc{\frac{1}{(1 - \Bar{\alpha}_{t-1})^{p\ell/2}}}.
\end{align*}
Thus, using \Cref{lem:alpha_genli_rate}, \Cref{ass:regular-drv-general} is satisfied (since $\delta$ is constant). In the following, we further verify the last relationship in \Cref{ass:regular-drv-general} holds.



Therefore, since \Cref{ass:m2-general,ass:score-general,ass:bdd-mismatch-general} have been satisfied, we can invoke \Cref{cor:main-general} and get $\KL{\Tilde{Q}_{1}}{\widehat{P}_{1}} \lesssim \mathcal{W}_{\text{oracle}} + \mathcal{W}_{\text{bias}} + \mathcal{W}_{\text{vanish}}$.
Now, we investigate the dimensional dependence for each term of the upper bound in \Cref{cor:main-general}.

To start, from \cite[Theorem~3]{liang2024discrete}, for any $\Tilde{Q}_0$ having finite variance, with the $\alpha_t$ in \eqref{eq:alpha_genli}, we have
\[ \mathcal{W}_{\text{oracle}} \lesssim \frac{d^2 \log^2(1/\delta) (\log T)^2}{T} + (\log T) \eps^2. \]
Also, since by assumption $\E_{X_t \sim \Tilde{Q}_t} \norm{\Delta_t(X_t)}^2 \lesssim \frac{\Bar{\alpha}_t}{(1-\Bar{\alpha}_t)^r} d^\gamma$, with the $\alpha_t$ in \eqref{eq:alpha_genli}, we have from \Cref{lem:nonvanish_coef_sum_genli} that when $\delta \ll 1$,
\[ \mathcal{W}_{\text{bias}} \lesssim \frac{d^\gamma}{\delta^r} \brc{1 - \frac{2 \log(1/\delta) \log T}{T}}. \]

Now we investigate each term in $\mathcal{W}_{\text{vanish}}$. The following lemma is useful to determine the rate of difference of the first-order Taylor polynomials.

\begin{lemma} \label{lem:small-grad-diff-genli}
When $\E_{X_0 \sim \Tilde{Q}_{0}} \norm{X_0}^6 \lesssim d^3$, with the $\alpha_t$ in \eqref{eq:alpha_genli}, we have
\[ (1-\alpha_t) \sqrt{ \E_{X_t \sim \Tilde{Q}_t} \norm{ \nabla \log \Tilde{q}_{t-1}(m_t(X_t)) - \sqrt{\alpha_t} \nabla \log \Tilde{q}_t(X_t)}^2 } \lesssim \frac{d^{3/2} (1-\alpha_t)^2}{(1-\Bar{\alpha}_{t-1})^3}. \]
As a result, \Cref{ass:regular-drv-general} holds.
\end{lemma}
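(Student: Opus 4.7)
The plan is to replace the target quantity with a Taylor remainder that can be controlled by the conditional covariance of $X_{t-1}$ given $X_t$. The first step is to establish the integration-by-parts identity
\[ \sqrt{\alpha_t}\,\nabla \log \Tilde{q}_t(x_t) = \E\sbrc{\nabla \log \Tilde{q}_{t-1}(X_{t-1}) \mid X_t = x_t}, \]
which I would prove by writing $\Tilde{q}_t(x_t) = \int \Tilde{q}_{t|t-1}(x_t\mid x_{t-1}) \Tilde{q}_{t-1}(x_{t-1})\,dx_{t-1}$, using $\nabla_{x_{t-1}} \Tilde{q}_{t|t-1}(x_t\mid x_{t-1}) = \frac{\sqrt{\alpha_t}(x_t - \sqrt{\alpha_t} x_{t-1})}{1-\alpha_t}\Tilde{q}_{t|t-1}(x_t\mid x_{t-1})$, performing integration by parts in $x_{t-1}$, and recognizing the right-hand side via Tweedie's first-order formula. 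Because $m_t(x_t) = \E[X_{t-1}\mid X_t=x_t]$, Taylor expanding $\nabla \log \Tilde{q}_{t-1}(X_{t-1})$ around $m_t(x_t)$ in integral-remainder form then gives
\[ \nabla \log \Tilde{q}_{t-1}(m_t(x_t)) - \sqrt{\alpha_t}\,\nabla \log \Tilde{q}_t(x_t) = -\tfrac12 \E\sbrc{\,R(X_{t-1},x_t)\mid X_t=x_t}, \]
with $R^i = \sum_{j,k} (X_{t-1}^j-m_t^j)(X_{t-1}^k-m_t^k) \int_0^1 (1-s)\,\partial^3_{ijk}\log \Tilde{q}_{t-1}(m_t + s(X_{t-1}-m_t))\,ds$, the first-order piece vanishing by the definition of $m_t$.

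Next I would bound $R$ by combining two ingredients. The conditional second- and fourth-moment control of $X_{t-1}-m_t(X_t)$ follows from the second-order Tweedie identity $\mathrm{Cov}(X_{t-1}\mid X_t) = \tfrac{1-\alpha_t}{\alpha_t}I + \tfrac{(1-\alpha_t)^2}{\alpha_t}\nabla^2\log \Tilde{q}_t(X_t)$, which together with the Hessian bound from \cite[Lemma~15]{liang2024discrete} yields $\E[\|X_{t-1}-m_t(X_t)\|^2\mid X_t]\lesssim d(1-\alpha_t)$ and an analogous fourth-moment bound of order $d^2(1-\alpha_t)^2$; the sixth-moment hypothesis $\E_{X_0}\|X_0\|^6 \lesssim d^3$ is the key input that allows the fourth moment to be cleanly pushed through via Cauchy–Schwarz without accruing extra dimensional factors. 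The squared Frobenius norm of the third-derivative tensor is controlled using the same Gaussian-smoothing bounds, giving $\E_{X_t}\sum_{i,j,k} (\partial^3_{ijk}\log \Tilde{q}_{t-1}(\cdot))^{2} \lesssim d^3/(1-\Bar{\alpha}_{t-1})^3$, again relying on the sixth-moment condition to certify the expected bound along the segments $m_t+s(X_{t-1}-m_t)$ uniformly in $s\in[0,1]$. Applying Cauchy–Schwarz componentwise (so that each coordinate of $R$ is bounded by a tensor-matrix contraction $\tfrac12|\langle H_i, V_t\rangle_F|$), summing over $i$, and taking expectation over $X_t$ yields
\[ \E_{X_t\sim\Tilde{Q}_t}\norm{\nabla \log \Tilde{q}_{t-1}(m_t(X_t)) - \sqrt{\alpha_t}\,\nabla \log \Tilde{q}_t(X_t)}^2 \lesssim \frac{d^3 (1-\alpha_t)^2}{(1-\Bar{\alpha}_{t-1})^6}, \]
from which multiplying by $(1-\alpha_t)$ and taking the square root gives the claimed bound. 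The final claim that \Cref{ass:regular-drv-general} holds then follows because the second line of that assumption (the composition $\log \Tilde{q}_{t-1}(m_t(\cdot))$) is obtained from the first line by triangle inequality against the quantity just bounded, analogously for higher derivatives.

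The main obstacle is controlling the third-derivative tensor at the \emph{variable} point $\xi_s(X_{t-1}) = m_t(X_t) + s(X_{t-1}-m_t(X_t))$, since standard score-derivative estimates are typically stated at a fixed point. Using the integral-remainder form forces a joint Cauchy–Schwarz in $(s, X_{t-1})$, and the bound on $\E_{X_t}\int_0^1 \sum_{i,j,k}|\partial^3_{ijk}\log \Tilde{q}_{t-1}(\xi_s)|^2\,ds$ is what requires the sixth-moment hypothesis; without it, the fourth moments of $X_{t-1}-m_t$ would degrade the dimensional rate worse than $d^{3/2}$. Any slack elsewhere can be absorbed since $\delta$ is treated as constant.
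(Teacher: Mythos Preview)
Your integration-by-parts identity $\sqrt{\alpha_t}\,\nabla\log\Tilde{q}_t(x_t)=\E[\nabla\log\Tilde{q}_{t-1}(X_{t-1})\mid X_t=x_t]$ is correct, and writing the target as an expected second-order Taylor remainder is a clean idea. However, two steps do not close.

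\textbf{First, the variable-point evaluation is unresolved.} You need $\E_{X_t}\E_{X_{t-1}\mid X_t}\int_0^1\sum_{i,j,k}\bigl(\partial^3_{ijk}\log\Tilde{q}_{t-1}(\xi_s)\bigr)^2\,ds$ with $\xi_s=m_t+s(X_{t-1}-m_t)$. The derivative bounds from \cite[Lemmas~15--17]{liang2024discrete} are moment bounds under $X_t\sim\Tilde{Q}_t$ evaluated at $X_t$ or at $m_t(X_t)$; they do not transfer to the random interpolated point $\xi_s$ without a change-of-measure argument you have not supplied. Invoking the sixth-moment hypothesis is not by itself enough here, since $\partial^3_{ijk}\log\Tilde{q}_{t-1}(\xi)$ depends on $\xi$ through the posterior $\Tilde{Q}_{0|t-1}(\cdot\mid\xi)$, whose moments at $\xi=\xi_s$ are not controlled by $\E\|X_0\|^6$.

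\textbf{Second, the dimensional count does not match.} Your componentwise Cauchy--Schwarz gives $\|R\|^2\le\|X_{t-1}-m_t\|^4\sum_{i,j,k}\bigl(\int_0^1\dots\bigr)^2$. Combining $\E\|X_{t-1}-m_t\|^4\lesssim d^2(1-\alpha_t)^2$ with $\E\sum_{i,j,k}(\partial^3_{ijk})^2\lesssim d^3/(1-\Bar{\alpha}_{t-1})^3$ via another Cauchy--Schwarz yields at best $d^5(1-\alpha_t)^2/(1-\Bar{\alpha}_{t-1})^3$ for $\E\|R\|^2$, hence $(1-\alpha_t)\sqrt{\E\|\cdot\|^2}\lesssim d^{5/2}(1-\alpha_t)^2/(1-\Bar{\alpha}_{t-1})^{3/2}$, not the claimed $d^{3/2}(1-\alpha_t)^2/(1-\Bar{\alpha}_{t-1})^3$. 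There is an extra factor of $d$ that the Frobenius contraction $|\langle H_i,V_t\rangle_F|$ does not remove. (Also, the conditional fourth-moment bound does not follow from the second-order Tweedie identity alone, since $X_{t-1}\mid X_t$ is not Gaussian; it needs the tilting-factor analysis.)

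\textbf{Contrast with the paper.} The paper avoids both issues by not Taylor-expanding at all. It writes each score via the posterior-to-$X_0$ representation $\nabla\log\Tilde{q}_t(x)=-\frac{1}{1-\Bar{\alpha}_t}\E_{\Tilde{Q}_{0|t}(\cdot|x)}[x-\sqrt{\Bar{\alpha}_t}X_0]$, subtracts, and decomposes the difference into four explicit terms. Three are handled directly by \cite[Lemmas~15--17]{liang2024discrete}; the fourth is the difference of posterior means $\E_{\Tilde{Q}_{0|t-1}(\cdot|m_t)}[X_0]-\E_{\Tilde{Q}_{0|t}(\cdot|x_t)}[X_0]$, which the paper rewrites as a double integral with an explicit likelihood-ratio exponent $\Delta$ satisfying $|\Delta|=\Tilde O((1-\alpha_t)/(1-\Bar\alpha_{t-1})^2)$ pointwise (up to moments of $x,y,x_t$). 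The sixth-moment hypothesis enters only to bound the resulting triple H\"older product $\E[\|X\|^2(\|X_t\|^2+\|X\|^2+\|Y\|^2)(\dots)]$, and delivers exactly $d^3(1-\alpha_t)^2/(1-\Bar\alpha_{t-1})^6$ with no variable-point evaluation needed.
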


\begin{proof}
See \Cref{app:proof-lem-small-grad-diff-genli}.
\end{proof}

In other words, combining \Cref{lem:small-grad-diff-genli} and \Cref{lem:alpha_genli_rate}, we have
\begin{equation} \label{eq:small-grad-diff}
   (1-\alpha_t) \sqrt{ \E_{X_t \sim \Tilde{Q}_t} \norm{ \nabla \log \Tilde{q}_{t-1}(m_t(X_t)) - \sqrt{\alpha_t} \nabla \log \Tilde{q}_t(X_t)}^2 } = \Tilde{O}\brc{\frac{1}{T^2}}.
\end{equation}
Now, by Cauchy-Schwartz inequality and \Cref{lem:small-grad-diff-genli},
\begin{align*}
    &\sum_{t=2}^T \frac{1-\alpha_t}{\sqrt{\alpha_t}} \E_{X_t \sim \Tilde{Q}_t} \bigg[(\nabla \log \Tilde{q}_{t-1}(m_t(X_t)) - \sqrt{\alpha_t} \nabla \log \Tilde{q}_t(X_t))^\T \Delta_t(X_t)\bigg] \nonumber\\
    &\leq \sum_{t=2}^T \frac{1-\alpha_t}{\sqrt{\alpha_t}} \sqrt{\E_{X_t \sim \Tilde{Q}_t} \norm{\Delta_t(X_t)}^2} \times \nonumber\\
    &\qquad \sqrt{ \E_{X_t \sim \Tilde{Q}_t} \norm{\nabla \log \Tilde{q}_{t-1}(m_t(X_t)) - \sqrt{\alpha_t} \nabla \log \Tilde{q}_t(X_t)}^2 } \nonumber\\
    &\lesssim \frac{d^{\gamma / 2}}{(1-\Bar{\alpha}_t)^{r/2}} \times \frac{d^{3/2} (1-\alpha_t)^2}{(1-\Bar{\alpha}_{t-1})^3} \\
    &\lesssim \sum_{t=2}^T \frac{d^{\frac{3+\gamma}{2}} \log(1/\delta)^2 (\log T)^2}{\delta^{1 + r/2} T^2} \\
    &\leq \frac{d^{\frac{3+\gamma}{2}} \log(1/\delta)^2 (\log T)^2}{\delta^{1 + r/2} T}.
\end{align*}

To proceed for higher orders of Taylor polynomials, we first note that from \cite[Section~G.2]{liang2024discrete}, the second and third derivatives of $\log \Tilde{q}_t$ are
\begin{align*}
    \nabla^2 \log \Tilde{q}_t(x) &= - \frac{1}{1-\Bar{\alpha}_t} I_d + \frac{1}{(1-\Bar{\alpha}_t)^2} \bigg(\E_{X_0\sim \Tilde{Q}_{0|t}(\cdot|x)} \sbrc{(x-\sqrt{\Bar{\alpha}_t} X_0) (x-\sqrt{\Bar{\alpha}_t} X_0)^\T} \nonumber\\
    &\quad - \brc{\E_{X_0\sim \Tilde{Q}_{0|t}(\cdot|x)} \sbrc{x-\sqrt{\Bar{\alpha}_t} X_0 }} \brc{\E_{X_0\sim \Tilde{Q}_{0|t}(\cdot|x)} \sbrc{x-\sqrt{\Bar{\alpha}_t} X_0 }}^\T \bigg)\\
    \partial^3_{ijk} \log \Tilde{q}_t (x) &= - \int z^i z^j z^k \d \Tilde{Q}_{0|t}(x_0|x) \\
    &\qquad + \sum_{\substack{a_1 = i,j,k \\ a_2 < a_3,~ a_2,a_3 \neq a_1}} \int z^{a_1} \d \Tilde{Q}_{0|t}(x_0|x) \int z^{a_2} z^{a_3} \d \Tilde{Q}_{0|t}(x_0|x) \\
    &\qquad - 2 \int z^i \d \Tilde{Q}_{0|t}(x_0|x) \int z^j \d \Tilde{Q}_{0|t}(x_0|x) \int z^k \d \Tilde{Q}_{0|t}(x_0|x)
\end{align*}
where $z := \frac{x - \sqrt{\Bar{\alpha}_t} x_0}{1-\Bar{\alpha}_t}$.
Thus, in order to provide an upper bound on the expected norm of the second-order derivative of $\log \Tilde{q}_{t-1}(m_t)$, we can calculate
\begin{align*}
    &\E_{X_t \sim \Tilde{Q}_t} \norm{\E_{X_0\sim \Tilde{Q}_{0|t-1}(\cdot|m_t)} \sbrc{(m_t - \sqrt{\Bar{\alpha}_{t-1}} X_0) (m_t-\sqrt{\Bar{\alpha}_{t-1}} X_0)^\T}}_F^2\\
    &\leq \E_{X_t \sim \Tilde{Q}_t, X_0\sim \Tilde{Q}_{0|t-1}(\cdot|m_t) } \norm{(m_t-\sqrt{\Bar{\alpha}_{t-1}} X_0) (m_t-\sqrt{\Bar{\alpha}_{t-1}} X_0)^\T}_F^2\\
    &= \E_{X_t \sim \Tilde{Q}_t, X_0\sim \Tilde{Q}_{0|t-1}(\cdot|m_t)} \norm{m_t-\sqrt{\Bar{\alpha}_{t-1}} X_0}^4\\
    &\stackrel{(i)}{\lesssim} d^2 (1-\Bar{\alpha}_{t-1})^2,
\end{align*}
and
\begin{align*}
    &\E_{X_t \sim \Tilde{Q}_t} \norm{\brc{\E_{X_0\sim \Tilde{Q}_{0|t-1}} \sbrc{m_t-\sqrt{\Bar{\alpha}_{t-1}} X_0 }} \brc{\E_{X_0\sim \Tilde{Q}_{0|t-1}} \sbrc{m_t-\sqrt{\Bar{\alpha}_{t-1}} X_0 }}^\T}_F^2\\
    &= \E_{X_t \sim \Tilde{Q}_t} \norm{\E_{X_0\sim \Tilde{Q}_{0|t-1}(\cdot|m_t)} \sbrc{m_t-\sqrt{\Bar{\alpha}_{t-1}} X_0 }}^4\\
    &\leq \E_{X_t \sim \Tilde{Q}_t, X_0\sim \Tilde{Q}_{0|t-1}(\cdot|m_t) } \norm{m_t-\sqrt{\Bar{\alpha}_{t-1}} X_0}^4\\
    &\stackrel{(ii)}{\lesssim} d^2 (1-\Bar{\alpha}_{t-1})^2,
\end{align*}
where both $(i)$ and $(ii)$ follow from \cite[Lemma~16]{liang2024discrete}.
Thus,
\[ \E_{X_t \sim \Tilde{Q}_t} \norm{\nabla^2\log \Tilde{q}_{t-1}(m_t(X_t))}_F^2 \lesssim \frac{1}{(1-\Bar{\alpha}_{t-1})^2} d^2. \]
For third-order derivatives, we can similarly use \cite[Lemma~16]{liang2024discrete} and get (cf. \cite[Section~G.2]{liang2024discrete})
\begin{align*}
    \E_{X_t \sim \Tilde{Q}_t} \sbrc{ \sum_{i=1}^d ( \partial^3_{iii} \log \Tilde{q}_{t-1}(m_t(X_t)) )^2 } &\lesssim \frac{1}{(1-\Bar{\alpha}_{t-1})^3} \sum_{i=1}^d \E (Z^i)^6 \lesssim \frac{d}{(1-\Bar{\alpha}_{t-1})^3},\\
    \E_{X_t \sim \Tilde{Q}_t} \sbrc{ \sum_{i,j=1}^d ( \partial^3_{iij} \log \Tilde{q}_{t-1}(m_t(X_t)) )^2 } &\lesssim \frac{1}{(1-\Bar{\alpha}_{t-1})^3} \sum_{i,j=1}^d \brc{\E (Z^i)^6}^{2/3} \brc{\E (Z^j)^6}^{1/3} \\
    &\lesssim \frac{d^2}{(1-\Bar{\alpha}_{t-1})^3}.
\end{align*}
Here we denote $Z \sim \calN(0,I_d)$, and note that $\E (Z^i)^6, \E (Z^j)^6 \lesssim 1$.

Therefore, by Cauchy-Schwartz inequality,
\begin{align*}
    &\sum_{t=2}^T \frac{(1-\alpha_t)^2}{2 \alpha_t} \E_{X_t \sim \Tilde{Q}_t}\sbrc{\Delta_t(X_t)^\T \nabla^2 \log \Tilde{q}_{t-1}(m_t(X_t)) \Delta_t(X_t) } \\
    &\leq \sum_{t=2}^T \frac{(1-\alpha_t)^2}{2 \alpha_t} \sqrt{ \E_{X_t \sim \Tilde{Q}_t} \norm{\Delta_t(X_t)}^4 } \sqrt{ \E_{X_t \sim \Tilde{Q}_t} \norm{\nabla^2 \log \Tilde{q}_{t-1}(m_t(X_t)) }^2 }\\
    &\lesssim \sum_{t=2}^T \frac{(1-\alpha_t)^2}{2 \alpha_t} \frac{d^\gamma}{(1-\Bar{\alpha}_t)^r} \sqrt{ \E_{X_t \sim \Tilde{Q}_t} \norm{\nabla^2 \log \Tilde{q}_{t-1}(m_t(X_t)) }_F^2 }\\
    &\lesssim \sum_{t=2}^T \frac{(1-\alpha_t)^2}{2 \alpha_t} \frac{d^\gamma}{(1-\Bar{\alpha}_t)^r} \sqrt{ \frac{d^2}{(1-\Bar{\alpha}_{t-1})^2} }\\
    &\lesssim \frac{d^{1 + \gamma} \log(1/\delta)^2 (\log T)^2}{\delta^{r-1} T},
\end{align*}
and
\begin{align*}
    &\sum_{t=2}^T \frac{(1-\alpha_t)^2}{3! {\alpha_t}^{3/2}} \E_{X_t \sim \Tilde{Q}_t} \sbrc{ 3 \sum_{i=1}^d \partial^3_{iii} \log \Tilde{q}_{t-1}(m_t(X_t)) \Delta_t(X_t)^i } \\
    &\leq \sum_{t=2}^T \frac{3 (1-\alpha_t)^2}{3! {\alpha_t}^{3/2}} \sqrt{ \E_{X_t \sim \Tilde{Q}_t} \norm{\Delta_t(X_t)}^2 } \sqrt{ \E_{X_t \sim \Tilde{Q}_t} \sum_{i=1}^d ( \partial^3_{iii} \log \Tilde{q}_{t-1}(m_t(X_t)) )^2 }\\
    &\lesssim \sum_{t=2}^T (1-\alpha_t)^2 \frac{d^{\gamma/2}}{(1-\Bar{\alpha}_t)^{r/2}} \sqrt{\frac{d}{(1-\Bar{\alpha}_{t-1})^3}}\\
    &\lesssim \frac{d^{\frac{1+\gamma}{2}} \log(1/\delta)^2 (\log T)^2}{\delta^{\frac{r-1}{2}} T},
\end{align*}
and, with $M$ being a matrix such that $M^{ij} (x):= \partial^3_{iij} \log \Tilde{q}_{t-1}(m_t(x))$,
\begin{align*}
    &\sum_{t=2}^T \frac{(1-\alpha_t)^2}{3! {\alpha_t}^{3/2}} \E_{X_t \sim \Tilde{Q}_t} \sbrc{ \sum_{\substack{i,j=1 \\ i\neq j}}^d \partial^3_{iij} \log \Tilde{q}_{t-1}(m_t(X_t)) \Delta_t(X_t)^j }\\
    &\leq \sum_{t=2}^T \frac{(1-\alpha_t)^2}{3! {\alpha_t}^{3/2}} \E_{X_t \sim \Tilde{Q}_t} \norm{ M(X_t) \Delta_t(X_t) }_1\\
    &\leq \sum_{t=2}^T \frac{(1-\alpha_t)^2}{3! {\alpha_t}^{3/2}} \sqrt{d} \sqrt{\E_{X_t \sim \Tilde{Q}_t} \norm{ \Delta_t(X_t) }^2 } \sqrt{\E_{X_t \sim \Tilde{Q}_t} \norm{ M(X_t) }^2 }\\
    &\leq \sum_{t=2}^T \frac{(1-\alpha_t)^2}{3! {\alpha_t}^{3/2}} \sqrt{d} \sqrt{ \E_{X_t \sim \Tilde{Q}_t} \norm{ \Delta_t(X_t)}^2 } \sqrt{\E_{X_t \sim \Tilde{Q}_t} \sbrc{\sum_{i,j=1}^d (\partial^3_{iij} \log \Tilde{q}_{t-1}(m_t(X_t)))^2} }\\
    &\lesssim \sum_{t=2}^T (1-\alpha_t)^2 \frac{d^{(1+\gamma)/2}}{(1-\Bar{\alpha}_t)^{r/2}} \sqrt{\frac{d^2}{(1-\Bar{\alpha}_{t-1})^3}}\\
    &\lesssim \frac{d^{\frac{3+\gamma}{2}} \log(1/\delta)^2 (\log T)^2}{\delta^{\frac{r-1}{2}} T},
\end{align*}
and
\[ \max_{t \geq 2}\sqrt{\E_{X_t \sim \Tilde{Q}_t} \norm{\Delta_t(X_t)}^2} (\log T) \eps \lesssim \frac{d^{\gamma/2}}{\delta^{r/2}} (\log T) \eps. \]
Therefore, combining all the above, we get
\begin{align*}
    \KL{\Tilde{Q}_{1}}{\widehat{P}_{1}} &\lesssim d^{\gamma} \delta^{-r} \brc{1 - \frac{2 \log(1/\delta) \log T}{T}} \\
    &\qquad + \max \{ d^{(3+\gamma)/2} \delta^{-\frac{r+2}{2}}, d^{1+\gamma} \delta^{-(r-1)} \} \frac{(\log T)^2}{T} \\
    &\qquad + d^{\gamma/2} \delta^{-r/2} (\log T) \eps.
\end{align*}

\section{Auxiliary Lemmas and Proofs in \texorpdfstring{\Cref{sec:prob-general}}{Section 3}}

\subsection{Proof of \texorpdfstring{\Cref{lem:rev-err-tilt-factor}}{Lemma 1}}
\label{app:lem-rev-err-tilt-factor-proof}

We remind readers that throughout this proof $x_t$ is fixed. For brevity write $m_t = m_t(x_t)$, $\mu_t = \mu_t(x_t)$, and $\Delta_t(x_t) = \Delta_t$. Recall that $m_t = x_t / \sqrt{\alpha_t} + (1-\alpha_t)/\sqrt{\alpha_t} \nabla \log \Tilde{q}_t(x_t)$.
By Bayes' rule, we have
\begin{align*}
    &\Tilde{q}_{t-1|t}(x_{t-1}|x_t)\\
    &= \frac{\Tilde{q}_{t|t-1}(x_t|x_{t-1}) \Tilde{q}_{t-1}(x_{t-1})}{\Tilde{q}_t(x_t)}\\
    &\propto \Tilde{q}_{t-1}(x_{t-1}) \Tilde{q}_{t|t-1}(x_t|x_{t-1})\\
    &\stackrel{(i)}{\propto} \Tilde{q}_{t-1}(x_{t-1}) \exp\brc{-\frac{\norm{x_t - \sqrt{\alpha_t} x_{t-1}}^2}{2(1-\alpha_t)}} \\
    &\propto \Tilde{q}_{t-1}(x_{t-1}) p_{t-1|t}(x_{t-1}|x_t) \exp\brc{\frac{\norm{x_{t-1}-\mu_t}^2 - \norm{x_{t-1}-x_t/\sqrt{\alpha_t}}^2}{2 (1-\alpha_t)/\alpha_t}}\\
    &= \Tilde{q}_{t-1}(x_{t-1}) p_{t-1|t}(x_{t-1}|x_t) \exp\brc{\frac{\norm{x_{t-1}-x_t/\sqrt{\alpha_t} + x_t/\sqrt{\alpha_t}-\mu_t}^2 - \norm{x_{t-1}-x_t/\sqrt{\alpha_t}}^2}{2 (1-\alpha_t)/\alpha_t}}\\
    &\propto \Tilde{q}_{t-1}(x_{t-1}) p_{t-1|t}(x_{t-1}|x_t) \exp\brc{\frac{(x_{t-1}-x_t/\sqrt{\alpha_t})^\T (x_t/\sqrt{\alpha_t}-\mu_t)}{(1-\alpha_t)/\alpha_t}}
\end{align*}
where $(i)$ follows because the forward process is Markov and $\Tilde{q}_{t|t-1} = q_{t|t-1}$.
Here, the exponent is equal to
\begin{align*}
    &\frac{(x_{t-1}-x_t/\sqrt{\alpha_t})^\T (x_t/\sqrt{\alpha_t}-\mu_t)}{(1-\alpha_t)/\alpha_t} \\
    &= \frac{(x_{t-1}-x_t/\sqrt{\alpha_t})^\T (m_t-\mu_t)}{(1-\alpha_t)/\alpha_t} - \frac{(x_{t-1}-x_t/\sqrt{\alpha_t})^\T ((1-\alpha_t)/\sqrt{\alpha_t}) \nabla \log \Tilde{q}_t(x_t)}{(1-\alpha_t)/\alpha_t}\\
    &= \sqrt{\alpha_t} \Delta_t^\T (x_{t-1}-x_t/\sqrt{\alpha_t}) - \sqrt{\alpha_t} (x_{t-1}-x_t/\sqrt{\alpha_t})^\T \nabla \log \Tilde{q}_t(x_t).
\end{align*}
Thus,
\[ \Tilde{q}_{t-1|t}(x_{t-1}|x_t) \propto p_{t-1|t}(x_{t-1}|x_t) \exp\brc{\Tilde{\zeta}_{t,t-1}(x_t,x_{t-1})} \]
where
\[ \Tilde{\zeta}_{t,t-1}(x_t,x_{t-1}) = \sqrt{\alpha_t} \Delta_t^\T (x_{t-1}-m_t) + \log \Tilde{q}_{t-1}(x_{t-1}) - \sqrt{\alpha_t} (x_{t-1} - m_t)^\T \nabla \log \Tilde{q}_t(x_t). \]
Finally, since all partial derivatives of $\Tilde{q}_{t-1}$ exists for any $t \geq 2$ (See \cite[Lemma~6]{liang2024discrete}), the Taylor expansion of $\log \Tilde{q}_{t-1}$ around $m_t$ gives the desirable result.

\subsection{Proof of \texorpdfstring{\Cref{lem:score-est-ptb}}{Lemma 2}}
\label{app:lem-score-est-ptb}

For each $t = 1,\dots,T$,
\begin{align*}
    \log \frac{p_{t-1|t}(x_{t-1}|x_t)}{\widehat{p}_{t-1|t}(x_{t-1}|x_t)} &= \frac{\alpha_t}{2(1-\alpha_t)} \brc{\norm{x_{t-1}-\widehat{\mu}_t(x_t)}^2 - \norm{x_{t-1}-\mu_t(x_t)}^2}\\
    &= \frac{\alpha_t}{(1-\alpha_t)} (x_{t-1}-\mu_t(x_t))^\T (\mu_t(x_t)-\widehat{\mu}_t(x_t)) + \frac{\alpha_t}{2 (1-\alpha_t)} \norm{\mu_t(x_t)-\widehat{\mu}_t(x_t)}^2.
\end{align*}
For the first term above,
\begin{align*}
    &\E_{X_t,X_{t-1} \sim \Tilde{Q}_{t,t-1}} \sbrc{(X_{t-1}-\mu_t(X_t))^\T (\mu_t(X_t)-\widehat{\mu}_t(X_t))}\\
    &= \E_{X_t \sim \Tilde{Q}_t} \sbrc{(m_t(X_t)-\mu_t(X_t))^\T (\mu_t(X_t)-\widehat{\mu}_t(X_t))}\\
    &= \frac{1-\alpha_t}{\sqrt{\alpha_t}} \E_{X_t \sim \Tilde{Q}_t} \sbrc{\Delta_t(X_t)^\T (\mu_t(X_t)-\widehat{\mu}_t(X_t))}\\
    &\leq \frac{1-\alpha_t}{\sqrt{\alpha_t}} \sqrt{\E_{X_t \sim \Tilde{Q}_t} \norm{\Delta_t(X_t)}^2 \E_{X_t \sim \Tilde{Q}_t} \norm{\mu_t(X_t)-\widehat{\mu}_t(X_t)}^2 }.
\end{align*}
Here we recall the definition of $\Delta_t$ from \eqref{eq:def_delta_general} where $m_t(x) - \mu_t(x) = \frac{1-\alpha_t}{\sqrt{\alpha_t}} \Delta_t(x)$.
Thus,
\begin{align*}
    &\sum_{t=1}^T \E_{X_t,X_{t-1} \sim \Tilde{Q}_{t,t-1}} \sbrc{\log \frac{p_{t-1|t}(X_{t-1}|X_t)}{\widehat{p}_{t-1|t}(X_{t-1}|X_t)}} \\
    &\lesssim \sum_{t=1}^T \brc{ \sqrt{\alpha_t} \sqrt{\E_{X_t \sim \Tilde{Q}_t} \norm{\Delta_t(X_t)}^2 \E_{X_t \sim \Tilde{Q}_t} \norm{\mu_t(X_t)-\widehat{\mu}_t(X_t)}^2 } + \frac{\alpha_t}{1-\alpha_t} \E_{X_t \sim Q_t} \norm{\mu_t(X_t)-\widehat{\mu}_t(X_t)}^2 }\\
    &= \sum_{t=1}^T (1-\alpha_t) \sqrt{\E_{X_t \sim \Tilde{Q}_t} \norm{\Delta_t(X_t)}^2} \times \sqrt{\frac{\alpha_t}{(1-\alpha_t)^2} \E_{X_t \sim \Tilde{Q}_t} \norm{\mu_t(X_t)-\widehat{\mu}_t(X_t)}^2 } \\
    &\qquad + \sum_{t=1}^T (1-\alpha_t) \frac{\alpha_t}{(1-\alpha_t)^2} \E_{X_t \sim Q_t} \norm{\mu_t(X_t)-\widehat{\mu}_t(X_t)}^2 \\
    &\stackrel{(i)}{\lesssim} \frac{\log T}{T} \sum_{t=1}^T \sqrt{\E_{X_t \sim \Tilde{Q}_t} \norm{\Delta_t(X_t)}^2} \times \sqrt{\frac{\alpha_t}{(1-\alpha_t)^2} \E_{X_t \sim \Tilde{Q}_t} \norm{\mu_t(X_t)-\widehat{\mu}_t(X_t)}^2 } \\
    &\qquad + \frac{\log T}{T} \sum_{t=1}^T \frac{\alpha_t}{(1-\alpha_t)^2} \E_{X_t \sim Q_t} \norm{\mu_t(X_t)-\widehat{\mu}_t(X_t)}^2 \\
    &\leq \max_{t \geq 1} \sqrt{\E_{X_t \sim \Tilde{Q}_t} \norm{\Delta_t(X_t)}^2} \frac{\log T}{T} \sum_{t=1}^T \sqrt{\frac{\alpha_t}{(1-\alpha_t)^2} \E_{X_t \sim \Tilde{Q}_t} \norm{\mu_t(X_t)-\widehat{\mu}_t(X_t)}^2 } \\
    &\qquad + \frac{\log T}{T} \sum_{t=1}^T \frac{\alpha_t}{(1-\alpha_t)^2} \E_{X_t \sim Q_t} \norm{\mu_t(X_t)-\widehat{\mu}_t(X_t)}^2 \\
    &\stackrel{(ii)}{\lesssim} \max_{t \geq 1} \sqrt{\E_{X_t \sim \Tilde{Q}_t} \norm{\Delta_t(X_t)}^2} (\log T) \sqrt{\frac{1}{T} \sum_{t=1}^T \frac{\alpha_t}{(1-\alpha_t)^2} \E_{X_t \sim \Tilde{Q}_t} \norm{\mu_t(X_t)-\widehat{\mu}_t(X_t)}^2 } \\
    &\qquad + \frac{\log T}{T} \sum_{t=1}^T \frac{\alpha_t}{(1-\alpha_t)^2} \E_{X_t \sim Q_t} \norm{\mu_t(X_t)-\widehat{\mu}_t(X_t)}^2 \\
    &\stackrel{(iii)}{\lesssim} \max_{t \geq 1}\sqrt{\E_{X_t \sim \Tilde{Q}_t} \norm{\Delta_t(X_t)}^2} (\log T) \eps + (\log T) \eps^2
\end{align*}
where $(i)$ follows from 
\Cref{def:noise_smooth}, $(ii)$ follows from the fact that for any non-negative sequence $a_t$, $\frac{1}{T} \sum_{t=1}^T \sqrt{a_t} \leq \sqrt{\frac{1}{T} \sum_{t=1}^T a_t}$ by Jensen's inequality, and $(iii)$ follows from \Cref{ass:score-general}. The proof is complete.

\subsection{Proof of \texorpdfstring{\Cref{lem:tilde-zeta-van-expr}}{Lemma 3}}
\label{app:lem-tilde-zeta-van-expr-proof}

Recall that $P_{t-1|t} = \calN(\mu_t, \frac{1-\alpha_t}{\alpha_t} I_d)$, and thus $\E_{X_{t-1} \sim P_{t-1|t}}[X_{t-1} - m_t(x_t)] = - \frac{1-\alpha_t}{\sqrt{\alpha_t}} \Delta_t(x_t)$. 
Note that we can change the limit and the expectation under \Cref{ass:regular-drv-general}.
Now, we can calculate that
\begin{align*}
    &\E_{X_{t-1} \sim P_{t-1|t}} [\Tilde{\zeta}_{\text{van}} (x_t,X_{t-1})]\\
    &= - \frac{1-\alpha_t}{\sqrt{\alpha_t}} (\nabla \log \Tilde{q}_{t-1}(m_t(x_t)) - \sqrt{\alpha_t} \nabla \log \Tilde{q}_{t}(x_t))^\T \Delta_t(x_t)\\
    &+ \sum_{p=2}^\infty \E_{X_{t-1} \sim P_{t-1|t}} \sbrc{ T_p(\log \Tilde{q}_{t-1}, X_{t-1}, m_t(x_t))}.
\end{align*}
Below we write $m_t = m_t(x_t)$. Since $T_2$ is in quadratic form, the expected value under $P_{t-1|t}$ for this term is
\begin{align*}
    &\E_{X_{t-1} \sim P_{t-1|t}} \sbrc{ T_2(\log \Tilde{q}_{t-1}, X_{t-1}, m_t)} \\
    &\quad = \frac{1-\alpha_t}{2 \alpha_t} \Tr(\nabla^2 \log \Tilde{q}_{t-1}(m_t)) + \frac{(1-\alpha_t)^2}{2 \alpha_t} \Delta_t(x_t)^\T \nabla^2 \log \Tilde{q}_{t-1}(m_t) \Delta_t(x_t).
\end{align*}
Recall the formula for Gaussian non-centralized third moment. If $Z \sim \calN(\mu,\sigma^2)$, then $\E[Z^2] = \mu^2 + \sigma^2$ and $\E[Z^3] = \mu^3 + 3 \mu \sigma^2$. Thus, the expected value under $P_{t-1|t}$ for $T_3$ is
\begin{align*}
    &\E_{X_{t-1} \sim P_{t-1|t}} \sbrc{ T_3(\log \Tilde{q}_{t-1}, X_{t-1}, m_t)}\\ &= \frac{1}{3!} \sum_{i=1}^d \partial^3_{iii} \log \Tilde{q}_{t-1}(m_t) \E_{X_{t-1} \sim P_{t-1|t}} (X_{t-1}^i-m_t^i)^3 \\
    &\quad + \frac{1}{3!} \sum_{\substack{i,j=1 \\ i\neq j}}^d \partial^3_{iij} \log \Tilde{q}_{t-1}(m_t) \E_{X_{t-1} \sim P_{t-1|t}} (X_{t-1}^i-m_t^i)^2 (X_{t-1}^j-m_t^j) \nonumber\\
    &\quad + \frac{1}{3!} \sum_{\substack{i,j,k=1 \\ i,j,k \text{ all differ}}}^d \partial^3_{ijk} \log \Tilde{q}_{t-1}(m_t) \E_{X_{t-1} \sim P_{t-1|t}} (X_{t-1}^i-m_t^i) (X_{t-1}^j-m_t^j) (X_{t-1}^k-m_t^k)\\
    &= \frac{1}{3!} \sum_{i=1}^d \partial^3_{iii} \log \Tilde{q}_{t-1}(m_t) \brc{\brc{- \frac{1-\alpha_t}{\sqrt{\alpha_t}} \Delta_t^i}^3 + 3 \brc{- \frac{1-\alpha_t}{\sqrt{\alpha_t}} \Delta_t^i} \brc{\frac{1-\alpha_t}{\alpha_t}}} \\
    &\quad + \frac{1}{3!} \sum_{\substack{i,j=1 \\ i\neq j}}^d \partial^3_{iij} \log \Tilde{q}_{t-1}(m_t) \brc{\brc{- \frac{1-\alpha_t}{\sqrt{\alpha_t}} \Delta_t^i}^2 + \brc{\frac{1-\alpha_t}{\alpha_t}}} \brc{- \frac{1-\alpha_t}{\sqrt{\alpha_t}} \Delta_t^j} \nonumber\\
    &\quad + \frac{1}{3!} \sum_{\substack{i,j,k=1 \\ i,j,k \text{ all differ}}}^d \partial^3_{ijk} \log \Tilde{q}_{t-1}(m_t) \brc{- \frac{1-\alpha_t}{\sqrt{\alpha_t}} \Delta_t^i} \brc{- \frac{1-\alpha_t}{\sqrt{\alpha_t}} \Delta_t^j} \brc{- \frac{1-\alpha_t}{\sqrt{\alpha_t}} \Delta_t^k}\\
    &= \frac{1}{3!} \brc{- \frac{(1-\alpha_t)^2}{{\alpha_t}^{3/2}}} \brc{ 3 \sum_{i=1}^d \partial^3_{iii} \log \Tilde{q}_{t-1}(m_t) \Delta_t^i + \sum_{\substack{i,j=1 \\ i\neq j}}^d \partial^3_{iij} \log \Tilde{q}_{t-1}(m_t) \Delta_t^j } \\
    &\quad + O_{\calL^\ell(\Tilde{Q}_t)}\brc{(1-\alpha_t)^3},~\forall \ell \geq 1.
\end{align*}
Here the last line follows because $(1-\alpha_t)^3 \abs{\partial^3_{ijk} \log \Tilde{q}_{t-1}(m_t)} = O_{\calL^\ell(\Tilde{Q}_t)}\brc{(1-\alpha_t)^3}$ under \Cref{ass:regular-drv-general} and $(1-\alpha_t)^3 \norm{\Delta_t} = O_{\calL^\ell(\Tilde{Q}_t)}((1-\alpha_t)^3)$ under \Cref{ass:bdd-mismatch-general}, both for all $\ell \geq 1$. The proof is now complete.

\subsection{Proof of \texorpdfstring{\Cref{lem:tilde-zeta-van-higher}}{Lemma 4}}
\label{app:lem-tilde-zeta-van-higher}

Fix $x_t \in \mbR^d$. For brevity write $m_t = m_t(x_t)$, $\mu_t = \mu_t(x_t)$, and $\Delta_t = \Delta_t(x_t)$. Recall that
\[ T_p (\log \Tilde{q}_{t-1},x_{t-1},m_t) = \frac{1}{p!} \sum_{\gamma \in \mbN^d:\sum_i \gamma^i = p} \partial^p_{\bm{a}} \log \Tilde{q}_{t-1}(m_t) \prod_{i=1}^d (x_{t-1}^i-m_t^i)^{\gamma^i} \]
where $\bm{a} \in [d]^p$ are the indices of differentiation in which the multiplicity of $i$ is $\gamma^i$.
First, for the expectation under $\Tilde{P}_{t-1|t}$ (i.e., Gaussian centralized moments),
\begin{align*}
    \E_{X_{t-1} \sim \Tilde{P}_{t-1|t}} \sbrc{ \prod_{i=1}^d (X_{t-1}^i-m_t^i)^{\gamma^i} } &= \prod_{i=1}^d \E_{X_{t-1} \sim \Tilde{P}_{t-1|t}} \sbrc{ (X_{t-1}^i-m_t^i)^{\gamma^i} }\\
    &= \prod_{i=1}^d \brc{\frac{1-\alpha_t}{\alpha_t}}^{\gamma^i/2} (\gamma^i-1)!! \ind\{\gamma^i \text{ is even}\}\\
    &= \brc{\frac{1-\alpha_t}{\alpha_t}}^{p/2} \prod_{i=1}^d (\gamma^i-1)!! \ind\{\gamma^i \text{ is even}\},
\end{align*}
where we use the convention that $(-1)!! = 1$.
Next, for the expectation under $P_{t-1|t}$ (i.e., Gaussian non-centralized moments),
\begin{align*}
    &\E_{X_{t-1} \sim P_{t-1|t}} \sbrc{ \prod_{i=1}^d (X_{t-1}^i-m_t^i)^{\gamma^i} }\\
    &= \prod_{\substack{i=1}}^d \E_{X_{t-1} \sim P_{t-1|t}}\sbrc{ (X_{t-1}^i-\mu_t^i - \frac{1-\alpha_t}{\sqrt{\alpha_t}} \Delta_t^i)^{\gamma^i} } \\
    &= \prod_{\substack{i=1}}^d \sum_{\substack{\ell=0 \\ \ell \text{ even}}}^{\gamma^i} \binom{\gamma^i}{\ell} \brc{- \frac{1-\alpha_t}{\sqrt{\alpha_t}} \Delta_t^i}^{\gamma^i-\ell} \brc{\frac{1-\alpha_t}{\alpha_t}}^{\ell/2} (\ell-1)!!
\end{align*}
To investigate their difference, we divide into the following few cases. Note that under \Cref{ass:bdd-mismatch-general}, $(1-\alpha_t)^m \norm{\Delta_t(x_t)} = O_{\calL^\ell(\Tilde{Q}_t)}((1-\alpha_t)^m)$ for any $m \geq 1/2$ and $\ell \geq 1$.
\begin{enumerate}
    \item Case 1: $p$ is even and all elements of $\gamma^i$ are even. Then,
    \begin{align*}
        &\E_{X_{t-1} \sim P_{t-1|t}} \sbrc{ \prod_{i=1}^d (X_{t-1}^i-m_t^i)^{\gamma^i} } \\
        &= \prod_{\substack{i=1}}^d \brc{\brc{\frac{1-\alpha_t}{\alpha_t}}^{\gamma^i/2} (\gamma^i-1)!! + O_{\calL^\ell(\Tilde{Q}_t)}\brc{(1-\alpha_t)^{\gamma^i/2+1}} }\\
        &= \brc{\frac{1-\alpha_t}{\alpha_t}}^{p/2} \prod_{i=1}^d (\gamma^i-1)!! + O_{\calL^\ell(\Tilde{Q}_t)}\brc{(1-\alpha_t)^{p/2+1}}
    \end{align*}
    \item Case 2: $p$ is even and $\exists i^*$ such that $\gamma^{i^*}$ is odd. Since $\sum_i \gamma^i = p$, there exists $j^*$ such that $\gamma^{j^*}$ is also odd. Then,
    \begin{align*}
        \E_{X_{t-1} \sim P_{t-1|t}} \sbrc{(X_{t-1}^{i^*}-m_t^{i^*})^{\gamma^{i^*}} } &= O_{\calL^\ell(\Tilde{Q}_t)}\brc{\brc{1-\alpha_t}^{(\gamma^{i^*}+1)/2}},\\
        \E_{X_{t-1} \sim P_{t-1|t}} \sbrc{(X_{t-1}^{j^*}-m_t^{j^*})^{\gamma^{j^*}} } &= O_{\calL^\ell(\Tilde{Q}_t)}\brc{\brc{1-\alpha_t}^{(\gamma^{j^*}+1)/2}},
    \end{align*}
    which implies that
    \[ \E_{X_{t-1} \sim P_{t-1|t}} \sbrc{ \prod_{i=1}^d (X_{t-1}^i-m_t^i)^{\gamma^i} } = O_{\calL^\ell(\Tilde{Q}_t)}\brc{\brc{1-\alpha_t}^{p/2+1}} \]
    \item Case 3: $p$ is odd and $\exists i^*$ such that $\gamma^{i^*}$ is odd. Then,
    \[ \E_{X_{t-1} \sim P_{t-1|t}} \sbrc{(X_{t-1}^{i^*}-m_t^{i^*})^{\gamma^{i^*}} } = O_{\calL^\ell(\Tilde{Q}_t)}\brc{\brc{1-\alpha_t}^{(\gamma^{i^*}+1)/2}}, \]
    which implies that
    \[ \E_{X_{t-1} \sim P_{t-1|t}} \sbrc{ \prod_{i=1}^d (X_{t-1}^i-m_t^i)^{\gamma^i} } = O_{\calL^\ell(\Tilde{Q}_t)}\brc{\brc{1-\alpha_t}^{(p+1)/2}} \]
\end{enumerate}
Combining these cases, we get
\begin{multline*}
    \brc{\E_{X_{t-1} \sim \Tilde{P}_{t-1|t}} - \E_{X_{t-1} \sim P_{t-1|t}}} \sbrc{ \prod_{i=1}^d (X_{t-1}^i-m_t^i)^{\gamma^i} } \\
    = \begin{cases}
    O_{\calL^\ell(\Tilde{Q}_t)}\brc{(1-\alpha_t)^{\frac{p}{2} + 1}},&\forall p \geq 4~\text{even}\\
    O_{\calL^\ell(\Tilde{Q}_t)}\brc{(1-\alpha_t)^{\frac{p+1}{2}}},&\forall p \geq 4~\text{odd}
\end{cases}
\end{multline*}

The proof is complete by noting that the rate does not change when we take the expectation over $\Tilde{Q}_t$ under \Cref{ass:regular-drv-general,ass:bdd-mismatch-general}.

\subsection{Proof of \texorpdfstring{\Cref{lem:small-grad-diff-genli}}{Lemma 5}}
\label{app:proof-lem-small-grad-diff-genli}

Note that $\Tilde{q}_{t|0}(x|x_0) = q_{t|0}(x|x_0)$ is the p.d.f. of $\calN(\sqrt{\Bar{\alpha}_t} x_0,(1-\Bar{\alpha}_t) I_d)$.
Thus, the gradient of $\log \Tilde{q}_t(x)$ equals
\begin{equation} \label{eq:log_qt_gradient}
    \nabla \log \Tilde{q}_t(x) = \frac{\int_{x_0 \in \mbR^d} \nabla \Tilde{q}_{t|0}(x|x_0) \d \Tilde{Q}_0(x_0)}{\Tilde{q}_t(x)} = -\frac{1}{1-\Bar{\alpha}_t} \int_{x_0 \in \mbR^d} (x-\sqrt{\Bar{\alpha}_t} x_0) \d \Tilde{Q}_{0|t}(x_0|x).
\end{equation}
Thus,
\begin{align} \label{eq:first-order-small-non-smooth-main}
    &\nabla \log \Tilde{q}_{t-1}(m_t) - \sqrt{\alpha_t} \nabla \log \Tilde{q}_t(x_t) \nonumber\\
    &= -\frac{1}{1-\Bar{\alpha}_{t-1}} \int_{x_0 \in \mbR^d} (m_t-\sqrt{\Bar{\alpha}_{t-1}} x_0) \d \Tilde{Q}_{0|t-1}(x_0|m_t) + \frac{\sqrt{\alpha_t}}{1-\Bar{\alpha}_t} \int_{x_0 \in \mbR^d} (x_t-\sqrt{\Bar{\alpha}_t} x_0) \d \Tilde{Q}_{0|t}(x_0 | x_t) \nonumber\\
    &= - \frac{1}{1-\Bar{\alpha}_t} \Bigg( \brc{\frac{1-\Bar{\alpha}_t}{1-\Bar{\alpha}_{t-1}} - 1} \int_{x_0 \in \mbR^d} (m_t-\sqrt{\Bar{\alpha}_{t-1}} x_0) \d \Tilde{Q}_{0|t-1}(x_0|m_t) \nonumber \\
    &\qquad + \int_{x_0 \in \mbR^d} (m_t-\sqrt{\Bar{\alpha}_{t-1}} x_0) \d \Tilde{Q}_{0|t-1}(x_0|m_t) - \sqrt{\alpha_t} \int_{x_0 \in \mbR^d} (x_t-\sqrt{\Bar{\alpha}_t} x_0) \d \Tilde{Q}_{0|t}(x_0 | x_t) \Bigg) \nonumber\\
    &\stackrel{(i)}{=} \frac{1}{1-\Bar{\alpha}_t} \Big( \brc{1-\Bar{\alpha}_t - (1-\Bar{\alpha}_{t-1})} \nabla \log \Tilde{q}_{t-1}(m_t) \Big) \nonumber \\
    &\qquad - \frac{1}{1-\Bar{\alpha}_t} \Bigg( \int_{x_0 \in \mbR^d} (m_t-\sqrt{\Bar{\alpha}_{t-1}} x_0) \d \Tilde{Q}_{0|t-1}(x_0|m_t) - \sqrt{\alpha_t} \int_{x_0 \in \mbR^d} (x_t-\sqrt{\Bar{\alpha}_t} x_0) \d \Tilde{Q}_{0|t}(x_0 | x_t) \Bigg) \nonumber\\
    &= \underbrace{\frac{\Bar{\alpha}_{t-1} (1-\alpha_t)}{1-\Bar{\alpha}_t} \nabla \log \Tilde{q}_{t-1}(m_t)}_{\text{term 1}} - \underbrace{\frac{1}{1-\Bar{\alpha}_t} (m_t - \sqrt{\alpha_t} x_t)}_{\text{term 2}} + \underbrace{\frac{\sqrt{\Bar{\alpha}_{t-1}} (1-\alpha_t) }{1-\Bar{\alpha}_t} \int_{x_0 \in \mbR^d} x_0 \d \Tilde{Q}_{0|t}(x_0 | x_t)}_{\text{term 3}} \nonumber \\
    &\qquad + \underbrace{\frac{\sqrt{\Bar{\alpha}_{t-1}}}{1-\Bar{\alpha}_t} \brc{\int_{x_0 \in \mbR^d} x_0 \d \Tilde{Q}_{0|t-1}(x_0|m_t) - \int_{x_0 \in \mbR^d} x_0 \d \Tilde{Q}_{0|t}(x_0|x_t)}}_{\text{term 4}}
\end{align}
where $(i)$ follows from Tweedie's formula. Among the four terms in \eqref{eq:first-order-small-non-smooth-main}, the first term satisfies that
\[ \E_{X_t \sim \Tilde{Q}_t} \norm{\frac{\Bar{\alpha}_{t-1} (1-\alpha_t)}{1-\Bar{\alpha}_t} \nabla \log \Tilde{q}_{t-1}(m_t)}^2 \lesssim \frac{d (1-\alpha_t)^2}{(1-\Bar{\alpha}_t)^2 (1-\Bar{\alpha}_{t-1})} \]
by \cite[Lemma~17]{liang2024discrete}. In the second term in \eqref{eq:first-order-small-non-smooth-main}, by Tweedie's formula,
\begin{align*}
    m_t - \sqrt{\alpha_t} x_t &= \frac{x_t}{\sqrt{\alpha_t}} + \frac{1-\alpha_t}{\sqrt{\alpha_t}} \nabla \log \Tilde{q}_t(x_t) - \sqrt{\alpha_t} x_t\\
    &= \frac{1-\alpha_t}{\sqrt{\alpha_t}} (x_t + \nabla \log \Tilde{q}_t(x_t)).
\end{align*}
Thus, by \cite[Lemma~15]{liang2024discrete} and \Cref{ass:m2-general}, the second term satisfies that
\[ \E_{X_t \sim \Tilde{Q}_t} \norm{\frac{1}{1-\Bar{\alpha}_t} (m_t - \sqrt{\alpha_t} x_t)}^2 \lesssim \frac{d (1-\alpha_t)^2}{(1-\Bar{\alpha}_t)^3}. \]
The third term in \eqref{eq:first-order-small-non-smooth-main} satisfies that
\[ \E_{X_t \sim \Tilde{Q}_t} \norm{ \frac{\sqrt{\Bar{\alpha}_{t-1}} (1-\alpha_t) }{1-\Bar{\alpha}_t} \int_{x_0 \in \mbR^d} x_0 \d \Tilde{Q}_{0|t}(x_0 | x_t) }^2 \lesssim \frac{d (1-\alpha_t)^2}{(1-\Bar{\alpha}_t)^2} \]
by Jensen's inequality and \Cref{ass:m2-general}.

To deal with the last term in \eqref{eq:first-order-small-non-smooth-main}, note that
\begin{align*}
    \d \Tilde{Q}_{0|t-1}(x_0|m_t) &= \frac{\Tilde{q}_{t-1|0}(m_t|x_0)}{\Tilde{q}_{t-1}(m_t)} \d \Tilde{Q}_0(x_0) = \frac{\Tilde{q}_{t-1|0}(m_t|x_0)}{\int_{y \in \mbR^d} \Tilde{q}_{t-1|0}(m_t|y) \d \Tilde{Q}_0(y)} \d \Tilde{Q}_0(x_0), \\
    \d \Tilde{Q}_{0|t}(x_0|x_t) &= \frac{\Tilde{q}_{t|0}(x_t|x_0)}{\Tilde{q}_{t}(x_t)} \d \Tilde{Q}_0(x_0) = \frac{\Tilde{q}_{t|0}(x_t|x_0)}{\int_{y \in \mbR^d} \Tilde{q}_{t|0}(x_t|y) \d \Tilde{Q}_0(y)} \d \Tilde{Q}_0(x_0).
\end{align*}
Thus, the last term in \eqref{eq:first-order-small-non-smooth-main} is equal to
\begin{align*}
    &\frac{\sqrt{\Bar{\alpha}_{t-1}}}{1-\Bar{\alpha}_t} \brc{\int_{x_0 \in \mbR^d} x_0 \d \Tilde{Q}_{0|t-1}(x_0|m_t) - \int_{x_0 \in \mbR^d} x_0 \d \Tilde{Q}_{0|t}(x_0|x_t)} \\
    &= \frac{\sqrt{\Bar{\alpha}_{t-1}}}{1-\Bar{\alpha}_t} \cdot \frac{1}{\Tilde{q}_{t-1}(m_t) \Tilde{q}_{t}(x_t)} \brc{\int_{x,y \in \mbR^d} x (\Tilde{q}_{t-1|0}(m_t|x) \Tilde{q}_{t|0}(x_t|y) - \Tilde{q}_{t|0}(x_t|x) \Tilde{q}_{t-1|0}(m_t|y)) \d \Tilde{Q}_{0}(x) \d \Tilde{Q}_{0}(y) }
\end{align*}
where
\begin{align*}
    &\Tilde{q}_{t-1|0}(m_t|x) \Tilde{q}_{t|0}(x_t|y) - \Tilde{q}_{t|0}(x_t|x) \Tilde{q}_{t-1|0}(m_t|y)\\
    &= \Tilde{q}_{t|0}(x_t|x) \Tilde{q}_{t-1|0}(m_t|y) \brc{ \frac{\Tilde{q}_{t-1|0}(m_t|x) \Tilde{q}_{t|0}(x_t|y)}{\Tilde{q}_{t|0}(x_t|x) \Tilde{q}_{t-1|0}(m_t|y)}  - 1 }\\
    &= \Tilde{q}_{t|0}(x_t|x) \Tilde{q}_{t-1|0}(m_t|y) \times \\
    &\quad \brc{ \exp\brc{-\frac{\norm{m_t - \sqrt{\Bar{\alpha}_{t-1}} x}^2}{2(1-\Bar{\alpha}_{t-1})} - \frac{\norm{x_t - \sqrt{\Bar{\alpha}_{t}} y}^2}{2(1-\Bar{\alpha}_t)} + \frac{\norm{m_t - \sqrt{\Bar{\alpha}_{t-1}} y}^2}{2(1-\Bar{\alpha}_{t-1})} + \frac{\norm{x_t - \sqrt{\Bar{\alpha}_{t}} x}^2}{2(1-\Bar{\alpha}_t)} } - 1 }\\
    &=: \Tilde{q}_{t|0}(x_t|x) \Tilde{q}_{t-1|0}(m_t|y) (e^\Delta - 1)
\end{align*}
in which we have defined the exponent as $\Delta$. Now,
\begin{align*}
    \Delta &= -\frac{\norm{m_t - \sqrt{\Bar{\alpha}_{t-1}} x}^2}{2(1-\Bar{\alpha}_{t-1})} - \frac{\norm{x_t - \sqrt{\Bar{\alpha}_{t}} y}^2}{2(1-\Bar{\alpha}_t)} + \frac{\norm{m_t - \sqrt{\Bar{\alpha}_{t-1}} y}^2}{2(1-\Bar{\alpha}_{t-1})} + \frac{\norm{x_t - \sqrt{\Bar{\alpha}_{t}} x}^2}{2(1-\Bar{\alpha}_t)} \\
    &= \frac{\sqrt{\Bar{\alpha}_{t-1}} (x-y)^\T m_t + \Bar{\alpha}_{t-1} \norm{y}^2 - \Bar{\alpha}_{t-1} \norm{x}^2}{2(1-\Bar{\alpha}_{t-1})} - \frac{\sqrt{\Bar{\alpha}_t} (x-y)^\T x_t + \Bar{\alpha}_t \norm{y}^2 - \Bar{\alpha}_t \norm{x}^2}{2(1-\Bar{\alpha}_t)} \\
    &= \frac{1}{2}\brc{\frac{\sqrt{\Bar{\alpha}_t} (1-\alpha_t)}{\alpha_t (1 - \Bar{\alpha}_{t-1})(1 - \Bar{\alpha}_t)} x_t + \frac{\sqrt{\Bar{\alpha}_{t-1}}(1-\alpha_t)}{\sqrt{\alpha_t}(1 - \Bar{\alpha}_{t-1})} \nabla \log \Tilde{q}_t(x_t)}^\T (x-y)\\
    &\qquad + \frac{\Bar{\alpha}_{t-1} (1-\alpha_t)}{(1 - \Bar{\alpha}_{t-1})(1 - \Bar{\alpha}_t)}(\norm{y}^2 - \norm{x}^2).
\end{align*}
Now, with the $\alpha_t$ defined in \eqref{eq:alpha_genli}, following from \Cref{lem:alpha_genli_rate},
\begin{align*}
    \frac{\sqrt{\Bar{\alpha}_t} (1-\alpha_t)}{\alpha_t (1 - \Bar{\alpha}_{t-1})(1 - \Bar{\alpha}_t)} &= O\brc{\frac{1-\alpha_t}{(1-\Bar{\alpha}_{t-1})^2}} = O\brc{\frac{\log T}{T}},\\
    \frac{1-\alpha_t}{1 - \Bar{\alpha}_{t-1}} &= O\brc{\frac{\log T}{T}},\\
    \frac{\Bar{\alpha}_{t-1} (1-\alpha_t)}{(1 - \Bar{\alpha}_{t-1})(1 - \Bar{\alpha}_t)} &= O\brc{\frac{1-\alpha_t}{(1-\Bar{\alpha}_{t-1})^2}} = O\brc{\frac{\log T}{T}}.
\end{align*}
Thus, for fixed $x,y,x_t$, $\Delta \to 0$ as $T \to \infty$, and thus when $T$ becomes large,
\[ e^{\Delta} - 1 = \Delta + O(\Delta^2) \lesssim \abs{\Delta},\quad \forall x_t \in \mbR^d. \]
Also, since $\Tilde{q}_{t|0}(x_t|x)$ and $\Tilde{q}_{t-1|0}(m_t|y)$ decay exponentially in terms of $x$ and $y$ (for any fixed $x_t$), we have
\begin{align*}
    \int \Tilde{q}_{t|0}(x_t|x) \mathrm{poly}(x) \d \Tilde{Q}_{0}(x) &< \infty,\\
    \int \Tilde{q}_{t-1|0}(m_t|y) \mathrm{poly}(y) \d \Tilde{Q}_{0}(y) &< \infty.
\end{align*}
Thus, the limit and the integral can be exchanged due to Dominated Convergence Theorem. Thus, the fourth term in \eqref{eq:first-order-small-non-smooth-main} gives us
\begin{align*}
    &\frac{\sqrt{\Bar{\alpha}_{t-1}}}{1-\Bar{\alpha}_t} \brc{\int_{x_0 \in \mbR^d} x_0 \d \Tilde{Q}_{0|t-1}(x_0|m_t) - \int_{x_0 \in \mbR^d} x_0 \d \Tilde{Q}_{0|t}(x_0|x_t)} \\
    &\lesssim \frac{\sqrt{\Bar{\alpha}_{t-1}}}{1-\Bar{\alpha}_t} \cdot \frac{1}{\Tilde{q}_{t-1}(m_t) \Tilde{q}_{t}(x_t)} \brc{\int_{x,y \in \mbR^d} x \Tilde{q}_{t|0}(x_t|x) \Tilde{q}_{t-1|0}(m_t|y) \abs{\Delta} \d \Tilde{Q}_{0}(x) \d \Tilde{Q}_{0}(y) }\\
    &= \frac{\sqrt{\Bar{\alpha}_{t-1}}}{1-\Bar{\alpha}_t} \brc{\int_{x,y \in \mbR^d} (x \cdot \abs{\Delta}) \d \Tilde{Q}_{0|t}(x|x_t) \d \Tilde{Q}_{0|t-1}(y|m_t) }
\end{align*}
and, from definition of $\Delta$ and using Cauchy-Schwartz and Jensen's inequality, we have
\begin{align*}
    &\E_{X_t \sim \Tilde{Q}_t} \norm{ \frac{\sqrt{\Bar{\alpha}_{t-1}}}{1-\Bar{\alpha}_t} \int_{x,y \in \mbR^d} (x \cdot \abs{\Delta}) \d \Tilde{Q}_{0|t}(x|X_t) \d \Tilde{Q}_{0|t-1}(y|m_t(X_t)) }^2 \\
    &\lesssim \frac{(1-\alpha_t)^2}{(1 - \Bar{\alpha}_{t-1})^2(1 - \Bar{\alpha}_t)^4} \cdot \\
    &\qquad \E_{\substack{X_t \sim \Tilde{Q}_t \\ X \sim \Tilde{Q}_{0|t}(\cdot|X_t) \\ Y \sim \Tilde{Q}_{0|t-1}(\cdot|m_t(X_t))}} \bigg[ \norm{\sqrt{\Bar{\alpha}_t} X}^2 \bigg( (\norm{X_t}^2 + (1-\Bar{\alpha}_t)^2 \norm{\nabla \log \Tilde{q}_t(X_t)}^2) \\
    &\qquad (\norm{\sqrt{\Bar{\alpha}_t} X}^2 + \norm{\sqrt{\Bar{\alpha}_{t-1}} Y}^2) + (\norm{\sqrt{\Bar{\alpha}_t} X}^4 + \norm{\sqrt{\Bar{\alpha}_{t-1}} Y}^4) \bigg) \bigg]\\
    &= \frac{(1-\alpha_t)^2}{(1 - \Bar{\alpha}_{t-1})^2(1 - \Bar{\alpha}_t)^4} \cdot\\
    &\qquad \E_{\substack{X_t \sim \Tilde{Q}_t \\ X \sim \Tilde{Q}_{0|t}(\cdot|X_t) \\ Y \sim \Tilde{Q}_{0|t-1}(\cdot|m_t(X_t))}} \bigg[ \norm{\sqrt{\Bar{\alpha}_t} X}^4 (\norm{X_t}^2 + (1-\Bar{\alpha}_t)^2 \norm{\nabla \log \Tilde{q}_t(X_t)}^2) \\
    &\qquad + \norm{\sqrt{\Bar{\alpha}_t} X}^2 \norm{\sqrt{\Bar{\alpha}_{t-1}} Y}^2 (\norm{X_t}^2 + (1-\Bar{\alpha}_t)^2 \norm{\nabla \log \Tilde{q}_t(X_t)}^2)\\
    &\qquad + \norm{\sqrt{\Bar{\alpha}_t} X}^6 + \norm{\sqrt{\Bar{\alpha}_t} X}^2 \norm{\sqrt{\Bar{\alpha}_{t-1}} Y}^4 \bigg]\\
    &\leq \frac{(1-\alpha_t)^2}{(1 - \Bar{\alpha}_{t-1})^2(1 - \Bar{\alpha}_t)^4} \cdot\\
    &\qquad \brc{ \E_{X \sim \Tilde{Q}_0} \norm{\sqrt{\Bar{\alpha}_t} X}^6 }^{2/3} \brc{ \E_{X_t \sim \Tilde{Q}_t} (\norm{X_t}^6 + (1-\Bar{\alpha}_t)^6 \norm{\nabla \log \Tilde{q}_t(X_t)}^6) }^{1/3} \\
    &\qquad + \brc{\E_{X \sim \Tilde{Q}_0} \norm{\sqrt{\Bar{\alpha}_t} X}^6}^{1/3} \brc{ \E_{\substack{X_t \sim \Tilde{Q}_t \\ Y \sim \Tilde{Q}_{0|t-1}(\cdot|m_t(X_t))}} \norm{\sqrt{\Bar{\alpha}_{t-1}} Y}^6 }^{1/3} \\
    &\qquad \quad \brc{ \E_{X_t \sim \Tilde{Q}_t} (\norm{X_t}^6 + (1-\Bar{\alpha}_t)^6 \norm{\nabla \log \Tilde{q}_t(X_t)}^6) }^{1/3}\\
    &\qquad + \E_{X \sim \Tilde{Q}_0} \norm{\sqrt{\Bar{\alpha}_t} X}^6 + \brc{\E_{X \sim \Tilde{Q}_0} \norm{\sqrt{\Bar{\alpha}_t} X}^6}^{1/3} \brc{\E_{\substack{X_t \sim \Tilde{Q}_t \\ Y \sim \Tilde{Q}_{0|t-1}(\cdot|m_t(X_t))}} \norm{\sqrt{\Bar{\alpha}_{t-1}} Y}^6}^{2/3} \\
    &\stackrel{(ii)}{\lesssim} \frac{d^3 (1-\alpha_t)^2}{(1 - \Bar{\alpha}_{t-1})^2(1 - \Bar{\alpha}_t)^4},
\end{align*}
where $(ii)$ follows because, following \cite[Lemmas~15--17]{liang2024discrete} and by the lemma assumption that $\E_{X_0 \sim \Tilde{Q}_0} \norm{X_0}^6 \lesssim d^3$, we have
\begin{align*}
    &\E_{X \sim \Tilde{Q}_0} \norm{\sqrt{\Bar{\alpha}_t} X}^6 \lesssim d^3,\\
    &\E_{X_t \sim \Tilde{Q}_t} \norm{X_t}^6 \leq \E_{X_0 \sim \Tilde{Q}_0} \norm{\sqrt{\Bar{\alpha}_t} X_0}^6 + (1-\Bar{\alpha}_t)^3 \E_{\Bar{W} \sim \calN(0,I_d)} \norm{\Bar{W}}^6 \lesssim d^3,\\
    &\E_{X_t \sim \Tilde{Q}_t} \norm{\nabla \log \Tilde{q}_t(X_t)}^6 \lesssim \frac{d^3}{(1-\Bar{\alpha}_t)^3},\\
    &\E_{\substack{X_t \sim \Tilde{Q}_t \\ Y \sim \Tilde{Q}_{0|t-1}(\cdot|m_t(X_t))}} \norm{\sqrt{\Bar{\alpha}_{t-1}} Y}^6\\
    &\quad \leq \E_{\substack{X_t \sim \Tilde{Q}_t \\ Y \sim \Tilde{Q}_{0|t-1}(\cdot|m_t(X_t))}} \norm{m_t - \sqrt{\Bar{\alpha}_{t-1}} Y}^6 + \E_{X_t \sim \Tilde{Q}_t} \norm{m_t}^6 \lesssim d^3.
\end{align*}

Hence, combining the rates of all parts, we obtain that
\[ (1-\alpha_t) \sqrt{ \E_{X_t \sim \Tilde{Q}_t} \norm{\nabla \log \Tilde{q}_{t-1}(m_t(X_t)) - \sqrt{\alpha_t} \nabla \log \Tilde{q}_t(X_t)}^2 } \lesssim \frac{d^{3/2} (1-\alpha_t)^2}{(1-\Bar{\alpha}_{t-1})^3}. \]

\subsection{\texorpdfstring{\Cref{lem:alpha_genli_rate}}{Lemma 6} and its proof}

\begin{lemma} \label{lem:alpha_genli_rate}
The $\alpha_t$ defined in \eqref{eq:alpha_genli} (with $c > 1$) satisfy
\[ \frac{1 - \alpha_t}{(1 - \Bar{\alpha}_{t-1})^p} \lesssim \frac{\log T \log(1/\delta)}{\delta^{p-1} T}~~\text{while}~~ \Bar{\alpha}_T = o(T^{-1}),\quad \forall 2 \leq t \leq T,~p \geq 1. \]
\end{lemma}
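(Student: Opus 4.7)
The plan is to analyze the schedule in two phases separated by the transition time $t^\star := \log(1/\delta)/\log(1+c\log T/T) \asymp T\log(1/\delta)/(c\log T)$. For $2 \leq t \leq t^\star$ (the exponential phase) one has $1-\alpha_t = (c\log T/T)\,\delta(1+c\log T/T)^t$; for $t > t^\star$ (the constant phase) one has $1-\alpha_t = c\log T/T$. I would handle the two phases separately, reducing each to an elementary algebraic manipulation once an auxiliary estimate on $\Bar{\alpha}_t$ is in place.

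The key intermediate estimate to establish is the two-sided bound
$$1 - \Bar{\alpha}_t \asymp \min\cbrc{\delta(1+c\log T/T)^t,\ 1}, \quad 1 \leq t \leq T.$$
For the upper bound, use $1-\Bar{\alpha}_t \leq \sum_{s=1}^t (1-\alpha_s)$ and evaluate this geometric sum in the exponential phase to obtain $\sum_{s=1}^t(1-\alpha_s) \lesssim \delta(1+c\log T/T)^t$. For the lower bound, apply $\log(1-x) \geq -x - x^2$ to deduce $\Bar{\alpha}_t \leq \exp(-\sum_{s=1}^t(1-\alpha_s) + o(1))$, then combine with $1 - e^{-x} \geq x/2$ for $x \in [0,1]$ and $1 - e^{-x} \geq 1/2$ for $x \geq 1$.

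With this estimate in hand, in the exponential phase one computes
$$\frac{1-\alpha_t}{(1-\Bar{\alpha}_{t-1})^p} \asymp \frac{(c\log T/T)\,\delta(1+c\log T/T)^t}{\delta^p(1+c\log T/T)^{p(t-1)}} = \frac{c\log T}{T\delta^{p-1}}\,(1+c\log T/T)^{(1-p)t+p}.$$
Since $p \geq 1$ implies $(1-p)t+p \leq p$, the last factor is $O(1)$ for constant $p$, which gives the claimed bound after absorbing the constant $c$ and using $\log(1/\delta)\gtrsim 1$ (valid for $\delta \ll 1$; in the setting of \Cref{thm:gen_q0_kl_general} one further has $c \asymp \log(1/\delta)$, making the match exact). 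In the constant phase, $1-\Bar{\alpha}_{t-1} \gtrsim 1$ and $1-\alpha_t \leq c\log T/T$, so the ratio is at most $c\log T/T$, which lies within the claimed bound.

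For $\Bar{\alpha}_T = o(T^{-1})$, use $\log \Bar{\alpha}_T \leq -\sum_{t=1}^T (1-\alpha_t)$. The constant-phase contribution alone is $(T-t^\star)\cdot c\log T/T = c\log T - \log(1/\delta) + o(1)$, so $\Bar{\alpha}_T \leq \delta^{-1}T^{-c+o(1)} = o(T^{-1})$ whenever $c > 1$. The main obstacle is nailing down the two-sided estimate $1-\Bar{\alpha}_t \asymp \delta(1+c\log T/T)^t$ uniformly across the transition at $t^\star$ (the upper bound is straightforward, but the lower bound requires carefully controlling the second-order correction in $\log(1-x)$); once that estimate is in place, both claims reduce to direct algebra.
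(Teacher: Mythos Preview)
Your overall plan is correct and will go through, but there is one slip and one overcomplication worth flagging. The slip: for the lower bound on $1-\Bar{\alpha}_t$ you need an \emph{upper} bound on $\Bar{\alpha}_t$, and for that the inequality you want is $\log(1-x)\le -x$ (not $\log(1-x)\ge -x-x^2$, which points the wrong way). With $\log(1-x)\le -x$ you get $\Bar{\alpha}_t\le\exp\bigl(-\sum_{s\le t}(1-\alpha_s)\bigr)$ directly, with no second-order correction to control. The overcomplication: you only ever use the lower bound on $1-\Bar{\alpha}_{t-1}$, so the two-sided estimate and the ``main obstacle'' you flag at the transition are unnecessary. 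Once the inequality is corrected, the geometric-sum computation $\sum_{s=1}^{t-1}(1-\alpha_s)=\delta\bigl(1+(1+c\log T/T)^t-(1+c\log T/T)^2\bigr)\asymp\delta(1+c\log T/T)^t$ combined with $1-e^{-x}\ge x/2$ on $[0,1]$ and $1-e^{-x}\ge 1/2$ for $x\ge 1$ gives the needed lower bound uniformly across both phases.

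The paper's proof takes a different route for the same lower bound on $1-\Bar{\alpha}_{t-1}$: instead of summing the series and using $1-e^{-x}\ge x/2$, it proves $1-\Bar{\alpha}_{t-1}\ge\tfrac{1}{3}\delta(1+c\log T/T)^t$ by induction on $t$ (phrased as a contradiction argument), using the one-step recursion $1-\Bar{\alpha}_{t-1}=(1-\Bar{\alpha}_{t-2})\bigl(1+\Bar{\alpha}_{t-2}(1-\alpha_{t-1})/(1-\Bar{\alpha}_{t-2})\bigr)$. Your approach is more direct and avoids the induction; the paper's approach avoids summing the geometric series explicitly and works step by step. Both yield the same constant-factor bound and the same final estimate after the two-phase split. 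For $\Bar{\alpha}_T=o(T^{-1})$ the two arguments are essentially identical: both use only the constant-phase contribution to $\sum_t(1-\alpha_t)$ and invoke $c>1$.
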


\begin{proof}
The proof is similar to that of \cite[Eq (39)]{li2023faster}. We first prove the second relationship. First, note that if $T$ is large,
\[ \delta \brc{1 + \frac{c \log T}{T}}^{\frac{T}{\log T}} \asymp \delta e^c > 1. \]
Thus, with any fixed $r \in (0,1)$ such that $t \geq r T~(\geq \frac{T}{\log T})$, we have
\[ 1-\alpha_t = \frac{c \log T}{T} \min\cbrc{\delta \brc{1 + \frac{c \log T}{T}}^t, 1} = \frac{c \log T}{T}.\]
As a result,
\begin{equation} \label{eq:alpha_genli_rate_alphabar}
    \Bar{\alpha}_T \leq \prod_{t= \floor{r T}}^T \alpha_t = \brc{1 - \frac{c \log T}{T}}^{\ceil{(1-r) T}} \asymp \exp \brc{\ceil{(1-r) T} \brc{- \frac{c \log T}{T}}} = O( T^{-(1-r)c} ). 
\end{equation}
Given any $c > 1$, we can always find some $r$ such that $(1-r)c > 1$ (say, $r = (c-1)/2$ if $c \in (1,2)$ and $r = 1/4$ if $c \geq 2$). This shows that $\alpha_t$ satisfies $\Bar{\alpha}_T = o\brc{T^{-1}}$ if $c > 1$.

Now, for the first relatinoship, define $\tau$ such that
\begin{equation} \label{eq:alpha_proof_tau_def}
    \delta \brc{1 + \frac{c \log T}{T}}^\tau \leq 1 < \delta \brc{1 + \frac{c \log T}{T}}^{\tau+1}.
\end{equation}
Here $\tau$ is unique since $1-\alpha_t$ is non-decreasing. In other words, $\tau$ is the last time that $1-\alpha_t$ is exponentially growing.
Assume that $T$ is large enough such that $\tau \geq 2$. Below, we show that
\begin{equation} \label{eq:alpha_proof_denom_lower_bound}
    1 - \Bar{\alpha}_{t-1} \geq \frac{1}{3} \delta \brc{1 + \frac{c \log T}{T}}^t,\quad \forall 2 \leq t \leq \tau.
\end{equation}
If $t = 2$,
\[1 - \Bar{\alpha}_{t-1} = 1 - \Bar{\alpha}_1 = 1 - \alpha_1 = \delta \geq \frac{1}{3} \delta \brc{1 + \frac{c \log T}{T}}. \]
Here the last inequality holds when $T$ is sufficiently large. For $t > 2$, suppose for purpose of contradiction that there exists $2 < t_0 \leq \tau$ such that
\[ 1 - \Bar{\alpha}_{t_0-1} < \frac{1}{3} \delta \brc{1 + \frac{c \log T}{T}}^{t_0}~\text{while}~ 1 - \Bar{\alpha}_{t-1} \geq \frac{1}{3} \delta \brc{1 + \frac{c \log T}{T}}^{t},~\forall 2 \leq t \leq t_0-1. \]
In words, $t_0$ is defined as the \textit{first} time that \eqref{eq:alpha_proof_denom_lower_bound} is violated. To arrive at a contradiction, we first write
\begin{align*}
    1 - \Bar{\alpha}_{t_0-1} &= (1 - \Bar{\alpha}_{t_0-2}) \brc{1 + \frac{\Bar{\alpha}_{t_0-2} (1 - \alpha_{t_0-1})}{1 - \Bar{\alpha}_{t_0-2}}} \\
    &\geq \frac{1}{3} \delta \brc{1 + \frac{c \log T}{T}}^{t_0-1} \brc{1 + \frac{\Bar{\alpha}_{t_0-2} (1 - \alpha_{t_0-1})}{1 - \Bar{\alpha}_{t_0-2}}}.
\end{align*}
Here the inequality holds because $t_0$ is the first time that \eqref{eq:alpha_proof_denom_lower_bound} is violated, and thus \eqref{eq:alpha_proof_denom_lower_bound} stills holds for $t=t_0-1$. Also,
\[ 1 - \Bar{\alpha}_{t_0-2} \leq 1 - \Bar{\alpha}_{t_0-1} \stackrel{(i)}{<} \frac{1}{3} \delta \brc{1 + \frac{c \log T}{T}}^{t_0} \stackrel{(ii)}{\leq} \frac{1}{2} \delta \brc{1 + \frac{c \log T}{T}}^{t_0-1} \stackrel{(iii)}{\leq} \frac{1}{2} \]
where $(i)$ holds because \eqref{eq:alpha_proof_denom_lower_bound} is violated at $t=t_0$, $(ii)$ holds when $T$ is sufficiently large, and $(iii)$ holds because $t_0 -1 \leq \tau$ and by the definition of $\tau$ in \eqref{eq:alpha_proof_tau_def}. Thus,
\[ \frac{\Bar{\alpha}_{t_0-2} (1 - \alpha_{t_0-1})}{1 - \Bar{\alpha}_{t_0-2}} \geq \frac{\frac{1}{2} \frac{c \log T}{T} \delta \brc{1 + \frac{c \log T}{T}}^{t_0-1} }{\frac{1}{2} \delta \brc{1 + \frac{c \log T}{T}}^{t_0-1}} = \frac{c \log T}{T}, \]
and thus
\[ 1 - \Bar{\alpha}_{t_0-1} \geq \frac{1}{3} \delta \brc{1 + \frac{c \log T}{T}}^{t_0-1} \brc{1 + \frac{\Bar{\alpha}_{t_0-2} (1 - \alpha_{t_0-1})}{1 - \Bar{\alpha}_{t_0-2}}} \geq \frac{1}{3} \delta \brc{1 + \frac{c \log T}{T}}^{t_0}. \]
We have reached a contradiction. Therefore, we have shown that \eqref{eq:alpha_proof_denom_lower_bound} holds.

Now, \eqref{eq:alpha_proof_denom_lower_bound} implies that
\[ 1 - \Bar{\alpha}_{t-1} \geq \frac{1}{3} \delta \brc{1 + \frac{c \log T}{T}}^{t} \geq \frac{1}{3} \delta \brc{1 + \frac{c \log T}{T}}^{t/p},\quad \forall 2 \leq t \leq \tau. \]
There are two cases:
\begin{itemize}
    \item If $2 \leq t \leq \tau$, then
    \[ \frac{1 - \alpha_t}{(1 - \Bar{\alpha}_{t-1})^p} \leq \frac{\frac{c \log T}{T} \delta \brc{1 + \frac{c \log T}{T}}^t}{\frac{1}{3^p} \delta^p \brc{1 + \frac{c \log T}{T}}^t} = \frac{3^p c \log T}{\delta^{p-1} T}. \]
    \item If $t > \tau$, then
    \begin{align*}
        \frac{1 - \alpha_t}{(1 - \Bar{\alpha}_{t-1})^p} \leq \frac{1 - \alpha_t}{(1 - \Bar{\alpha}_{\tau-1})^p} \leq \frac{\frac{c \log T}{T}}{\frac{1}{3^p} \delta^p \brc{1 + \frac{c \log T}{T}}^{\tau}} &= \frac{\frac{c \log T}{T} \brc{1 + \frac{c \log T}{T}}}{3^{-p} \delta^{p-1} \brc{1 + \frac{c \log T}{T}}^{\tau+1}} \\
        &< \frac{3^p c \log T}{\delta^{p-1} T} \brc{1 + \frac{c \log T}{T}}.
    \end{align*}
\end{itemize}
In both cases, if $T$ is large enough, noting that $c \gtrsim \log(1/\delta)$, we have
\[ \frac{1 - \alpha_t}{(1 - \Bar{\alpha}_{t-1})^p} \leq \frac{4^p c \log T}{\delta^{p-1} T} \lesssim \frac{\log T \log(1/\delta)}{\delta^{p-1} T},\quad \forall 2 \leq t \leq T \]
because $p$ and $c$ are constants (that do not depend on $T$, $d$, and $\delta$). The proof is now complete.
\end{proof}

\subsection{\texorpdfstring{\Cref{lem:nonvanish_coef_sum_genli}}{Lemma 7} and its proof}

\begin{lemma} \label{lem:nonvanish_coef_sum_genli}
With the $\alpha_t$ defined in \eqref{eq:alpha_genli}, given any $p > 0$, if $\delta p < 1$,
\[ \sum_{t=2}^T (1-\alpha_t) \Bar{\alpha}_t^p \leq \brc{ \frac{1}{p} (1-\delta)^p e^{-p \delta \log(1/\delta)} + (1-\delta)^p \frac{e^{-p \delta \log(1/\delta)} - 1}{1 - \delta p} } \brc{1 + O\brc{\frac{\log T}{T}} }. \]
Further, when $\delta \ll 1$,
\[ \sum_{t=2}^T (1-\alpha_t) \Bar{\alpha}_t^p \leq \frac{1}{p} - \brc{ 1 + \frac{p+1}{2 p} } \frac{c \log T}{T} + \Tilde{O}\brc{\frac{1}{T^2}}. \]
\end{lemma}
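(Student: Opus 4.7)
I would split the sum at the transition time $\tau$ introduced in the proof of \Cref{lem:alpha_genli_rate}, defined by $\delta(1+\eta)^{\tau}\le 1<\delta(1+\eta)^{\tau+1}$ with $\eta:=c\log T/T$, since on $t\le\tau$ the step $1-\alpha_t=\eta\delta(1+\eta)^t$ grows geometrically while on $t>\tau$ it is the constant $\eta$. Write $\sum_{t=2}^T(1-\alpha_t)\bar\alpha_t^p = S_1+S_2$ with $S_1=\sum_{t=2}^\tau$ and $S_2=\sum_{t=\tau+1}^T$, treating each block via the structure appropriate to that regime.

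For $S_1$, I use $\log\alpha_s=-\eta\delta(1+\eta)^s+O((\eta\delta(1+\eta)^s)^2)$ to obtain $\bar\alpha_t^p=(1-\delta)^p\exp\bigl(-p\delta((1+\eta)^{t+1}-(1+\eta)^2)\bigr)\bigl(1+O(\eta)\bigr)$ uniformly in $t\le\tau$. Substituting $u_t:=\delta(1+\eta)^{t+1}$, which satisfies $u_{t+1}-u_t=\eta u_t$, rewrites each summand as $\tfrac{(1-\delta)^p e^{p\delta(1+\eta)^2}}{p(1+\eta)}(u_{t+1}-u_t)e^{-pu_t}(1+O(\eta))$, so $S_1$ is (to leading order) a left Riemann sum for $\int_{u_2}^{u_{\tau+1}}e^{-pu}\,\mathrm du=(e^{-pu_2}-e^{-pu_{\tau+1}})/p$, with discretization error $O(\eta)$. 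For $S_2$, since $\bar\alpha_t=\bar\alpha_\tau(1-\eta)^{t-\tau}$ and $1-\alpha_t\equiv\eta$, it equals the closed-form geometric series $\eta\bar\alpha_\tau^p(1-\eta)^p\brc{1-(1-\eta)^{p(T-\tau)}}/(1-(1-\eta)^p)$, which, via $1-(1-\eta)^p=p\eta+\tfrac{p(p-1)}{2}\eta^2+O(\eta^3)$, equals $(\bar\alpha_\tau^p-\bar\alpha_T^p)/p\cdot(1+O(\eta))$; the $\bar\alpha_T^p$ part is absorbed into the error since $\bar\alpha_T=o(T^{-1})$ by \Cref{lem:alpha_genli_rate}.

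To assemble the first bound, I evaluate $\bar\alpha_\tau$ by plugging $t=\tau$ into the exponential expression and bounding $u_{\tau+1}\in(1,1+\eta]$ (with the choice minimizing $\bar\alpha_\tau$ giving the upper bound on $S_1$ and the choice maximizing $\bar\alpha_\tau^p$ controlling $S_2$); the resulting boundary contribution from $-e^{-pu_{\tau+1}}/p$ in $S_1$ cancels against the $\bar\alpha_\tau^p/p$ from $S_2$, and after substituting $u_2=\delta(1+\eta)^3\to\delta$ and combining, the expression collapses to $\tfrac1p(1-\delta)^p e^{-p\delta\log(1/\delta)}+(1-\delta)^p\tfrac{e^{-p\delta\log(1/\delta)}-1}{1-\delta p}$, multiplied by the $(1+O(\log T/T))$ factor that collects all $O(\eta)$ errors. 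For the second inequality, I Taylor-expand in $\delta$: $(1-\delta)^p=1-p\delta+O(\delta^2)$ and $e^{-p\delta\log(1/\delta)}=1-p\delta\log(1/\delta)+O(\delta^2\log^2(1/\delta))$ both tend to $1$ as $\delta\to 0$, so the leading constant becomes $1/p$. The coefficient $1+\tfrac{p+1}{2p}$ on the $c\log T/T$ correction is assembled from (i) the first-order (trapezoidal) correction to the left-Riemann-sum approximation of $\int e^{-pu}\,\mathrm du$ and (ii) the quadratic $\tfrac{p(p-1)}{2}\eta^2$ term in $1-(1-\eta)^p$ sitting in the denominator of $S_2$.

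\textbf{Main obstacle.} The whole argument amounts to bookkeeping of $O(\eta)$ corrections, but the cancellation between $S_1$ and $S_2$ at the boundary $\tau$ is delicate: every use of $\log(1-x)=-x+O(x^2)$ and every Riemann-sum approximation has to be tracked to first order in $\eta$, otherwise the exact $(1+\tfrac{p+1}{2p})$ coefficient in the second inequality drifts. The cleanest route is to keep the $(1+O(\eta))$ factor symbolic throughout and combine $S_1$ and $S_2$ only at the very end, after both sides of the boundary cancellation have been reduced to a common form involving $e^{-pu_{\tau+1}}$.
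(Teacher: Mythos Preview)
Your high-level structure---split at $\tau$ and compute $S_2$ as a geometric series---matches the paper exactly, including the expansion $1-(1-\eta)^p=p\eta-\tfrac{p(p-1)}2\eta^2+O(\eta^3)$ that produces the $-\tfrac{p+1}{2p}$ contribution from $S_2$. The real divergence is in how $S_1$ is handled.

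The paper does \emph{not} use a Riemann-sum argument for $S_1$. Instead it applies the crude uniform bound $\alpha_s\le 1-\delta\eta$ for every $2\le s\le t^\ast$ (valid since $(1+\eta)^s\ge 1$), so that $\prod_{i=2}^t\alpha_i^p\le(1-\delta\eta)^{p(t-1)}$. This converts $S_1$ itself into a geometric series with ratio $q:=(1+\eta)(1-\delta\eta)^p$, and the constants in the lemma's first display are signatures of precisely that bound: $1/(1-\delta p)$ is $\eta/(q-1)$ to leading order, and $e^{-p\delta\log(1/\delta)}$ is $(1-\delta\eta)^{pt^\ast}$ evaluated at $t^\ast\approx\log(1/\delta)/\eta$. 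Your Riemann-sum computation tracks $\bar\alpha_t$ much more accurately and, after the boundary cancellation you describe, collapses to $\tfrac{(1-\delta)^p}{p}\bigl(1+O(\eta)\bigr)$ --- not to the expression in the lemma. So your sentence ``the expression collapses to $\tfrac1p(1-\delta)^p e^{-p\delta\log(1/\delta)}+\dots$'' is simply wrong: that particular form is an artifact of the paper's geometric-series overcount and cannot be produced by integrating $e^{-pu}$.

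Practically this means: your approach proves a \emph{tighter} upper bound than the one displayed, so the first inequality follows a fortiori only if you separately argue that your limit is below the stated $s_\infty$; alternatively, just adopt the paper's $q$-series bound for $S_1$ to recover the exact constants. For the second inequality both limits coincide at $\delta\to 0$ (each gives $1/p$), and the $\eta$-coefficient $-(1+\tfrac{p+1}{2p})$ in the paper comes from adding the explicit linear-in-$\eta$ terms of the two geometric series; if you stick with the Riemann-sum route you will need to redo that coefficient calculation from scratch rather than appeal to ``trapezoidal correction'' qualitatively.
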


\begin{proof}
Define the sum as $s_T$. Recall that
\[ 1-\alpha_1 = \delta,\quad 1-\alpha_t = \frac{c \log T}{T} \min\cbrc{\delta \brc{1 + \frac{c \log T}{T}}^t, 1},~\forall 2 \leq t \leq T. \]
We first note a relationship that for fixed $\delta \neq 0$ and $p > 0$. As $z \to \infty$,
\begin{equation} \label{eq:nonvanish_coef_sum_genli_exp_rate}
    (1-\delta z^{-1})^{p z} = e^{p z \log(1-\delta z^{-1}) } = e^{p z (-\delta z^{-1} + \delta^2 z^{-2} / 2 + O(z^{-3})) } = e^{-\delta p} (1 + \delta^2 p z^{-1}/2) + O(z^{-2}).
\end{equation}
We also use the fact from binomial series that
\begin{equation} \label{eq:nonvanish_coef_sum_genli_binom}
    (1-z^{-1})^p = 1 - p z^{-1} + \frac{p(p-1)}{2} z^{-2} + O(z^{-3}).
\end{equation}

Define $t^* := \sup\cbrc{ t \in [1,T]: \delta \brc{1 + \frac{c \log T}{T}}^t \leq 1}$. Thus, $\alpha_t \equiv 1 - \frac{c \log T}{T}$ for all $t > t^*$. Note that when $T$ becomes large, $t^* = \Theta\brc{\frac{T}{\log T}}$.
To further understand the big-$\Theta$ term, note that using \eqref{eq:nonvanish_coef_sum_genli_exp_rate},
\begin{align*}
    & \delta \brc{1 + \frac{c \log T}{T}}^{\frac{T \log(1/\delta)}{c \log T} - \log(1/\delta)} \\
    &= \brc{ 1 + \frac{\log(1/\delta) c \log T}{2 T} + \Tilde{O}\brc{\frac{1}{T^2}} } \brc{ 1 - \frac{\log(1/\delta) c \log T}{T} + \Tilde{O}\brc{\frac{1}{T^2}} } \\
    &= 1 - \frac{\log(1/\delta) c \log T}{2 T} + \Tilde{O}\brc{\frac{1}{T^2}} \\
    &< 1 ~~ \text{as $T \to \infty$}.
\end{align*}
This implies that
\begin{equation} \label{eq:nonvanish_coef_sum_genli_tstar_bound}
    t^* \geq \log(1/\delta) \brc{\frac{T}{c \log T} - 1}.
\end{equation}

To start, we suppose $T > t^*$ is large enough and decompose the sum as
\begin{align} \label{eq:nonvanish_coef_sum_genli_main}
    s_T &= \sum_{t=2}^{t^*} (1-\alpha_t) \Bar{\alpha}_t^p + \sum_{t=t^*+1}^{T} (1-\alpha_t) \Bar{\alpha}_t^p \nonumber\\
    &= \frac{c \log T}{T} \delta (1-\delta)^p \sum_{t=2}^{t^*} \brc{1 + \frac{c \log T}{T}}^t \prod_{i=2}^t \brc{1 - \delta \frac{c \log T}{T} \brc{1+\frac{c \log T}{T}}^i}^p \nonumber\\
    &\qquad + \Bar{\alpha}_{t^*}^p \frac{c \log T}{T} \sum_{t=t^*+1}^T \brc{1 - \frac{c \log T}{T}}^{p (t - t^*)}.
\end{align}
Now we first focus on the second term in \eqref{eq:nonvanish_coef_sum_genli_main}.
\begin{align*}
    &\frac{c \log T}{T} \sum_{t=t^*+1}^T \brc{1 - \frac{c \log T}{T}}^{p (t - t^*)} \\
    &= \brc{1 - \frac{c \log T}{T}}^p \frac{c \log T}{T} \cdot \frac{1 - \brc{1 - \frac{c \log T}{T}}^{p (T-t^*)}}{1 - \brc{1 - \frac{c \log T}{T}}^p}\\
    &\stackrel{(i)}{=} \brc{1 - \frac{c \log T}{T}}^p \frac{c \log T}{T} \cdot \frac{1 - \brc{1 - \frac{c \log T}{T}}^{p (T-t^*)}}{1 - \brc{1 - \frac{p c \log T}{T} + p (p-1) \frac{c^2 (\log T)^2}{2 T^2} + \Tilde{O}\brc{\frac{1}{T^3}} }}\\
    &\stackrel{(ii)}{=} \frac{1}{p} \brc{1 - \frac{p c \log T}{T} + \Tilde{O}\brc{\frac{1}{T^2}} } \brc{1 + \frac{(p-1) c \log T}{2 T} + \Tilde{O}\brc{\frac{1}{T^2}} - O\brc{\frac{1}{T^{p c / 2}}}}\\
    &= \frac{1}{p} \brc{1 - \frac{(p+1) c \log T}{2 T}} + \Tilde{O}\brc{\frac{1}{T^2}}
\end{align*}
where $(i)$ follows from \eqref{eq:nonvanish_coef_sum_genli_binom}, and $(ii)$ is because $t^* = \Theta(T / \log T)$ and thus $T - t^* > T / 2$ for large $T$.
Also, for all $t = 2,\dots,t^*$,
\begin{align*}
    \Bar{\alpha}_{t}^p &= (1-\delta)^p \prod_{i=2}^t \brc{1 - \delta \frac{c \log T}{T} \brc{1+\frac{c \log T}{T}}^i}^p\\
    &\leq (1-\delta)^p \brc{1 - \delta \frac{c \log T}{T}}^{p (t-1)}
\end{align*}
Thus, we have
\begin{align*}
    \Bar{\alpha}_{t^*}^p &\leq (1-\delta)^p \brc{1 - \delta \frac{c \log T}{T}}^{p (t^*-1)} \\
    &= (1-\delta)^p \brc{1 + \delta p \frac{c \log T}{T} + \Tilde{O}\brc{\frac{1}{T^2}}} \brc{1 - \delta \frac{c \log T}{T}}^{p t^*} \\
    &\stackrel{(iii)}{\leq} (1-\delta)^p \brc{1 + \delta p (1 + \log(1/\delta)) \frac{c \log T}{T}} \times \\
    &\qquad \quad e^{-p \delta \log(1/\delta)} \brc{ 1 + \delta^2 p \log(1/\delta) \frac{c \log T}{2 T} } + \Tilde{O}\brc{\frac{1}{T^2}} \\
    &= (1-\delta)^p e^{-p \delta \log(1/\delta)} \brc{1 + \delta p \brc{1 + \log(1/\delta) + (\delta / 2) \log(1/\delta) } \frac{c \log T}{T}} + \Tilde{O}\brc{\frac{1}{T^2}}.
\end{align*}
Here $(iii)$ follows because using \eqref{eq:nonvanish_coef_sum_genli_tstar_bound} and \eqref{eq:nonvanish_coef_sum_genli_exp_rate}, we have that
\begin{align} \label{eq:nonvanish_coef_sum_genli_delta_power_tstar}
    &\brc{1 - \delta \frac{c \log T}{T}}^{p t^*} \leq \brc{1 - \delta \frac{c \log T}{T}}^{p \log(1/\delta) \brc{\frac{T}{c \log T} - 1}} \nonumber \\
    &= \brc{1 + \delta p \log(1/\delta) \frac{c \log T}{T} } e^{-p \delta \log(1/\delta)} \brc{ 1 + \delta^2 p \log(1/\delta) \frac{c \log T}{2 T}} + \Tilde{O}\brc{\frac{1}{T^2}}.
\end{align}
Thus, the second term in \eqref{eq:nonvanish_coef_sum_genli_main} satisfies that
\begin{align} \label{eq:nonvanish_coef_sum_genli_term2}
    &\sum_{t=t^*+1}^{T} (1-\alpha_t) \Bar{\alpha}_t^p \nonumber \\
    &\leq \frac{1}{p} (1-\delta)^p e^{-p \delta \log(1/\delta)} \brc{1 - \brc{\frac{p+1}{2} - \delta p \brc{1 + \log(1/\delta) + (\delta / 2) \log(1/\delta) }}  \frac{c \log T}{T} } \nonumber\\
    &\qquad + \Tilde{O}\brc{\frac{1}{T^2}}.
\end{align}

Now we turn to the first term in \eqref{eq:nonvanish_coef_sum_genli_main}, in which the summation can be upper-bounded as
\begin{align*}
    &\sum_{t=2}^{t^*} \brc{1 + \frac{c \log T}{T}}^t \prod_{i=2}^t \brc{1 - \delta \frac{c \log T}{T} \brc{1+\frac{c \log T}{T}}^i}^p\\
    &\leq \sum_{t=2}^{t^*} \brc{1 + \frac{c \log T}{T}}^t \brc{1 - \delta \frac{c \log T}{T}}^{p (t-1)} =: \brc{1 + \frac{c \log T}{T}} \sum_{t=1}^{t^*-1} q^t
\end{align*}
where
\begin{align*}
    q &:= \brc{1 + \frac{c \log T}{T}} \brc{1 - \delta \frac{c \log T}{T}}^{p}\\
    &= \brc{1 + \frac{c \log T}{T}} \brc{1 - \delta p \frac{c \log T}{T} + \delta^2 p (p-1) \frac{c^2 (\log T)^2}{2 T^2} + \Tilde{O}\brc{\frac{1}{T^3}}}\\
    &= 1 + (1-\delta p) \frac{c \log T}{T} + \brc{\frac{\delta^2 p (p-1)}{2} - \delta p} \frac{c^2 (\log T)^2}{T^2} + \Tilde{O}\brc{\frac{1}{T^3}}.
\end{align*}
Note that by assumption $\delta p < 1$.
Also, by definition of $t^*$ and \eqref{eq:nonvanish_coef_sum_genli_delta_power_tstar},
\begin{align*}
    &\delta q^{t^*} \leq \brc{1 - \delta \frac{c \log T}{T}}^{p t^*}\\
    &\leq e^{-p \delta \log(1/\delta)} \brc{ 1 + \delta p \log(1/\delta) (1 + \delta/2) \frac{c \log T}{T}} + \Tilde{O}\brc{\frac{1}{T^2}}.
\end{align*}
Thus, we have
\begin{align*}
    &\delta \frac{c \log T}{T} \sum_{t=1}^{t^*-1} q^t = \frac{c \log T}{T} \times \frac{\delta q^{t^*} - \delta q}{q-1}\\
    & \leq \frac{c \log T}{T} \times \frac{e^{-p \delta \log(1/\delta)} \brc{ 1 + \delta p \log(1/\delta) (1 + \delta/2) \frac{c \log T}{T}} + \Tilde{O}\brc{\frac{1}{T^2}} - \delta q}{q-1} \\
    &\stackrel{(iv)}{=} \frac{c \log T}{T} \times \frac{e^{-p \delta \log(1/\delta)} \brc{1 + \delta p \log(1/\delta) (1 + \delta/2) \frac{c \log T}{T}} + \Tilde{O}\brc{\frac{1}{T^2}} - 1 - (1-\delta p) \frac{c \log T}{T} + \Tilde{O}\brc{\frac{1}{T^2}} }{(1-\delta p) \frac{c \log T}{T} +  \brc{\frac{\delta^2 p (p-1)}{2} - \delta p} \frac{c^2 (\log T)^2}{T^2} + \Tilde{O}\brc{\frac{1}{T^3}} } \\
    &= \brc{ \frac{e^{-p \delta \log(1/\delta)} - 1}{1 - \delta p} + \brc{\frac{\delta p \log(1/\delta) (1 + \delta/2)}{1-\delta p} - 1} \frac{c \log T}{T} } \times\\
    &\qquad \brc{1 + \frac{\delta p - \frac{\delta^2 p (p-1)}{2}}{1 - \delta p} \cdot \frac{c \log T}{T}} + \Tilde{O}\brc{\frac{1}{T^2}}\\
    &= \frac{e^{-p \delta \log(1/\delta)} - 1}{1 - \delta p} + \brc{\frac{\delta p \log(1/\delta) (1 + \delta/2)}{1-\delta p} - 1 + \frac{e^{-p \delta \log(1/\delta)} - 1}{1 - \delta p} \cdot \frac{\delta p (1 - \delta (p-1) / 2 )}{1 - \delta p}}  \frac{c \log T}{T} \\
    &\qquad + \Tilde{O}\brc{\frac{1}{T^2}}.
\end{align*}
where $(iv)$ follows from \eqref{eq:nonvanish_coef_sum_genli_binom}.
Therefore,
\begin{align}\label{eq:nonvanish_coef_sum_genli_term1_neq}
    &\sum_{t=2}^{t^*} (1-\alpha_t) \Bar{\alpha}_t^p = (1-\delta)^p \brc{1 + \frac{c \log T}{T}} \brc{ \delta \frac{c \log T}{T} \sum_{t=1}^{t^*-1} q^t } \nonumber\\
    &\leq (1-\delta)^p \frac{e^{-p \delta \log(1/\delta)} - 1}{1 - \delta p} \nonumber\\
    &\qquad + (1-\delta)^p \bigg(\frac{\delta p \log(1/\delta) (1 + \delta/2)}{1-\delta p} - 1 \nonumber \\
    &\qquad \quad + \frac{e^{-p \delta \log(1/\delta)} - 1}{1 - \delta p} \brc{1 + \frac{\delta p (1 - \delta (p-1) / 2 )}{1 - \delta p} } \bigg) \frac{c \log T}{T} + \Tilde{O}\brc{\frac{1}{T^2}}.
\end{align}

Combining \eqref{eq:nonvanish_coef_sum_genli_term2} and \eqref{eq:nonvanish_coef_sum_genli_term1_neq}, we have that
\[ s_T \leq \underbrace{ \brc{ \frac{1}{p} (1-\delta)^p e^{-p \delta \log(1/\delta)} + (1-\delta)^p \frac{e^{-p \delta \log(1/\delta)} - 1}{1 - \delta p} } }_{=: s_\infty} \brc{1 + O\brc{\frac{\log T}{T}} }.  \]
Also, for all large $T$'s, since $s_{\infty} \to \frac{1}{p}$ and $\delta \log(1/\delta) \to 0$ as $\delta \to 0$, when $\delta \ll 1$,
\[ s_T \leq \frac{1}{p} - \brc{ 1 + \frac{p+1}{2 p} } \frac{c \log T}{T} + \Tilde{O}\brc{\frac{1}{T^2}}. \]
The proof is now complete.
\end{proof}

\section{Proofs in \texorpdfstring{\Cref{sec:delta_ty}}{Section 4}}

\subsection{Proof of \texorpdfstring{\Cref{lem:cond_score_proj}}{Theorem 3}}

Fix $t \geq 1$. Using the forward model in \eqref{eq:def_cond_fwd2}, we have that $q_{t|0,y}$ is the p.d.f. of $\calN(\sqrt{\Bar{\alpha}_t} (I_d - H^\dagger H) x_0 + \sqrt{\Bar{\alpha}_t} H^\dagger y, \Sigma_{t|0,y})$. Thus,
\begin{align*}
    &\nabla \log q_{t|y}(x) = \frac{1}{q_{t|y}(x)} \int_{x_0 \in \mbR^d} \nabla q_{t|0,y}(x|x_0) \d Q_{0|y}(x_0) \\
    &= -\frac{1}{q_{t|y}(x)} \Sigma_{t|0,y}^{-1} \int_{x_0 \in \mbR^d} q_{t|0,y}(x|x_0) (x-\sqrt{\Bar{\alpha}_t} (I_d - H^\dagger H) x_0 - \sqrt{\Bar{\alpha}_t} H^\dagger y) \d Q_{0|y}(x_0)\\
    &= - \Sigma_{t|0,y}^{-1} (x-\sqrt{\Bar{\alpha}_t} H^\dagger y)\\
    &\qquad + \frac{\sqrt{\Bar{\alpha}_t}}{q_{t|y}(x)} \Sigma_{t|0,y}^{-1} (I_d - H^\dagger H) \int_{x_0 \in \mbR^d} q_{t|0,y}(x|x_0) x_0 \d Q_{0|y}(x_0).
\end{align*}
Thus, the equality for $\nabla \log q_{t|y}$ is established because by \Cref{lem:gauss_cond_var_proj},
\[ (\sigma_y^2 H^\dagger (H^\dagger)^\T + (1-\Bar{\alpha}_t) I_d)^{-1} (I_d - H^\dagger H) = (1-\Bar{\alpha}_t)^{-1} (I_d - H^\dagger H). \]

To see the optimality with $f_{t,y}^*$, fix $t \geq 1$ and $x \in \mbR^d$. First note that $(I_d-H^\dagger H) f_{t,y}^*(x) = (I_d-H^\dagger H) \Sigma_{t|0,y}^{-1} (\sqrt{\Bar{\alpha}_t} H^\dagger y - H^\dagger H x) = 0$ by \Cref{lem:gauss_cond_var_proj}. Now, suppose that $f_{t,y} = f_{t,y}^* + v$ such that $(I_d-H^\dagger H) f_{t,y} = 0 \implies (I_d-H^\dagger H) v = 0$. From the definition of $\Delta_{t,y}$ in \eqref{eq:def_delta_linear_cond},
\begin{equation*}
    \Delta_{t,y}(x) = (I_d-H^\dagger H) (\nabla \log q_{t|y} (x) - \nabla \log q_t (x)) + (H^\dagger H) \nabla \log q_{t|y} (x) - f_{t,y}(x)
\end{equation*}
where
\begin{align*}
    &(H^\dagger H) \nabla \log q_{t|y} (x) - f_{t,y}(x) \\
    &= (H^\dagger H) \Sigma_{t|0,y}^{-1} (\sqrt{\Bar{\alpha}_t} H^\dagger y - x) - \Sigma_{t|0,y}^{-1} (\sqrt{\Bar{\alpha}_t} H^\dagger y - H^\dagger H x) - v\\
    &= - (H^\dagger H) \Sigma_{t|0,y}^{-1} (I_d - H^\dagger H) x - (I_d - H^\dagger H) \Sigma_{t|0,y}^{-1} (\sqrt{\Bar{\alpha}_t} H^\dagger y - H^\dagger H x) - v\\
    &= -v
\end{align*}
where the last line follows from \Cref{lem:gauss_cond_var_proj}. 

Thus, if $v = 0$, then $f_{t,y} = f_{t,y}^*$, and we have
\begin{equation} \label{eq:delta_using_fy}
    \Delta_{t,y} = (I_d-H^\dagger H) (\nabla \log q_{t|y} (x) - \nabla \log q_t (x)).
\end{equation}
Also, if $v \neq 0$, since $v$ is orthogonal to the space induced by $(I_d-H^\dagger H)$, we have
\begin{equation} \label{eq:delta_norm2_with_v}
    \norm{\Delta_{t,y}(x)}^2 = \norm{ (I_d-H^\dagger H) (\nabla \log q_{t|y} (x) - \nabla \log q_t (x)) }^2 + \norm{v}^2
\end{equation}
which is minimized at $v = 0$. The proof is now complete.


\subsection{Proof of \texorpdfstring{\Cref{lem:norm2_bd_bdsupp}}{Theorem 4}}

Fix $t \geq 2$. Recall that the unconditional score $\nabla \log q_t(x)$ is
\begin{align*}
    \nabla \log q_t(x) &= \frac{1}{q_t(x)} \int_{x_0 \in \mbR^d} \nabla q_{t|0}(x|x_0) \d Q_0(x_0) \\
    &= -\frac{1}{(1-\Bar{\alpha}_t) q_t(x)} \int_{x_0 \in \mbR^d} q_{t|0}(x|x_0) (x-\sqrt{\Bar{\alpha}_t} x_0) \d Q_0(x_0)\\
    &= -\frac{1}{(1-\Bar{\alpha}_t)} x + \frac{\sqrt{\Bar{\alpha}_t}}{(1-\Bar{\alpha}_t) q_t(x)} \int_{x_0 \in \mbR^d} q_{t|0}(x|x_0) x_0 \d Q_0(x_0)
\end{align*}
since $q_{t|0}$ is the p.d.f. of $\calN(\sqrt{\Bar{\alpha}_t} x_0, (1-\Bar{\alpha}_t) I_d)$. 

In the first half, we consider the case where $\sigma_y^2$ is known, and thus $f_{t,y} = f_{t,y}^*$ in \eqref{eq:def_fy_star}. Note that from \Cref{lem:cond_score_proj},
\begin{align*}
    \nabla \log q_{t|y}(x) &= \Sigma_{t|0,y}^{-1} (\sqrt{\Bar{\alpha}_t} H^\dagger y - x)\\
    &\qquad + \frac{\sqrt{\Bar{\alpha}_t}}{q_{t|y}(x)} \Sigma_{t|0,y}^{-1} (I_d - H^\dagger H) \int_{x_0 \in \mbR^d} q_{t|0,y}(x|x_0) x_0 \d Q_{0|y}(x_0).
\end{align*}
Here we also recall from \Cref{lem:cond_score_proj} that
\[ \Sigma_{t|0,y} := \Bar{\alpha}_t \sigma_y^2 H^\dagger (H^\dagger)^\T + (1-\Bar{\alpha}_t) I_d. \]
Since $H^\dagger (H^\dagger)^\T$ is positive semi-definite, all its eigenvalues are non-negative. Write the eigen-decomposition as $H^\dagger (H^\dagger)^\T = P \mathrm{diag}(D_1,\dots,D_d) P^\T$ where $D_1 \geq \dots \geq D_d \geq 0,~\forall i \in [d]$. Then, $\lambda_{min}(\Sigma_{t|0,y}) \geq \Bar{\alpha}_t \sigma_y^2 D_d + 1-\Bar{\alpha}_t \geq 1-\Bar{\alpha}_t$, and we get
\begin{equation} \label{eq:bdsupp_inv_cov_bound}
    \norm{\Sigma_{t|0,y}^{-1}} \leq \frac{1}{1-\Bar{\alpha}_t}.
\end{equation}
Also, from \eqref{eq:def_cond_spl}, with the $f_{t,y}^*$ in \eqref{eq:def_fy_star},
\begin{align*}
    g_{t,y}(x) &= f_{t,y}^*(x) + (I_d-H^\dagger H) \nabla \log q_t (x) \\
    &= \Sigma_{t|0,y}^{-1} \brc{\sqrt{\Bar{\alpha}_t} H^\dagger y - H^\dagger H x} - \frac{1}{(1-\Bar{\alpha}_t)} (I_d-H^\dagger H) x \\
    &\qquad + \frac{\sqrt{\Bar{\alpha}_t}}{(1-\Bar{\alpha}_t) q_t(x)} (I_d-H^\dagger H) \int_{x_0 \in \mbR^d} q_{t|0}(x|x_0) x_0 \d Q_0(x_0)\\
    &\stackrel{(i)}{=} \Sigma_{t|0,y}^{-1} \brc{\sqrt{\Bar{\alpha}_t} H^\dagger y - H^\dagger H x} - \Sigma_{t|0,y}^{-1} (I_d-H^\dagger H) x \\
    &\qquad + \frac{\sqrt{\Bar{\alpha}_t}}{q_t(x)} \Sigma_{t|0,y}^{-1} (I_d-H^\dagger H) \int_{x_0 \in \mbR^d} q_{t|0}(x|x_0) x_0 \d Q_0(x_0)\\
    &= \Sigma_{t|0,y}^{-1} \brc{\sqrt{\Bar{\alpha}_t} H^\dagger y - x} + \frac{\sqrt{\Bar{\alpha}_t}}{q_t(x)} \Sigma_{t|0,y}^{-1} (I_d-H^\dagger H) \int_{x_0 \in \mbR^d} q_{t|0}(x|x_0) x_0 \d Q_0(x_0)
\end{align*}
where $(i)$ follows from \Cref{lem:gauss_cond_var_proj}. 
Then, the norm-squared of the score mismatch at time $t \geq 2$ is
\begin{align} \label{eq:lem_norm2_bd_bdsupp_main_res_1}
    &\norm{\Delta_{t,y}}^2 = \norm{\nabla \log q_{t|y} - g_{t,y}}^2 = \norm{(I_d - H^\dagger H) (\nabla \log q_{t|y} - \nabla \log q_t) }^2 \nonumber\\
    &\leq \Bar{\alpha}_t \norm{\Sigma_{t|0,y}^{-1}}^2 \norm{\frac{\int_{x_0 \in \mbR^d} q_{t|0,y}(x|x_0) x_0 \d Q_{0|y}(x_0)}{q_{t|y}(x)} - \frac{\int_{x_0 \in \mbR^d} q_{t|0}(x|x_0) x_0 \d Q_0(x_0)}{q_t(x)} }^2 \nonumber\\
    &\stackrel{(ii)}{\leq} \Bar{\alpha}_t \norm{\Sigma_{t|0,y}^{-1}}^2 \int_{x_a,x_b \in \mbR^d} \norm{x_a - x_b}^2 \d Q_{0|t,y}(x_a) \d Q_{0|t}(x_b) \nonumber\\
    &\leq \Bar{\alpha}_t \norm{\Sigma_{t|0,y}^{-1}}^2 \max_{\substack{x_a \in \mathrm{supp}(Q_{0|y}) \\ x_b \in \mathrm{supp}(Q_{0})}} \norm{x_a-x_b}^2 \nonumber\\
    &\stackrel{(iii)}{\lesssim} \frac{\Bar{\alpha}_t}{(1-\Bar{\alpha}_t)^2} d.
\end{align}
Here $(ii)$ follows from Jensen's inequality, and $(iii)$ follows by \eqref{eq:bdsupp_inv_cov_bound} and from the assumption that $Q_0$ has bounded support (and thus also for both $Q_{0|t}$ and $Q_{0|t,y}$). Therefore, with the $\alpha_t$ in \eqref{eq:alpha_genli} (cf. \Cref{lem:alpha_genli_rate}), since $1-\Bar{\alpha}_t \geq 1 - \delta$ which is a constant, \Cref{ass:bdd-mismatch-general} is satisfied for all $\sigma_y^2 \geq 0$. 
Thus, \Cref{thm:gen_q0_kl_general} holds with $\gamma = 1$ and $r = 2$.

Now, we consider the case where $\sigma_y^2$ is unknown, and the conditional sampler of interest is $g_{t,y}^N(x) = f_{t,y}^N(x) + (I_d-H^\dagger H) \nabla \log q_t (x)$ where $f_{t,y}^N(x) = (1-\Bar{\alpha}_t)^{-1} \brc{\sqrt{\Bar{\alpha}_t} H^\dagger y - H^\dagger H x}$. With the same notation as in the proof of \Cref{lem:cond_score_proj}, we can write $v = f_{t,y}^N - f_{t,y}^* = ( (1-\Bar{\alpha}_t)^{-1} I_d - \Sigma_{t|0,y}^{-1}) \brc{\sqrt{\Bar{\alpha}_t} H^\dagger y - H^\dagger H x}$. Note that $v$ still satisfies that $(I_d - H^\dagger H) v = 0$.
Using the result in \eqref{eq:delta_norm2_with_v}, we have
\begin{multline*}
    \norm{\Delta_{t,y}^N}^2 = \norm{(I_d-H^\dagger H) (\nabla \log q_{t|y} (x) - \nabla \log q_t (x))}^2 \\
    + \norm{(\Sigma_{t|0,y}^{-1} - (1-\Bar{\alpha}_t)^{-1} I_d) \brc{\sqrt{\Bar{\alpha}_t} H^\dagger y - H^\dagger H x} }^2 
\end{multline*}
where the first term is the same as in \eqref{eq:lem_norm2_bd_bdsupp_main_res_1} which can be upper-bounded in a similar way. To upper-bound the second term, note that by Woodbury matrix identity,
\begin{align} \label{eq:noise_var_norm2_bd}
    \norm{\Sigma_{t|0,y}^{-1} - \frac{1}{1-\Bar{\alpha}_t} I_d} &= \frac{\Bar{\alpha}_t \sigma_y^2}{(1-\Bar{\alpha}_t)^2} \norm{ H^\dagger \brc{I_p + \frac{\sigma_y^2}{1-\Bar{\alpha}_t} (H^\dagger)^\T H^\dagger}^{-1} (H^\dagger)^\T } \nonumber\\
    &\lesssim \frac{\Bar{\alpha}_t \sigma_y^2}{(1-\Bar{\alpha}_t)^2}
\end{align} 
where the inequality follows because $\norm{H^\dagger} \lesssim 1$ is a constant and the minimum eigenvalue of $(I_p + \frac{\sigma_y^2}{1-\Bar{\alpha}_t} (H^\dagger)^\T H^\dagger)$ is at least 1.
Thus,
\begin{align*}
    &\E_{Q_{t|y}} \norm{(\Sigma_{t|0,y}^{-1} - (1-\Bar{\alpha}_t)^{-1} I_d) \brc{\sqrt{\Bar{\alpha}_t} H^\dagger y - H^\dagger H X_t} }^2 \\
    &\stackrel{(iv)}{\leq} \frac{\Bar{\alpha}_t^2 \sigma_y^4}{(1-\Bar{\alpha}_t)^4} \E_{Q_{t|y}} \norm{\sqrt{\Bar{\alpha}_t} H^\dagger y - H^\dagger H X_t }^2 \\
    &= \frac{\Bar{\alpha}_t^2 \sigma_y^4}{(1-\Bar{\alpha}_t)^4} \E_{Q_{0|y}} \E_{Q_{t|0,y}} \norm{\sqrt{\Bar{\alpha}_t} H^\dagger y - H^\dagger H X_t }^2 \\
    &= \frac{\Bar{\alpha}_t^2 \sigma_y^4}{(1-\Bar{\alpha}_t)^4} \E_{Q_{0|y}} \E_{Q_{t|0}} \norm{\sqrt{\Bar{\alpha}_t} H^\dagger y - H^\dagger H X_t }^2 \\
    &\leq \frac{2 \Bar{\alpha}_t^2 \sigma_y^4}{(1-\Bar{\alpha}_t)^4} \E_{Q_{0|y}} \sbrc{ \Bar{\alpha}_t \norm{H^\dagger y - H^\dagger H X_0 }^2 + \E_{Q_{t|0}} \norm{H^\dagger H (X_t - \sqrt{\Bar{\alpha}_t} X_0)}^2 } \\
    &\leq \frac{2 \Bar{\alpha}_t^2 \sigma_y^4}{(1-\Bar{\alpha}_t)^4} \E_{Q_{0|y}} \sbrc{ \Bar{\alpha}_t \norm{H^\dagger y}^2 + \Bar{\alpha}_t \norm{X_0}^2 + d (1-\Bar{\alpha}_t) } \\
    &\stackrel{(v)}{\lesssim} \frac{\Bar{\alpha}_t^2 \sigma_y^4}{(1-\Bar{\alpha}_t)^4} d
\end{align*}
where $(iv)$ follows from \eqref{eq:noise_var_norm2_bd}, and $(v)$ follows from the fact that $Q_{0|y}$ has bounded support. Similarly, for general moments $\ell \geq 2$,
\[ \E_{Q_{t|y}} \norm{(\Sigma_{t|0,y}^{-1} - (1-\Bar{\alpha}_t)^{-1} I_d) \brc{\sqrt{\Bar{\alpha}_t} H^\dagger y - H^\dagger H X_t} }^\ell \lesssim \brc{ \frac{\Bar{\alpha}_t \sigma_y^2}{(1-\Bar{\alpha}_t)^2} d }^{\ell / 2}. \]
Therefore, with the $\alpha_t$ in \eqref{eq:alpha_genli}, since $1-\Bar{\alpha}_t \geq 1 - \delta$ which is a constant, we still have that \Cref{ass:bdd-mismatch-general} is satisfied (see \Cref{lem:alpha_genli_rate}), and \Cref{thm:gen_q0_kl_general} still holds with $\gamma = 1$ and $r = 4$. The proof is now complete.

\subsection{\texorpdfstring{\Cref{thm:kl_bd_gauss}}{Theorem 6} and its proof}

Before we enter the proof of \Cref{lem:norm2_bd_gauss_mix,thm:kl_bd_gauss_mix}, we first state a similar set of results for Gaussian $Q_0$, which turns out to be useful for analyzing Gaussian mixture $Q_0$'s. To begin, the following lemma investigates $\E_{X_t \sim Q_{t|y}} \norm{\Delta_{t,y}(X_t)}^2$ when $Q_0$ is Gaussian. This quantity is proportional to the asymptotic bias $\mathcal{W}_{\text{bias}}$. 

\begin{proposition} \label{lem:norm2_bd_gauss}
For $Q_0 = \calN(\mu_0, \Sigma_0)$, if $f_{t,y} = f_{t,y}^*$ in \eqref{eq:def_fy_star} and $H = \begin{pmatrix} I_p & 0 \end{pmatrix}$, with the $\alpha_t$'s according to \Cref{def:noise_smooth}, \Cref{ass:bdd-mismatch-general} is satisfied, and
\begin{align*}
    \E_{X_t \sim Q_{t|y}} \norm{\Delta_{t,y}(X_t)}^2 &\leq \Bar{\alpha}_t^2 \frac{\max\{\norm{H^\dagger y - H^\dagger H \mu_{0}}^2 + d (\lambda_1 + \sigma_y^2), d\}}{\min\{\lambda_d, 1\}^2 \min\{\Tilde{\lambda}_{d-p}, 1\}^2} \norm{[\Sigma_0]_{y \Bar{y}} [\Sigma_0]_{\Bar{y} y}} \\
    &\lesssim \Bar{\alpha}_t^2 \cdot (\norm{H^\dagger y - H^\dagger H \mu_{0}}^2+d)
\end{align*}
where $\lambda_1$ is the largest eigenvalue of $\Sigma_0$, and $\lambda_d$ and $\Tilde{\lambda}_{d-p}$ are the smallest eigenvalues of $\Sigma_0$ and $[\Sigma_0]_{\Bar{y}\Bar{y}}$, respectively.
\end{proposition}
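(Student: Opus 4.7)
The plan is to leverage that, for Gaussian $Q_0$, every measure in the bound is Gaussian with closed-form parameters, so both scores are affine in $x_t$ and the target second moment reduces to a Gaussian moment computation.

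First, from the conditional forward process in \eqref{eq:def_cond_fwd2} and standard Gaussian conditioning applied to the joint distribution of $(X_0, n, X_t, y)$, I would write $Q_t = \calN(\sqrt{\Bar{\alpha}_t}\mu_0, \Sigma_t)$ with $\Sigma_t := \Bar{\alpha}_t \Sigma_0 + (1-\Bar{\alpha}_t) I_d$, and derive closed-form expressions for $\mu_{t|y}$ and $\Sigma_{t|y}$. In the block form induced by $H = (I_p, 0)$, the $\Bar{y}$ block of $\mu_{t|y} - \sqrt{\Bar{\alpha}_t}\mu_0$ carries the factor $[\Sigma_0]_{\Bar{y} y}([\Sigma_0]_{yy} + \sigma_y^2 I_p)^{-1}(y - \mu_0^y)$, while $\Sigma_{t|y} - \Sigma_t$ is proportional to $\Bar{\alpha}_t [\Sigma_0]_{y \Bar{y}}$; both vanish when $[\Sigma_0]_{y\Bar{y}} = 0$, which is the source of the $\norm{[\Sigma_0]_{y\Bar{y}}[\Sigma_0]_{\Bar{y} y}}$ factor in the bound.

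Second, combining \Cref{lem:cond_score_proj} with the identity $(I_d - H^\dagger H)\Sigma_{t|0,y}^{-1} = (1-\Bar{\alpha}_t)^{-1}(I_d - H^\dagger H)$ (from \Cref{lem:gauss_cond_var_proj}) and Tweedie's formula for the unconditional score, I would rewrite the mismatch as
\[ \Delta_{t,y}(x_t) = \frac{\sqrt{\Bar{\alpha}_t}}{1-\Bar{\alpha}_t}(I_d - H^\dagger H)\bigl[\E_{Q_{0|t,y}}[X_0 \mid x_t, y] - \E_{Q_{0|t}}[X_0 \mid x_t]\bigr]. \]
For Gaussian $Q_0$, both posterior means are affine in $x_t$, and via the iterated-conditioning identity $\E[X_0|x_t, y] - \E[X_0|x_t] = \mathrm{Cov}(X_0, y | x_t)\mathrm{Var}(y | x_t)^{-1}(y - \E[y|x_t])$, together with the algebraic simplification $\Sigma_0(I_d - \Bar{\alpha}_t \Sigma_t^{-1}\Sigma_0) = (1-\Bar{\alpha}_t)\Sigma_t^{-1}\Sigma_0$, one obtains a closed-form expression for $\mathrm{Cov}(X_0, y | x_t)$ whose $\Bar{y}$ projection contains the factor $[\Sigma_0]_{\Bar{y} y}$ through an explicit Schur-complement calculation.

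Third, the resulting $\Delta_{t,y}(x_t) = A x_t + b$ is affine, so the Gaussian moment identity $\E\norm{AX + b}^2 = \Tr(A^\T A\,\mathrm{Cov}(X)) + \norm{A\,\E X + b}^2$ applied with $X_t \sim \calN(\mu_{t|y}, \Sigma_{t|y})$ splits the target into a trace contribution bounded by $\Tr(\Sigma_{t|y}) \lesssim d(\lambda_1 + \sigma_y^2)$ and a mean-shift contribution controlled by $\norm{H^\dagger y - H^\dagger H \mu_0}^2$. Operator norms of the inverses on the $y$-block (governed by $[\Sigma_0]_{yy} + \sigma_y^2 I_p$) and on the $\Bar{y}$-block (governed by a Schur complement bounded below via $\Tilde{\lambda}_{d-p}$) then produce the $\min\{\lambda_d, 1\}^{-2}\min\{\Tilde{\lambda}_{d-p}, 1\}^{-2}$ factors; the looser second form of the bound follows by absorbing $\Sigma_0$-dependent constants into $\lesssim$. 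Finally, \Cref{ass:bdd-mismatch-general} for every $\ell \geq 2$ is immediate since $\Delta_{t,y}(X_t)$ is an affine map of a Gaussian, so all its $\ell$-th moments are controlled by powers of its second moment, and $\Bar{\alpha}_t^\ell \leq \Bar{\alpha}_t$.

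The main obstacle lies in the block-matrix bookkeeping of the second and third steps: extracting the explicit factors of $\Bar{\alpha}_t$ and $[\Sigma_0]_{y \Bar{y}}$ from the projected resolvent-type identity $(I_d - H^\dagger H)(\Sigma_t^{-1} - \Sigma_{t|y}^{-1}) = -(I_d - H^\dagger H)\Sigma_t^{-1}(\Sigma_{t|y} - \Sigma_t)\Sigma_{t|y}^{-1}$ and from the Schur complement on the $\Bar{y}$ block, without losing them to crude operator-norm bounds, is what controls the tight quantitative dependence.
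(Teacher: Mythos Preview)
Your high-level plan (affine scores $\Rightarrow$ affine $\Delta_{t,y}$ $\Rightarrow$ Gaussian second-moment bound, then higher moments by the same affine-Gaussian argument) matches the paper's. But the algebraic route you take to the explicit $\Delta_{t,y}$ is different from the paper's, and it contains a gap.

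The paper does not pass through posterior means or iterated conditioning. It substitutes the two Gaussian scores $\nabla\log q_t(x_t)=-\Sigma_t^{-1}(x_t-\sqrt{\Bar\alpha_t}\mu_0)$ and $\nabla\log q_{t|y}(x_t)=-\Sigma_{t|y}^{-1}(x_t-\mu_{t|y})$ directly into $\Delta_{t,y}=(I_d-H^\dagger H)(\nabla\log q_{t|y}-\nabla\log q_t)$, then uses the Woodbury identity to expand both $\Sigma_t^{-1}$ and $\Sigma_{t|y}^{-1}$ about the common $\Sigma_{t,\mathrm{sig}}^{-1}:=\bigl(\Bar\alpha_t(I_d-H^\dagger H)\Sigma_0(I_d-H^\dagger H)+(1-\Bar\alpha_t)I_d\bigr)^{-1}$, and repeatedly invokes \Cref{lem:gauss_cond_var_proj} (i.e.\ $(I_d-H^\dagger H)\Sigma_{t,\mathrm{sig}}^{-1}H^\dagger H=0$). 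All $y$-dependent pieces cancel and one obtains the single linear map
\[
\Delta_{t,y}(x_t)=-\Bar\alpha_t\,A_t\,\Sigma_0(H^\dagger H)\,\Sigma_t^{-1}(x_t-\sqrt{\Bar\alpha_t}\mu_0),\qquad A_t:=(I_d-H^\dagger H)\Sigma_{t,\mathrm{sig}}^{-1}(I_d-H^\dagger H).
\]
From here the $\Bar\alpha_t^2$ prefactor and the $\|[\Sigma_0]_{y\Bar y}[\Sigma_0]_{\Bar y y}\|$ factor are immediate, and the moment bound is just $|\Tr(UV)|\le\|U\|\Tr(V)$ applied to $\E_{Q_{t|y}}(X_t-\sqrt{\Bar\alpha_t}\mu_0)(X_t-\sqrt{\Bar\alpha_t}\mu_0)^\top=\Sigma_{t|y}+\Bar\alpha_t(H^\dagger y-H^\dagger H\mu_0)(H^\dagger y-H^\dagger H\mu_0)^\top$.

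The gap in your plan is in the second step. The $Q_{t|y}$ that enters the statement (and $\Delta_{t,y}$) is the one built from the conditional forward model \eqref{eq:def_cond_fwd2}; in the Gaussian case this gives $\mu_{t|y}=\sqrt{\Bar\alpha_t}(I_d-H^\dagger H)\mu_0+\sqrt{\Bar\alpha_t}H^\dagger y$ (so the $\Bar y$-block equals $\sqrt{\Bar\alpha_t}[\mu_0]_{\Bar y}$, \emph{independent} of $y$ and of $[\Sigma_0]_{\Bar y y}$) and a block-diagonal $\Sigma_{t|y}$. This is \emph{not} the Bayesian conditional of $X_t$ given $y$ under the full joint Gaussian $(X_0,n,X_t,y)$, whose $\Bar y$-mean does pick up the $[\Sigma_0]_{\Bar y y}([\Sigma_0]_{yy}+\sigma_y^2 I_p)^{-1}(y-\mu_0^y)$ shift you describe. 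Consequently the $\E_{Q_{0|t,y}}[X_0\mid x_t,y]$ appearing in \Cref{lem:cond_score_proj} is not the joint-Gaussian posterior mean, and your iterated-conditioning identity $\E[X_0\mid x_t,y]-\E[X_0\mid x_t]=\Cov(X_0,y\mid x_t)\Var(y\mid x_t)^{-1}(y-\E[y\mid x_t])$ computes a different object. A quick check confirms the mismatch: the paper's $\Delta_{t,y}$ carries no explicit $y$ at all, whereas your expression retains an order-$\sqrt{\Bar\alpha_t}$ piece proportional to $(y-\E[y\mid x_t])$, so the two formulas are not equal and your route does not deliver the clean $\Bar\alpha_t^2$ scaling. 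To match the statement you should work with the paper's $(\mu_{t|y},\Sigma_{t|y})$ and carry out the Woodbury/\Cref{lem:gauss_cond_var_proj} cancellation instead of the iterated-conditioning formula.
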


\begin{proof}
    See \Cref{app:proof_lem_norm2_bd_gauss}.
\end{proof}

With this lemma, the following theorem characterizes the conditional KL divergence when $Q_0$ is Gaussian.

\begin{theorem} \label{thm:kl_bd_gauss}
Suppose that $\sigma_y^2 > 0$. Suppose that \Cref{ass:m2-general,ass:score} hold.
Under the same conditions as in \Cref{lem:norm2_bd_gauss}, 
if $\alpha_t$ further satisfies $\sum_{t=1}^T (1-\alpha_t) \Bar{\alpha}_t = 1 + o(1)$, we have
\begin{align*}
    &\KL{Q_{0|y}}{\widehat{P}_{0|y}} \lesssim (\norm{H^\dagger y - H^\dagger H \mu_0}^2 + d) \\
    &\qquad + (\norm{H^\dagger y - H^\dagger H \mu_0}^2 + d) \frac{(\log T)^2}{T} + \sqrt{\norm{H^\dagger y - H^\dagger H \mu_0}^2 + d} \cdot (\log T) \eps.
\end{align*}
\end{theorem}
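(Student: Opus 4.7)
The plan is to apply \Cref{thm:main-general} directly, exploiting that $Q_{0|y}$ is a non-degenerate Gaussian whenever $\sigma_y^2 > 0$, so every $Q_{t|y}$ is Gaussian with analytic density. I would first verify the four assumptions, then bound $\mathcal{W}_{\text{oracle}}$, $\mathcal{W}_{\text{bias}}$, and $\mathcal{W}_{\text{vanish}}$ separately using the explicit Gaussian form.

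Verifying the assumptions: \Cref{ass:m2-general} is immediate from the Gaussian second moment; \Cref{ass:score-general} follows from \Cref{ass:score} via \eqref{eq:zs-est-err}; and \Cref{ass:bdd-mismatch-general} is the content of \Cref{lem:norm2_bd_gauss}. For \Cref{ass:regular-drv-general}, $Q_{t|y} = \calN(\sqrt{\Bar{\alpha}_t}\mu_{0|y},\; \Bar{\alpha}_t \Sigma_{0|y} + (1-\Bar{\alpha}_t) I_d)$, where $\Sigma_{0|y}^{-1} = \Sigma_0^{-1} + \sigma_y^{-2} H^\T H$ is positive definite (Woodbury identity plus $\sigma_y^2 > 0$); thus $\Sigma_{t|y}^{-1}$ has spectral norm $O(1)$ uniformly in $t$, all partial derivatives of $\log q_{t|y}$ of order $\geq 3$ vanish identically, and the first- and second-order ones have Gaussian moments of all orders.

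Writing $D := \norm{H^\dagger y - H^\dagger H \mu_0}^2 + d$, I then control the three error components. (i) By \Cref{lem:norm2_bd_gauss}, $\E \norm{\Delta_{t,y}}^2 \lesssim \Bar{\alpha}_t^2 D$, so using $\sum_t (1-\alpha_t)\Bar{\alpha}_t = 1 + o(1)$ and $\Bar{\alpha}_t \leq 1$, $\mathcal{W}_{\text{bias}} \lesssim D \sum_t (1-\alpha_t)\Bar{\alpha}_t \lesssim D$. (ii) Since $\nabla^2 \log q_{t|y} = -\Sigma_{t|y}^{-1}$ is a constant matrix of bounded spectral norm, $\Tr(\nabla^2 \log q_{t-1|y}(m_{t,y}) \nabla^2 \log q_{t|y}) = O(d) \leq O(D)$, giving $\mathcal{W}_{\text{oracle}} \lesssim D \sum_t (1-\alpha_t)^2 + (\log T)\eps^2 \lesssim D(\log T)^2/T + (\log T)\eps^2$. (iii) In $\mathcal{W}_{\text{vanish}}$, the two third-derivative terms vanish by Gaussianity; the quadratic-in-$\Delta$ term is at most $D\sum_t (1-\alpha_t)^2 \Bar{\alpha}_t^2 \lesssim D(\log T)^2/T$; and the estimation-error product yields $\sqrt{D}(\log T)\eps$.

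The main obstacle is the remaining first-order term $\sum_t \frac{1-\alpha_t}{\sqrt{\alpha_t}} \E[(\nabla \log q_{t-1|y}(m_{t,y}) - \sqrt{\alpha_t}\nabla \log q_{t|y})^\T \Delta_{t,y}]$. Here Gaussianity gives an \emph{exact cancellation}: writing $\nabla \log q_{t|y}(x) = -\Sigma_{t|y}^{-1}(x - \sqrt{\Bar{\alpha}_t}\mu_{0|y})$ and using the recursion $\Sigma_{t|y} = \alpha_t \Sigma_{t-1|y} + (1-\alpha_t) I_d$, a short manipulation gives $\Sigma_{t-1|y}^{-1}(I_d - (1-\alpha_t)\Sigma_{t|y}^{-1}) = \alpha_t \Sigma_{t|y}^{-1}$, which upon substitution into the expression for $m_{t,y}(x_t)$ shows $\nabla \log q_{t-1|y}(m_{t,y}(x_t)) - \sqrt{\alpha_t}\nabla \log q_{t|y}(x_t) \equiv 0$. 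Assembling the three bounds then yields $\KL{Q_{0|y}}{\widehat{P}_{0|y}} \lesssim D + D(\log T)^2/T + \sqrt{D}(\log T)\eps$, as claimed.
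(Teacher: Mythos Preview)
Your proposal is correct and follows the same overall strategy as the paper: invoke \Cref{thm:main-general}, verify the four assumptions using Gaussianity, and bound $\mathcal{W}_{\text{oracle}}$, $\mathcal{W}_{\text{bias}}$, $\mathcal{W}_{\text{vanish}}$ separately.

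The one genuine difference is your treatment of the first-order term in $\mathcal{W}_{\text{vanish}}$. The paper Taylor-expands $\Sigma_{t|y}^{-1} = \alpha_t^{-1}\Sigma_{t-1|y}^{-1} - (1-\alpha_t)\alpha_t^{-2}\Sigma_{t-1|y}^{-2} + O((1-\alpha_t)^2)$, substitutes, and then carries the resulting $O(1-\alpha_t)$ expression through a trace estimate to obtain the bound $(1-\alpha_t)D$ per step. Your matrix identity $\Sigma_{t-1|y}^{-1}(I_d - (1-\alpha_t)\Sigma_{t|y}^{-1}) = \alpha_t\Sigma_{t|y}^{-1}$ (which follows directly from $\Sigma_{t|y} = \alpha_t\Sigma_{t-1|y} + (1-\alpha_t)I_d$) shows that the term vanishes \emph{exactly}, so no estimate is needed at all. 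This is a cleaner argument that the paper does not exploit; both yield the same final $D(\log T)^2/T$ contribution, but yours short-circuits a page of computation.

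One caveat: the paper's $Q_{0|y}$ is \emph{not} the Bayesian posterior, so your formula $\Sigma_{0|y}^{-1} = \Sigma_0^{-1} + \sigma_y^{-2}H^\T H$ does not match their $\Sigma_{0|y} = (I_d-H^\dagger H)\Sigma_0(I_d-H^\dagger H) + \sigma_y^2 H^\dagger(H^\dagger)^\T$. This does not break anything, however, because the only facts you actually use are (i) $Q_{0|y}$ is a non-degenerate Gaussian when $\sigma_y^2 > 0$, and (ii) $\Sigma_{t|y} = \bar{\alpha}_t\Sigma_{0|y} + (1-\bar{\alpha}_t)I_d$ with $\|\Sigma_{t|y}^{-1}\|$ uniformly bounded --- both of which hold for the paper's definition as well (the paper verifies the bound as $\lambda_{\min}(\Sigma_{t|y}) \geq \min\{1,\sigma_y^2,\tilde{\lambda}_{d-p}\}$).
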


Note that a similar result can be obtained for $\KL{Q_{1|y}}{\widehat{P}_{1|y}}$ (where $\rmW_2(Q_{1|y}, Q_{0|y})^2 \lesssim \delta d$) for any general $\sigma_y^2 \geq 0$ using the $\alpha_t$ in \eqref{eq:alpha_genli} (see \Cref{rmk:gauss_kl_genli}).

\subsubsection{Proof of \texorpdfstring{\Cref{thm:kl_bd_gauss}}{Theorem 6}}

Throughout the proof we use the same notations as in \eqref{eq:gauss_notations}. Since \Cref{ass:bdd-mismatch-general} is satisfied from \Cref{lem:norm2_bd_gauss}, in order to invoke \Cref{thm:main-general}, we still need to check \Cref{ass:regular-drv-general}. Since each $Q_{t|y}~(\forall t \geq 0)$ is Gaussian, all partial derivatives of its log-p.d.f. higher than third-order equal zero. For the first and second-order, note that $\Sigma_{t|y} = \Bar{\alpha}_t (I_d - H^\dagger H) \Sigma_{0} (I_d - H^\dagger H) + (1-\Bar{\alpha}_t) I_d + \Bar{\alpha}_t \sigma_y^2 H^\dagger H$. Thus, when $\sigma_y^2 > 0$, $\lambda_{min}(\Sigma_{t-1|y}) \geq \min\{ 1-\Bar{\alpha}_t+\Bar{\alpha}_t \sigma_y^2, 1-\Bar{\alpha}_t+\Bar{\alpha}_t \Tilde{\lambda}_d \} \geq \min\{1, \sigma_y^2, \Tilde{\lambda}_d\} > 0$, which yields
\begin{equation} \label{eq:gauss_cov_inv_norm_upper}
    \norm{\Sigma_{t-1|y}^{-1}} \lesssim 1,\quad \forall t \geq 1.
\end{equation}
Thus, we have, $\forall \ell \geq 1$,
\begin{align*}
    &\E_{Q_{t|y}} \norm{\nabla \log q_{t|y}(X_t)}^\ell \\
    &\quad = \E_{Q_{t|y}} \norm{\Sigma_{t|y}^{-1} (X_t - \mu_{t|y})}^\ell \leq \norm{\Sigma_{t|y}^{-\frac{1}{2}}}^\ell \E_{Q_{t|y}} \norm{\Sigma_{t|y}^{-\frac{1}{2}} (X_t - \mu_{t|y})}^\ell \\
    &\quad \lesssim d^{\ell / 2} = O(1),\\
    &\E_{Q_{t|y}} \norm{\nabla \log q_{t-1|y}(m_{t,y}(X_t))}^\ell \\
    &\quad = \E_{Q_{t|y}} \norm{\Sigma_{t|y}^{-1} (m_{t,y}(X_t) - \mu_{t|y})}^\ell \\
    &\quad \lesssim \norm{\Sigma_{t|y}^{-\frac{1}{2}}}^\ell \E_{Q_{t|y}} \norm{\Sigma_{t|y}^{-\frac{1}{2}} (X_t - \mu_{t|y})}^\ell + \norm{\Sigma_{t|y}^{-1}}^\ell \E_{Q_{t|y}} \norm{\nabla \log q_{t|y}(X_t)}^\ell \\
    &\quad \lesssim d^{\ell / 2} = O(1), \\
    &\E_{Q_{t|y}} \norm{\nabla^2 \log q_{t|y}(X_t)}^\ell = \norm{\Sigma_{t|y}^{-1}}^\ell = O(1),\\
    &\E_{Q_{t|y}} \norm{\nabla^2 \log q_{t-1|y}(m_{t,y}(X_t))}^\ell = \norm{\Sigma_{t-1|y}^{-1}}^\ell = O(1).
\end{align*}
Thus, \Cref{ass:regular-drv-general} holds when $1-\alpha_t$ satisfies \Cref{def:noise_smooth}.

Now, we can invoke \Cref{thm:main-general} and get $\KL{Q_{0|y}}{\widehat{P}_{0|y}} \lesssim \mathcal{W}_{\text{oracle}} + \mathcal{W}_{\text{bias}} + \mathcal{W}_{\text{vanish}}$,
where
\begin{align*}
\mathcal{W}_{\text{oracle}} &= \sum_{t=1}^T \frac{(1-\alpha_t)^2}{2 \alpha_t} \E_{X_t \sim Q_{t|y}} \sbrc{ \Tr\Big(\nabla^2 \log q_{t-1|y}(m_{t,y}(X_t))\nabla^2\log q_{t|y}(X_t)  \Big) } + (\log T) \eps^2\\
\mathcal{W}_{\text{bias}} &= \sum_{t=1}^T (1-\alpha_t) \E_{X_t \sim Q_{t|y}} \norm{\Delta_{t,y}(X_t)}^2\\
\mathcal{W}_{\text{vanish}} &= \sum_{t=1}^T \frac{1-\alpha_t}{\sqrt{\alpha_t}} \E_{X_t \sim Q_{t|y}} \bigg[(\nabla \log q_{t-1|y}(m_{t,y}(X_t)) - \sqrt{\alpha_t} \nabla \log q_{t|y}(X_t))^\T \Delta_{t,y}(X_t)\bigg]\\
    &- \sum_{t=1}^T \frac{(1-\alpha_t)^2}{2 \alpha_t} \E_{X_t \sim Q_{t|y}}\sbrc{\Delta_{t,y}(X_t)^\T \nabla^2 \log q_{t-1|y}(m_{t,y}(X_t)) \Delta_{t,y}(X_t) } \\
    &+ \sum_{t=1}^T \frac{(1-\alpha_t)^2}{3! {\alpha_t}^{3/2}} \E_{X_t \sim Q_{t|y}} \bigg[ 3 \sum_{i=1}^d \partial^3_{iii} \log q_{t-1|y}(m_{t,y}(X_t)) \Delta_{t,y}(X_t)^i \\
    &\qquad + \sum_{\substack{i,j=1 \\ i\neq j}}^d \partial^3_{iij} \log q_{t-1|y}(m_{t,y}(X_t)) \Delta_{t,y}(X_t)^j \bigg]\\
    &+ \max_{t \geq 1}\sqrt{\E_{X_t \sim Q_{t|y}} \norm{\Delta_{t,y}(X_t)}^2} (\log T) \eps.
\end{align*}

We first consider the estimation error (in both $\mathcal{W}_{\text{oracle}}$ and $\mathcal{W}_{\text{vanish}}$), which can be upper-bounded as 
\[ \max_{t \geq 1}\sqrt{\E_{X_t \sim Q_{t|y}} \norm{\Delta_{t,y}(X_t)}^2} (\log T) \eps + (\log T) \eps^2 \lesssim \brc{\norm{H^\dagger y - H^\dagger H \mu_0}^2+d}^{\frac{1}{2}} (\log T) \eps \] from \Cref{lem:norm2_bd_gauss}. 
Also, when $Q_{t|y}$ is Gaussian, we can calculate, for any $x_t \in \mbR^d$,
\begin{align*}
    &\Tr\Big(\nabla^2 \log q_{t-1|y}(m_{t,y}(x_t))\nabla^2\log q_{t|y}(x_t) \Big) = \Tr(\Sigma_{t-1|y}^{-1} \Sigma_{t|y}^{-1})\\
    &= \Tr(\Sigma_{t-1|y}^{-1} (\alpha_t \Sigma_{t-1|y} + (1-\alpha_t) I_d)^{-1}) \\
    &\stackrel{(i)}{=} \Tr(\Sigma_{t-1|y}^{-1} (\alpha_t^{-1} \Sigma_{t-1|y}^{-1} - \frac{1-\alpha_t}{\alpha_t^2} \Sigma_{t-1|y}^{-2} + O((1-\alpha_t)^2)))\\
    &\lesssim \frac{1}{\alpha_t} \Tr(\Sigma_{t-1|y}^{-2}) 
\end{align*}
where $(i)$ follows from Taylor expansion when $1-\alpha_t$ is small. Using \eqref{eq:gauss_cov_inv_norm_upper}, this implies that
\[  \mathcal{W}_{\text{oracle}} \lesssim \frac{d (\log T)^2}{T} + (\log T) \eps^2. \]
Also, from the condition on $\alpha_t$,
\[ \sum_{t=1}^T (1-\alpha_t) \E_{X_t \sim Q_{t|y}} \norm{\Delta_{t,y}(X_t)}^2 \lesssim (\norm{H^\dagger y - H^\dagger H \mu_0}^2+d). \]

Now we focus on $\mathcal{W}_{\text{vanish}}$ (except the estimation error).
Since $Q_{t|y}$ is Gaussian, all third-order partial derivatives are zero, and only the first two terms in $\mathcal{W}_{\text{vanish}}$ remain.
In the following we fix $t \geq 1$. Also recall from \eqref{eq:delta_y_gauss} that when $H = \begin{pmatrix} I_p & 0 \end{pmatrix}$,
\begin{align*}
    \Delta_{t,y} &= - \Bar{\alpha}_t (I_d - H^\dagger H) \Sigma_{t,sig}^{-1} (I_d - H^\dagger H) \Sigma_{0} (H^\dagger H) \Sigma_t^{-1} (x_t - \sqrt{\Bar{\alpha}_t} \mu_{0}) \\
    &= -\Bar{\alpha}_t (\Bar{\alpha}_t [\Sigma_{0}]_{\Bar{y} \Bar{y}} + (1-\Bar{\alpha}_t) I_{d-p})^{-1} [\Sigma_0]_{\Bar{y} y} [\Sigma_t^{-1}]_{y:} (x_t - \sqrt{\Bar{\alpha}_t} \mu_{0}).
\end{align*}

For the first term of $\mathcal{W}_{\text{vanish}}$, we first calculate for each $x_t$ that
\begin{align*}
    &\nabla \log q_{t-1|y}(m_{t,y}) - \sqrt{\alpha_t} \nabla \log q_{t|y}(x_t)\\
    &= \sqrt{\alpha_t} \Sigma_{t|y}^{-1} (x_t - \sqrt{\Bar{\alpha}_t} \mu_{0|y}) - \Sigma_{t-1|y}^{-1} (m_{t,y} - \sqrt{\Bar{\alpha}_{t-1}} \mu_{0|y})
\end{align*}
Recall that 
\begin{align*}
    m_{t,y} &= \frac{1}{\sqrt{\alpha_t}} x_t + \frac{1-\alpha_t}{\sqrt{\alpha_t}} \nabla \log q_{t|y}(x_t) = \frac{1}{\sqrt{\alpha_t}} x_t - \frac{1-\alpha_t}{\sqrt{\alpha_t}} \Sigma_{t|y}^{-1}(x_t - \sqrt{\Bar{\alpha}_t} \mu_{0|y}),\\
    \Sigma_{t|y}^{-1} &= (\alpha_t \Sigma_{t-1|y} + (1-\alpha_t) I_d)^{-1} = \frac{1}{\alpha_t} \Sigma_{t-1|y}^{-1} - \frac{1-\alpha_t}{\alpha_t^2} \Sigma_{t-1|y}^{-2} + O((1-\alpha_t)^2). 
\end{align*}
Thus,
\begin{align*}
    &\nabla \log q_{t-1|y}(m_{t,y}) - \sqrt{\alpha_t} \nabla \log q_{t|y}(x_t)\\
    &= \sqrt{\alpha_t} \brc{ \frac{1}{\alpha_t} \Sigma_{t-1|y}^{-1} - \frac{1-\alpha_t}{\alpha_t^2} \Sigma_{t-1|y}^{-2} } (x_t - \sqrt{\Bar{\alpha}_t} \mu_{0|y}) \\
    &\qquad - \Sigma_{t-1|y}^{-1} \brc{\frac{1}{\sqrt{\alpha_t}} x_t + \frac{1-\alpha_t}{\sqrt{\alpha_t}} \nabla \log q_{t|y}(x_t) - \sqrt{\Bar{\alpha}_{t-1}} \mu_{0|y}} + O((1-\alpha_t)^2)\\
    &= - \frac{1-\alpha_t}{\alpha_t^{3/2}} \Sigma_{t-1|y}^{-2} (x_t - \sqrt{\Bar{\alpha}_t} \mu_{0|y}) + \frac{1-\alpha_t}{\sqrt{\alpha_t}} \Sigma_{t-1|y}^{-1} \Sigma_{t|y}^{-1}(x_t - \sqrt{\Bar{\alpha}_t} \mu_{0|y}) + O((1-\alpha_t)^2).
\end{align*}
Combining with the definition for $\Delta_{t,y}$ in \eqref{eq:delta_y_gauss} and using \Cref{lem:gauss_cond_var_proj}, we have
\begin{align*}
    &\E_{X_t \sim Q_{t|y}} \sbrc{\Delta_{t,y}(X_t)^\T (\nabla \log q_{t-1|y}(m_{t,y}(X_t)) - \sqrt{\alpha_t} \nabla \log q_{t|y}(X_t))} \\
    &= \Bar{\alpha}_t \E_{X_t \sim Q_{t|y}} \bigg[ (X_t - \sqrt{\Bar{\alpha}_t} \mu_{0})^\T \Sigma_t^{-1} (H^\dagger H) \Sigma_{0} (I_d - H^\dagger H) \Sigma_{t,sig}^{-1} (I_d - H^\dagger H) \\
    &\qquad \brc{ \frac{1-\alpha_t}{\alpha_t^{3/2}} \Sigma_{t-1|y}^{-2} - \frac{1-\alpha_t}{\sqrt{\alpha_t}} \Sigma_{t-1|y}^{-1} \Sigma_{t|y}^{-1} } (X_t - \sqrt{\Bar{\alpha}_t} \mu_{0|y}) \bigg] + O((1-\alpha_t)^2)\\
    &\stackrel{(ii)}{=} \Bar{\alpha}_t \E_{X_t \sim Q_{t|y}} \bigg[ (X_t - \sqrt{\Bar{\alpha}_t} \mu_{0})^\T \Sigma_t^{-1} (H^\dagger H) \Sigma_{0} (I_d - H^\dagger H) \Sigma_{t,sig}^{-1} (I_d - H^\dagger H) \\
    &\qquad \brc{ \frac{1-\alpha_t}{\alpha_t^{3/2}} \Sigma_{t-1|y}^{-1} (I_d - H^\dagger H) \Sigma_{t-1|y}^{-1} - \frac{1-\alpha_t}{\sqrt{\alpha_t}} \Sigma_{t-1|y}^{-1} (I_d - H^\dagger H) \Sigma_{t|y}^{-1} } \\
    &\qquad (I_d - H^\dagger H) (X_t - \sqrt{\Bar{\alpha}_t} \mu_{0}) \bigg] + O((1-\alpha_t)^2)\\
    &= \Bar{\alpha}_t \E_{X_t \sim Q_{t|y}} \bigg[ (X_t - \sqrt{\Bar{\alpha}_t} \mu_{0})^\T [\Sigma_t^{-1}]_{:y} [\Sigma_0]_{y \Bar{y}} (\Bar{\alpha}_t [\Sigma_{0}]_{\Bar{y} \Bar{y}} + (1-\Bar{\alpha}_t) I_{d-p})^{-1} \\
    &\qquad \brc{ \frac{1-\alpha_t}{\alpha_t^{3/2}} [\Sigma_{t-1|y}^{-1}]_{\Bar{y} \Bar{y}}^2 - \frac{1-\alpha_t}{\sqrt{\alpha_t}} [\Sigma_{t-1|y}^{-1}]_{\Bar{y} \Bar{y}} [\Sigma_{t|y}^{-1}]_{\Bar{y} \Bar{y}} } \begin{pmatrix}0 & I_{d-p}\end{pmatrix} (X_t - \sqrt{\Bar{\alpha}_t} \mu_{0}) \bigg] \\
    &\qquad + O((1-\alpha_t)^2)\\
    &= \Bar{\alpha}_t \Tr\bigg( [\Sigma_t^{-1}]_{:y} [\Sigma_0]_{y \Bar{y}} (\Bar{\alpha}_t [\Sigma_{0}]_{\Bar{y} \Bar{y}} + (1-\Bar{\alpha}_t) I_{d-p})^{-1} \brc{ \frac{1-\alpha_t}{\alpha_t^{3/2}} [\Sigma_{t-1|y}^{-1}]_{\Bar{y} \Bar{y}}^2 - \frac{1-\alpha_t}{\sqrt{\alpha_t}} [\Sigma_{t-1|y}^{-1}]_{\Bar{y} \Bar{y}} [\Sigma_{t|y}^{-1}]_{\Bar{y} \Bar{y}} } \\
    &\qquad \begin{pmatrix}0 & I_{d-p}\end{pmatrix} \E_{X_t \sim Q_{t|y}} \sbrc{(X_t - \sqrt{\Bar{\alpha}_t} \mu_{0}) (X_t - \sqrt{\Bar{\alpha}_t} \mu_{0})^\T } \bigg) \\
    &\qquad + O((1-\alpha_t)^2)\\
    &\stackrel{(iii)}{\leq} \norm{[\Sigma_t^{-1}]_{:y} } \norm{[\Sigma_0]_{y \Bar{y}} } \norm{(\Bar{\alpha}_t [\Sigma_{0}]_{\Bar{y} \Bar{y}} + (1-\Bar{\alpha}_t) I_{d-p})^{-1} } \times\\
    &\qquad \brc{ \frac{1-\alpha_t}{\alpha_t^{3/2}} \norm{[\Sigma_{t-1|y}^{-1}]_{\Bar{y} \Bar{y}} }^2 + \frac{1-\alpha_t}{\sqrt{\alpha_t}} \norm{[\Sigma_{t-1|y}^{-1}]_{\Bar{y} \Bar{y}} } \norm{[\Sigma_{t|y}^{-1}]_{\Bar{y} \Bar{y}} } }  \times \\
    &\qquad \Tr\brc{\E_{X_t \sim Q_{t|y}} \sbrc{(X_t - \sqrt{\Bar{\alpha}_t} \mu_{0}) (X_t - \sqrt{\Bar{\alpha}_t} \mu_{0})^\T }}\\
    &\stackrel{(iv)}{\lesssim} \brc{\frac{1-\alpha_t}{\alpha_t^{3/2}} + \frac{1-\alpha_t}{\sqrt{\alpha_t}} } \max \cbrc{\norm{H^\dagger y - H^\dagger H \mu_0}^2 + d (\lambda_1 + \sigma_y^2), d } \\
    &\lesssim (1-\alpha_t) \brc{ \norm{H^\dagger y - H^\dagger H \mu_0}^2 + d }
\end{align*}
where $(ii)$ follows by \Cref{lem:gauss_cond_var_proj} and from definition that $(I_d - H^\dagger H) \mu_{0|y} = (I_d - H^\dagger H) \mu_{0}$, $(iii)$ follows because $\abs{\Tr(U V)} \leq \norm{U} \Tr(V)$ if $V$ is positive semi-definite, and $(iv)$ follows from \eqref{eq:gauss_mismatch_norm_2} and the same reasons for \eqref{eq:gauss_mismatch_2norm_bound}. In particular, we note that $[\Sigma_{t|y}^{-1}]_{\Bar{y} \Bar{y}} = [\Sigma_{t,sig}^{-1}]_{\Bar{y} \Bar{y}} = (\Bar{\alpha}_t [\Sigma_{0}]_{\Bar{y} \Bar{y}} + (1-\Bar{\alpha}_t) I_{d-p})^{-1}$ and $\norm{(\Bar{\alpha}_t [\Sigma_{0}]_{\Bar{y} \Bar{y}} + (1-\Bar{\alpha}_t) I_{d-p})^{-1}} \leq \frac{1}{\min\{\Tilde{\lambda}_{d-p}, 1\}} < \infty$.

For the second term of $\mathcal{W}_{\text{vanish}}$, we use the fact that $[\Sigma_{t-1|y}^{-1}]_{\Bar{y}\Bar{y}} = (\Bar{\alpha}_{t-1} [\Sigma_{0}]_{\Bar{y} \Bar{y}} + (1-\Bar{\alpha}_{t-1}) I_{d-p})^{-1}$ and have
\begin{align*}
    &-\E_{X_t \sim Q_{t|y}} \Delta_{t,y}(X_t)^\T \nabla^2 \log q_{t-1|y}(m_{t,y}(X_t)) \Delta_{t,y}(X_t)\\
    &= \Bar{\alpha}_t^2 \E_{X_t \sim Q_{t|y}} \bigg[ (X_t - \sqrt{\Bar{\alpha}_t} \mu_{0})^\T \Sigma_t^{-1} (H^\dagger H) \Sigma_{0} (I_d - H^\dagger H) \Sigma_{t,sig}^{-1} (I_d - H^\dagger H) \Sigma_{t-1|y}^{-1} \\
    &\qquad (I_d - H^\dagger H) \Sigma_{t,sig}^{-1} (I_d - H^\dagger H) \Sigma_{0} (H^\dagger H) \Sigma_t^{-1} (X_t - \sqrt{\Bar{\alpha}_t} \mu_{0}) \bigg] \\
    &= \Bar{\alpha}_t^2 \Tr\bigg([\Sigma_t^{-1}]_{:y} [\Sigma_0]_{y \Bar{y}}(\Bar{\alpha}_t [\Sigma_{0}]_{\Bar{y} \Bar{y}} + (1-\Bar{\alpha}_t) I_{d-p})^{-1} (\Bar{\alpha}_{t-1} [\Sigma_{0}]_{\Bar{y} \Bar{y}} + (1-\Bar{\alpha}_{t-1}) I_{d-p})^{-1}\\
    &\qquad (\Bar{\alpha}_t [\Sigma_{0}]_{\Bar{y} \Bar{y}} + (1-\Bar{\alpha}_t) I_{d-p})^{-1} [\Sigma_0]_{\Bar{y} y} [\Sigma_t^{-1}]_{y:} \E_{X_t \sim Q_{t|y}} (X_t - \sqrt{\Bar{\alpha}_t} \mu_{0}) (X_t - \sqrt{\Bar{\alpha}_t} \mu_{0})^\T \bigg)\\
    &\stackrel{(v)}{\leq} \max\cbrc{\norm{H^\dagger y - H^\dagger H \mu_0}^2 + d (\lambda_1 + \sigma_y^2), d } \norm{(\Bar{\alpha}_t [\Sigma_{0}]_{\Bar{y} \Bar{y}} + (1-\Bar{\alpha}_t) I_{d-p})^{-1}}^2 \\
    &\qquad \norm{(\Bar{\alpha}_{t-1} [\Sigma_{0}]_{\Bar{y} \Bar{y}} + (1-\Bar{\alpha}_{t-1}) I_{d-p})^{-1}} \norm{\Sigma_t^{-1}}^2 \norm{[\Sigma_0]_{y\Bar{y}} [\Sigma_0]_{\Bar{y} y}}\\
    &\lesssim \max \cbrc{\norm{H^\dagger y - H^\dagger H \mu_0}^2 + d (\lambda_1 + \sigma_y^2), d } \\
    &\lesssim \norm{H^\dagger y - H^\dagger H \mu_0}^2 + d.
\end{align*}
Here $(v)$ follows from the fact that $\abs{\Tr(U V)} \leq \norm{U} \Tr(V)$ if $V$ is positive semi-definite and \eqref{eq:gauss_mismatch_norm_2}.
The proof is complete by plugging all the results above into \Cref{thm:main-general}.

\begin{remark} \label{rmk:gauss_kl_genli}
Before we end the proof, we leave a note for the case of $\sigma_y^2 = 0$ (indeed, for any general $\sigma_y^2 \geq 0$). The only difference is how to upper-bound $\mathcal{W}_{\text{oracle}}$. In particular, if $\sigma_y^2 = 0$, \eqref{eq:gauss_cov_inv_norm_upper} no longer holds (i.e., we can no longer upper-bound $\norm{\Sigma_{t-1|y}^{-1}}$ as a constant). Instead, we can obtain an upper bound as $\norm{\Sigma_{t-1|y}^{-1}} \lesssim (1-\Bar{\alpha}_{t-1})^{-1}$. Then, with the $\alpha_t$ in \eqref{eq:alpha_genli}, we have
\[  \mathcal{W}_{\text{oracle}} \lesssim \frac{d (\log T)^2 \log(1/\delta)^2 }{T} + (\log T) \eps^2. \]
The rest of the proof still follows because the $\alpha_t$ satisfies \Cref{def:noise_smooth} when $t \geq 2$.
Combining with \Cref{lem:nonvanish_coef_sum_genli}, we would finally obtain
\begin{align*}
    &\KL{Q_{1|y}}{\widehat{P}_{1|y}} \lesssim (\norm{H^\dagger y - H^\dagger H \mu_0}^2 + d) \brc{1-\frac{3 \log(1/\delta) \log T}{T}} \\
    &\qquad + (\norm{H^\dagger y - H^\dagger H \mu_0}^2 + d) \frac{(\log T)^2 \log(1/\delta)^2}{T} + \sqrt{\norm{H^\dagger y - H^\dagger H \mu_0}^2 + d} \cdot (\log T) \eps.
\end{align*}
Here $\rmW_2(Q_{1|y}, Q_{0|y})^2 \lesssim \delta d$.
\end{remark}

\subsection{Proof of \texorpdfstring{\Cref{lem:norm2_bd_gauss_mix}}{Proposition 1}}
\label{app:proof_lem_norm2_bd_gauss_mix}

We first introduce some useful notations for this subsection.
Recall that $Q_{0}$ has mixture p.d.f. in which the mixture prior $\pi_n$ is independent of $y\ (= H x_0 + n)$. Thus, using the fact that $x_0 = (I_d - H^\dagger H) x_0 + H^\dagger y - H^\dagger n$, we can define $Q_{0,n|y}$ as (cf. \cite{gauss-mix-char})
\begin{align*}
    Q_{0|y} &= \sum_{n=1}^N \pi_n Q_{0,n|y} \\
    &:= \sum_{n=1}^N \pi_n \calN((I_d - H^\dagger H) \mu_{0,n} + H^\dagger y, (I_d - H^\dagger H) \Sigma_0 (I_d - H^\dagger H) + \sigma_y^2 H^\dagger (H^\dagger)^\T).
\end{align*}
Note that when $H = \begin{pmatrix} I_p & 0 \end{pmatrix}$ and $\sigma_y^2 > 0$, $q_{0|y}$ exists.
From the conditional forward model in \eqref{eq:def_cond_fwd2}, we further define
\begin{align} \label{eq:gauss_mix_all_notations}
    Q_{t} &= \sum_{n=1}^N \pi_n Q_{t,n},~~ Q_{t|y} = \sum_{n=1}^N \pi_n Q_{t,n|y},~~ Q_{t,n} := \calN(\mu_{t,n}, \Sigma_{t}), ~~ Q_{t,n|y} := \calN(\mu_{t,n|y}, \Sigma_{t|y}) \nonumber
    \\
    \mu_{t,n} &:= \sqrt{\Bar{\alpha}_t} \mu_{0,n},~~  \Sigma_t := \Bar{\alpha}_t \Sigma_{0} + (1-\Bar{\alpha}_t) I_d,~~ \mu_{t,n|y} := \sqrt{\Bar{\alpha}_t} (I_d - H^\dagger H) \mu_{0,n} + \sqrt{\Bar{\alpha}_t} H^\dagger y \nonumber \\
    \Sigma_{t|y} &:= \Sigma_{t,sig} + \Bar{\alpha}_t \sigma_y^2 H^\dagger (H^\dagger)^\T,~~ \Sigma_{t,sig} := \Bar{\alpha}_t (I_d - H^\dagger H) \Sigma_{0} (I_d - H^\dagger H) + (1-\Bar{\alpha}_t) I_d.
\end{align}
Similar to \eqref{eq:gauss_cov_inv_norm_upper}, we still have
\[ \norm{\Sigma_{t-1|y}^{-1}} \lesssim 1,\quad \forall t \geq 1. \]
We can also calculate the scores of $Q_t$ and $Q_{t|y}$ in as follows.
\begin{align} \label{eq:gauss_mix_scores}
    \nabla \log q_t(x_t) &= - \frac{1}{q_t(x_t)} \sum_{n=1}^N \pi_n q_{t,n}(x_t) \Sigma_{t}^{-1} (x_t - \mu_{t,n}), \nonumber\\
    \nabla \log q_{t|y}(x_t) &= - \frac{1}{q_{t|y}(x_t)} \sum_{n=1}^N \pi_n q_{t,n|y}(x_t) \Sigma_{t|y}^{-1} (x_t - \mu_{t,n|y}).
\end{align}


Now, with $f_{t,y} = f_{t,y}^*$ (in \eqref{eq:def_fy_star}), from the expression of $\Delta_{t,y}$ in \eqref{eq:delta_using_fy}, under the assumption $H = \begin{pmatrix} I_p & 0 \end{pmatrix}$, the score mismatch at each diffusion step is equal to
\begin{align} \label{eq:delta_gauss_mix}
    \Delta_{t,y} &= (I_d-H^\dagger H) (\nabla \log q_{t|y} - \nabla \log q_t) \nonumber\\
    &= \frac{1}{q_t(x_t)} \sum_{n=1}^N \pi_n q_{t,n}(x_t) (I_d-H^\dagger H) \Sigma_{t}^{-1} (x_t - \sqrt{\Bar{\alpha}_t} \mu_{0,n}) \nonumber \\
    &\quad - \frac{1}{q_{t|y}(x_t)} \sum_{n=1}^N \pi_n q_{t,n|y}(x_t) (I_d-H^\dagger H) \Sigma_{t|y}^{-1} (x_t - \sqrt{\Bar{\alpha}_t} \mu_{0,n|y}) \nonumber\\
    &= \sum_{n=1}^N \pi_n \brc{ \frac{q_{t,n}(x_t)}{q_t(x_t)} - \frac{q_{t,n|y}(x_t)}{q_{t|y}(x_t)}} (I_d-H^\dagger H) \Sigma_{t}^{-1} (x_t - \sqrt{\Bar{\alpha}_t} \mu_{0,n}) \nonumber\\
    &\quad + \sum_{n=1}^N \pi_n \frac{q_{t,n|y}(x_t)}{q_{t|y}(x_t)}  (I_d-H^\dagger H) \brc{\Sigma_{t}^{-1} (x_t - \sqrt{\Bar{\alpha}_t} \mu_{0,n}) - \Sigma_{t|y}^{-1} (x_t - \sqrt{\Bar{\alpha}_t} \mu_{0,n|y})} \nonumber\\
    &= - \sqrt{\Bar{\alpha}_t} \sum_{n=1}^N \pi_n \brc{ \frac{q_{t,n}(x_t)}{q_t(x_t)} - \frac{q_{t,n|y}(x_t)}{q_{t|y}(x_t)}} (I_d-H^\dagger H) \Sigma_{t}^{-1}  \mu_{0,n} \nonumber\\
    &\quad + \sum_{n=1}^N \pi_n \frac{q_{t,n|y}(x_t)}{q_{t|y}(x_t)}  (I_d-H^\dagger H) \brc{\Sigma_{t}^{-1} (x_t - \sqrt{\Bar{\alpha}_t} \mu_{0,n}) - \Sigma_{t|y}^{-1} (x_t - \sqrt{\Bar{\alpha}_t} \mu_{0,n|y})} \nonumber\\
    &\stackrel{(i)}{=} - \sqrt{\Bar{\alpha}_t} \sum_{n=1}^N \pi_n \brc{ \frac{q_{t,n}(x_t)}{q_t(x_t)} - \frac{q_{t,n|y}(x_t)}{q_{t|y}(x_t)}} (I_d-H^\dagger H) \Sigma_{t}^{-1}  \mu_{0,n} \nonumber\\
    &\quad - \Bar{\alpha}_t \sum_{n=1}^N \pi_n \frac{q_{t,n|y}(x_t)}{q_{t|y}(x_t)} A_t \Sigma_{0} (H^\dagger H) \Sigma_t^{-1} (x_t - \sqrt{\Bar{\alpha}_t} \mu_{0,n})
\end{align}
where $A_t := (I_d - H^\dagger H) \Sigma_{t,sig}^{-1} (I_d - H^\dagger H)$. Here $(i)$ follows from similar arguments as in \eqref{eq:delta_y_gauss}. Note that since $H = \begin{pmatrix} I_p & 0 \end{pmatrix}$, we have equivalently $A_t = (I_d - H^\dagger H) \Sigma_{t|y}^{-1} (I_d - H^\dagger H)$.
Since $H^\dagger H = \begin{pmatrix}
I_p & 0 \\
0 & 0 
\end{pmatrix}$, we can also re-express the second term in $\Delta_{t,y}$ such that $[\Delta_{t,y}]_{y} = 0$ and
\begin{align} \label{eq:delta_gauss_mix_type2}
    [\Delta_{t,y}]_{\Bar{y}} 
    &= - \sqrt{\Bar{\alpha}_t} \sum_{n=1}^N \pi_n \brc{ \frac{q_{t,n}(x_t)}{q_t(x_t)} - \frac{q_{t,n|y}(x_t)}{q_{t|y}(x_t)}} [\Sigma_{t}^{-1}]_{\Bar{y} :} \mu_{0,n} \nonumber \\
    &\quad - \Bar{\alpha}_t \sum_{n=1}^N \pi_n \frac{q_{t,n|y}(x_t)}{q_{t|y}(x_t)}  (\Bar{\alpha}_t [\Sigma_{0}]_{\Bar{y} \Bar{y}} + (1-\Bar{\alpha}_t) I_{d-p})^{-1} [\Sigma_0]_{\Bar{y} y} [\Sigma_t^{-1}]_{y:} (x_t - \sqrt{\Bar{\alpha}_t} \mu_{0,n})
\end{align}
since when $H^\dagger H = \begin{pmatrix} I_p & 0 \\ 0 & 0 \end{pmatrix}$, $A_t = \begin{pmatrix} 0 & 0 \\ 0 & (\Bar{\alpha}_t [\Sigma_{0}]_{\Bar{y} \Bar{y}} + (1-\Bar{\alpha}_t) I_{d-p})^{-1} \end{pmatrix}$.

Now, for the second moment, we follow similar analyses in \eqref{eq:gauss_mismatch_2norm_bound} and get
\begin{align*}
    &\E_{X_t \sim Q_{t|y}} \norm{\Delta_{t,y}}^2 \\
    &\leq 4 \Bar{\alpha}_t \E_{X_t \sim Q_{t|y}} \max_{n \in [N]} \norm{\Sigma_{t}^{-1}  \mu_{0,n}}^2 \\
    &\quad + 2 \Bar{\alpha}_t^2 \norm{(\Bar{\alpha}_t [\Sigma_{0}]_{\Bar{y} \Bar{y}} + (1-\Bar{\alpha}_t) I_{d-p})^{-1}}^2 \norm{[\Sigma_t^{-1}]_{y:}}^2  \norm{[\Sigma_0]_{y \Bar{y}}}^2 \times\\
    &\qquad \quad \E_{X_t \sim Q_{t|y}} \sbrc{ \sum_{n=1}^N \pi_n \frac{q_{t,n|y}(X_t)}{q_{t|y}(X_t)} \norm{X_t - \sqrt{\Bar{\alpha}_t} \mu_{0,n}}^2 }
\end{align*}
where
\begin{align*}
    &\E_{X_t \sim Q_{t|y}} \sbrc{ \sum_{n=1}^N \pi_n \frac{q_{t,n|y}(X_t)}{q_{t|y}(X_t)} \norm{X_t - \sqrt{\Bar{\alpha}_t} \mu_{0,n}}^2 } \\
    &= \E_{X_t \sim Q_{t|y}} \E_{N \sim \Pi_{\cdot|t,y}} \norm{X_t - \sqrt{\Bar{\alpha}_t} \mu_{0,N}}^2\\
    &= \E_{N \sim \Pi_{\cdot|y}} \E_{X_t \sim Q_{t,N|y}} \norm{X_t - \sqrt{\Bar{\alpha}_t} \mu_{0,N}}^2\\
    &\stackrel{(ii)}{=} \E_{N \sim \Pi_{\cdot|y}} \sbrc{ \Tr( \Sigma_{t|y}) + \Bar{\alpha}_t \norm{H^\dagger y - H^\dagger H \mu_{0,N}}^2 }\\
    &= \Tr( \Sigma_{t|y}) + \Bar{\alpha}_t \sum_{n=1}^N \pi_n \norm{H^\dagger y - H^\dagger H \mu_{0,n}}^2
\end{align*}
where $(ii)$ follows from \eqref{eq:gauss_mismatch_norm_2} and note that $Q_{t,N|y}$ is Gaussian for each $N = n$.
Denote $\lambda_1 \geq \dots \geq \lambda_{d} > 0$ and $\Tilde{\lambda}_1 \geq \dots \geq \Tilde{\lambda}_{d-p} > 0$ to be the eigenvalues of $\Sigma_{0}$ and $[\Sigma_{0}]_{\Bar{y} \Bar{y}}$, respectively. Similarly as the proof of \Cref{lem:norm2_bd_gauss}, we have $\norm{[\Sigma_0]_{\Bar{y} y}} \leq \norm{\Sigma_0} = \lambda_1$, $\norm{[\Sigma_t^{-1}]_{y:}} \leq \norm{\Sigma_{t}^{-1}} \leq (\Bar{\alpha}_t \lambda_d + (1-\Bar{\alpha}_t))^{-1} \leq \frac{1}{\min\{\lambda_d, 1\}}$, $\norm{(\Bar{\alpha}_t [\Sigma_{0}]_{\Bar{y} \Bar{y}} + (1-\Bar{\alpha}_t) I_{d-p})^{-1}} \leq \frac{1}{\Bar{\alpha}_t \Tilde{\lambda}_{d-p} + (1-\Bar{\alpha}_t)} \leq \frac{1}{\min\{\Tilde{\lambda}_{d-p}, 1\}}$, and $\norm{\Sigma_{t|y}} \leq \Bar{\alpha}_t (\lambda_1 + \sigma_y^2) + (1-\Bar{\alpha}_t)$. 
Therefore,
\begin{align*}
    &\E_{X_t \sim Q_{t|y}} \norm{\Delta_{t,y}}^2 \\ &\lesssim \Bar{\alpha}_t d + \Bar{\alpha}_t^2 \frac{\norm{[\Sigma_0]_{y \Bar{y}}}^2}{\min\{\Tilde{\lambda}_{d-p}, 1\}^2 \min\{\lambda_d, 1\}^2 } \times \\
    &\qquad \brc{d  (1-\Bar{\alpha}_t) + \Bar{\alpha}_t d (\lambda_1 + \sigma_y^2) + \Bar{\alpha}_t \sum_{n=1}^N \pi_n \norm{H^\dagger y - H^\dagger H \mu_{0,n}}^2}\\
    &\lesssim \Bar{\alpha}_t d + \Bar{\alpha}_t^2 \frac{\norm{[\Sigma_0]_{y \Bar{y}}}^2}{\min\{\Tilde{\lambda}_{d-p}, 1\}^2 \min\{\lambda_d, 1\}^2 } \max\cbrc{d (\lambda_1 + \sigma_y^2) + \sum_{n=1}^N \pi_n \norm{H^\dagger y - H^\dagger H \mu_{0,n}}^2, d }.
\end{align*}
The proof is complete.

\subsection{Proof of \texorpdfstring{\Cref{thm:kl_bd_gauss_mix}}{Theorem 5}}
\label{app:thm5-proof}

We first recall all the notations in \eqref{eq:gauss_mix_all_notations} under Gaussian mixture.
We also recall the scores from \eqref{eq:gauss_mix_scores}:
\begin{align*}
    \nabla \log q_t(x_t) &= -\frac{1}{q_t(x_t)} \sum_{n=1}^N \pi_n q_{t,n}(x_t) \Sigma_{t}^{-1} (x_t - \mu_{t,n}) \nonumber\\
    &\qquad = - \Sigma_{t}^{-1} x_t + \frac{1}{q_t(x_t)} \sum_{n=1}^N \pi_n q_{t,n}(x_t) \Sigma_{t}^{-1} \mu_{t,n} \nonumber\\
    \nabla \log q_{t|y}(x_t) &= -\frac{1}{q_{t|y}(x_t)} \sum_{n=1}^N \pi_n q_{t,n|y}(x_t) \Sigma_{t|y}^{-1} (x_t - \mu_{t,n|y}) \nonumber\\
    &\qquad = - \Sigma_{t|y}^{-1} x_t + \frac{1}{q_{t|y}(x_t)} \sum_{n=1}^N \pi_n q_{t,n|y}(x_t) \Sigma_{t|y}^{-1} \mu_{t,n|y}
\end{align*}
Also, we recall the explicit expression of $\Delta_{t,y}$ from \eqref{eq:delta_gauss_mix_type2}, such that $[\Delta_{t,y}]_{y} = 0$ and
\begin{align*}
    [\Delta_{t,y}]_{\Bar{y}} &= - \sqrt{\Bar{\alpha}_t} \sum_{n=1}^N \pi_n \brc{ \frac{q_{t,n}(x_t)}{q_t(x_t)} - \frac{q_{t,n|y}(x_t)}{q_{t|y}(x_t)}}[\Sigma_{t}^{-1}]_{\Bar{y}:}  \mu_{0,n} \nonumber \\
    &\quad - \Bar{\alpha}_t \sum_{n=1}^N \pi_n \frac{q_{t,n|y}(x_t)}{q_{t|y}(x_t)}  (\Bar{\alpha}_t [\Sigma_{0}]_{\Bar{y} \Bar{y}} + (1-\Bar{\alpha}_t) I_{d-p})^{-1} [\Sigma_0]_{\Bar{y} y} [\Sigma_t^{-1}]_{y:} (x_t - \sqrt{\Bar{\alpha}_t} \mu_{0,n}).
\end{align*}

In order to invoke \Cref{thm:main-general}, we need to check \Cref{ass:regular-drv-general,ass:bdd-mismatch-general}. From \cite[Lemmas 13 and 14]{liang2024discrete}, since $\norm{\Sigma_{t|y}^{-1}} \lesssim 1$ for all $t \geq 0$, the absolute values of any-order partial derivative are bounded by $O(1)$ in expectation, and thus \Cref{ass:regular-drv-general} is satisfied. 
The following lemma verifies \Cref{ass:bdd-mismatch-general} using the $\alpha_t$ in \Cref{def:noise_smooth}.


\begin{lemma} \label{lem:bdd-mismatch-gauss-mix}
Under the same condition of \Cref{thm:kl_bd_gauss_mix}, \Cref{ass:bdd-mismatch-general} holds if the $\alpha_t$ satisfies \Cref{def:noise_smooth}.
\end{lemma}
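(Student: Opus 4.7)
The plan is to verify Assumption~\ref{ass:bdd-mismatch-general} in the form $\E_{X_t \sim Q_{t|y}} \norm{\Delta_{t,y}(X_t)}^\ell = O(\bar{\alpha}_t)$ for every $\ell \geq 2$ by leveraging the explicit decomposition of $\Delta_{t,y}$ derived in the proof of \Cref{lem:norm2_bd_gauss_mix}. Recall from \eqref{eq:delta_gauss_mix_type2} that $[\Delta_{t,y}]_y = 0$ and $[\Delta_{t,y}]_{\bar y} = A(x_t) + B(x_t)$, where $A$ carries the prefactor $\sqrt{\bar{\alpha}_t}$ and is a difference of two convex combinations of the vectors $[\Sigma_t^{-1}]_{\bar{y}:} \mu_{0,n}$, and $B$ carries the prefactor $\bar{\alpha}_t$ and is a convex combination (with weights $\pi_n q_{t,n|y}/q_{t|y}$) of the vectors $(\bar{\alpha}_t [\Sigma_0]_{\bar y \bar y} + (1-\bar{\alpha}_t) I_{d-p})^{-1} [\Sigma_0]_{\bar y y} [\Sigma_t^{-1}]_{y:}(x_t - \sqrt{\bar{\alpha}_t} \mu_{0,n})$. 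By the triangle inequality $\norm{\Delta_{t,y}}^\ell \leq 2^{\ell-1}(\norm{A}^\ell + \norm{B}^\ell)$, so it suffices to control each piece separately.

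For the first piece, since $\{\pi_n q_{t,n}/q_t\}$ and $\{\pi_n q_{t,n|y}/q_{t|y}\}$ are each probability vectors, the triangle inequality yields the deterministic bound $\norm{A(x_t)} \leq 2 \sqrt{\bar{\alpha}_t}\, \max_n \norm{[\Sigma_t^{-1}]_{\bar y:} \mu_{0,n}}$. The key point is that $\lambda_{\min}(\Sigma_t) = \bar{\alpha}_t \lambda_d + (1-\bar{\alpha}_t) \geq \min\{\lambda_d,1\}$ is bounded away from zero uniformly in $t$, since $\lambda_d>0$ for the assumed PSD $\Sigma_0$, so $\norm{\Sigma_t^{-1}} = O(1)$ and the $\mu_{0,n}$ are fixed vectors. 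Thus $\norm{A(x_t)}^\ell \lesssim \bar{\alpha}_t^{\ell/2}\leq \bar{\alpha}_t$ for every $\ell \geq 2$, and this pointwise bound transfers to the expectation trivially.

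For the second piece, I use Jensen's inequality on the convex combination defining $B$: if $w_n(x_t) := \pi_n q_{t,n|y}(x_t)/q_{t|y}(x_t)$ then $\norm{B(x_t)}^\ell \leq \bar{\alpha}_t^\ell \cdot C^\ell \cdot \sum_n w_n(x_t) \norm{x_t - \sqrt{\bar{\alpha}_t}\mu_{0,n}}^\ell$, where $C$ absorbs the constant bounds on $\norm{(\bar{\alpha}_t [\Sigma_0]_{\bar y\bar y}+(1-\bar{\alpha}_t)I_{d-p})^{-1}}$, $\norm{[\Sigma_0]_{\bar y y}}$, and $\norm{[\Sigma_t^{-1}]_{y:}}$ (all uniformly bounded by $\min\{\tilde\lambda_{d-p},1\}^{-1}$, $\lambda_1$, $\min\{\lambda_d,1\}^{-1}$ respectively). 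Taking expectation under $Q_{t|y} = \sum_n \pi_n Q_{t,n|y}$ and observing that $w_n \cdot q_{t|y} = \pi_n q_{t,n|y}$ converts the mixture weighting into a clean disintegration:
\[
\E_{Q_{t|y}} \norm{B}^\ell \leq \bar{\alpha}_t^\ell C^\ell \sum_{n=1}^N \pi_n \E_{Q_{t,n|y}} \norm{X_t - \sqrt{\bar{\alpha}_t}\mu_{0,n}}^\ell.
\]
Since $Q_{t,n|y}$ is Gaussian with mean $\sqrt{\bar{\alpha}_t}(I_d - H^\dagger H)\mu_{0,n} + \sqrt{\bar{\alpha}_t} H^\dagger y$ and covariance $\Sigma_{t|y}$ whose spectral norm is bounded uniformly in $t$, the inner $\ell$-th Gaussian moment is $O(1)$ (depending on $d$, $y$, $\mu_{0,n}$, $\sigma_y^2$ but not on $t$). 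Hence $\E_{Q_{t|y}}\norm{B}^\ell = O(\bar{\alpha}_t^\ell) = O(\bar{\alpha}_t)$ for $\ell \geq 2$, and combining with the bound on $A$ finishes the proof.

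The main conceptual obstacle is purely organizational: keeping track that every operator norm appearing in the two terms admits a constant upper bound independent of $t$ (because $\lambda_d, \tilde\lambda_{d-p} > 0$ and $\Sigma_{t|y}$ interpolates between $\Sigma_0$ and $I_d$ with possibly a $\sigma_y^2$ contribution on $\range(H^\dagger H)$), and then ensuring Jensen's inequality is applied to the \emph{second} summand before taking expectations — a direct triangle-inequality bound on $\norm{B}$ would lose the structural cancellation that the weighted-average representation provides, whereas the $\sqrt{\bar{\alpha}_t}$ prefactor on $A$ already makes the triangle bound suffice for every $\ell \geq 2$. The noise-schedule condition in \Cref{def:noise_smooth} plays no role beyond guaranteeing the existence of $\tilde q_{t-1}$; the rate is driven entirely by the geometric factors $\sqrt{\bar{\alpha}_t}$ and $\bar{\alpha}_t$ already present in \eqref{eq:delta_gauss_mix_type2}.
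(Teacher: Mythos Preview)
Your proposal is correct and follows essentially the same route as the paper's proof: both split $\Delta_{t,y}$ via \eqref{eq:delta_gauss_mix}/\eqref{eq:delta_gauss_mix_type2} into the $\sqrt{\bar\alpha_t}$-term (bounded pointwise by $2\sqrt{\bar\alpha_t}\max_n\norm{\Sigma_t^{-1}\mu_{0,n}}$) and the $\bar\alpha_t$-term (handled by recognizing the weights $\pi_n q_{t,n|y}/q_{t|y}$ as a conditional law on the mixture index, then reducing to a Gaussian $\ell$-th moment under each $Q_{t,n|y}$). The paper phrases the second step as $\E_{X_t\sim Q_{t|y},\,N\sim\Pi_{\cdot|t,y}}[\cdot]=\E_{N\sim\Pi,\,X_t\sim Q_{t,N|y}}[\cdot]$ rather than Jensen, but this is the same disintegration you use.
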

\begin{proof}
See \Cref{app:proof-lem-bdd-mismatch-gauss-mix}.
\end{proof}




Now we start to upper-bound the conditional KL-divergence of interest. 
Recall that from \Cref{thm:main-general}, $\KL{Q_{0|y}}{\widehat{P}_{0|y}} \lesssim \mathcal{W}_{\text{oracle}} + \mathcal{W}_{\text{bias}} + \mathcal{W}_{\text{vanish}}$,
where
\begin{align*}
\mathcal{W}_{\text{oracle}} &= \sum_{t=1}^T \frac{(1-\alpha_t)^2}{2 \alpha_t} \E_{X_t \sim Q_{t|y}} \sbrc{ \Tr\Big(\nabla^2 \log q_{t-1|y}(m_{t,y}(X_t))\nabla^2\log q_{t|y}(X_t)  \Big) } + (\log T) \eps^2\\
\mathcal{W}_{\text{bias}} &= \sum_{t=1}^T (1-\alpha_t) \E_{X_t \sim Q_{t|y}} \norm{\Delta_{t,y}(X_t)}^2\\
\mathcal{W}_{\text{vanish}} &= \sum_{t=1}^T \frac{1-\alpha_t}{\sqrt{\alpha_t}} \E_{X_t \sim Q_{t|y}} \bigg[(\nabla \log q_{t-1|y}(m_{t,y}(X_t)) - \sqrt{\alpha_t} \nabla \log q_{t|y}(X_t))^\T \Delta_{t,y}(X_t)\bigg]\\
    &- \sum_{t=1}^T \frac{(1-\alpha_t)^2}{2 \alpha_t} \E_{X_t \sim Q_{t|y}}\sbrc{\Delta_{t,y}(X_t)^\T \nabla^2 \log q_{t-1|y}(m_{t,y}(X_t)) \Delta_{t,y}(X_t) } \\
    &+ \sum_{t=1}^T \frac{(1-\alpha_t)^2}{3! {\alpha_t}^{3/2}} \E_{X_t \sim Q_{t|y}} \bigg[ 3 \sum_{i=1}^d \partial^3_{iii} \log q_{t-1|y}(m_{t,y}(X_t)) \Delta_{t,y}(X_t)^i \\
    &\qquad + \sum_{\substack{i,j=1 \\ i\neq j}}^d \partial^3_{iij} \log q_{t-1|y}(m_{t,y}(X_t)) \Delta_{t,y}(X_t)^j \bigg]\\
    &+ \max_{t \geq 1}\sqrt{\E_{X_t \sim Q_{t|y}} \norm{\Delta_{t,y}(X_t)}^2} (\log T) \eps.
\end{align*}

From \cite[Theorem~2]{liang2024discrete} (and by assumption $N \leq d$), if the $\alpha_t$ satisfies \Cref{def:noise_smooth},
\[ \mathcal{W}_{\text{oracle}} \lesssim \frac{d^2 (\log T)^2}{T} + (\log T) \eps^2. \]

Also, from \Cref{lem:norm2_bd_gauss_mix}, under the assumption on $\alpha_t$, 
$\mathcal{W}_{\text{bias}}$ can be upper-bounded as
\[ \mathcal{W}_{\text{bias}} \lesssim d + \sum_{n=1}^N \pi_n \norm{H^\dagger y - H^\dagger H \mu_{0,n}}^2. \]
Among the terms in $\mathcal{W}_{\text{vanish}}$, the last estimation error term can be upper-bounded using \Cref{lem:norm2_bd_gauss_mix} as
\[ \max_{t \geq 1}\sqrt{\E_{X_t \sim Q_{t|y}} \norm{\Delta_{t,y}(X_t)}^2} (\log T) \eps \lesssim \sqrt{d + \sum_{n=1}^N \pi_n \norm{H^\dagger y - H^\dagger H \mu_{0,n}}^2} (\log T) \eps. \]

It remains to analyze the rest of the terms in $\mathcal{W}_{\text{vanish}}$. 
In the following we fix $t \geq 1$.
We remind readers of the notations in \eqref{eq:gauss_mix_all_notations}.
For the first term in $\mathcal{W}_{\text{vanish}}$, we first provide the following useful calculations. Note that by exchanging the order of expectation, for any function $\mathrm{fn}$ we have
\[ \E_{X_t \sim Q_{t|y}} \sbrc{ \sum_{n=1}^N \pi_n \frac{q_{t,n|y}(X_t)}{q_{t|y}(X_t)} \mathrm{fn}(X_t, n)} = \sum_{n=1}^N \pi_n \E_{X_t \sim Q_{t,n|y}} \mathrm{fn}(X_t, n). \]
Thus,
\begin{align} \label{eq:gauss_mix_temp1}
    &\E_{X_t \sim Q_{t|y}} \sum_{n=1}^N \pi_n \frac{q_{t,n|y}(x_t)}{q_{t|y}(x_t)} \abs{(X_t - \sqrt{\Bar{\alpha}_t} \mu_{0,n})^\T X_t} \nonumber \\
    &= \sum_{n=1}^N \pi_n \E_{X_t \sim Q_{t,n|y}} \abs{(X_t - \sqrt{\Bar{\alpha}_t} \mu_{0,n})^\T X_t} \nonumber \\
    &\leq \sum_{n=1}^N \pi_n \E_{X_t \sim Q_{t,n|y}} \norm{X_t - \sqrt{\Bar{\alpha}_t} \mu_{0,n}}^2 + \sqrt{\Bar{\alpha}_t} \sum_{n=1}^N \pi_n \norm{\mu_{0,n}} \sqrt{\E_{X_t \sim Q_{t,n|y}} \norm{X_t - \sqrt{\Bar{\alpha}_t} \mu_{0,n}}^2 } \nonumber \\
    &\stackrel{(i)}{=} \Tr( \Sigma_{t|y}) + \Bar{\alpha}_t \sum_{n=1}^N \pi_n \norm{H^\dagger y - H^\dagger H \mu_{0,n}}^2 \nonumber \\
    &\qquad + \sqrt{\Bar{\alpha}_t} \sum_{n=1}^N \pi_n \norm{\mu_{0,n}} \sqrt{\Tr( \Sigma_{t|y}) + \Bar{\alpha}_t \sum_{n=1}^N \pi_n \norm{H^\dagger y - H^\dagger H \mu_{0,n}}^2 } \nonumber \\
    &\lesssim d + \sum_{n=1}^N \pi_n \norm{H^\dagger y - H^\dagger H \mu_{0,n}}^2 
\end{align}
where $(i)$ follows from \eqref{eq:gauss_mismatch_norm_2}.

Also, note that $(I_d - H^\dagger H) \Sigma_{t-1|y}^{-r} (H^\dagger H) = 0$ using the following simple induction argument. For the base case, we have $(I_d - H^\dagger H) \Sigma_{t-1|y}^{-1} (H^\dagger H) = 0$ from \Cref{lem:gauss_cond_var_proj}. Then, suppose $(I_d - H^\dagger H) \Sigma_{t-1|y}^{-(r-1)} (H^\dagger H) = 0$, we have $(I_d - H^\dagger H) \Sigma_{t-1|y}^{-r} (H^\dagger H) = (I_d - H^\dagger H) \Sigma_{t-1|y}^{-(r-1)} (I_d - H^\dagger H + H^\dagger H) \Sigma_{t-1|y}^{-1} (H^\dagger H) = (I_d - H^\dagger H) \Sigma_{t-1|y}^{-(r-1)} (I_d - H^\dagger H) \Sigma_{t-1|y}^{-1} (H^\dagger H) + (I_d - H^\dagger H) \Sigma_{t-1|y}^{-(r-1)} (H^\dagger H) \Sigma_{t-1|y}^{-1} (H^\dagger H) = 0$.
Thus, for all $r \geq 1$ and any fixed vector $v$, with the definition of $\Delta_{t,y}$ in \eqref{eq:delta_gauss_mix} and \eqref{eq:delta_gauss_mix_type2},
\begin{align} \label{eq:gauss_mix_temp3}
    &\E_{X_t \sim Q_{t|y}} \abs{ (\Sigma_{t-1|y}^{-r} v)^\T \Delta_{t,y}} \nonumber \\
    &\leq \norm{v} \E_{X_t \sim Q_{t|y}} \norm{ \Sigma_{t-1|y}^{-r} \Delta_{t,y}} = \norm{v} \E_{X_t \sim Q_{t|y}} \norm{ A_{t-1}^r \Delta_{t,y}} \nonumber \\
    &\leq 4 \norm{v} \norm{ (\Bar{\alpha}_{t-1} [\Sigma_{0}]_{\Bar{y} \Bar{y}} + (1-\Bar{\alpha}_{t-1}) I_{d-p})^{-r} } \max_{n \in [N]} \norm{[\Sigma_{t}^{-1}]_{\Bar{y}:} \mu_{0,n}} \nonumber \\
    &\quad + 2 \norm{v} \norm{ (\Bar{\alpha}_{t-1} [\Sigma_{0}]_{\Bar{y} \Bar{y}} + (1-\Bar{\alpha}_{t-1}) I_{d-p})^{-r} } \norm{(\Bar{\alpha}_t [\Sigma_{0}]_{\Bar{y} \Bar{y}} + (1-\Bar{\alpha}_t) I_{d-p})^{-1} [\Sigma_0]_{\Bar{y} y} [\Sigma_t^{-1}]_{y:} } \times \nonumber\\
    &\qquad \sqrt{ \E_{X_t \sim Q_{t|y}} \sum_{n=1}^N \pi_n \frac{q_{t,n|y}(X_t)}{q_{t|y}(X_t)} \norm{X_t - \sqrt{\Bar{\alpha}_t} \mu_{0,n}}^2 } \nonumber\\
    &\lesssim d + \sum_{n=1}^N \pi_n \norm{H^\dagger y - H^\dagger H \mu_{0,n}}^2
\end{align}
where the last line follows from \eqref{eq:gauss_mismatch_norm_2}. Similarly,
\[ \E_{X_t \sim Q_{t|y}} \abs{ (\Sigma_{t|y}^{-r} v)^\T \Delta_{t,y}} \lesssim d + \sum_{n=1}^N \pi_n \norm{H^\dagger y - H^\dagger H \mu_{0,n}}^2. \]

Also, for all $r \geq 1$,
\begin{align} \label{eq:gauss_mix_temp4}
    &\E_{X_t \sim Q_{t|y}} \abs{ (\Sigma_{t-1|y}^{-r} X_t)^\T \Delta_{t,y}} = \E_{X_t \sim Q_{t|y}} \abs{ X_t^\T \Sigma_{t-1|y}^{-r} (I_d - H^\dagger H) \Delta_{t,y}} \nonumber\\
    &= \E_{X_t \sim Q_{t|y}} \abs{ X_t^\T A_{t-1}^r \Delta_{t,y}} \nonumber\\
    &\leq 4 \norm{(\Bar{\alpha}_{t-1} [\Sigma_{0}]_{\Bar{y} \Bar{y}} + (1-\Bar{\alpha}_{t-1}) I_{d-p})^{-r}} \max_{n \in [N]} \norm{[\Sigma_{t}^{-1}]_{\Bar{y}:} \mu_{0,n}} \E_{X_t \sim Q_{t|y}} \norm{X_t} \nonumber\\
    &\quad + 2 \norm{(\Bar{\alpha}_{t-1} [\Sigma_{0}]_{\Bar{y} \Bar{y}} + (1-\Bar{\alpha}_{t-1}) I_{d-p})^{-r}} \norm{(\Bar{\alpha}_t [\Sigma_{0}]_{\Bar{y} \Bar{y}} + (1-\Bar{\alpha}_t) I_{d-p})^{-1} [\Sigma_0]_{\Bar{y} y} [\Sigma_t^{-1}]_{y:}} \times \nonumber\\
    &\qquad \E_{X_t \sim Q_{t|y}} \sum_{n=1}^N \pi_n \frac{q_{t,n|y}(x_t)}{q_{t|y}(x_t)} \abs{(X_t - \sqrt{\Bar{\alpha}_t} \mu_{0,n})^\T X_t} \nonumber\\
    &\lesssim d + \sum_{n=1}^N \pi_n \norm{H^\dagger y - H^\dagger H \mu_{0,n}}^2
\end{align}
where the last line follows from \eqref{eq:gauss_mix_temp1} and the fact that, from \eqref{eq:gauss_mismatch_norm_2},
\begin{align*}
    \sqrt{d} \cdot \E_{X_t \sim Q_{t|y}} \norm{X_t} &\leq \sqrt{d} \sum_{n=1}^N \pi_n \sqrt{2 \E_{X_t \sim Q_{t,n|y}} \norm{X_t - \mu_{t,n}}^2 + 2 \norm{\mu_{t,n}}^2 } \\
    &\lesssim \sqrt{d} \sum_{n=1}^N \pi_n \sqrt{ \Tr( \Sigma_{t|y}) + \Bar{\alpha}_t \norm{H^\dagger y - H^\dagger H \mu_{0,n}}^2 + d}\\
    &\lesssim d + \sqrt{d} \sum_{n=1}^N \pi_n \norm{H^\dagger y - H^\dagger H \mu_{0,n}} \\
    &\lesssim d + \sum_{n=1}^N \pi_n \norm{H^\dagger y - H^\dagger H \mu_{0,n}}^2.
\end{align*}
Similarly, we also have $\E_{X_t \sim Q_{t|y}} \abs{ (\Sigma_{t|y}^{-r} X_t)^\T \Delta_{t,y}} \lesssim d + \sum_{n=1}^N \pi_n \norm{H^\dagger y - H^\dagger H \mu_{0,n}}^2$.

Also, for all $r \geq 1$, using the expression of $\nabla \log q_{t|y}$ in \eqref{eq:gauss_mix_scores}, and noting that by definition $(I_d - H^\dagger H) (x_t - \mu_{t,n|y}) = (I_d - H^\dagger H) (x_t - \mu_{t,n})$, we have
\begin{align} \label{eq:gauss_mix_temp5}
    &\E_{X_t \sim Q_{t|y}} \abs{ (\Sigma_{t-1|y}^{-r} \nabla \log q_{t|y}(X_t))^\T \Delta_{t,y} } \nonumber \\
    &= \E_{X_t \sim Q_{t|y}} \abs{ \brc{A_{t-1}^r \sum_{n=1}^N \pi_n \frac{q_{t,n|y}(X_t)}{q_{t|y}(X_t)} A_t (I_d - H^\dagger H) (X_t - \mu_{t,n|y}) }^\T \Delta_{t,y} } \nonumber \\
    &\leq 4 \norm{(\Bar{\alpha}_{t-1} [\Sigma_{0}]_{\Bar{y} \Bar{y}} + (1-\Bar{\alpha}_{t-1}) I_{d-p})^{-r}} \norm{(\Bar{\alpha}_t [\Sigma_{0}]_{\Bar{y} \Bar{y}} + (1-\Bar{\alpha}_t) I_{d-p})^{-1}} \times \nonumber \\
    &\quad \E_{X_t \sim Q_{t|y}} \norm{ \sum_{\ell=1}^N \pi_\ell \frac{q_{t,\ell|y}(X_t)}{q_{t|y}(X_t)} [X_t - \sqrt{\Bar{\alpha}_t} \mu_{0,\ell}]_{\Bar{y}} } \times \max_{n \in [N]} \norm{ [\Sigma_{t}^{-1}]_{\Bar{y}:}  \mu_{0,n} } \nonumber \\
    &\quad + 2 \norm{(\Bar{\alpha}_{t-1} [\Sigma_{0}]_{\Bar{y} \Bar{y}} + (1-\Bar{\alpha}_{t-1}) I_{d-p})^{-r}} \norm{(\Bar{\alpha}_t [\Sigma_{0}]_{\Bar{y} \Bar{y}} + (1-\Bar{\alpha}_t) I_{d-p})^{-1}}^2 \times \nonumber \\
    &\quad \E_{\substack{X_t \sim Q_{t|y} \\ N,L \sim \Pi_{\cdot|t,y}(\cdot|X_t)}} \bigg[ [X_t - \sqrt{\Bar{\alpha}_t} \mu_{0,L}]_{\Bar{y}}^\T [\Sigma_0]_{\Bar{y} y} [\Sigma_t^{-1}]_{y:} (X_t - \sqrt{\Bar{\alpha}_t} \mu_{0,N}) \bigg] \nonumber \\
    &\lesssim \sqrt{\sum_{n=1}^N \pi_n \E_{X_t \sim Q_{t,n|y}} \norm{X_t - \sqrt{\Bar{\alpha}_t} \mu_{0,n}}^2 } \times \sqrt{d} \nonumber \\
    &\quad + \E_{\substack{X_t \sim Q_{t|y} \\ N,L \sim \Pi_{\cdot|t,y}(\cdot|X_t)}} \norm{X_t - \sqrt{\Bar{\alpha}_t} \mu_{0,L}} \norm{X_t - \sqrt{\Bar{\alpha}_t} \mu_{0,N}}  \nonumber \\
    &\lesssim d + \sum_{n=1}^N \pi_n \norm{H^\dagger y - H^\dagger H \mu_{0,n}}^2
\end{align}
where the last line follows because, from \eqref{eq:gauss_mismatch_norm_2},
\begin{align*}
    &\E_{\substack{X_t \sim Q_{t|y} \\ N,L \sim \Pi_{\cdot|t,y}(\cdot|X_t)}} \sbrc{ \norm{X_t - \sqrt{\Bar{\alpha}_t} \mu_{0,L}} \norm{X_t - \sqrt{\Bar{\alpha}_t} \mu_{0,N}} } \\
    &\leq \sqrt{\E_{\substack{X_t \sim Q_{t|y} \\ L \sim \Pi_{\cdot|t,y}(\cdot|X_t)}} \norm{X_t - \sqrt{\Bar{\alpha}_t} \mu_{0,L}}^2} \times \sqrt{ \E_{\substack{X_t \sim Q_{t|y} \\ N \sim \Pi_{\cdot|t,y}(\cdot|X_t)}} \norm{X_t - \sqrt{\Bar{\alpha}_t} \mu_{0,N}}^2 } \\
    &= \E_{\substack{X_t \sim Q_{t|y} \\ N \sim \Pi_{\cdot|t,y}(\cdot|X_t)}} \norm{X_t - \sqrt{\Bar{\alpha}_t} \mu_{0,N}}^2 \\
    &\lesssim d + \sum_{n=1}^N \pi_n \norm{H^\dagger y - H^\dagger H \mu_{0,n}}^2.
\end{align*}

Now, we start to analyze the first term of $\mathcal{W}_{\text{vanish}}$. Recall that $m_{t,y}(x_t) = \E_{X_{t-1} \sim Q_{t-1|t,y}} [X_{t-1}] = \frac{1}{\sqrt{\alpha_t}} x_t + \frac{1-\alpha_t}{\sqrt{\alpha_t}} \nabla \log q_{t|y}(x_t)$. Using the score expressions in \eqref{eq:gauss_mix_scores}, we can calculate that given $x_t$ (and thus $m_{t,y} = m_{t,y}(x_t)$),
\begin{align*}
    &\nabla \log q_{t-1|y}(m_{t,y}) - \sqrt{\alpha_t} \nabla \log q_{t|y}(x_t) \\
    &= \sqrt{\alpha_t} \Sigma_{t|y}^{-1} x_t - \Sigma_{t-1|y}^{-1} m_{t,y} \\
    &\qquad - \sqrt{\alpha_t} \Sigma_{t|y}^{-1} \sum_{n=1}^N \pi_n \frac{q_{t,n|y}(x_t)}{q_{t|y}(x_t)} \mu_{t,n|y} + \Sigma_{t-1|y}^{-1} \sum_{n=1}^N \pi_n \frac{q_{t-1,n|y}(m_{t,y})}{q_{t-1|y}(m_{t,y})} \mu_{t-1,n|y}\\
    &= \brc{\Sigma_{t|y}^{-1} - \Sigma_{t-1|y}^{-1} } \sqrt{\alpha_t} x_t - \frac{1-\alpha_t}{\sqrt{\alpha_t}} \Sigma_{t-1|y}^{-1}  (x_t + \nabla \log q_{t|y}(x_t)) \\
    &\qquad + (1-\alpha_t) \Sigma_{t-1|y}^{-1} \sum_{n=1}^N \pi_n \frac{q_{t-1,n|y}(m_{t,y})}{q_{t-1|y}(m_{t,y})} \mu_{t-1,n|y} \\
    &\qquad + \alpha_t \frac{\sum_{n,\ell=1}^N \pi_n \pi_\ell \brc{ q_{t-1,n|y}(m_{t,y}) q_{t,\ell|y}(x_t) \Sigma_{t-1|y}^{-1} - q_{t-1,\ell|y}(m_{t,y}) q_{t,n|y}(x_t) \Sigma_{t|y}^{-1} } \mu_{t-1,n|y}}{q_{t-1|y}(m_{t,y}) q_{t|y}(x_t)}.
\end{align*}
Here, using similar analyses as in the proof of \Cref{lem:small-grad-diff-genli}, we get
\begin{align*}
    &(m_{t,y} - \mu_{t-1,n|y}) - (x_t - \mu_{t,n|y}) = \frac{1-\sqrt{\alpha_t}}{\sqrt{\alpha_t}} x_t + \frac{1-\alpha_t}{\sqrt{\alpha_t}} \nabla \log q_{t|y}(x_t) - (1-\sqrt{\alpha_t}) \mu_{t-1,n|y} \\
    &\Sigma_{t|y}^{-1} - \Sigma_{t-1|y}^{-1} = \frac{1-\alpha_t}{\alpha_t} \Sigma_{t-1,n}^{-1} + \frac{1-\alpha_t}{\alpha_t^2} \Sigma_{t-1,n}^{-2} + O((1-\alpha_t)^2)\\
    &q_{t-1,n|y}(m_{t,y}) q_{t,\ell|y}(x_t) \Sigma_{t-1|y}^{-1} - q_{t-1,\ell|y}(m_{t,y}) q_{t,n|y}(x_t) \Sigma_{t|y}^{-1} \\
    &= q_{t-1,n|y}(m_{t,y}) q_{t,\ell|y}(x_t) (\Sigma_{t-1|y}^{-1} - \Sigma_{t|y}^{-1}) \\
    &\qquad + (q_{t-1,n|y}(m_{t,y}) q_{t,\ell|y}(x_t) - q_{t-1,\ell|y}(m_{t,y}) q_{t,n|y}(x_t)) \Sigma_{t|y}^{-1}\\
    &= q_{t-1,n|y}(m_{t,y}) q_{t,\ell|y}(x_t) (\Sigma_{t-1|y}^{-1} - \Sigma_{t|y}^{-1}) \\
    &\qquad + \bigg( \frac{1}{2} ((m_{t,y} - \mu_{t-1,\ell|y})-(x_t - \mu_{t,\ell|y}))^\T \Sigma_{t-1|y}^{-1} (m_{t,y} - \mu_{t-1,\ell|y}) \\
    &\quad \qquad + \frac{1}{2} (x_t - \mu_{t,\ell|y})^\T (\Sigma_{t-1|y}^{-1} - \Sigma_{t|y}^{-1}) (m_{t,y} - \mu_{t-1,\ell|y}) \\
    &\quad \qquad + \frac{1}{2} (x_t - \mu_{t,\ell|y})^\T \Sigma_{t|y}^{-1} ((m_{t,y} - \mu_{t-1,\ell|y}) - (x_t - \mu_{t,\ell|y})) \\
    &\quad \qquad - \frac{1}{2} ((m_{t,y} - \mu_{t-1,n|y})-(x_t - \mu_{t,n|y}))^\T \Sigma_{t-1|y}^{-1} (m_{t,y} - \mu_{t-1,n|y}) \\
    &\quad \qquad - \frac{1}{2} (x_t - \mu_{t,n|y})^\T (\Sigma_{t-1|y}^{-1} - \Sigma_{t|y}^{-1}) (m_{t,y} - \mu_{t-1,n|y}) \\
    &\quad \qquad - \frac{1}{2} (x_t - \mu_{t,n|y})^\T \Sigma_{t|y}^{-1} ((m_{t,y} - \mu_{t-1,n|y}) - (x_t - \mu_{t,n|y})) \bigg) \Sigma_{t|y}^{-1} \\
    &\qquad + O((1-\alpha_t)^2) 
\end{align*}
Thus, 
\begin{align*}
    &\abs{\E_{X_t \sim Q_{t|y}} \bigg[(\nabla \log q_{t-1|y}(m_{t,y}) - \sqrt{\alpha_t} \nabla \log q_{t|y})^\T \Delta_{t,y}\bigg]}\\
    &\leq \E_{X_t \sim Q_{t|y}} \bigg[\abs{X_t^\T (\Sigma_{t|y}^{-1} - \Sigma_{t-1|y}^{-1}) \Delta_{t,y} }\bigg] \\
    &\quad + \frac{1-\alpha_t}{\sqrt{\alpha_t}} \E_{X_t \sim Q_{t|y}} \bigg[\abs{(X_t + \nabla \log q_{t|y})^\T (\Sigma_{t-1|y}^{-1}) \Delta_{t,y} }\bigg]\\
    &\quad + (1-\alpha_t) \E_{X_t \sim Q_{t|y}} \bigg[\abs{ \brc{\sum_{n=1}^N \pi_n \frac{q_{t-1,n|y}(m_{t,y})}{q_{t-1|y}(m_{t,y})} \mu_{t-1,n|y}}^\T \Sigma_{t-1|y}^{-1} \Delta_{t,y} }\bigg]\\
    &\quad + \E_{X_t \sim Q_{t|y}} \abs{ \Delta_{t,y}^\T \sum_{n,\ell=1}^N \pi_n \pi_\ell \brc{ \frac{q_{t-1,n|y}(m_{t,y}) q_{t,\ell|y}(X_t)}{q_{t-1|y}(m_{t,y}) q_{t|y}(X_t)} \Sigma_{t-1|y}^{-1} - \frac{q_{t-1,\ell|y}(m_{t,y}) q_{t,n|y}(X_t)}{q_{t-1|y}(m_{t,y}) q_{t|y}(X_t)} \Sigma_{t|y}^{-1} } \mu_{t-1,n|y} }.
\end{align*}
Among the four terms above, the first term $\lesssim d + \sum_{n=1}^N \pi_n \norm{H^\dagger y - H^\dagger H \mu_{0,n}}^2$ from \eqref{eq:gauss_mix_temp4} (along with the similar result for $\Sigma_{t|y}^{-1}$), the second term $\lesssim d + \sum_{n=1}^N \pi_n \norm{H^\dagger y - H^\dagger H \mu_{0,n}}^2$ from \eqref{eq:gauss_mix_temp4} and \eqref{eq:gauss_mix_temp5}, and both the third and the fourth term $\lesssim d + \sum_{n=1}^N \pi_n \norm{H^\dagger y - H^\dagger H \mu_{0,n}}^2$ from \eqref{eq:gauss_mix_temp3} (along with the similar result for $\Sigma_{t|y}^{-1}$). Thus,
\[ \abs{\E_{X_t \sim Q_{t|y}} \bigg[(\nabla \log q_{t-1|y}(m_{t,y}) - \sqrt{\alpha_t} \nabla \log q_{t|y}(X_t))^\T \Delta_{t,y}\bigg]} \lesssim d + \sum_{n=1}^N \pi_n \norm{H^\dagger y - H^\dagger H \mu_{0,n}}^2, \]
and the first term in $\mathcal{W}_{\text{vanish}}$ satisfies that
\begin{multline*}
    \sum_{t=1}^T \frac{1-\alpha_t}{\sqrt{\alpha_t}} \E_{X_t \sim Q_{t|y}} \bigg[(\nabla \log q_{t-1|y}(m_{t,y}(X_t)) - \sqrt{\alpha_t} \nabla \log q_{t|y}(X_t))^\T \Delta_{t,y}(X_t)\bigg] \\
    \lesssim \brc{d + \sum_{n=1}^N \pi_n \norm{H^\dagger y - H^\dagger H \mu_{0,n}}^2} \frac{\log(1/\delta)^2 (\log T)^2}{T}.
\end{multline*}

For the second term in $\mathcal{W}_{\text{vanish}}$, we first provide the following useful calculation.
Similar to \eqref{eq:gauss_mix_temp3}, for all $r \geq 1$ and any fixed vector $v$,
\begin{align} \label{eq:gauss_mix_temp6}
    &\E_{X_t \sim Q_{t|y}} \abs{ (\Sigma_{t|y}^{-r} v)^\T \Delta_{t,y}}^2 \nonumber \\
    &\leq \norm{v}^2 \E_{X_t \sim Q_{t|y}} \norm{ \Sigma_{t|y}^{-r} \Delta_{t,y}}^2 = \norm{v}^2 \E_{X_t \sim Q_{t|y}} \norm{ A_{t}^r \Delta_{t,y}}^2 \nonumber \\
    &\lesssim \norm{v}^2 \norm{ (\Bar{\alpha}_{t} [\Sigma_{0}]_{\Bar{y} \Bar{y}} + (1-\Bar{\alpha}_{t}) I_{d-p})^{-r} }^2 \max_{n \in [N]} \norm{[\Sigma_{t}^{-1}]_{\Bar{y}:} \mu_{0,n}}^2 \nonumber \\
    &\quad + \norm{v}^2 \norm{ (\Bar{\alpha}_{t} [\Sigma_{0}]_{\Bar{y} \Bar{y}} + (1-\Bar{\alpha}_{t}) I_{d-p})^{-r} }^2 \norm{(\Bar{\alpha}_t [\Sigma_{0}]_{\Bar{y} \Bar{y}} + (1-\Bar{\alpha}_t) I_{d-p})^{-1} [\Sigma_0]_{\Bar{y} y} [\Sigma_t^{-1}]_{y:} }^2 \times \nonumber\\
    &\qquad \E_{X_t \sim Q_{t|y}} \sum_{n=1}^N \pi_n \frac{q_{t,n|y}(X_t)}{q_{t|y}(X_t)} \norm{X_t - \sqrt{\Bar{\alpha}_t} \mu_{0,n}}^2 \nonumber\\
    &\lesssim d^2 + d \sum_{n=1}^N \pi_n \norm{H^\dagger y - H^\dagger H \mu_{0,n}}^2
\end{align}
where the last line follows from \eqref{eq:gauss_mismatch_norm_2}.

Also, similar to \eqref{eq:gauss_mix_temp4}, for all $r \geq 1$,
\begin{align} \label{eq:gauss_mix_temp7}
    &\E_{X_t \sim Q_{t|y}} \abs{ (\Sigma_{t|y}^{-r} m_{t,y})^\T \Delta_{t,y}}^2 \nonumber \\
    &\lesssim \E_{X_t \sim Q_{t|y}} \abs{X_t^\T \Sigma_{t|y}^{-r} \Delta_{t,y}}^2 + (1-\alpha_t) \E_{X_t \sim Q_{t|y}} \abs{(\nabla \log q_{t|y}(X_t))^\T \Sigma_{t|y}^{-r} \Delta_{t,y}}^2 \nonumber \\
    &\stackrel{(ii)}{\lesssim} \E_{X_t \sim Q_{t|y}} \abs{ X_t^\T \Sigma_{t|y}^{-r} (I_d - H^\dagger H) \Delta_{t,y}}^2 = \E_{X_t \sim Q_{t|y}} \abs{ X_t^\T A_{t}^r \Delta_{t,y}}^2 \nonumber\\
    &\lesssim \norm{(\Bar{\alpha}_{t} [\Sigma_{0}]_{\Bar{y} \Bar{y}} + (1-\Bar{\alpha}_{t}) I_{d-p})^{-r}}^2 \max_{n \in [N]} \norm{[\Sigma_{t}^{-1}]_{\Bar{y}:} \mu_{0,n}}^2 \E_{X_t \sim Q_{t|y}} \norm{X_t}^2 \nonumber\\
    &\quad + \norm{(\Bar{\alpha}_{t} [\Sigma_{0}]_{\Bar{y} \Bar{y}} + (1-\Bar{\alpha}_{t}) I_{d-p})^{-r}}^2 \norm{(\Bar{\alpha}_t [\Sigma_{0}]_{\Bar{y} \Bar{y}} + (1-\Bar{\alpha}_t) I_{d-p})^{-1} [\Sigma_0]_{\Bar{y} y} [\Sigma_t^{-1}]_{y:}}^2 \times \nonumber\\
    &\qquad \E_{X_t \sim Q_{t|y}} \sum_{n=1}^N \pi_n \frac{q_{t,n|y}(X_t)}{q_{t|y}(X_t)} \abs{(X_t - \sqrt{\Bar{\alpha}_t} \mu_{0,n})^\T X_t}^2 \nonumber\\
    &\lesssim d^2 + \sum_{n=1}^N \pi_n \norm{H^\dagger y - H^\dagger H \mu_{0,n}}^4
\end{align}
where $(ii)$ follows from the fact that $\E_{X_t \sim Q_{t|y}} \abs{(\nabla \log q_{t|y}(X_t))^\T \Sigma_{t|y}^{-r} \Delta_{t,y}}^2 \lesssim d^2$ (using a similar argument for deriving \eqref{eq:gauss_mix_temp5}), and the last line follows because
\begin{align*}
    d \cdot \E_{X_t \sim Q_{t|y}} \norm{X_t}^2 &\leq d \sum_{n=1}^N \pi_n \brc{ 2 \E_{X_t \sim Q_{t,n|y}} \norm{X_t - \mu_{t,n}}^2 + 2 \norm{\mu_{t,n}}^2 } \\
    &\stackrel{(iii)}{\lesssim} d \sum_{n=1}^N \pi_n \brc{ \Tr( \Sigma_{t|y}) + \Bar{\alpha}_t \norm{H^\dagger y - H^\dagger H \mu_{0,n}}^2 + d}\\
    &\lesssim d^2 + d \sum_{n=1}^N \pi_n \norm{H^\dagger y - H^\dagger H \mu_{0,n}}^2
\end{align*}
where $(iii)$ follows from \eqref{eq:gauss_mismatch_norm_2}, and also
\begin{align*}
    &\E_{X_t \sim Q_{t|y}} \sum_{n=1}^N \pi_n \frac{q_{t,n|y}(X_t)}{q_{t|y}(X_t)} \abs{(X_t - \sqrt{\Bar{\alpha}_t} \mu_{0,n})^\T X_t}^2 \\
    &= \E_{\substack{X_t \sim Q_{t|y} \\ N \sim \Pi_{\cdot|t,y}(\cdot|X_t)}} \brc{(X_t - \sqrt{\Bar{\alpha}_t} \mu_{0,N})^\T X_t}^2 \\
    &\leq 2 \E_{\substack{X_t \sim Q_{t|y} \\ N \sim \Pi_{\cdot|t,y}(\cdot|X_t)}} \norm{X_t - \sqrt{\Bar{\alpha}_t} \mu_{0,N}}^4 + 2 \sqrt{\E_{\substack{X_t \sim Q_{t|y} \\ N \sim \Pi_{\cdot|t,y}(\cdot|X_t)}} \norm{X_t - \sqrt{\Bar{\alpha}_t} \mu_{0,N}}^4 } \sqrt{\E_{N \sim \Pi} \norm{\mu_{0,N}}^4 } \\
    &\lesssim d^2 + \sum_{n=1}^N \pi_n \norm{H^\dagger y - H^\dagger H \mu_{0,n}}^4
\end{align*}
where the last line above follows because for all $r \geq 1$ and each $n \in [N]$,
\begin{align} \label{eq:gauss_mix_high_power}
    &\E_{X_t \sim Q_{t,n|y}} \norm{X_t - \sqrt{\Bar{\alpha}_t} \mu_{0,n}}^r \nonumber\\
    &\lesssim \E_{X_t \sim Q_{t,n|y}} \norm{X_t - \sqrt{\Bar{\alpha}_t} \mu_{0,n|y}}^r + \norm{\sqrt{\Bar{\alpha}_t} \mu_{0,n|y} - \sqrt{\Bar{\alpha}_t} \mu_{0,n}}^r \nonumber\\
    &\leq \norm{\Sigma_{t|y}^{\frac{1}{2}}}^r \E_{X_t \sim Q_{t,n|y}} \norm{\Sigma_{t|y}^{-\frac{1}{2}} (X_t - \sqrt{\Bar{\alpha}_t} \mu_{0,n|y})}^r + (\Bar{\alpha}_t)^{r/2} \norm{H^\dagger y - H^\dagger H \mu_{0,n}}^r \nonumber\\
    &\lesssim d^{r/2} + \norm{H^\dagger y - H^\dagger H \mu_{0,n}}^r.
\end{align}

Now we are ready to analyze the second term of $\mathcal{W}_{\text{vanish}}$. Note that
\begin{align*}
    &\nabla^2 \log q_{t-1|y}(m_{t,y})\\
    &= \sum_{n=1}^N \pi_n \frac{q_{t-1,n|y}(m_{t,y})}{q_{t-1|y}(m_{t,y})} \brc{\Sigma_{t|y}^{-1}(m_{t,y}-\mu_{t,n|y})(m_{t,y}-\mu_{t,n|y})^\T \Sigma_{t|y}^{-1}} - \Sigma_{t|y}^{-1} \\
    &\qquad - \brc{\sum_{n=1}^N \pi_n \frac{q_{t-1,n|y}(m_{t,y})}{q_{t-1|y}(m_{t,y})} \Sigma_{t|y}^{-1}(m_{t,y}-\mu_{t,n|y})}\brc{\sum_{n=1}^N \pi_n \frac{q_{t-1,n|y}(m_{t,y})}{q_{t-1|y}(m_{t,y})} \Sigma_{t|y}^{-1}(m_{t,y}-\mu_{t,n|y})}^\T.
\end{align*}
Thus,
\begin{align*}
    &\E_{X_t \sim Q_{t|y}}\abs{\Delta_{t,y}^\T \nabla^2 \log q_{t-1|y}(m_{t,y}) \Delta_{t,y} } \\
    &\leq 3 \E_{X_t \sim Q_{t|y}}\sbrc{\sum_{n=1}^N \pi_n \frac{q_{t-1,n|y}(m_{t,y})}{q_{t-1|y}(m_{t,y})} \brc{\Delta_{t,y}^\T \Sigma_{t|y}^{-1}(m_{t,y}-\mu_{t,n|y})}^2 } \\
    &\quad + 3 \E_{X_t \sim Q_{t|y}}\abs{\Delta_{t,y}^\T \Sigma_{t|y}^{-1} \Delta_{t,y} }\\
    &\quad + 3 \E_{X_t \sim Q_{t|y}} \brc{\sum_{n=1}^N \pi_n \frac{q_{t-1,n|y}(m_{t,y})}{q_{t-1|y}(m_{t,y})} \Delta_{t,y}^\T \Sigma_{t|y}^{-1}(m_{t,y}-\mu_{t,n|y})}^2 \\
    &\leq 3 \E_{X_t \sim Q_{t|y}}\abs{\Delta_{t,y}^\T \Sigma_{t|y}^{-1} \Delta_{t,y} }\\
    &\quad + 6 \E_{X_t \sim Q_{t|y}}\sbrc{\sum_{n=1}^N \pi_n \frac{q_{t-1,n|y}(m_{t,y})}{q_{t-1|y}(m_{t,y})} \brc{\Delta_{t,y}^\T \Sigma_{t|y}^{-1}(m_{t,y}-\mu_{t,n|y})}^2 }.
\end{align*}
To determine the rate of these two terms, we get
\[ \E_{X_t \sim Q_{t|y}} \abs{ \Delta_{t,y}^\T \Sigma_{t|y}^{-1}\Delta_{t,y}} = \E_{X_t \sim Q_{t|y}} \abs{ \Delta_{t,y}^\T A_t \Delta_{t,y}} \lesssim d + \sum_{n=1}^N \pi_n \norm{H^\dagger y - H^\dagger H \mu_{0,n}}^2, \]
and
\begin{align*}
    &\E_{X_t \sim Q_{t|y}} \sum_{n=1}^N \pi_n \frac{q_{t-1,n|y}(m_{t,y})}{q_{t-1|y}(m_{t,y})} \abs{(m_{t,y}-\mu_{t,n|y})^\T \Sigma_{t|y}^{-1}\Delta_{t,y}}^2\\
    &\qquad \leq \E_{X_t \sim Q_{t|y}} \max_{n \in [N]} \abs{(m_{t,y}-\mu_{t,n|y})^\T \Sigma_{t|y}^{-1}\Delta_{t,y}}^2 \\
    &\qquad \lesssim \E_{X_t \sim Q_{t|y}} \abs{m_{t,y}^\T \Sigma_{t|y}^{-1}\Delta_{t,y}}^2 + \E_{X_t \sim Q_{t|y}} \max_{n \in [N]} \abs{\mu_{t,n|y}^\T \Sigma_{t|y}^{-1}\Delta_{t,y}}^2\\
    &\qquad \lesssim N \brc{d^2 + \sum_{n=1}^N \pi_n \norm{H^\dagger y - H^\dagger H \mu_{0,n}}^4}
\end{align*}
where the last line follows from \eqref{eq:gauss_mix_temp6} and \eqref{eq:gauss_mix_temp7}.
Thus, the second term of $\mathcal{W}_{\text{vanish}}$ satisfies that
\begin{align*}
    &\sum_{t=1}^T \frac{(1-\alpha_t)^2}{2 \alpha_t} \E_{X_t \sim Q_{t|y}}\abs{\Delta_{t,y}(X_t)^\T \nabla^2 \log q_{t-1|y}(m_{t,y}(X_t)) \Delta_{t,y}(X_t) } \\
    &\lesssim N \brc{d^2 + \sum_{n=1}^N \pi_n \norm{H^\dagger y - H^\dagger H \mu_{0,n}}^4} \frac{c^2 (\log T)^2}{T}.
\end{align*}

For the third term of $\mathcal{W}_{\text{vanish}}$, we provide the following useful calculations. Denote $v^{\circ 3}$ as the element-wise (Hadamard) third power of a vector $v$. For each $n \in [N]$, we have
\begin{align} \label{eq:gauss_mix_temp8}
    &\E_{X_t \sim Q_{t|y}} \abs{(m_{t,y} - \mu_{t,n|y})^{\circ 3} (I_d - H^\dagger H) \Delta_{t,y}} = \E_{X_t \sim Q_{t|y}} \abs{[\Delta_{t,y}]_{\Bar{y}}^\T [m_{t,y} - \mu_{t,n}]^{\circ 3}_{\Bar{y}} } \nonumber\\
    &\lesssim \E_{X_t \sim Q_{t|y}} \abs{[\Delta_{t,y}]_{\Bar{y}}^\T [X_t - \mu_{t,n}]^{\circ 3}_{\Bar{y}} } + (1-\alpha_t) \sqrt{\E_{X_t \sim Q_{t|y}} \norm{\Delta_{t,y}}^2} \sqrt{\E_{X_t \sim Q_{t|y}} \norm{\nabla \log q_{t|y}(X_t)}_6^6} \nonumber\\
    &\stackrel{(iv)}{\lesssim} \E_{X_t \sim Q_{t|y}} \abs{[\Delta_{t,y}]_{\Bar{y}}^\T [X_t - \mu_{t,n}]^{\circ 3}_{\Bar{y}} } \nonumber \\
    &\lesssim \max_{\ell} \norm{[\Sigma_{t}^{-1}]_{\Bar{y} :} \mu_{0,\ell}} \sqrt{\E_{X_t \sim Q_{t|y}} \norm{X_t - \mu_{t,n}}_6^6} + \E_{\substack{X_t \sim Q_{t|y} \\ L \sim \Pi_{\cdot|t,y}(\cdot|X_t)}} \norm{X_t - \mu_{t,L}}_4^4 \nonumber \\
    &\lesssim \sqrt{\E_{\substack{X_t \sim Q_{t|y} \\ L \sim \Pi_{\cdot|t,y}(\cdot|X_t)}} \norm{X_t - \mu_{t,L}}^6} + \sqrt{\E_{L \sim \Pi} \norm{\mu_{t,n} - \mu_{t,L}}^6} + \E_{\substack{X_t \sim Q_{t|y} \\ L \sim \Pi_{\cdot|t,y}(\cdot|X_t)}} \norm{X_t - \mu_{t,L}}^4 \nonumber\\
    &\lesssim d^2 + \sum_{n=1}^N \pi_n \norm{H^\dagger y - H^\dagger H \mu_{0,n}}^4
\end{align}
where $(iv)$ follows from \Cref{lem:alpha_genli_rate} (using the $\alpha_t$ in \eqref{eq:alpha_genli}) and \cite[Lemma~15]{liang2024discrete}, and the last line follows from \eqref{eq:gauss_mix_high_power}. With a similar argument,
\begin{align} \label{eq:gauss_mix_temp9}
    \E_{X_t \sim Q_{t|y}} \abs{(m_{t,y} - \mu_{t,n|y}) (I_d - H^\dagger H) \Delta_{t,y}} &\lesssim \E_{\substack{X_t \sim Q_{t|y} \\ L \sim \Pi_{\cdot|t,y}(\cdot|X_t)}} \norm{X_t - \mu_{t,L}}^2 \nonumber\\
    &\lesssim d + \sum_{n=1}^N \pi_n \norm{H^\dagger y - H^\dagger H \mu_{0,n}}^2.
\end{align}

Now, employing the notations from \cite[Section~G.1]{liang2024discrete}, we define
\[ z_{t,n}(x) := \Sigma_{t|y}^{-1}(x-\mu_{t,n|y}), \quad \xi_t(x,i) := \max_n \abs{z_{t,n}^i(x)}, \quad \Bar{\Sigma}_t^{ij} := \max_n \abs{ [\Sigma_{t|y}^{-1}]^{ij} }. \]
When $H = \begin{pmatrix} I_p & 0 \end{pmatrix}$, we note that
\[ \Sigma_{t|y}^{-1} = \begin{pmatrix} (1-\Bar{\alpha}_t + \Bar{\alpha}_t \sigma_y^2)^{-1} I_p & 0 \\ 0 & (\Bar{\alpha}_{t} [\Sigma_{0}]_{\Bar{y} \Bar{y}} + (1-\Bar{\alpha}_{t}) I_{d-p})^{-1} \end{pmatrix}. \]
Thus, we have $\Bar{\Sigma}_t^{ij} \equiv 0$ whenever $(i,j) \in [1,p] \times [p+1,d]$ or $(i,j) \in [p+1,d] \times [1,p]$ and $\max_{i,j \in [p+1,d]} \Bar{\Sigma}_t^{ij} = O(1)$.
Since $\sigma_y^2 > 0$, we also have $\norm{\Sigma_{t|y}^{-1}} \lesssim 1$.

From \cite[Section~G.1.2]{liang2024discrete}, an upper bound for third-order partial derivatives is
\[ \abs{\partial^3_{ijk} \log q_{t|y} (x)} \leq 6 \xi_t(x,i) \xi_t(x,j) \xi_t(x,k) + 2 \Bar{\Sigma}_t^{ij} \xi_t(x,k) + 2 \Bar{\Sigma}_t^{ik} \xi_t(x,j) + 2 \Bar{\Sigma}_t^{jk} \xi_t(x,i). \]
We also remind readers that $\Delta_{t,y}$ is supported on $\range(I_d-H^\dagger H)$, namely that $[\Delta_{t,y}]_y \equiv 0$.

Now, the third term of $\mathcal{W}_{\text{vanish}}$ can be upper-bounded as
\begin{align*}
    &\E_{X_t \sim Q_{t|y}} \abs{ \sum_{i=1}^d \partial^3_{iii} \log q_{t-1|y}(m_{t,y}(X_t)) \Delta_{t,y}(X_t)^i }\\
    &= \E_{X_t \sim Q_{t|y}} \abs{ \sum_{i=p+1}^d \partial^3_{iii} \log q_{t-1|y}(m_{t,y}(X_t)) \Delta_{t,y}(X_t)^i} \\
    &\lesssim \E_{X_t \sim Q_{t|y}} \abs{ \sum_{i=p+1}^d \xi_t(m_{t,y}(X_t),i)^3 \Delta_{t,y}(X_t)^i} + \E_{X_t \sim Q_{t|y}} \abs{ \sum_{i=p+1}^d \Bar{\Sigma}_t^{ii} \xi_t(m_{t,y}(X_t),i) \Delta_{t,y}(X_t)^i }\\
    &\lesssim \sum_{n=1}^N \norm{(\Bar{\alpha}_{t} [\Sigma_{0}]_{\Bar{y} \Bar{y}} + (1-\Bar{\alpha}_{t}) I_{d-p})^{-1}}^3 \E_{X_t \sim Q_{t|y}} \abs{\sum_{i=p+1}^d (m_{t,y}(X_t)^i - \mu_{t,n|y}^i)^3 \Delta_{t,y}(X_t)^i} \\
    &\quad + \sum_{n=1}^N \norm{(\Bar{\alpha}_{t} [\Sigma_{0}]_{\Bar{y} \Bar{y}} + (1-\Bar{\alpha}_{t}) I_{d-p})^{-1}} \E_{X_t \sim Q_{t|y}} \abs{ \sum_{i=p+1}^d (m_{t,y}(X_t)^i - \mu_{t,n|y}^i) \Delta_{t,y}(X_t)^i} \\
    &\lesssim N \brc{d^2 + \sum_{n=1}^N \pi_n \norm{H^\dagger y - H^\dagger H \mu_{0,n}}^4}.
\end{align*}
Here the last line follows from \eqref{eq:gauss_mix_temp8} and \eqref{eq:gauss_mix_temp9}.


We provide the following useful calculations to upper-bound the fourth term of $\mathcal{W}_{\text{vanish}}$. First, for all $r \geq 1$ and any fixed vector $v$,
\begin{align} \label{eq:gauss_mix_temp13}
    \E_{X_t \sim Q_{t|y}} \norm{m_{t,y}-v}^r &= \E_{X_t \sim Q_{t|y}} \norm{\frac{1}{\sqrt{\alpha_t}} X_t + \frac{1-\alpha_t}{\sqrt{\alpha_t}} \nabla \log q_{t|y}(X_t)-v}^r \nonumber \\
    &\lesssim \E_{X_t \sim Q_{t|y}} \norm{X_t - v}^r + (1-\alpha_t) \E_{X_t \sim Q_{t|y}} \norm{\nabla \log q_{t|y}(X_t)}^r \nonumber \\
    &\stackrel{(v)}{\lesssim} \E_{\substack{X_t \sim Q_{t|y} \\ L \sim \Pi_{\cdot|t,y}(\cdot|X_t) }} \norm{X_t - \mu_{t,L}}^r + \E_{L \sim \Pi} \norm{\mu_{t,L} - v}^r \nonumber \\
    &\lesssim d^{r/2} + \sum_{n=1}^N \pi_n \norm{H^\dagger y - H^\dagger H \mu_{0,n}}^r
\end{align}
where $(v)$ follows from \Cref{lem:alpha_genli_rate} (using the $\alpha_t$ in \eqref{eq:alpha_genli}) and \cite[Lemma~15]{liang2024discrete}, and the last line follows from \eqref{eq:gauss_mix_high_power}.
Now,
\begin{align} \label{eq:gauss_mix_temp10}
    &\E_{X_t \sim Q_{t|y}} \sum_{i=1}^d \xi_t(m_{t,y},i)^2 \abs{\sum_{j=p+1}^d \xi_t(m_{t,y},j) \Delta_{t,y}(X_t)^j } \nonumber \\
    &\lesssim \sum_{n,\ell=1}^N \E_{X_t \sim Q_{t|y}} \norm{m_{t,y} - \mu_{t,n|y}}^2 \abs{(m_{t,y} -\mu_{t,\ell|y})^\T \Delta_{t,y}(X_t)} \nonumber \\
    &\leq \sum_{n,\ell=1}^N \sqrt{ \E_{X_t \sim Q_{t|y}} \norm{m_{t,y} - \mu_{t,n|y}}^4 } \brc{\E_{X_t \sim Q_{t|y}} \norm{m_{t,y} -\mu_{t,\ell|y}}^4 \E_{X_t \sim Q_{t|y}} \norm{\Delta_{t,y}(X_t)}^4 }^{1/4} \nonumber \\
    &\lesssim N^2 \brc{d^2 + \sum_{n=1}^N \pi_n \norm{H^\dagger y - H^\dagger H \mu_{0,n}}^4}
\end{align}
where the first inequality follows from $\norm{\Sigma_{t|y}^{-1}} \lesssim 1$ and the last line follows from \eqref{eq:gauss_mix_temp13} and \eqref{eq:gauss_mix_delta_high_power}.
Also,
\begin{align} \label{eq:gauss_mix_temp11}
    &\E_{X_t \sim Q_{t|y}} \abs{ \sum_{i,j=p+1}^d \Bar{\Sigma}_t^{ii} \xi_t(m_{t,y},j) \Delta_{t,y}(X_t)^j } \nonumber\\
    &= \abs{ \sum_{i=p+1}^d \Bar{\Sigma}_t^{ii} } \cdot \E_{X_t \sim Q_{t|y}} \abs{ \sum_{j=p+1}^d \xi_t(m_{t,y},j) \Delta_{t,y}(X_t)^j } \nonumber\\
    &\lesssim \abs{ \sum_{i=p+1}^d \Bar{\Sigma}_t^{ii} } \cdot \sum_{n=1}^N \E_{X_t \sim Q_{t|y}} \abs{(m_{t,y}-\mu_{t,n})^\T \Delta_{t,y}(X_t)} \nonumber\\
    &\lesssim \abs{ \sum_{i=p+1}^d \Bar{\Sigma}_t^{ii} } \cdot \sum_{n=1}^N \sqrt{\E_{X_t \sim Q_{t|y}} \norm{m_{t,y}-\mu_{t,n}}^2 } \sqrt{\E_{X_t \sim Q_{t|y}} \norm{ \Delta_{t,y}(X_t)}^2 } \nonumber\\
    &\lesssim N d \brc{d + \sum_{n=1}^N \pi_n \norm{H^\dagger y - H^\dagger H \mu_{0,n}}^2}
\end{align}
where the last line follows from \Cref{lem:norm2_bd_gauss_mix} and \eqref{eq:gauss_mix_temp13}.
Also,
\begin{align} \label{eq:gauss_mix_temp12}
    &\E_{X_t \sim Q_{t|y}} \abs{ \sum_{i,j=p+1}^d \Bar{\Sigma}_t^{ij} \xi_t(m_{t,y},i) \Delta_{t,y}(X_t)^j} \nonumber\\
    &\lesssim \E_{X_t \sim Q_{t|y}} \brc{\sum_{i=p+1}^d \xi_t(m_{t,y},i)} \abs{ \sum_{j=p+1}^d \Delta_{t,y}(X_t)^j} \nonumber\\
    &\leq \sqrt{d \cdot \E_{X_t \sim Q_{t|y}} \brc{\sum_{i=p+1}^d \xi_t(m_{t,y},i)^2 } } \sqrt{d \cdot \E_{X_t \sim Q_{t|y}} \norm{\Delta_{t,y}(X_t)}^2 } \nonumber\\
    &\lesssim \sqrt{d \cdot \sum_{n=1}^N \E_{X_t \sim Q_{t|y}} \norm{m_{t,y} - \mu_{t,n}}^2 } \sqrt{d \cdot \E_{X_t \sim Q_{t|y}} \norm{\Delta_{t,y}(X_t)}^2 } \nonumber\\
    &\lesssim \sqrt{N} d \brc{d + \sum_{n=1}^N \pi_n \norm{H^\dagger y - H^\dagger H \mu_{0,n}}^2}
\end{align}
where the last line follows from \Cref{lem:norm2_bd_gauss_mix} and \eqref{eq:gauss_mix_temp13}.

Now, the fourth term of $\mathcal{W}_{\text{vanish}}$ can be upper-bounded as
\begin{align*}
    &\E_{X_t \sim Q_{t|y}} \abs{ \sum_{i,j=1}^d \partial^3_{iij} \log q_{t-1|y}(m_{t,y}(X_t)) \Delta_{t,y}(X_t)^j } \\
    &= \E_{X_t \sim Q_{t|y}} \abs{ \sum_{i=1}^d \sum_{j=p+1}^d \partial^3_{iij} \log q_{t-1|y}(m_{t,y}(X_t)) \Delta_{t,y}(X_t)^j }\\
    &\lesssim \E_{X_t \sim Q_{t|y}} \sum_{i=1}^d \xi_t(m_{t,y}(X_t),i)^2 \abs{\sum_{j=p+1}^d \xi_t(m_{t,y}(X_t),j) \Delta_{t,y}(X_t)^j } \\
    &\quad + \E_{X_t \sim Q_{t|y}} \abs{ \sum_{i,j=p+1}^d \Bar{\Sigma}_t^{ii} \xi_t(m_{t,y}(X_t),j) \Delta_{t,y}(X_t)^j } \\
    &\quad + \E_{X_t \sim Q_{t|y}} \abs{ \sum_{i,j=p+1}^d \Bar{\Sigma}_t^{ij} \xi_t(m_{t,y}(X_t),i) \Delta_{t,y}(X_t)^j}.
\end{align*}
For the three terms above, the first term $\lesssim N^2 \brc{d^2 + \sum_{n=1}^N \pi_n \norm{H^\dagger y - H^\dagger H \mu_{0,n}}^4}$ from \eqref{eq:gauss_mix_temp10}, the second term $\lesssim N d \brc{d + \sum_{n=1}^N \pi_n \norm{H^\dagger y - H^\dagger H \mu_{0,n}}^2}$ from \eqref{eq:gauss_mix_temp11}, and the last term $\lesssim \sqrt{N} d \brc{d + \sum_{n=1}^N \pi_n \norm{H^\dagger y - H^\dagger H \mu_{0,n}}^2}$ from \eqref{eq:gauss_mix_temp12}.

Thus, overall, with the $\alpha_t$ in \eqref{eq:alpha_genli} (cf. \Cref{lem:alpha_genli_rate}), the third and fourth terms give us
\begin{align*}
    &\sum_{t=1}^T \frac{(1-\alpha_t)^2}{3! {\alpha_t}^{3/2}} \E_{X_t \sim Q_{t|y}} \bigg[ 3 \sum_{i=1}^d \partial^3_{iii} \log q_{t-1|y}(m_{t,y}(X_t)) \Delta_{t,y}(X_t)^i \\
    &\qquad + \sum_{\substack{i,j=1 \\ i\neq j}}^d \partial^3_{iij} \log q_{t-1|y}(m_{t,y}(X_t)) \Delta_{t,y}(X_t)^j \bigg] \\
    &\lesssim N^2 \brc{d^2 + \sum_{n=1}^N \pi_n \norm{H^\dagger y - H^\dagger H \mu_{0,n}}^4} \frac{(\log T)^2}{T}.
\end{align*}
Therefore, combining all the above, since $N$ is constant,
\begin{align*}
    \KL{Q_{0|y}}{\widehat{P}_{0|y}} &\lesssim \brc{d + \sum_{n=1}^N \pi_n \norm{H^\dagger y - H^\dagger H \mu_{0,n}}^2} \\
    &\quad + \brc{d^2 + \sum_{n=1}^N \pi_n \norm{H^\dagger y - H^\dagger H \mu_{0,n}}^4} \frac{(\log T)^2}{T}\\
    &\quad + \sqrt{d + \sum_{n=1}^N \pi_n \norm{H^\dagger y - H^\dagger H \mu_{0,n}}^2} (\log T) \eps.
\end{align*}


\begin{remark} \label{rmk:gauss_mix_kl_genli}
In case of general $\sigma_y \geq 0$, the same upper bound can be applied to $\norm{\Sigma_{t-1|y}^{-1}}$ as detailed in \Cref{rmk:gauss_kl_genli}. Thus, with the $\alpha_t$ in \eqref{eq:alpha_genli}, we similarly have
\[  \mathcal{W}_{\text{oracle}} \lesssim \frac{d^2 (\log T)^2 \log(1/\delta)^2 }{T} + (\log T) \eps^2. \]
The rest of the proof is similar.
Combining with \Cref{lem:nonvanish_coef_sum_genli}, we would finally obtain
\begin{align*}
    &\KL{Q_{1|y}}{\widehat{P}_{1|y}} \lesssim  \brc{d + \sum_{n=1}^N \pi_n \norm{H^\dagger y - H^\dagger H \mu_{0,n}}^2} \brc{1 - \frac{2 \log(1/\delta) \log T}{T}} \\
    & + \brc{d^2 + \sum_{n=1}^N \pi_n \norm{H^\dagger y - H^\dagger H \mu_{0,n}}^4} \frac{(\log T)^2 \log(1/\delta)^2}{T} + \sqrt{d + \sum_{n=1}^N \pi_n \norm{H^\dagger y - H^\dagger H \mu_{0,n}}^2} (\log T) \eps.
\end{align*}
Here $\rmW_2(Q_{1|y}, Q_{0|y})^2 \lesssim \delta d$.
\end{remark}

\section{Auxiliary Lemmas and Proofs in \texorpdfstring{\Cref{sec:delta_ty}}{Section 4}}

\subsection{Proof of \texorpdfstring{\Cref{lem:norm2_bd_gauss}}{Proposition 2}} \label{app:proof_lem_norm2_bd_gauss}

Given $Q_0 = \calN(\mu_0, \Sigma_0)$, from the conditional forward model in \eqref{eq:def_cond_fwd2}, we can calculate
\begin{align*}
    Q_{t} &= \calN(\sqrt{\Bar{\alpha}_t} \mu_{0}, \Bar{\alpha}_t \Sigma_{0} + (1-\Bar{\alpha}_t) I_d) =: \calN(\mu_{t}, \Sigma_{t})\\
    Q_{t|y} &= \calN( \sqrt{\Bar{\alpha}_t} (I_d - H^\dagger H) \mu_{0} + \sqrt{\Bar{\alpha}_t} H^\dagger y, \\
    &\qquad \Bar{\alpha}_t (I_d - H^\dagger H) \Sigma_{0} (I_d - H^\dagger H) + \Bar{\alpha}_t \sigma_y^2 H^\dagger (H^\dagger)^\T + (1-\Bar{\alpha}_t) I_d).
\end{align*}
Note that when $H = \begin{pmatrix} I_p & 0 \end{pmatrix}$ and $\sigma_y^2 > 0$, $q_{0|y}$ exists. Define
\begin{align} \label{eq:gauss_notations}
    \mu_{t|y} &:= \sqrt{\Bar{\alpha}_t} (I_d - H^\dagger H) \mu_{0} + \sqrt{\Bar{\alpha}_t} H^\dagger y \nonumber\\
    \Sigma_{t,sig} &:= \Bar{\alpha}_t (I_d - H^\dagger H) \Sigma_{0} (I_d - H^\dagger H) + (1-\Bar{\alpha}_t) I_d \nonumber\\
    \Sigma_{t|y} &:= \Sigma_{t,sig} + \Bar{\alpha}_t \sigma_y^2 H^\dagger (H^\dagger)^\T.
\end{align}
Here $\Sigma_{t,sig}$ is the signal variance at time $t$, and $\Sigma_{t|y}$ is the total variance of the signal and the measurement noise. Note that when $H = \begin{pmatrix} I_p & 0 \end{pmatrix}$, $[\Sigma_{t|y}^{-1}]_{\Bar{y} \Bar{y}} = [\Sigma_{t,sig}^{-1}]_{\Bar{y} \Bar{y}}$.
We also calculate the respective scores of $Q_t$ and $Q_{t|y}$:
\[ \nabla \log q_t(x_t) = - \Sigma_{t}^{-1} (x_t - \mu_{t}),\quad \nabla \log q_{t|y}(x_t) = -\Sigma_{t|y}^{-1} (x_t - \mu_{t|y}). \]
Since $f_{t,y} = f_{t,y}^*$ (defined in \eqref{eq:def_fy_star}), from \eqref{eq:delta_using_fy}, the bias at each time is equal to
\begin{align*}
    \Delta_{t,y} &= (I_d-H^\dagger H) (\nabla \log q_{t|y}(x_t) - \nabla \log q_t(x_t))\\
    &= (I_d-H^\dagger H) \brc{\Sigma_{t}^{-1} (x_t - \sqrt{\Bar{\alpha}_t} \mu_{0}) - \Sigma_{t|y}^{-1} (x_t - \sqrt{\Bar{\alpha}_t} \mu_{0|y})} \\
    &= (I_d-H^\dagger H) \Sigma_{t}^{-1} (x_t - \sqrt{\Bar{\alpha}_t} \mu_{0}) \\
    &\quad - (I_d-H^\dagger H) \Sigma_{t|y}^{-1} (x_t - \sqrt{\Bar{\alpha}_t} (I_d - H^\dagger H) \mu_{0} - \sqrt{\Bar{\alpha}_t} H^\dagger y).
\end{align*}

Now, define
\begin{align*}
    V_t &:= (H^\dagger H) \Sigma_{0} (I_d - H^\dagger H) + (I_d - H^\dagger H) \Sigma_{0} (H^\dagger H) + (H^\dagger H) \Sigma_{0} (H^\dagger H) \\
    A_t &:= (I_d - H^\dagger H) \Sigma_{t,sig}^{-1} (I_d - H^\dagger H)
\end{align*}
Thus, we have $\Sigma_t = \Sigma_{t,sig} + \Bar{\alpha}_t V_t$ and $\Sigma_{t|y} = \Sigma_{t,sig} + \Bar{\alpha}_t \sigma_y^2 H^\dagger (H^\dagger)^\T$. By Woodbury matrix identity, for any two matrices $A$ and $B$, their sum can be inversed as $(A+B)^{-1} = A^{-1} - A^{-1} B (A+B)^{-1}$. Thus, we get
\begin{align*}
    \Sigma_t^{-1} &= (\Sigma_{t,sig} + \Bar{\alpha}_t V_t)^{-1} = \Sigma_{t,sig}^{-1} - \Bar{\alpha}_t \Sigma_{t,sig}^{-1} V_t \Sigma_t^{-1} \\
    \Sigma_{t|y}^{-1} &= (\Sigma_{t,sig} + \Bar{\alpha}_t \sigma_y^2 H^\dagger (H^\dagger)^\T)^{-1} = \Sigma_{t,sig}^{-1} - \Bar{\alpha}_t \sigma_y^2 \Sigma_{t,sig}^{-1} H^\dagger (H^\dagger)^\T \Sigma_{t|y}^{-1} \\
    &\stackrel{(i)}{=} \Sigma_{t,sig}^{-1} - \Bar{\alpha}_t \sigma_y^2 \Sigma_{t,sig}^{-1} (H^\dagger H) \Sigma_{t|y}^{-1}
\end{align*}
where $(i)$ holds under assumption $H = \begin{pmatrix} I_p & 0 \end{pmatrix}$.
Thus,
\begin{align} \label{eq:delta_y_gauss}
    \Delta_{t,y} &= (I_d-H^\dagger H) \brc{\Sigma_{t}^{-1} (x_t - \sqrt{\Bar{\alpha}_t} \mu_{0}) - \Sigma_{t|y}^{-1} (x_t - \sqrt{\Bar{\alpha}_t} \mu_{0|y})} \nonumber\\
    &= (I_d-H^\dagger H) \brc{\Sigma_{t,sig}^{-1} - \Bar{\alpha}_t \Sigma_{t,sig}^{-1} V_t \Sigma_t^{-1}} (x_t - \sqrt{\Bar{\alpha}_t} \mu_{0}) \nonumber \\
    &\quad - (I_d-H^\dagger H) \brc{\Sigma_{t,sig}^{-1} - \Bar{\alpha}_t \sigma_y^2 \Sigma_{t,sig}^{-1} (H^\dagger H) \Sigma_{t|y}^{-1}} (x_t - \sqrt{\Bar{\alpha}_t} (I_d - H^\dagger H) \mu_{0} - \sqrt{\Bar{\alpha}_t} H^\dagger y) \nonumber \\
    &\stackrel{(ii)}{=} (I_d-H^\dagger H) \brc{\Sigma_{t,sig}^{-1} - \Bar{\alpha}_t \Sigma_{t,sig}^{-1} V_t \Sigma_t^{-1}} (x_t - \sqrt{\Bar{\alpha}_t} \mu_{0}) \nonumber \\
    &\quad - (I_d-H^\dagger H) \Sigma_{t,sig}^{-1} (x_t - \sqrt{\Bar{\alpha}_t} (I_d - H^\dagger H) \mu_{0} - \sqrt{\Bar{\alpha}_t} H^\dagger y) \nonumber \\
    &= - \Bar{\alpha}_t (I_d-H^\dagger H) \Sigma_{t,sig}^{-1} V_t \Sigma_t^{-1} (x_t - \sqrt{\Bar{\alpha}_t} \mu_{0}) \nonumber\\
    &\quad + (I_d-H^\dagger H) \Sigma_{t,sig}^{-1} \Big( (I_d-H^\dagger H) (x_t - \sqrt{\Bar{\alpha}_t} \mu_{0}) + H^\dagger H (x_t - \sqrt{\Bar{\alpha}_t} \mu_{0}) \Big) \nonumber \\
    &\quad - (I_d-H^\dagger H) \Sigma_{t,sig}^{-1} \Big( (I_d - H^\dagger H) (x_t - \sqrt{\Bar{\alpha}_t} \mu_{0}) + ( (H^\dagger H) x_t - \sqrt{\Bar{\alpha}_t} H^\dagger y) \Big) \nonumber \\
    &\stackrel{(iii)}{=} - \Bar{\alpha}_t (I_d-H^\dagger H) \Sigma_{t,sig}^{-1} V_t \Sigma_t^{-1} (x_t - \sqrt{\Bar{\alpha}_t} \mu_{0}) \nonumber \\
    &\stackrel{(iv)}{=} - \Bar{\alpha}_t (I_d-H^\dagger H) \Sigma_{t,sig}^{-1} (I_d - H^\dagger H) \Sigma_{0} (H^\dagger H) \Sigma_t^{-1} (x_t - \sqrt{\Bar{\alpha}_t} \mu_{0}) \nonumber \\
    &= - \Bar{\alpha}_t A_t \Sigma_{0} (H^\dagger H) \Sigma_t^{-1} (x_t - \sqrt{\Bar{\alpha}_t} \mu_{0})
\end{align}
where $(ii)$--$(iv)$ hold because $(I_d-H^\dagger H) \Sigma_{t,sig}^{-1} (H^\dagger H) = 0$ by \Cref{lem:gauss_cond_var_proj}.

Now, since $H^\dagger H = \begin{pmatrix}
I_p & 0 \\
0 & 0 
\end{pmatrix}$, we can re-express $A_t$ and $\Delta_{t,y}$ as follows.
\begin{align*}
    A_t &= (I_d - H^\dagger H) \Sigma_{t,sig}^{-1} (I_d - H^\dagger H)\\
    &= \begin{pmatrix} 0 & 0 \\ 0 & I_{d-p} \end{pmatrix} \begin{pmatrix} (1-\Bar{\alpha}_t) I_p & 0 \\ 0 & \Bar{\alpha}_t [\Sigma_{0}]_{\Bar{y} \Bar{y}} + (1-\Bar{\alpha}_t) I_{d-p} \end{pmatrix}^{-1} \begin{pmatrix} 0 & 0 \\ 0 & I_{d-p} \end{pmatrix} \\
    &= \begin{pmatrix}
        0 & 0 \\
        0 & (\Bar{\alpha}_t [\Sigma_{0}]_{\Bar{y} \Bar{y}} + (1-\Bar{\alpha}_t) I_{d-p})^{-1}
        \end{pmatrix}\\
    \Delta_{t,y} &= \begin{pmatrix} 0 \\ -\Bar{\alpha}_t (\Bar{\alpha}_t [\Sigma_{0}]_{\Bar{y} \Bar{y}} + (1-\Bar{\alpha}_t) I_{d-p})^{-1} [\Sigma_0]_{\Bar{y} y} [\Sigma_t^{-1}]_{y:} (x_t - \sqrt{\Bar{\alpha}_t} \mu_{0}) \end{pmatrix},
\end{align*}
and we have
\begin{align} \label{eq:gauss_mismatch_2norm_bound}
    &\E_{X_t \sim Q_{t|y}} \norm{\Delta_{t,y}}^2 = \Bar{\alpha}_t^2 \E_{X_t \sim Q_{t|y}} \norm{ (\Bar{\alpha}_t [\Sigma_{0}]_{\Bar{y} \Bar{y}} + (1-\Bar{\alpha}_t) I_{d-p})^{-1} [\Sigma_0]_{\Bar{y} y} [\Sigma_t^{-1}]_{y:} (X_t - \sqrt{\Bar{\alpha}_t} \mu_{0}) }^2 \nonumber\\
    &= \Bar{\alpha}_t^2 \E_{X_t \sim Q_{t|y}} \Tr\bigg((\Bar{\alpha}_t [\Sigma_{0}]_{\Bar{y} \Bar{y}} + (1-\Bar{\alpha}_t) I_{d-p})^{-1} [\Sigma_0]_{\Bar{y} y} [\Sigma_t^{-1}]_{y:} (X_t - \sqrt{\Bar{\alpha}_t} \mu_{0}) (X_t - \sqrt{\Bar{\alpha}_t} \mu_{0})^\T \nonumber\\
    &\qquad [\Sigma_t^{-1}]_{:y} [\Sigma_0]_{y \Bar{y}} (\Bar{\alpha}_t [\Sigma_{0}]_{\Bar{y} \Bar{y}} + (1-\Bar{\alpha}_t) I_{d-p})^{-1}\bigg) \nonumber\\
    &\stackrel{(v)}{\leq} \Bar{\alpha}_t^2 \norm{(\Bar{\alpha}_t [\Sigma_{0}]_{\Bar{y} \Bar{y}} + (1-\Bar{\alpha}_t) I_{d-p})^{-1}}^2 \norm{[\Sigma_t^{-1}]_{y:}}^2  \norm{[\Sigma_0]_{y \Bar{y}} [\Sigma_0]_{\Bar{y} y}} \E_{X_t \sim Q_{t|y}} \norm{X_t - \sqrt{\Bar{\alpha}_t} \mu_{0}}^2
\end{align}
where $(v)$ follows from the fact that $\abs{\Tr(U V)} \leq \norm{U} \Tr(V)$ if $V$ is positive semi-definite. 
To analyze each norm above, denote $\lambda_1 \geq \dots \geq \lambda_{d} > 0$ to be the eigenvalues of $\Sigma_{0}$, and note that $\norm{[\Sigma_0]_{\Bar{y} y}} \leq \norm{\Sigma_0} = \lambda_1$. The largest eigenvalue of $\Sigma_{t}^{-1}$ is  $(\Bar{\alpha}_t \lambda_d + (1-\Bar{\alpha}_t))^{-1}$, and note that $\norm{[\Sigma_t^{-1}]_{y:}} \leq \norm{\Sigma_t^{-1}}$. 
Also, since $[\Sigma_{0}]_{\Bar{y} \Bar{y}}$ is positive semi-definite, denote $\Tilde{\lambda}_1 \geq \dots \geq \Tilde{\lambda}_{d-p} > 0$ to be its eigenvalues, and thus
\[ \norm{(\Bar{\alpha}_t [\Sigma_{0}]_{\Bar{y} \Bar{y}} + (1-\Bar{\alpha}_t) I_{d-p})^{-1}} \leq \frac{1}{\Bar{\alpha}_t \Tilde{\lambda}_{d-p} + (1-\Bar{\alpha}_t)} \leq \frac{1}{\min\{\Tilde{\lambda}_{d-p}, 1\}} < \infty. \]
Also, since 
\begin{align*}
    &\E_{X_{t} \sim Q_{t|y}} (X_t - \sqrt{\Bar{\alpha}_t} \mu_{0}) (X_t - \sqrt{\Bar{\alpha}_t} \mu_{0})^\T \\
    &= \E_{X_{t} \sim Q_{t|y}} (X_t - \mu_{t|y} + \sqrt{\Bar{\alpha}_t} (H^\dagger y - H^\dagger H \mu_0) ) (X_t - \mu_{t|y} + \sqrt{\Bar{\alpha}_t} (H^\dagger y - H^\dagger H \mu_0))^\T \\
    &= \Sigma_{t|y} + \Bar{\alpha}_t (H^\dagger y - H^\dagger H \mu_0) (H^\dagger y - H^\dagger H \mu_0)^\T,
\end{align*}
we have
\begin{align} \label{eq:gauss_mismatch_norm_2}
    &\E_{X_t \sim Q_{t|y}} \norm{X_t - \sqrt{\Bar{\alpha}_t} \mu_{0} }^2 = \Tr(\Sigma_{t|y}) + \Bar{\alpha}_t \norm{H^\dagger y - H^\dagger H \mu_0}^2 \nonumber\\
    &\stackrel{(vi)}{\leq} \Bar{\alpha}_t (\lambda_1 + \sigma_y^2) + (1-\Bar{\alpha}_t) + \Bar{\alpha}_t \norm{H^\dagger y - H^\dagger H \mu_0}^2 \nonumber\\
    &\leq \max \cbrc{\norm{H^\dagger y - H^\dagger H \mu_0}^2 + d (\lambda_1 + \sigma_y^2), d }  
\end{align}
where $(vi)$ is because $\Sigma_{t|y}$ is positive definite with
\begin{align*}
    \norm{\Sigma_{t|y}} &= \norm{\Bar{\alpha}_t (I_d - H^\dagger H) \Sigma_{0} (I_d - H^\dagger H) + \Bar{\alpha}_t \sigma_y^2 H^\dagger (H^\dagger)^\T + (1-\Bar{\alpha}_t) I_d} \\
    &\leq \Bar{\alpha}_t \norm{\Sigma_{0}} + \Bar{\alpha}_t \sigma_y^2 + (1-\Bar{\alpha}_t) \\
    &= \Bar{\alpha}_t (\lambda_1 + \sigma_y^2) + (1-\Bar{\alpha}_t).
\end{align*}

Therefore, for the second moment,
\begin{align*}
    \E_{X_{t} \sim Q_{t|y}} \norm{\Delta_{t,y}}^2 &\leq \Bar{\alpha}_t^2 \frac{\max\cbrc{ \norm{H^\dagger y - H^\dagger H \mu_0}^2 + d (\lambda_1 + \sigma_y^2), d }}{\min\{\lambda_d, 1\}^2 \min\{\Tilde{\lambda}_{d-p}, 1\}^2} \norm{[\Sigma_0]_{y \Bar{y}} [\Sigma_0]_{\Bar{y} y}}\\
    &\lesssim \Bar{\alpha}_t^2 \cdot \max\cbrc{ \norm{H^\dagger y - H^\dagger H \mu_0}^2 + d (\lambda_1 + \sigma_y^2), d }
\end{align*}
since $\norm{[\Sigma_0]_{y \Bar{y}} [\Sigma_0]_{\Bar{y} y}} \leq \norm{\Sigma_0}^2 = \lambda_1^2$.
Also, for general moments with $m \geq 2$,
\begin{align*}
    \norm{\Delta_{t,y}}^m &\leq \Bar{\alpha}_t^m \norm{(\Bar{\alpha}_t [\Sigma_{0}]_{\Bar{y} \Bar{y}} + (1-\Bar{\alpha}_t) I_{d-p})^{-1} [\Sigma_0]_{\Bar{y} y} [\Sigma_t^{-1}]_{y:} (x_t - \sqrt{\Bar{\alpha}_t} \mu_{0})}^m\\
    &\leq \Bar{\alpha}_t^m \norm{(\Bar{\alpha}_t [\Sigma_{0}]_{\Bar{y} \Bar{y}} + (1-\Bar{\alpha}_t) I_{d-p})^{-1}}^m \norm{\Sigma_0}^m \norm{\Sigma_t^{-1}}^m \norm{x_t - \sqrt{\Bar{\alpha}_t} \mu_{0}}^m\\
    &\lesssim \Bar{\alpha}_t^m \norm{x_t - \sqrt{\Bar{\alpha}_t} \mu_{0}}^m
\end{align*}
and thus
\begin{align*}
    \E_{X_{t} \sim Q_{t|y}} \norm{\Delta_{t,y}}^m &\lesssim \Bar{\alpha}_t^m \brc{ \E_{X_{t} \sim Q_{t|y}} \norm{\Sigma_{t|y}^{-\frac{1}{2}} (X_t - \mu_{t|y})}^m + \norm{\sqrt{\Bar{\alpha}_t} (H^\dagger y - H^\dagger H \mu_{0})}^m }\\
    &\leq \Bar{\alpha}_t^m \brc{ (m-1)!! \cdot d^{m/2-1} + \norm{\sqrt{\Bar{\alpha}_t} (H^\dagger y - H^\dagger H \mu_{0})}^m } = O(\Bar{\alpha}_t).
\end{align*}
Therefore, \Cref{ass:bdd-mismatch-general} is satisfied if the $\alpha_t$ satisfies \Cref{def:noise_smooth}. The proof is now complete.

\subsection{Proof of \texorpdfstring{\Cref{lem:bdd-mismatch-gauss-mix}}{Lemma 8}}
\label{app:proof-lem-bdd-mismatch-gauss-mix}

We continue from the expression of $\Delta_{t,y}$ in \eqref{eq:delta_gauss_mix} when $Q_0$ is Gaussian mixture. For $m \geq 2$, we have
\begin{align} \label{eq:gauss_mix_delta_high_power}
    &\E_{X_t \sim Q_{t|y}} \norm{\Delta_{t,y}}^m \nonumber\\
    &\leq 2^{m-1} (\Bar{\alpha}_t)^{m/2} \max_{n \in [N]} \norm{\Sigma_{t}^{-1}  \mu_{0,n}}^m \nonumber\\
    &\quad + 2^{m-1} (\Bar{\alpha}_t)^m \E_{\substack{X_t \sim Q_{t|y} \\ N \sim \Pi_{\cdot|t,y}(\cdot|X_t)}} \norm{ (I_d - H^\dagger H) \Sigma_{t,sig}^{-1} (I_d - H^\dagger H) \Sigma_{0} (H^\dagger H) \Sigma_t^{-1} (X_t - \sqrt{\Bar{\alpha}_t} \mu_{0,N}) }^m \nonumber\\
    &\stackrel{(ii)}{\lesssim} (\Bar{\alpha}_t)^{m/2} d^{m/2} + (\Bar{\alpha}_t)^m \E_{\substack{X_t \sim Q_{t|y} \\ N \sim \Pi_{\cdot|t,y}(\cdot|X_t)}} \norm{X_t - \sqrt{\Bar{\alpha}_t} \mu_{0,N} }^m \nonumber\\
    &= (\Bar{\alpha}_t)^{m/2} d^{m/2} + (\Bar{\alpha}_t)^m \E_{\substack{N \sim \Pi \\ X_t \sim Q_{t,N|y}}} \norm{X_t - \sqrt{\Bar{\alpha}_t} \mu_{0,N} }^m \nonumber\\
    &\stackrel{(iii)}{\lesssim} (\Bar{\alpha}_t)^{m/2} d^{m/2} + (\Bar{\alpha}_t)^m \brc{ d^{m/2} + \sum_{n=1}^N \pi_n \norm{H^\dagger y - H^\dagger H \mu_{0,n}}^m } \\
    &= O(\Bar{\alpha}_t). \nonumber
\end{align}
Here $(ii)$ follows because
\begin{align*}
    \norm{(I_d - H^\dagger H) \Sigma_{t,sig}^{-1} (I_d - H^\dagger H) \Sigma_{0} (H^\dagger H) \Sigma_t^{-1}} &= \norm{(\Bar{\alpha}_t [\Sigma_{0}]_{\Bar{y} \Bar{y}} + (1-\Bar{\alpha}_t) I_{d-p})^{-1} [\Sigma_0]_{\Bar{y} y} [\Sigma_t^{-1}]_{y:}}\\
    &\leq \frac{\lambda_1}{\min\{\Tilde{\lambda}_{d-p}, 1\} \min\{\lambda_d, 1\} } = O(1),
\end{align*}
and $(iii)$ follows from \eqref{eq:gauss_mix_high_power}.
Therefore, this verifies \Cref{ass:bdd-mismatch-general} when the $\alpha_t$ satisfies \cref{def:noise_smooth}. The proof is now complete.

\subsection{\texorpdfstring{\Cref{lem:gauss_cond_var_proj}}{Lemma 9} and its proof}

\begin{lemma} \label{lem:gauss_cond_var_proj}
Given a positive semi-definite matrix $\Sigma$, $\sigma \geq 0$, and $\alpha \in (0,1)$,
\begin{align*}
    &(\alpha \sigma H^\dagger (H^\dagger)^\T + (1-\alpha) I_d)^{-1} (I_d - H^\dagger H) = \frac{1}{1-\alpha} (I_d - H^\dagger H)\\
    &(I_d - H^\dagger H) (\alpha (I_d - H^\dagger H) \Sigma (I_d - H^\dagger H) + \alpha \sigma H^\dagger H + (1-\alpha) I_d)^{-1} (H^\dagger H) = 0.
\end{align*}
\end{lemma}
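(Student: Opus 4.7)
\medskip

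\noindent\textbf{Proof plan for \Cref{lem:gauss_cond_var_proj}.}

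The plan is to exploit two facts about the Moore--Penrose pseudo-inverse: (i) $P := H^\dagger H$ and $P^\perp := I_d - H^\dagger H$ are complementary orthogonal projections, so $P^\perp P = P P^\perp = 0$; and (ii) $\range(H^\dagger) \subseteq \range(H^\T) = \range(P)$, which gives the key identity $H^\dagger (H^\dagger)^\T P^\perp = 0$ (since $(H^\dagger)^\T P^\perp = 0$ because the columns of $(H^\dagger)^\T H^\dagger H $ reconstruct $(H^\dagger)^\T$).

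For the first identity, rather than diagonalizing, I would simply verify it by left-multiplying the claimed inverse relation with $\alpha \sigma H^\dagger (H^\dagger)^\T + (1-\alpha) I_d$ and showing the product equals $P^\perp$. Using $H^\dagger (H^\dagger)^\T P^\perp = 0$, the first summand annihilates $P^\perp$, and the second summand yields $(1-\alpha) P^\perp$, which combined with the prefactor $1/(1-\alpha)$ gives the required $P^\perp$.

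For the second identity, write the matrix being inverted as
\begin{equation*}
M := \alpha P^\perp \Sigma P^\perp + \alpha \sigma P + (1-\alpha) I_d.
\end{equation*}
Right-multiplying by $P$ and using $P^\perp P = 0$ and $P^2 = P$ yields $M P = (\alpha \sigma + 1-\alpha) P$. Since $M$ is invertible (it is a sum of positive semi-definite pieces plus $(1-\alpha) I_d$ with $\alpha \in (0,1)$), this gives $M^{-1} P = (\alpha \sigma + 1-\alpha)^{-1} P$. Then $P^\perp M^{-1} P = (\alpha \sigma + 1-\alpha)^{-1} P^\perp P = 0$.

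There is no substantive obstacle; both parts reduce to the algebra of the projector identities $P^\perp P = 0$ and $H^\dagger (H^\dagger)^\T P^\perp = 0$. The only point requiring slight care is confirming invertibility of $M$ in the case $\sigma = 0$ and $\Sigma$ singular, which is immediate from $(1-\alpha) I_d$ being strictly positive definite.
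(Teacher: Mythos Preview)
Your proof is correct and takes a genuinely different, more elementary route than the paper. The paper proves both identities by explicitly expanding the inverses via the Woodbury matrix identity: for the first it writes out $(\alpha\sigma H^\dagger(H^\dagger)^\T + (1-\alpha)I_d)^{-1}$ in Woodbury form and then invokes $(H^\dagger)^\T(I_d - H^\dagger H) = 0$; for the second it factors $\Sigma = LL^\T$, applies Woodbury to $S := (1-\alpha)I_d + \alpha P^\perp\Sigma P^\perp$ to handle $\sigma = 0$, and then layers on a further Woodbury step to cover $\sigma > 0$. Your argument sidesteps all of this by working directly with the projector algebra: for part one you verify the inverse relation by a single multiplication using $H^\dagger(H^\dagger)^\T P^\perp = 0$ (which is just the transpose of the Moore--Penrose condition $H^\dagger H H^\dagger = H^\dagger$); for part two you observe that $M$ acts as the scalar $\alpha\sigma + 1 - \alpha$ on $\range(P)$, so $M^{-1}$ preserves $\range(P)$ and $P^\perp M^{-1} P = 0$ is immediate. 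Your approach avoids the Cholesky factorization and the case split on $\sigma$, and makes transparent the underlying reason---$M$ is block-diagonal with respect to $\mbR^d = \range(P)\oplus\range(P^\perp)$. The paper's route buys nothing extra here; yours is strictly cleaner.
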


\begin{proof}
The key of the proof is the Woodbury matrix identity, which states that for any matrices $U \in \mbR^{d\times p},V \in \mbR^{p \times d}$,
\[ (I_d+UV)^{-1} = I_d - U (I_p+VU)^{-1} V. \]

For the first equality, we apply Woodbury with $U = \sqrt{\frac{\alpha \sigma}{1 - \alpha}} H^\dagger$ and $V = \sqrt{\frac{\alpha \sigma}{1 - \alpha}} (H^\dagger)^\T$ and we get
\[ (\alpha \sigma H^\dagger (H^\dagger)^\T + (1-\alpha) I_d)^{-1} = \frac{1}{1-\alpha} \brc{ I_d - \frac{\alpha \sigma}{1-\alpha} H^\dagger \brc{I_p + \frac{\alpha \sigma}{1-\alpha} (H^\dagger)^\T H^\dagger}^{-1} (H^\dagger)^\T }. \]
Since $p \leq d$, the pseudo-inverse equals $H^\dagger = H^\T (H H^\T)^{-1}$, and by the orthogonal property we have
\[ (H^\dagger)^\T (I_d - H^\dagger H) = (H H^\T)^{-1} H (I_d - H^\dagger H) = 0. \]
We have thus shown the first equality.

For the second equality, we first consider the case where $\sigma = 0$. Write $S = (1-\alpha) I_d + \alpha (I_d - H^\dagger H) \Sigma (I_d - H^\dagger H)$. Since $\Sigma$ is positive semi-definite, there exists matrix $L$ such that $\Sigma = L L^\T$. Thus,
\begin{align*}
    S^{-1} &= \bigg( (1-\alpha) I_d + \alpha (I_d - H^\dagger H) \Sigma (I_d - H^\dagger H) \bigg)^{-1} \\
    &= \bigg( (1-\alpha) I_d + \alpha \Big((I_d - H^\dagger H) L \Big) \Big((I_d - H^\dagger H)^\T L \Big)^\T \bigg)^{-1}\\
    &= \frac{1}{1-\alpha} I_d - \frac{\alpha}{1-\alpha} ((I_d - H^\dagger H) L) \brc{I_d + \frac{\alpha}{1-\alpha} L^\T (I_d - H^\dagger H) L}^{-1} L^\T (I_d - H^\dagger H),
\end{align*}
where in the last line we have applied Woodbury with $U = V^\T = \sqrt{\frac{\alpha}{1 - \alpha}} (I_d - H^\dagger H) L$.
The equality is achieved because
\begin{align*}
    &(I_d - H^\dagger H) S^{-1} (H^\dagger H) \\
    &= \frac{1}{1-\alpha} \underbrace{(I_d - H^\dagger H) (H^\dagger H)}_{=0} \\
    &\quad - \frac{\alpha}{1-\alpha} (I_d - H^\dagger H) L \brc{I_d + \frac{\alpha}{1-\alpha} L^\T (I_d - H^\dagger H) L}^{-1} L^\T \underbrace{(I_d - H^\dagger H) (H^\dagger H)}_{=0} \\
    &= 0. \qedhere
\end{align*}

When $\sigma > 0$, we can apply Woodbury identity to sum of matrices $A$ and $B$ and get $(A+B)^{-1} = A^{-1} - A^{-1} B (A+B)^{-1}$. Thus,
\[ (S + \alpha \sigma H^\dagger H)^{-1} = S^{-1} - \alpha \sigma S^{-1} (H^\dagger H) (S+\alpha \sigma H^\dagger H)^{-1} \]
and
\begin{align*}
    &(I_d - H^\dagger H) (S + \alpha \sigma H^\dagger H)^{-1} (H^\dagger H) \\
    &= \underbrace{(I_d - H^\dagger H) S^{-1} (H^\dagger H)}_{=0} \\
    &\qquad - \alpha \sigma \underbrace{(I_d - H^\dagger H) S^{-1} (H^\dagger H)}_{=0} (S+\alpha \sigma H^\dagger H)^{-1}  (H^\dagger H) \\
    &= 0.
\end{align*}
The proof is now complete.

\end{proof}

\subsection{\texorpdfstring{\Cref{lem:nonvanish_coef_sum}}{Lemma 10} and its proof}

\begin{lemma} \label{lem:nonvanish_coef_sum}
With $1 - \alpha_t \equiv \frac{\log T}{T},~\forall t \geq 0$ (which satisfies \Cref{def:noise_smooth}), given any $p > 0$,
\[ \sum_{t=2}^T (1-\alpha_t) \Bar{\alpha}_t^p = \frac{1}{p} \brc{1-\frac{2 p c \log T}{T}} +\Tilde{O}\brc{\frac{1}{T^2}}. \]
\end{lemma}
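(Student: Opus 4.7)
The approach exploits the fact that under this schedule $\alpha_t$ is constant in $t$, so $\bar{\alpha}_t = \alpha^t$ with $\alpha := 1 - \frac{c\log T}{T}$, and the sum collapses to a geometric series. Writing $\varepsilon := 1-\alpha$,
\begin{equation*}
\sum_{t=2}^T (1-\alpha_t)\bar{\alpha}_t^p \;=\; \varepsilon \sum_{t=2}^T (\alpha^p)^t \;=\; \varepsilon \cdot \frac{\alpha^{2p} - \alpha^{p(T+1)}}{1-\alpha^p}.
\end{equation*}
From there the proof is a direct calculation, so there is no real conceptual obstacle—only bookkeeping.

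First, I would control the boundary term $\alpha^{p(T+1)}$ using $\log(1-\varepsilon) = -\varepsilon - \tfrac12\varepsilon^2 + O(\varepsilon^3)$, which gives $\alpha^{p(T+1)} = T^{-pc}\bigl(1 + O(\tfrac{\log T}{T})\bigr)$. Under the assumption implicit in \Cref{def:noise_smooth} that $\bar{\alpha}_T = o(T^{-1})$, the exponent $pc$ is large enough that this term is absorbed into the $\tilde{O}(T^{-2})$ remainder (for $p < 2/c$ one can handle it by noting it is dominated by the other $\tilde O(\varepsilon^2)$ terms below, since the leading claim is about the first-order correction only).

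Next, I would Taylor-expand the binomial expressions to second order in $\varepsilon$:
\begin{equation*}
1 - \alpha^p \;=\; p\varepsilon - \tbinom{p}{2}\varepsilon^2 + O(\varepsilon^3), \qquad \alpha^{2p} \;=\; 1 - 2p\varepsilon + \tbinom{2p}{2}\varepsilon^2 + O(\varepsilon^3).
\end{equation*}
Dividing,
\begin{equation*}
\frac{\varepsilon}{1-\alpha^p} \;=\; \frac{1}{p}\Bigl(1 + \tfrac{p-1}{2}\varepsilon + O(\varepsilon^2)\Bigr),
\end{equation*}
and multiplying by $\alpha^{2p}$ collects the result at first order in $\varepsilon = c\log T/T$, with $O(\varepsilon^2) = \tilde O(T^{-2})$ remainder. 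The only residual care is to verify that the tail $\alpha^{p(T+1)}$ contribution remains within $\tilde O(T^{-2})$ after being multiplied through (which follows from the same exponential decay argument), after which the stated leading asymptotic follows immediately. This mirrors the mechanism used in \Cref{lem:nonvanish_coef_sum_genli} (with $\delta = 1$, so that the $\delta$-dependent prefactors trivialize) but is substantially simpler because the $\alpha_t$ is constant and there is no regime split at $t^*$.
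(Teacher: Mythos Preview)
Your proposal is correct and essentially identical to the paper's proof: both collapse the sum to a geometric series $\varepsilon\,\alpha^{2p}\frac{1-\alpha^{p(T-1)}}{1-\alpha^p}$ (your $\alpha^{2p}-\alpha^{p(T+1)}$ over $1-\alpha^p$ is the same expression), drop the tail $\alpha^{pT}=O(T^{-pc})$, and Taylor-expand the remaining ratio to first order in $\varepsilon$. Your parenthetical about $p<2/c$ is muddled---in that regime $T^{-pc}$ is \emph{not} dominated by $\tilde O(\varepsilon^2)$---but the paper's proof is equally silent on this point, so it is a feature of the stated remainder rather than a defect in your argument.
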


\begin{proof}
Define the sum as $s_T$. Then,
\begin{align*}
    s_T &= \sum_{t=2}^T \frac{c \log T}{T} \brc{1-\frac{c \log T}{T}}^{p t} = \frac{c \log T}{T} \brc{1-\frac{c \log T}{T}}^{2p} \frac{1 - \brc{1-\frac{c \log T}{T}}^{p(T-1)}}{1- \brc{1-\frac{c \log T}{T}}^{p}}\\
    &= \frac{c \log T}{T} \brc{1-\frac{c \log T}{T}}^{2p} \frac{1}{1- \brc{1-\frac{c \log T}{T}}^{p}} (1+O(T^{-cp}))\\
    &= \frac{1}{p} \brc{1-\frac{2 p c \log T}{T}} + \Tilde{O}(T^{-2}). \qedhere
\end{align*}
\end{proof}

\end{document}